\newcolumntype{M}[1]{>{\centering\arraybackslash}m{#1}}
\newcolumntype{N}{@{}m{0pt}@{}}
\newcommand{\vast}{\bBigg@{3}}
\newcommand{\Vast}{\bBigg@{5}}
\newtheorem{theorem}{Theorem}[section]
\newtheorem{definition}[theorem]{Definition}
\newtheorem{lemma}[theorem]{Lemma}
\theoremstyle{definition}
\newtheorem{remark}[theorem]{Remark}
\def\bZ{\mathbf{Z}}
\def\Q{\mathbf{Q}}
\def\R{\mathbb{R}}
\def\C{\mathbb{C}}
\def\E{\mathbb{E}}
\def\P{\mathbb{P}}
\def\L{\mathcal{L}}
\def\H{\mathbf{H}}
\def\eps{\varepsilon}
\def\X{\mathbf{X}}
\def\x{\mathbf{x}}
\def\v{\mathbf{v}}
\def\u{\mathbf{u}}
\def\W{\mathbf{W}}
\def\a{\mathbf{a}}
\def\A{\mathbf{A}}
\def\B{\mathbf{B}}
\def\C{\mathbf{C}}
\def\K{\mathbf{K}}
\def\var{\textup{Var}}
\def\param{\boldsymbol{\theta}}
\def\Param{\boldsymbol{\Theta}}
\def\I{\mathbf{I}}
\def\h{\mathbf{h}}
\def\bSigma{\boldsymbol{\Sigma}}
\def\V{\mathbf{V}}
\def\U{\mathbf{U}}
\def\bR{\mathbf{R}}
\def\w{\mathbf{w}}
\def\blambda{\boldsymbol{\lambda}}
\def\beps{\boldsymbol{\eps}}
\def\rank{\textup{rank}}
\def\bSigma{\boldsymbol{\Sigma}}
\def\bphi{\boldsymbol{\phi}}
\def\bpsi{\boldsymbol{\psi}}
\def\bPhi{\boldsymbol{\Phi}}
\def\L{\mathcal{L}}
\definecolor{hancolor}{rgb}{0.1, 0.0, 0.9}
\DeclareMathOperator{\diag}{diag}
\DeclareMathOperator*{\argmin}{arg\,min}
\DeclareMathOperator{\vect}{vec}
\newcommand{\tr}{\textup{tr}}
\newcommand{\p}{\mathbf{p}}
\def\R{\mathbb{R}}
\def\E{\mathbb{E}}
\def\P{\mathbb{P}}
\def\L{\mathcal{L}}
\def\r{\mathtt{r}}
\def\H{\mathbf{H}}
\def\eps{\varepsilon}
\def\X{\mathbf{X}}
\def\x{\mathbf{x}}
\def\g{\mathbf{g}}
\def\W{\mathbf{W}}
\def\Y{\mathbf{Y}}
\def\bP{\mathbf{P}}
\def\H{\mathbf{H}}
\def\Beta{\boldsymbol{\beta}}
\def\bGamma{\boldsymbol{\Gamma}}
\def\A{\mathbf{A}}
\def\B{\mathbf{B}}
\def\var{\textup{Var}}
\def\y{\mathbf{y}}
\def\u{\mathbf{u}}
\newlength\myindent
\newenvironment{customassumption}[1]
{\innercustomassumption}
{\endinnercustomassumption}
\theoremstyle{definition}
\newcommand{\addresseshere}{%
	\enddoc@text\let\enddoc@text\relax
}
\def\blue{\color{blue}}
\def\boxit#1{\vbox{\hrule\hbox{\vrule\kern6pt\vbox{\kern6pt#1\kern6pt}\kern6pt\vrule}\hrule}}
\begin{document}

	\title[Supervised Dictionary Learning with Auxiliary Covariates ]{Supervised Dictionary Learning with Auxiliary Covariates}

	\author{Joowon Lee }
	\address{Joowon Lee, Department of Statistics, University of Wisconsin - Madison, WI 53709, USA}
	\email{\texttt{jlee2256@wisc.edu}}

	\author{Hanbaek Lyu}
	\address{Hanbaek Lyu, Department of Mathematics, University of Wisconsin - Madison, WI 53709, USA}
	\email{\texttt{hlyu@math.wisc.edu}}
	
	\author{Weixin Yao}
	\address{Weixin Yao, Department of Applied Statistics, University of California, Riverside, CA 92521, USA}
	\email{\texttt{weixin.yao@math.ucla.edu}}



	\begin{abstract}
		Supervised dictionary learning (SDL) is a classical machine learning method that simultaneously seeks feature extraction and classification tasks, which are not necessarily a priori aligned objectives. The goal of SDL is to learn a class-discriminative dictionary, which is a set of latent feature vectors that can well-explain both the features as well as labels of observed data. In this paper, we provide a systematic study of SDL, including the theory, algorithm, and applications of SDL. First, we provide a novel framework that `lifts' SDL as a convex problem in a combined factor space and propose a low-rank projected gradient descent algorithm that converges exponentially to the global minimizer of the objective. We also formulate generative models of SDL and provide global estimation guarantees of the true parameters depending on the hyperparameter regime. Second, viewed as a nonconvex constrained optimization problem, we provided an efficient block coordinate descent algorithm for SDL that is guaranteed to find an $\eps$-stationary point of the objective in $O(\eps^{-1}(\log \eps^{-1})^{2})$ iterations. For the corresponding generative model, we establish a novel non-asymptotic local consistency result for constrained and regularized maximum likelihood estimation problems, which may be of independent interest. Third, we apply SDL for imbalanced document classification by supervised topic modeling and also for pneumonia detection from chest X-ray images. We also provide simulation studies to demonstrate that SDL becomes more effective when there is a discrepancy between the best reconstructive and the best discriminative dictionaries.  
	\end{abstract}
	
	${}$
	\vspace{-0.5cm}
	${}$
	\maketitle

	\section{Introduction}

	Classification and feature extraction are arguably the two most fundamental tasks in machine learning and statistical inference problems. In classical classification models such as logistic regression, 
	the conditional class-generating probability distribution is modeled as a simple function of the observed feature with unknown parameters to be trained. However, the raw observed features may be high-dimensional and most of them might be uninformative and hard to interpret (e.g., pixel values of an image), so it would be desirable to extract more informative and interpretable low-dimensional features prior to the classification task. For instance, the multi-layer perception, or the feed-forward neural network in general \cite{bishop1995neural, bishop2006pattern}, uses additional feature extraction layers prior to the logistic regression layer so that the model itself learns the most effective feature extraction mechanism as well as the association of the extracted features with class labels at the same time. In this view, one may say that feed-forward neural nets perform supervised feature extraction. 
	
	A classical unsupervised feature extraction framework is called \textit{dictionary learning} (DL), a machine-learning technique that learns latent structures of complex data sets and is applied regularly in the data analysis of text and images \cite{elad2006image, mairal2007sparse, peyre2009sparse}. Various matrix factorization models provide fundamental tools for DL tasks such as singular value decomposition (SVD), principal component analysis (PCA), and nonnegative matrix factorization (NMF) \cite{golub1971singular, wall2003singular, abdi2010principal, lee1999learning}.  In particular, NMF seeks to approximately factorize a data matrix into the product of two nonnegative matrices, a dictionary matrix containing unknown features/basis and a coding matrix that provides a compressed representation of the data over that basis. Such an additive decomposition of data often results in highly interpretable features, which has been one of the main attractions of NMF as a fundamental tool for numerous applications such as topic modeling, image reconstruction, bioinformatics for protein-protein interaction networks, to name a few \cite{sitek2002correction, berry2005email, berry2007algorithms, chen2011phoenix, taslaman2012framework, boutchko2015clustering, ren2018non}. 
	
	There has been extensive research on making dictionary learning models adapted to also perform classification tasks by supervising the dictionary learning process using additional class labels.  Note that dictionary learning and classification are not necessarily aligned objectives, so some degree of trade-off is necessary when one seeks to achieve both goals at the same time. \textit{Supervised dictionary learning} (SDL) provides systematic approaches for such a multi-objective task. The general framework of SDL was introduced in  \cite{mairal2008supervised}.  A stochastic formulation of SDL was proposed as `task-driven dictionary learning' in \cite{mairal2011task}. A similar SDL-type framework of discriminative K-SVD was proposed for face recognition  \cite{zhang2010discriminative}. SDL has also found numerous applications in various other problem domains including speech and emotion recognition \cite{gangeh2014multiview}, music genre classification \cite{zhao2015supervised}, concurrent brain network inference \cite{zhao2015supervised}, and structure-aware clustering \cite{yankelevsky2017structure}. More recently, supervised variants of NMF, as well as PCA, were proposed in \cite{austin2018fully, leuschner2019supervised, ritchie2020supervised}. See also the survey \cite{gangeh2015supervised} on SDL.

	In spite the extensive literature on supervised dictionary learning, to our best knowledge, there is not much work on computational and statistical guarantees of algorithms and models of SDL. This is mainly due to the fact training an SDL model amounts to solving a nonconvex and possibly constrained optimization problem. In this paper, we provide extensive theoretical investigations of various SDL models and algorithms, establishing exponential convergence to global optimum, sublinear convergence to stationary points, and global and local estimation guarantee under the generative model assumption, depending on the hyperparameter regimes and the structure of the model. 
	In addition, we also investigate how to incorporate auxiliary information to the task of SDL, which is not studied in the literature as far as we know. We illustrate our results through various simulation and application experiments. One of our main applications is document classification for fake job postings by learning supervised topics as well as using auxiliary covariates such as the existence of company logos and websites.


	\subsection{Contribution} 
	
	In this paper, we provide a systematic study of supervised dictionary learning (SDL), including  theories, algorithms, and applications of SDL. 
	In addition, we also consider an extended SDL model where an auxiliary covariate can be used for improving classification accuracy. This extension is practically motivated for the cases where the input data for classification is of mixed-type in the sense that some part of it is high-dimensional and subject to reduced-dimensional feature extraction via dictionary learning, but some other part is already low-dimensional and can be used as a complementary covariate for improved classification performance. We consider two particular classes of such extended SDL models: 1) filter-based SDL models and 2) feature-based SDL models, depending on the type of reduced-dimensional covariates used for classification tasks, categorizing known SDL models in the literature. We provide extensive theoretical analysis for these two classes of extended SDL models, which we summarize below:
	\begin{description}[itemsep=0.1cm]
		\item[1.] (\textit{Convex approach for weakly constrained  SDL.}) If the SDL model parameters are unconstrained or weakly constrained (see \ref{assumption:A1}), then we show that we can `lift' the original nonconvex SDL problem into a convex problem in a larger space with low-rank constraints. We further propose a projected gradient descent (PGD) type algorithm for SDL operating in this larger space and establish that it converges exponentially fast to a global minimizer of the objective in an explicit hyperparameter regime (see Theorems \ref{thm:SDL_LPGD} and \ref{thm:SDL_LPGD_feat}). For the corresponding generative model, we obtain a strong statistical estimation guarantee in this case (see Theorems \ref{thm:SDL_LPGD_STAT} and \ref{thm:SDL_LPGD_STAT_feat}). 
		
		\item[2.] (\textit{Nonconvex approach for strongly constrained SDL.}) For the cases where the SDL problem cannot be lifted as a convex problem due to strong constraints (e.g., supervised NMF),  we propose an efficient block coordinate descent (BCD) algorithm for SDL that is guaranteed to find an $\eps$-stationary point (see Section \ref{sec:stationary_points} for definition) of the objective in $O(\eps^{-1} (\log \eps^{-1})^{2} )$ iterations (see Theorem \ref{thm:SDL_BCD}). For the corresponding generative model, we obtain a non-asymptotic local consistency result (see Theorem \ref{thm:SDL_BCD_STAT_filt}), which may be of an independent interest in other statistical estimation settings.
		
		\item[3.] (\textit{Comparison between filter-based and feature-based SDL.}) \, We find an interesting difference in theoretical stability of filter-based and feature-based SDL models, which has not been reported before. Namely, filter-based SDL may enjoy exponential convergence to the global optimum of the corresponding optimization problem without any additional $L_{2}$-regularization, but the feature-based SDL requires $L_{2}$-regularization. In a statistical estimation setting, this implies that the maximum likelihood estimator (MLE) for the generative feature-based SDL model can be computed exponentially fast but it may be a constant order away from the true parameter. However, generative filter-based SDL models admit $\sqrt{n}$-consistent MLE that can be computed exponentially fast. 
		
		\item[4.] (\textit{Applications and simulations}) We apply SDL as a supervised topic learning method and demonstrate how it learns topics that are relevant for document classification, where supervision works as a means of auto-correcting imbalance in datasets. Also, we use SDL for chest X-ray image analysis for pneumonia detection by learning latent shapes and their association with pneumonia. We also provide simulation studies to demonstrate that SDL becomes more effective when there is a significant discrepancy between the best reconstructive and the best discriminative dictionaries.  
	\end{description}

	\subsection{Related works}
	
	It is standard in the literature of SDL to propose an optimization or probabilistic framework of SDL model geared for some particular application, and derive an iterative optimization algorithm (mostly in the form of block coordinate descent, see, e.g., \cite{wright2015coordinate}) with some experimental results. However, convergence analysis or statistical estimation bounds often are missing in the existing literature. As we will discuss shortly, training an SDL model amounts to solving a nonconvex optimization problem possibly under some convex constraints on individual factors (parameters) of the model.  Moreover, even the special case of matrix factorization does not have a unique global minimizer. Such difficulties partly explain why SDL models and algorithms lack much theoretical analysis while enjoying numerous successful applications \cite{zhang2010discriminative, gangeh2014multiview,zhao2015supervised,yankelevsky2017structure}. However, we remark that Mairal et al. provided a rigorous justification of the differentiability of a feature-based SDL model formulated as a stochastic optimization problem     \cite{mairal2011task}, based on a similar analysis used for analyzing an online NMF algorithm \cite{mairal2010online}. 
	
	
	In this article, we propose both nonconvex and convex types of algorithms for training SDL and provide their convergence analysis and estimation properties. Algorithm \ref{algorithm:SDL} of former type is based on block coordinate descent with diminishing radius \cite{lyu2020convergence}. On the other hand, Algorithms \ref{alg:SDL_filt_LPGD} and \ref{alg:SDL_feat_LPGD} are special instances of the low-rank projected gradient descent in Algorithm \ref{algorithm:LPGD}, which is inspired by the singular value projection for low-rank matrix completion \cite{jain2010guaranteed}. Our convergence analysis of Algorithm \ref{algorithm:LPGD} is inspired by the analysis of an initialization algorithm for a low-rank matrix estimation problem in \cite{wang2017unified}. 
	
	In establishing Theorems \ref{thm:SDL_LPGD} and \ref{thm:SDL_LPGD_feat}. We use a `double-lifting' technique that converts a nonconvex SDL problem into a low-rank factored estimation and then into a  convex low-rank matrix estimation problem. This is reminiscent of the tight relation between a convex low-rank matrix estimation and a nonconvex factored estimation problem, which has been actively employed in a body of works in statistics and optimization \cite{agarwal2010fast, ravikumar2011high, negahban2011estimation, zheng2015convergent, tu2016low, wang2017unified,park2017non, park2018finding, tong2021accelerating}.

	One of our main results for non-asymptotic consistency of constrained and regularized MLE (Theorem \ref{thm:MLE_local_consistency}), which is a critical ingredient in establishing local consistency of SDL in the general case (Theorem \ref{thm:SDL_BCD_STAT_filt}), is inspired by the seminal work on local consistency guarantee for nonconcave penalized MLE in \cite{fan2001variable}.

	We consider both the constrained and unconstrained SDL, depending on whether we confine the dictionary matrix into an additional convex constraint set (e.g., nonnegative entries). The original SDL in \cite{mairal2008supervised} in this sense is an unconstrained  SDL and the supervised NMF in \cite{austin2018fully, leuschner2019supervised} belongs to a constrained SDL. The supervised PCA in \cite{ritchie2020supervised} uses the nonconvex (Grassmannian) constraint on the dictionary, which we do not consider in this present work.

	\subsection{Notations}
	\label{subsection:notation}
	
	Throughout this paper, we denote by $\R^{p}$ the ambient space for data equipped with standard inner project $\langle\cdot, \cdot \rangle$ that induces the Euclidean norm $\lVert \cdot \rVert$. We denote by  $\{ 0,1,\dots,\kappa \}$ the space of class labels with $\kappa+1$ classes.   For a convex subset $\Param$ in a Euclidean space, we denote $\Pi_{\Param}$ the projection operator onto $\Param$. For an integer $r\ge 1$, we denote by $\Pi_{r}$ the rank-$r$ projection operator for matrices. More precisely, for $\x\in \R^{p}$ and $\X\in \R^{m\times n}$, 
	\begin{align}
		\Pi_{\Param}(\x) \in \argmin_{\x'\in \Param} \lVert \x'-\x \rVert,\, 	\qquad \Pi_{r}(\X) \in \argmin_{\X'\in \R^{m\times n} ,\, \rank(\X')\le r} \lVert \X'-\X \rVert_{F}.
	\end{align}
	For a matrix $\A=(a_{ij})_{ij}\in \R^{m\times n}$, we denote its Frobenius, operator (2-), and supremum norm by 
	\begin{align}
		\lVert \A \rVert_{F} := \left(\sum_{1\le i\le m,\, 1\le j \le n} a_{ij}^{2}\right)^{1/2}, \quad \lVert \A \rVert_{2} := \sup_{\x\in \R^{n},\, \lVert \x \rVert=1} \, \lVert \A\x \rVert, \qquad \lVert \A \rVert_{\infty}:= \max_{1\le i\le m, \, 1\le j \le n } |a_{ij}|, 
	\end{align}
	respectively. For each $1\le i \le m$ and $1\le j \le n$, we denote $\A[i,:]$ and $\A[:,j]$ for the $i$th row and the $j$th column of $\A$, respectively (adopting python notation). For each integer $n\ge 1$, $\I_{n}$ denotes the $n\times n$ identity matrix. For square symmetric matrices  $\A,\B\in \R^{n\times n}$, we denote $\A\preceq  \B$ if $\v^{T}\A\v \le \v^{T}\B\v $ for all unit vectors $\v\in \R^{n}$.	For two elements $\bZ=[\X,\bGamma]$ and  $\bZ'=[\X', \bGamma']$ in the product space $\R^{d_{1}\times d_{2}}\times \R^{d_{3}\times d_{4}}$, we define their Frobenius distance  $\lVert \bZ - \bZ' \rVert_{F}$ by
	\begin{align}\label{eq:def_F_distance_gen}
		\lVert \bZ - \bZ' \rVert_{F}^{2} := \lVert \vect(\bZ) - \vect(\bZ') \rVert_{2}^{2}= \lVert  \X- \X'\rVert_{F}^{2} + \lVert \bGamma-\bGamma' \rVert_{F}^{2},
	\end{align}
	where $\vect(\cdot)$ is a vectorization operator that maps $\R^{d_{1}\times d_{2}}\times \R^{d_{3}\times d_{4}}$ to $\R^{d_{1}d_{2}+d_{3}d_{4}}$ by an arbitrary but fixed ordering of coordinates. We say 
	$$y\sim \textup{Multinomial}(1, (p_0,\ldots,p_\kappa ))$$ if $y$ can only take values from $0,1,\ldots,\kappa$ with $P(Y=j)=p_j,j=0,\ldots,\kappa,$ where $\sum_{j=0}^\kappa p_j=1$.
	
	\section{Problem formulation and background}

	\subsection{Supervised Dictionary Learning}\label{subsection:SDL1}
	
	Suppose we are given with $n$ labeled signals $(\x_{i}, y_{i})$ for $i=1,\dots, n$, where $\x_{i}\in \R^{p}$ is a $p$-dimensional signal and $y_{i} \in \{ 0,1,\dots,\kappa \}$ is its label, where $\kappa\ge 1$ is a fixed integer.  In the classical \textit{dictionary learning} (DL) literature \cite{mairal2010online, mairal2013optimization,mairal2013stochastic}, one seeks to find a dictionary $\W=[\w_{1},\dots,\w_{r}]\in \R^{p\times r}$ ($r\ll p$) that is \textit{reconstructive} in the sense that the observed signals $\x_{i}$ can be effectively reconstructed as (or approximated by)  the linear transform $\W\h_{i}$ of the `atoms' $\w_{1},\dots,\w_{r}\in \R^{p}$ for some suitable (sparse) `code' $\h_{i}\in \R^{r}$. However, the best reconstructive dictionary $\W$ may not be very effective for the classification tasks. In the \textit{supervised dictionary learning} (SDL) literature 	\cite{mairal2008supervised}, one desires dictionary that is reconstructive as well as \textit{discriminative} in that such a compressed representation of signals is adapted to predicting the class labels $y_{i}$. 
	
	More precisely, consider the following probability distribution $\g(\a)=(g_{0}(\a),\dots,g_{\kappa}(\a))$ on $\{0,1,\dots,\kappa\}$ with \textit{activation} $\a=(a_{1},\dots,a_{\kappa})$ given by 
	\begin{align}\label{eq:prediction_model_g}
		g_{0}(\a) = \frac{1}{1+\sum_{c=1}^{\kappa} h(a_{c})  },\qquad 	g_{j}(\a) = \frac{h(a_{j})}{1+\sum_{c=1}^{\kappa} h( a_{c} )} \quad \textup{for $j=1,\dots,\kappa$},
	\end{align}
	where $h:\R\rightarrow [0,\infty)$ is a fixed \textit{score function}. For instance, taking $h(\cdot)=\exp(\cdot)$ results in multinomial logistic regression (see Section \ref{sec:MLR} in the appendix for more details). We then model the given training data $(\x_{i}, y_{i})$ as 
	\begin{align}\label{eq:SDL_model}
		\x_{i} = \W\h_{i} \quad \text{and} \quad y_{i}\,|\, \x_{i}\sim \textup{Multinomial}(1, \g(\a_i  )) 
		\quad \text{for $i=1,\dots,n$},
	\end{align}
	where we allow the activation $\a_i$ to depend on the signal $\x_{i}$, latent factors $\W$ and $\h_{i}$, and an additional model parameter $\Beta$ through some functional relation.  
	
	

	As we seek to balance the tasks of dictionary learning and classification, the objective of SDL can naturally be formulated as a multi-objective optimization problem as below:
	\begin{align}\label{eq:SDL_1}
		&\min_{\W, \H, \Beta} \quad  {L(\W,\H,\Beta):=\left( \sum_{i=1}^{n} \ell(y_{i}, \g( \a(\x_{i},\W,\h_{i},\Beta))) \right)  +   \xi \lVert \X_{\textup{data}} - \W\H\rVert_{F}^{2}} \\
		&\textup{subject to:} \quad \textup{Constraints on $\W\in \R^{p\times r}$, $\H\in \R^{r\times n}$, and $\Beta\in \R^{r\times \kappa}$}
	\end{align}
	where $\X_{\textup{data}}=[\x_{1},\dots,\x_{n}]\in \R^{p\times n}$, $\H=[\h_{1},\dots,\h_{n}]\in \R^{r\times n}$, and $\ell(\cdot)$ is a classification loss and is usually taken as	 the negative log likelihood 
	\begin{align}\label{eq:ell_log_likelihood}
		\ell(y_{i},\g( \a(\x_{i},\W,\h_{i},\Beta)))
		:=-\sum_{j=0}^{\kappa} \mathbf{1}(y_{i}=j)  \log \left\{g_j( \a(\x_{i}, \W, \h_{i}, \Beta))\right\}.
	\end{align}
	Here, the \textit{tuning parameter} $\xi$ controls the trade-off between the two objectives of classification and dictionary learning. 	 We allow to put desired constraints on the parameters $\{\W,\H,\Beta\}$. In particular, we will consider nonnegativity constraints on $\W$ and $\H$ as in the supervised nonnegative matrix factorization (SNMF) model \cite{austin2018fully, leuschner2019supervised} to enjoy the nice interpretability of NMF in the supervised setting. 
	
	Various models of the form \eqref{eq:SDL_1} have been proposed in the past two decades. We divide them into two categories, depending on whether the classification model $\g$ is either `feature-based' or `filter-based'. The classification function $\g$ in \textit{feature-based SDL} (SDL-feat) makes use of the code $\h_{i}$, which is a $r$-dimensional feature of $\x_{i}$ extracted by the dictionary $\W$. On the other hand, \textit{filter-based SDL} (SDL-filt) uses the $r$-dimensional filtered input $\W^{T}\x_{i}$ instead of the code $\h_{i}$ in the classification/prediction step. In this work, we consider the following two types of multinomial prediction models: 
	\begin{description}
		\item[\quad (SDL-feat) ] Feature-based classification: $\a(\x_{i},\W,\h_{i},\Beta)=\Beta^{T}\h_{i}$;
		\vspace{0.1cm}
		\item[\quad (SDL-filt)] Filter-based classification: $\a(\x_{i},\W,\h_{i},\Beta)= \Beta^{T} \W^{T}\x_{i}$.
	\end{description} 
	Feature-based models include the classical ones by Mairal et al. (see, e.g., \cite{mairal2008supervised, mairal2011task}) as well as the more recent model of 
	Convolutional Matrix Factorization by Kim et al. \cite{kim2016convolutional} for a contextual text recommendation system. One of the downsides of SDL-feat for classification tasks is that for a new test signal $\x$, its correct code representation $\h$ may need to be learned in a supervised fashion by using an unknown true label $y$ of $\x$. Since $y$ can assume $\kappa+1$ different class labels, one can solve $\kappa$ instances of `supervised sparse coding' to make a prediction for test signals. 
	\begin{figure}[h!]
		\centering
		\includegraphics[width=0.65\linewidth]{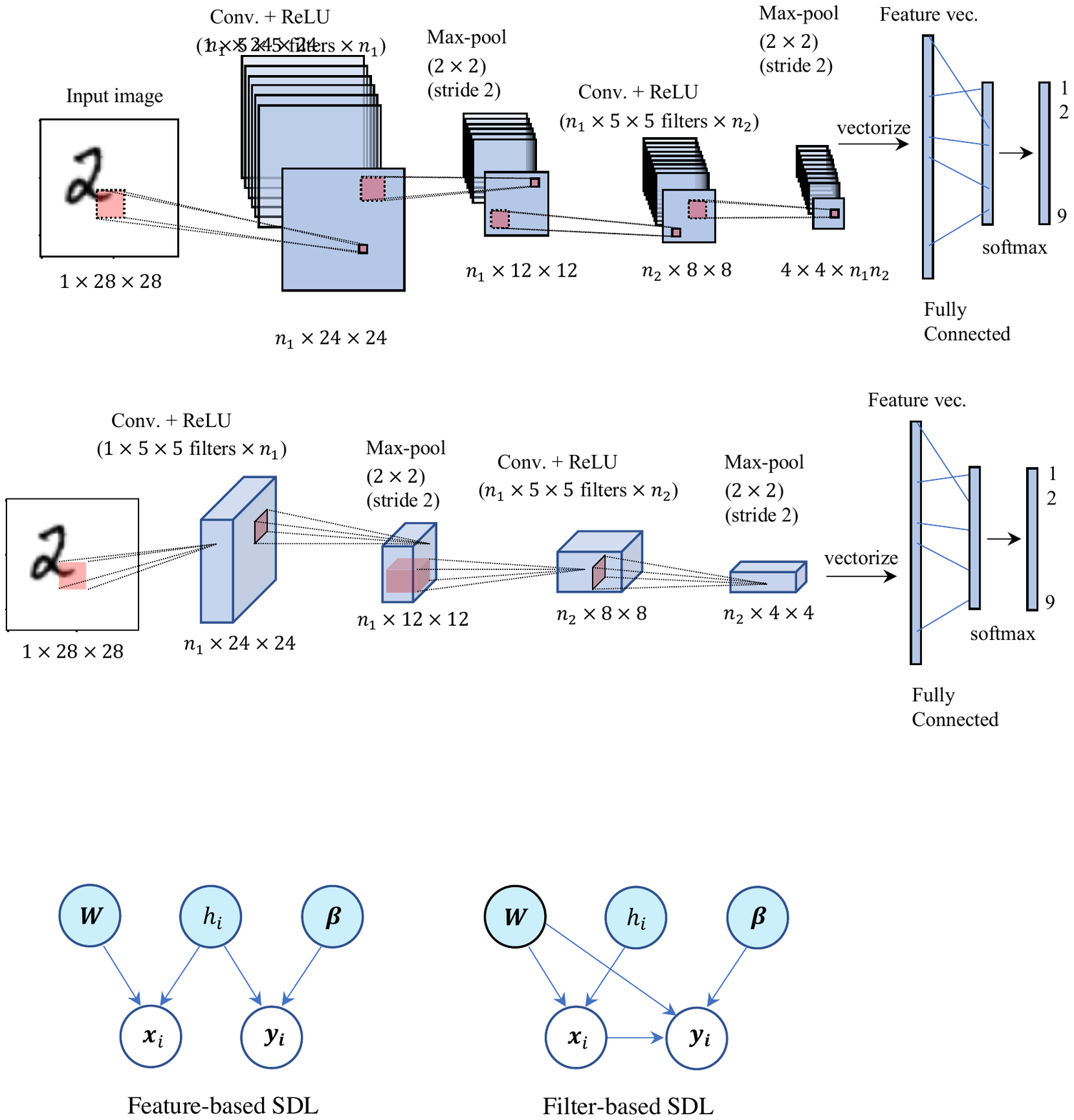} 
		\caption{Graphical models for the feature-based and the filter-based SDL. $\x_{i}$ and $\y_{i}$ denote the feature and the label of the $i$th training data, whereas $W$ denotes $p\times r$ dictionary matrix, $h_{i}$ denotes $r\times 1$ code of data $\x_{i}$, and $\beta_{i}$ denotes parameters for classification.}
		\label{fig:graphical_models}
	\end{figure}
	
	On the other hand, filter-based models have been studied more recently in the supervised matrix factorization literature, most notably from supervised nonnegative matrix factorization \cite{austin2018fully, leuschner2019supervised} and supervised PCA \cite{ritchie2020supervised}. Compared to the prediction step in SDL-feat, the pipeline for the filter-based models is more streamlined, as there is no need to compute supervised sparse code $\h_{i}$ as before. This is because the model now learns to predict directly from the feature-extraction filter $\W$, rather than from the extracted and possibly supervised feature $\h_{i}$. 
	
	\subsection{SDL with auxiliary variable} 
	
	Consider the case where we have additional covariate data $\X_{\textup{aux}}=[\x'_{1},\dots,\x'_{n}]\in \R^{q\times n}$ along with the original data $\X_{\textup{data}}=[\x_{1},\dots,\x_{n}]\in \R^{p\times n}$ (assume $q\ll p$) and labels  $\Y_{\textup{label}}=(y_{1},\dots, y_{n}) \in \{ 0,1,\dots,\kappa \}^{n} $. While $\X_{\textup{data}}$ is subject to a dimension reduction by dictionary learning, $\X_{\textup{aux}}$ will only be used as an auxiliary information for the classification task and is usually low dimensional with possibly discrete variables. In this case, we propose extending the SDL model \eqref{eq:SDL_1} with the types of multi-class classification model specified: 
	\begin{align}\label{eq:ASDL_1}
		&\min_{\W, \H, \Beta,\bGamma} \quad  L(\W,\H,\Beta,\bGamma):=\left(  -\sum_{i=1}^{n} \sum_{j=0}^{\kappa}  \mathbf{1}(y_{i}=j) \log g_{j}( \a(\x_{i},\x'_{i}, \W, \h_{i}, \Beta,\bGamma)) \right)  +   \xi \lVert \X_{\textup{data}} - \W\H\rVert_{F}^{2}  \\
		&\hspace{2cm} \a = \a(\x_{i},\x'_{i}, \W, \h_{i}, \Beta,\bGamma) = \begin{cases}
			\Beta^{T} \h_{i} + \bGamma^{T} \x'_{i} & \textup{feature-based}\\
			\Beta^{T} \W^{T} \x_{i} + \bGamma^{T} \x'_{i} & \textup{filter-based}
		\end{cases}
		\in \R^{\kappa} \\
		&\textup{subject to:} \quad \textup{Constraints on}\,\, \W\in \R^{p\times r}, \H\in \R^{r\times n}, \Beta\in \R^{r\times \kappa }, \bGamma\in \R^{q\times \kappa}.
	\end{align}
	The first term in the bracket in the right hand side of \eqref{eq:ASDL_1} equals the negative log likelihood of observing labels $(y_{1},\dots,y_{n})$ given the input $(\X_{\textup{data}}, \X_{\textup{aux}})$. 
	
	
	Note that when predicting $\y_{i}$, the auxiliary covariate $\x_{i}'$ together with the corresponding auxiliary coefficient $\bGamma$ is used, but the dictionary learning part is unchanged compared to the existing SDL model. For a vivid context, think of $\x_{i}$ as the X-ray image of a patient and $\x_{i}'$ denoting some biological measurement, gender, smoking status, and body mass index (BMI). While it may be desired to compress the image $\x_{i}$ and extract reconstructive and discriminative dictionary atoms from it, it would be more natural to use the additional covariate $\x_{i}'$ as-is for the prediction purpose. 
	
	
	\subsection{Constrained and Augmented Low-rank Estimation}
	
	Next, we introduce another problem class that turns out to be very closely related to SDL \eqref{eq:ASDL_1}, albeit the connection may not seem obvious at first glance. Fix a  function $f:\R^{d_{1}\times d_{2}}\times \R^{d_{3}\times d_{4}}\rightarrow \R$, which takes the input of a $d_{1}\times d_{2}$ matrix and an augmented variable in $\R^{d_{3}\times d_{4}}$.  Consider the following  \textit{constrained and augmented   low-rank estimation}  (CALE) problem 
	\begin{align}\label{eq:CALE}
		\hspace{-3cm} (\textbf{CALE}) \hspace{2cm}	&\min_{\bZ=[\X, \bGamma] \in \subseteq  \R^{d_{1}\times d_{2}} \times \R^{d_{3}\times d_{4}}} \,f(\bZ),\qquad \textup{subject to $\bZ\in \Param$ and $\rank(\X)\le r$},
	\end{align}
	where $\Param$ is a convex subset of $\R^{d_{1}\times d_{2}}\times \R^{d_{3}\times d_{4}}$. Here, we seek to find a global minimizer $\bZ^{\star}=[\X^{\star}, \bGamma^{\star}]$ of the objective function $f$ over the convex set $\Param$, consisting of a low-rank matrix component $\X^{\star}\in \R^{d_{1}\times d_{2}}$ and an auxiliary variable $\bGamma^{\star}\in \R^{d_{3}\times d_{4}}$. In a statistical inference setting, the loss function $f=f_{n}$ may be based on $n$ noisy observations according to a probabilistic model, and the true parameter $\bZ^{*}$ to be estimated may approximately minimize $f$ over the constraint set $\Param$, with some statistical error $\eps(n)$ depending on the sample size $n$. In this case, a global minimizer $\bZ^{\star}\in \argmin_{\Param} f$ serves as an estimate of the true parameter $\bZ^{*}$.  
	The matrix completion and low-rank matrix estimation problem \cite{meka2009guaranteed, recht2010guaranteed} can be considered as special cases of \eqref{eq:CALE} without constraint $\Param$ and the auxiliary variable $\bGamma$. 
	This problem setting has been one of the most important research topics in the machine learning and statistics literature for the past few decades. 
	
	On the other hand, one can reformulate \eqref{eq:CALE} as the following nonconvex problem, where one parameterizes the low-rank matrix variable $\X$ with product $\U\V^{T}$ of two matrices, which we call the \textit{constrained and augmented factored estimation} (CAFE) problem:
	\begin{align}\label{eq:CAFE}
		\hspace{-3.3cm} (\textbf{CAFE}) \hspace{2cm}	&\min_{ \U\in \R^{d_{1}\times r}, \V\in \R^{d_{2}\times r}, \bGamma\in \R^{d_{3}\times d_{4}} } \,f(\U\V^{T}, \bGamma) ,\qquad \textup{subject to $[\U\V^{T},\bGamma]\in \Param$}.
	\end{align}
	Note that a solution to \eqref{eq:CAFE} gives a solution to \eqref{eq:CALE}. Conversely, for \eqref{eq:CALE} without constraint on the first matrix component, singular value decomposition of the first matrix component easily shows that a solution to \eqref{eq:CALE} is also a solution to \eqref{eq:CAFE}. Recently, there has been a surge of progress in global guarantees of solving the factored problem \eqref{eq:CAFE} using various nonconvex optimization methods \cite{jain2010guaranteed, jain2013low, zhao2015nonconvex, zheng2015convergent, tu2016low, park2017non, wang2017unified, park2016provable, park2018finding}. Most of the work considers \eqref{eq:CAFE} without the auxiliary variable and constraints, some with a particular type of constraints (e.g., matrix norm bound), but not general convex constraints. 
	
	It is common that the non-convex factored problem \eqref{eq:CAFE} is introduced as a more efficient formulation of the convex problem \eqref{eq:CALE}. Interestingly, in the present work, we will reformulate the four-factor nonconvex problem of SDL in \eqref{eq:ASDL_1} as a three-factor nonconvex CAFE problem in \eqref{eq:CAFE} and then realize it as a single-factor convex CALE problem in \eqref{eq:CALE}. We illustrate this connection briefly in the following section and in more detail in Section \ref{subsection:SDL_convex_alg}.

	\subsection{A preliminary connection with CALE and SDL}
	\label{subsection:SDL_CALE_prelim}
	
	In this subsection, we give a preliminary discussion on how SDL problems can be formulated as a CALE problem. For simplicity,  we consider the following linear regression version of SDL, where we seek to solve  matrix factorization and linear regression problems simultaneously for data matrix $\X_{\textup{data}}\in \R^{p\times n}$ and response variable $\Y_{\textup{label}}\in \R^{1\times n}$:
	\begin{align}\label{eq:SDL_regression_H1}
		\min_{\W\in \R^{p\times r},\,  \Beta\in \R^{r\times 1} ,\, \H\in \R^{r\times n} } \lVert \Y_{\textup{label}} - \Beta^{T}\H  \rVert_{F}^{2} + \xi  \lVert \X_{\textup{data}} - \W \H  \rVert_{F}^{2}.
	\end{align}
	As in the SDL problem \eqref{eq:SDL_1}, this is a three-block optimization problem involving three factors $\W,\H,$ and $\Beta$. However, by suitably stacking up the matrices, we can reformulate it as the following single matrix factorization problem, which is an instance of the CAFE problem \eqref{eq:CAFE}:
	\begin{align}\label{eq:SDL_regression_H2}
		\min_{\W\in \R^{p\times r},\,  \Beta\in \R^{r\times 1} ,\, \H\in \R^{r\times n} }
		\left[ f\left( \begin{bmatrix} \Beta^{T} \\  \W  \end{bmatrix} \H\right)
		:=
		\left\lVert 
		\begin{bmatrix}
			\Y_{\textup{label}} \\
			\sqrt{\xi} \X_{\textup{data}}
		\end{bmatrix}
		- 
		\begin{bmatrix}
			\Beta^{T}\\
			\sqrt{\xi} 	\W 
		\end{bmatrix}
		\H \right\rVert_{F}^{2}  
		\right].
	\end{align}
	Indeed, we now seek to find \textit{two} decoupled matrices (instead of three), one for $\Beta^{T}$ and $\W$ stacked vertically, and the other for $\H$. The same idea of matrix stacking was used in 
	\cite{zhang2010discriminative} for discriminative K-SVD. Proceeding one step further, another important observation is that it is also equivalent to finding a \textit{single} matrix $\bZ:=\begin{bmatrix} \Beta^{T}\H \\ \W\H \end{bmatrix}\in \R^{(1+p)\times n}$ of rank  at most $r$ that minimizes the function $f$ in  \eqref{eq:SDL_regression_H2}. Thus we can view \eqref{eq:SDL_regression_H1} as a low-rank matrix estimation problem, a special case of CALE \eqref{eq:CALE}. This simple yet instructive example illustrates our two-step lifting strategy for analyzing SDL problems. 
	
	In view of the discussion in Subsection \ref{subsection:SDL1}, \eqref{eq:SDL_regression_H1} can be regarded as using a feature-based regression model.  An analogous filter-based regression model would be the following: 
	\begin{align}\label{eq:SDL_regression_W1}
		\min_{\W\in \R^{p\times r},\,  \Beta\in \R^{r\times 1} ,\, \H\in \R^{r\times n} } \left[  f\left( \W \begin{bmatrix} \Beta,\,  \H  \end{bmatrix} \right):= \lVert \Y_{\textup{label}} -  \Beta^{T}  \W^{T}\X_{\textup{data}} \rVert_{F}^{2} + \xi  \lVert \X_{\textup{data}} - \W \H  \rVert_{F}^{2}\right].
	\end{align}
	Here, matrix stacking as in \eqref{eq:SDL_regression_H2} is not available. However, a simple but important observation we make is that the objective in the right hand side of \eqref{eq:SDL_regression_W1} depends only on the product $\W[\Beta, \H]$  
	and hence we can still view it as an instance of CAFE problem \eqref{eq:CAFE}. Then we may further lift it as a CALE problem \eqref{eq:CALE}, seeking a single matrix $\bZ:=[\W\Beta,\, \W\H]\in \R^{p\times (1+n)}$ of rank at most $r$ that solves \eqref{eq:SDL_regression_W1}. This observation will be used crucially to double-lifting SDL problems \eqref{eq:ASDL_1} as a CALE problem in section \ref{subsection:SDL_convex_alg}.
	
	\section{Algorithms}

	\subsection{Convex algorithms for weakly constrained SDL}
	\label{subsection:SDL_convex_alg}
	
	In this subsection, we consider an instance of SDL \eqref{eq:ASDL_1} when it can be converted to the CALE formulation \eqref{eq:CALE}, and then propose convex algorithms to effectively find global optimality. Inspired by the  observation in Subsection \ref{subsection:SDL_CALE_prelim}, we rewrite the objective function of the filter-based aSDL model \eqref{eq:ASDL_1}  as the following CAFE \eqref{eq:CAFE}  problem: 
	{\small
		\begin{align}\label{eq:SDL_filt_1}
			&			\hspace{-0.5cm}\min_{\W,\H,\Beta,\bGamma } \,\, \left[  f_{\textup{SDL-filt}}\left( \W [\Beta \,\, \H],\, \bGamma
			\right):=    \left( \sum_{i=1}^{n}  \ell(y_{i},\g(\Beta^{T}\W^{T}\x_{i} +\bGamma^{T}\x_{i}')) \right) +   \xi \lVert \X_{\textup{data}} - \W\H\rVert_{F}^{2} + \nu \left( \lVert \Beta \W^{T} \rVert_{F}^{2} + \lVert \bGamma \rVert_{F}^{2} \right)    \right]\\
			&\textup{subject to:}\quad \textup{Constraints on $\W\in \R^{p\times r}$, $\H\in \R^{r\times n}$, $\Beta\in \R^{r \times \kappa}$, $\bGamma\in \R^{q\times \kappa}$}.
		\end{align}
	}
	Note that we have added a $L_{2}$-regularization term for $\W\Beta $ and $ \bGamma$ with coefficient $\nu\ge 0$. This term will play a crucial role in well-conditioning \eqref{eq:SDL_filt_1}.  As before, it is important to notice that the objective function in \eqref{eq:SDL_filt_1} depends only on the products $\W\Beta$ and $\W\H$ as well as the auxiliary variable $\bGamma$. By stacking the matrices $\W\Beta$ and $\W\H$ and imposing a low-rank constraint on the stacked matrix, we can reformulate \eqref{eq:SDL_filt_1} as a CALE problem \eqref{eq:CALE} as below:
	\begin{align}\label{eq:SDL_filt_CALE}
		&\min_{\A,\B, \bGamma } \,\, \left[  f_{\textup{SDL-filt}}	\left([\A, \, \B],\, \bGamma \right)
		=   \left( \sum_{i=1}^{n} \ell(y_{i}, \g(\A^{T} \x_{i} + \bGamma^{T}\x_{i}' )) \right) +   \xi \lVert  \X_{\textup{data}}  -\B\rVert_{F}^{2} + \nu \left( \lVert \A \rVert_{F} + \lVert \bGamma \rVert_{F}  \right)^{2}    \right] \\
		&\quad \textup{subject to:} \quad \textup{Constraints on $\X:=[\A,\B]\in \R^{p\times (\kappa+n)}$, $\bGamma\in \R^{q\times \kappa}$, and $\rank\left( \X \right) \le r$}.
	\end{align}
	Note that the CALE formulation \eqref{eq:SDL_filt_CALE} assumes that the constraints we use in \eqref{eq:SDL_filt_1} for $\W,\H,\Beta$ can be translated into a constraint on the low-rank matrix $\X$ in \eqref{eq:SDL_filt_CALE}. We can such constraints `\textit{weak constraints}', which we formally introduce below:
	\begin{customassumption}{A1}(Weakly constrained SDL)\label{assumption:A1} 
		The constraints on $[\W,\H,\Beta,\bGamma]$ in \eqref{eq:SDL_filt_1} are `weak' in the sense that it can be written as a convex constraint $\Param\subseteq \R^{p\times (\kappa+n)}\times \R^{q\times \kappa}$  on $(\X=[\W\Beta^{T}, \W\H],\bGamma)$ in \eqref{eq:SDL_filt_CALE}. 
		Similarly, the constraints on $[\W,\H,\Beta,\bGamma]$ in \eqref{eq:SDL_feat_1} can be written as a convex constraint $\Param\subseteq \R^{(\kappa+p)\times n}\times \R^{q\times \kappa}$  on $\left(\X=\begin{bmatrix}\Beta \H \\  \W\H\end{bmatrix},\bGamma\right)$ in \eqref{eq:SDL_feat_CALE}. 
	\end{customassumption}
	\noindent In particular, when $\Param$ in \ref{assumption:A1} equals the whole space, we call the corresponding SDL problem \eqref{eq:ASDL_1} \textit{unconstrained}. If the constraints on the parameters are not weak in the sense of \ref{assumption:A1} (e.g., nonnegativity on $\W$ and $\H$), we call the corresponding SDL problem \textit{strongly constrained} and 
	will need to directly solve the nonconvex formulation \eqref{eq:SDL_filt_1} using Algorithm \ref{algorithm:SDL}.

	In order to solve \eqref{eq:SDL_filt_CALE}, we propose a projected gradient descent (PGD) type algorithm, inspired by the singular value projection in \cite{jain2010guaranteed} as well as the initialization algorithm in \cite{wang2017unified}. Namely, we iterate gradient descent followed by projecting onto the convex constraint set of the combined factor $\mathbf{X}=[\A,\B]$ and then perform rank-$r$  projection via truncated SVD until convergence. (See \eqref{eq:LRPGD_iterate0} and Algorithm \ref{algorithm:LPGD}). Once we have a solution $[\X^{\star},\bGamma^{\star}]$ to \eqref{eq:SDL_filt_CALE}, we can use SVD of $\X^{\star}$ to obtain a solution to \eqref{eq:SDL_filt_1}. Namely,  let $\X^{\star} = \Q_{\U} \bSigma \Q_{\V}^{T}$ denote the SVD of $\X$. Since $\rank(\X^{\star})\le r$, we may assume that $\bSigma$ is an $r\times r$ diagonal matrix of singular values of $\X$. Then $\Q_{\U}\in \R^{m\times r}$ and $\Q_{\V}\in \R^{n\times r}$ are semi-orthonormal matrices, that is, $\Q_{\U}^{T}\Q_{\U} = \Q_{\V}^{T}\Q_{\V}=\I_{r}$. Then $\X^{\star} = \U\V^{T}$ where 	$\U:= \Q_{\U} \bSigma^{1/2}$ and  $\V := \Q_{\V} \bSigma^{1/2}$. Consequently, we can take $\W^{\star}=\U$ and $\begin{bmatrix} (\Beta^{\star})^{T}, \,\, \H^{\star} \end{bmatrix} = \V$. Then $[\W^{\star}, \H^{\star}, \Beta^{\star}, \bGamma^{\star}]$ is a solution to \eqref{eq:SDL_filt_1} under the compatibility of constraints stated in \eqref{assumption:A1}. We summarize this CALE approach of solving \eqref{eq:SDL_filt_1} in the following algorithm. 
	Below, $\textup{SVD}_{r}$ denotes rank-$r$ truncated SVD and the projection operators $\Pi_{\Param}$ and $\Pi_{r}$ are defined in Subsection \ref{subsection:notation}.
	\begin{algorithm}[H]
		\caption{SDL-conv-filt}
		\label{alg:SDL_filt_LPGD}
		\begin{algorithmic}[1]
			\State \textbf{Input:} $\X_{\textup{data}}\in \R^{p\times n}$ (Data matrix);\,\,$\X'_{\textup{aux}}\in \R^{q\times n}$ (Auxiliary covariate matrix);\,\, $\Y_{\textup{label}}\in \{0,\dots,\kappa\}^{1 \times n}$ (Label matrix)
			\vspace{0.1cm}
			\State \textbf{Constraints}: Convex set $\Param\subseteq \R^{p\times (\kappa+n)}$ 
			\State \textbf{Parameters}: $\tau>0$ (Stepsize parameter);\,\, $N\in \mathbb{N}$ (number of iterations); $r\ge 1$ (rank parameter)
			\State \qquad  Initialize $\A_{0}\in \R^{p\times \kappa}$, $\B_{0}\in \R^{p\times n}$, $\bGamma_{0}\in \R^{q\times \kappa}$
			\State \qquad \textbf{For $k=1,2,\dots,N$ do:}
			\State \qquad \qquad  $\displaystyle
			\begin{bmatrix}
				\A_{k},\,\, \B_{k},\,\,\bGamma_{k} 
			\end{bmatrix}
			\leftarrow \Pi_{\Param \times \R^{q\times \kappa}}\left(	\begin{bmatrix}
				\A_{k-1},\,\,
				\B_{k-1},\,\,
				\bGamma_{k-1}
			\end{bmatrix}
			- \tau \nabla f_{\textup{SDL-filt}}\left( 		\begin{bmatrix}
				\A_{k-1},\,\, \B_{k-1},\,\, \bGamma_{k-1}
			\end{bmatrix} \right)\right)
			$
			\State\qquad \qquad $\displaystyle 	\begin{bmatrix}
				\A_{k},\,\,
				\B_{k}
			\end{bmatrix} \leftarrow \Pi_{r}\left(  	\begin{bmatrix}
				\A_{k} ,\,\, 
				\B_{k}
			\end{bmatrix}  \right)$	  \qquad ($\triangleright$ rank-$r$ projection)
			\State \qquad\textbf{End for} 
			\State \qquad $[ \U, \bSigma, \V ] \leftarrow \textup{SVD}_{r}\left(\begin{bmatrix}
				\A_{N},\,\,
				\B_{N}
			\end{bmatrix} \right)$ \qquad ($\triangleright$ rank-$r$ SVD)
			\State \qquad $\displaystyle \W_{N} \leftarrow \U \bSigma^{1/2}$, $[\Beta_{N},\,\, \H_{N}] \leftarrow(\bSigma)^{1/2}\V^{T}$
			
			\State \textbf{Output:} $(\W_{N}, \H_{N}, \Beta_{N}, \bGamma_{N})$ and  $(\A_{N}, \B_{N}, \bGamma_{N} )$ 
		\end{algorithmic}
	\end{algorithm}	

Similarly, we can write the objective function of the feature-based augmented SDL model \eqref{eq:ASDL_1} as the following CAFE \eqref{eq:CAFE} problem:
	{\small
		\begin{align}\label{eq:SDL_feat_1}
			&\hspace{-0.4cm}\min_{\W,\H,\Beta,\bGamma } \,\, \left[  f_{\textup{SDL-feat}}\left( 	\begin{bmatrix}
				\Beta^{T}\\
				\W 
			\end{bmatrix}
			\H,\, \bGamma  \right):=    \left( \sum_{i=1}^{n} \ell(y_{i}, \g(\Beta^{T} \h_{i} +\bGamma^{T} \x'_{i} )) \right) +  \xi \lVert  \X_{\textup{data}}  - \W\H\rVert_{F}^{2} + \nu\left( \lVert  \Beta \H \rVert_{F}^{2} + \lVert \bGamma \rVert_{F}^{2}\right)    \right], \\
			&\textup{subject to:}\quad \textup{Constraints on $\W\in \R^{p\times r}$, $\H\in \R^{r\times n}$, $\Beta\in \R^{r\times \kappa}$, $\bGamma\in \R^{q\times \kappa}$}.
		\end{align}
	}
We can reformulate \eqref{eq:SDL_feat_1} as the following CALE problem:
	\begin{align}\label{eq:SDL_feat_CALE}
		&\hspace{-0.5cm}	\min_{\A,\B, \bGamma } \,\, \left[  f_{\textup{SDL-feat}}\left( \X:=	\begin{bmatrix}
			\A\\
			\B
		\end{bmatrix} ,\, \bGamma \right)
		=   \left( \sum_{i=1}^{n} \ell(y_{i},g(\A_{i} + \bGamma^{T}\x_{i}' ) )\right) +   \xi \lVert  \X_{\textup{data}}  -\B\rVert_{F}^{2} +  \nu \left(\lVert \A \rVert_{F}^{2} + \lVert \bGamma \rVert_{F}^{2} \right) \right] \\
		&\quad \textup{subject to:} \quad \textup{Constraints on $\X\in \R^{(\kappa+p)\times n}$, $\bGamma\in \R^{q\times \kappa}$, and $\rank\left( \X \right) \le r$}.
	\end{align}
	As before, this CALE formulation assumes the compatibility of constraints in the two settings (see the weak constraints condition \ref{assumption:A1}), and solutions to \eqref{eq:SDL_feat_CALE} can be transformed into solutions to \eqref{eq:SDL_feat_1} by using SVD. The analog of Algorithm \ref{alg:SDL_feat_LPGD} is stated in Algorithm \ref{alg:SDL_feat_LPGD}.

	\begin{algorithm}[H]
		\caption{SDL-conv-feat}
		\label{alg:SDL_feat_LPGD}
		\begin{algorithmic}[1]
			\State \textbf{Input:} $\X_{\textup{data}}\in \R^{p\times n}$ (Data matrix);\,\,$\X'_{\textup{aux}}\in \R^{q\times n}$ (Auxiliary covariate matrix);\,\, $\Y_{\textup{label}}\in \{0,\dots,\kappa\}^{1 \times n}$ (Label matrix)
			\vspace{0.1cm}
			\State \textbf{Constraints}: Convex set $\Param\subseteq \R^{(p+\kappa)\times n}$ 
			\State \textbf{Parameters}: $\tau>0$ (Stepsize parameter);\,\, $T\in \mathbb{N}$ (number of iterations);
			\State \qquad  Initialize $\A_{0}\in \R^{\kappa\times n}$, $\B_{0}\in \R^{p\times n}$, $\bGamma_{0}\in \R^{q\times \kappa}$
			\State \qquad \textbf{For $k=1,2,\dots,N$ do:}
			\State \qquad \qquad  $\displaystyle
			\begin{bmatrix}
				\A_{k},\,\,
				\B_{k},\,\,
				\bGamma_{k} 
			\end{bmatrix}
			\leftarrow  \Pi_{\Param}	\left( \begin{bmatrix}
				\A_{k-1},\,\, \B_{k-1},\,\, \bGamma_{k-1}
			\end{bmatrix} 
			- \tau \nabla f_{\textup{SDL-feat}}\left( 		\begin{bmatrix}
				\A_{k-1},\,\, \B_{k-1},\,\, \bGamma_{k-1}
			\end{bmatrix}  \right)\right)
			$
			\State\qquad \qquad $\displaystyle 	\begin{bmatrix}
				\A_{k} \\
				\B_{k}
			\end{bmatrix} \leftarrow \Pi_{r}\left(  	\begin{bmatrix}
				\A_{k}  \\
				\B_{k}
			\end{bmatrix}  \right)$	 \qquad ($\triangleright$ rank-$r$ projection)
			\State \qquad\textbf{End for} 
			\State \qquad $[ \U , \bSigma, \V ] \leftarrow \textup{SVD}_{r}\left(\begin{bmatrix}
				\A_{T} \\
				\B_{T}
			\end{bmatrix} \right)$ \qquad ($\triangleright$ rank-$r$ SVD)
			\State \qquad $\displaystyle \begin{bmatrix} (\Beta_{T})^{T} \\ \W_{T} \end{bmatrix} \leftarrow \U \bSigma^{1/2}$, $\H_{T} \leftarrow \bSigma^{1/2} \V^{T}$
			
			\State \textbf{Output:} $(\W_{T}, \H_{T},\Beta_{T}, \bGamma_{T})$
		\end{algorithmic}
	\end{algorithm}
	
	We provide the formulas for the gradients $\nabla f_{\textup{SDL-filt}}$ and $\nabla f_{\textup{SDL-feat}}$ in the appendix, see \eqref{eq:SDL_filt_gradients} and \eqref{eq:SDL_feat_gradients}, respectively.

	\subsection{Nonconvex algorithm for strongly constrained SDL}
	\label{subsection:SDL_BCD_alg}
	
	In this subsection, we provide an algorithm for iteratively solving the strongly constrained SDL problem. More precisely, here we consider both the filter- and the feature-based SDL models in \eqref{eq:ASDL_1} where the constraints on parameters $\W,\H,\Beta$, and $\bGamma$ do not satisfy the weak constraint assumption in \ref{assumption:A1}. In general, this means that we impose separate convex constraints on each of the four parameters. One primary case of interest is using nonnegativity constraints on both $\W$ and $\H$, so that the SDL model \eqref{eq:ASDL_1} combines nonnegative matrix factorization (NMF) together with multinomial logistic regression in two different ways.

	NMF has been a popular dictionary learning model in the literature for various applications, mainly due to the interpretability of dictionary atoms learned under the nonnegativity constraint \cite{lee1999learning, lee2000algorithms}. Analogously, we propose a nonnegative variant of the augmented SDL model \eqref{eq:ASDL_1}, where we impose nonnegativity constraints on the factor matrices $\W$ and $\H$. The special case of filter-based SDL without auxiliary covariates has been studied empirically recently in \cite{austin2018fully, leuschner2019supervised} under the name of `supervised NMF' but without theoretical guarantee of their algorithms. 
	
	In case of strong constraints for the SDL model, we cannot directly use the lifting technique to relate the nonconvex SDL problem to some convex problem in a larger dimensional space as we discussed for the weakly constrained case (see Subsection \ref{subsection:SDL_convex_alg}). We make a key observation that the SDL loss function $L$ in \eqref{eq:ASDL_1}, while being nonconvex, is in fact \textit{multiconvex}. That is, it is convex in each of the four matrix parameters (blocks) $\W,\H,\Beta$, and $\bGamma$ while the other three are held fixed. This fact can be justified by directly computing the Hessian of the loss function $L$ in \eqref{eq:ASDL_1}. (See Lemmas \ref{lem:SDL_filt_BCD_derivatives} and  \ref{lem:SDL_BCD_smooth} in the appendix.) Hence, in order to solve the strongly constrained SDL problems, we may use coordinate descent (BCD) type algorithms \cite{bertsekas1997nonlinear}. The idea is simply to iteratively optimize over one block of parameters while all other blocks are fixed, cycling through all blocks. Such algorithms have been widely used for nonnegative matrix and tensor factorization problems recently \cite{lee1999learning, lee2001algorithms, kolda2009tensor, kim2014algorithms}. Our algorithm for solving the strongly constrained SDL problem \eqref{eq:ASDL_1} uses a similar idea and is stated in Algorithm \ref{algorithm:SDL}. Since each of the four factors $\W,\H,\Beta,$ and $\bGamma$ is iteratively updated by solving a convex sub-problem, it can handle possible constraints for the individual factors such as nonnegativity over $\W$ and $\H$.

	\begin{algorithm}[H]
		\caption{BCD-DR for SDL}
		\label{algorithm:SDL}
		\begin{algorithmic}[1]
			\State \textbf{Input:} $\X_{\textup{data}}\in \R^{p\times n}$ (Data);\,\, $\X_{\textup{aux}}\in \R^{q\times n}$ (Auxiliary covariate);\,\,  $ \Y_{\textup{label}}\in \{0,\dots,\kappa\}^{1 \times n}$  
			\vspace{0.1cm}
			\State \textbf{Constraints}: Convex subsets $\mathcal{C}^{\textup{dict}}\subseteq \R^{p\times r}$,\, $\mathcal{C}^{\textup{code}}\subseteq \R^{r\times n}$,\, $\mathcal{C}^{\textup{beta}}\subseteq \R^{r\times \kappa}$,\, $\mathcal{C}^{\textup{aux}}\subseteq \R^{q\times \kappa}$
			\State \textbf{Parameters}: $\xi\ge 0$ (Tuning parameter);\, $T\in \mathbb{N}$ (number of iterations);\,\, $(r_{k})_{k\ge 1}$ radii in $(0,1]$;
			\State Initialize $\W_{0}\in \mathcal{C}^{\textup{dict}}$, $\H_{0}\in \mathcal{C}^{\textup{code}}$,  $\Beta_{0}\in \mathcal{C}^{\textup{beta}}$,\, $\Beta_{0}'\in \mathcal{C}^{\textup{aux}}$
			\State \textbf{For $k=1,2,\dots,T$ do:}
			\begin{align}
				& \W_{k}\leftarrow  \argmin_{\W \in \mathcal{C}^{\textup{dict}},\, \lVert \W - \W_{k-1} \rVert_{F}\le r_{k}} \,\, L(\W,\H_{k-1},\Beta_{k-1}, \bGamma_{k-1})      \\
				& \Beta_{k}\leftarrow   \argmin_{\Beta\in \mathcal{C}^{\textup{beta}},\,\lVert \Beta - \Beta_{k-1} \rVert_{F}\le r_{k}} \,\, L(\W_{k},\H_{k-1},\Beta,\bGamma_{k-1})     \\
				& \bGamma_{k} \leftarrow   \argmin_{\bGamma\in \mathcal{C}^{\textup{aux}},\,\lVert \bGamma - \bGamma_{k-1} \rVert_{F}\le r_{k}} \,\, L(\W_{k},\H_{k-1},\Beta_{k},\bGamma)     \\
				& \H_{k}\leftarrow \argmin_{\H\in \mathcal{C}^{\textup{code}},\, \lVert \H - \H_{k-1} \rVert_{F}\le r_{k}} \,\, L(\W_{k},\H, \Beta_{k}, \bGamma_{k})
			\end{align}
		
			\State \textbf{End for}
			\State \textbf{Output:} $(\W_{T}, \H_{T}, \Beta_{T}, \bGamma_{T})$ 
		\end{algorithmic}
	\end{algorithm}
	
	For the problems we consider, the radius $r_{k}=O(1/k)$ seems to work well. In most of experiments we perform in this paper we choose the convex constraint sets to be $\mathcal{C}^{\textup{dict}}= \{ \W\in \R^{p\times r}_{\ge 0}\,|\, \lVert \W\rVert_{F}\le 1  \}$, $\mathcal{C}^{\textup{code}}= \{ \H\in \R^{r\times n}_{\ge 0}\,|\, \lVert \H\rVert_{F}\le C_{1} \}$, $\mathcal{C}^{\textup{beta}}= \{ \Beta\in \R^{r\times \kappa}\,|\, \lVert \Beta\rVert_{F}\le C_{2} \}$,  and $\mathcal{C}^{\textup{aux}}= \{ \bGamma\in \R^{q\times \kappa}\,|\, \lVert \bGamma\rVert_{F}\le C_{3} \}$,  where $C_{1},C_{2},C_{3}>0$ are large enough  constants. For example, we may choose them to be a multiple of $\lVert \X_{\textup{data}} \rVert_{F} + \lVert \Y_{\textup{label}} \rVert_{F} )$. 
	
	Each convex optimization sub-problems in Algorithm \ref{algorithm:SDL} can be solved by standard projected gradient descent algorithms, where the optimality gap decays sub-exponentially for convex sub-problems and exponentially if the restricted objectives are strongly convex (see, e.g., \cite[Thm. 10.29]{beck2017first}). In practice, we use only $O(1)$ sub-iterations for solving each convex sub-problems. In fact, the main result in \cite{lyu2020convergence} implies that when each convex sub-problems are strongly convex (which can be enforced by adding an $L_{2}$-regularization term), then it is enough to iteratively solve it at iteration $k$ to the accuracy of $O(k^{-2})$ (in optimality gap of function values), which is achieved by $O(\log k)$ sub-iterations. We provide theoretical convergence guarantees of Algorithm \ref{algorithm:SDL} in Sections \ref{subsection:SDL_BCD} and \ref{subsection:statistical_estimation_BCD}.
	
	Here we provide computations for the derivatives of the loss function $L$ in Lemma \ref{lem:SDL_filt_BCD_derivatives} which may be useful in executing projected gradient descent algorithm to solve each convex sub-problems in Algorithm \ref{algorithm:SDL}. Namely, let $L(\bZ)$ denote the objective of the feature-based SDL in \eqref{eq:ASDL_1}. For the filter-based model in \eqref{eq:ASDL_1}, recall that the activation $\a_{s}:=\Beta^{T}\W^{T}\x_{s}+\bGamma^{T}\x'_{s}$ for predicting $y_{s}$ given $\x_{s}$  and $\x_{s}'$. Define the vector $\dot{\h}(y,\a) \in \R^{\kappa}$  as 
	\begin{align}\label{eq:hddot_def}
		&\dot{\h}(y,\a) :=(\dot{h}_{1},\dots, \dot{h}_{\kappa})^{T}\in \R^{\kappa}, \qquad \dot{h}_{j}=\dot{h}_{j}(y,\a) :=\left( \frac{h'(a_{j})}{1+\sum_{c=1}^{\kappa} h(a_{c})} - \mathbf{1}(y=j)\frac{h'(a_{j})}{h(a_{j})} \right). 
	\end{align} 
	Denote $\K:=[\dot{\h}(y_{1},\a_{1}),\dots, \dot{\h}(y_{1},\a_{1})]\in \R^{\kappa\times n}$. Then the gradients of the objective $L(\bZ)$ can be computed as 
	\begin{align}\label{eq:SDL_filt_BCD_derivatives1_main}
		\textup{(SDL-filt)}\quad 
		\begin{cases} 
			\nabla_{\W} \,  L(\bZ) &= \X_{\textup{data}} \K^{T} \Beta^{T} +  2\xi(\W\H-\X_{\textup{data}})\H^{T},\qquad
			\nabla_{\Beta} \, L(\bZ)  = \W^{T} \X_{\textup{data}}  \K^{T} \\
			\nabla_{\bGamma} \, L(\bZ) &=  \X_{\textup{aux}}\K^{T}, \qquad  \nabla_{\H} \, L(\bZ) =  2\xi \W^{T}(\W\H-\X_{\textup{data}}).
		\end{cases}
	\end{align}
	On the other hand, for the feature-based model in \eqref{eq:ASDL_1}, we use the activation $\a_{s}:=\Beta^{T}\h_{s}+\bGamma^{T}\x'_{s}$. Then the gradients of the objective $L(\bZ)$ is given by 
	\begin{align}\label{eq:SDL_feat_BCD_derivatives1_main}
		\textup{(SDL-feat)}\quad
		\begin{cases}
			\nabla_{\W} \,  L(\bZ) &= 2\xi(\W\H-\X_{\textup{data}})\H^{T},\qquad
			\nabla_{\Beta} \, L(\bZ)  = \H  \K^{T} \\
			\nabla_{\bGamma} \, L(\bZ) &=  \X_{\textup{aux}}\K^{T}, \qquad  \nabla_{\H} \, L(\bZ) =  \Beta \K +   2\xi \W^{T}(\W\H-\X_{\textup{data}}).
		\end{cases}
	\end{align}
	Details for these computations as well as the Hessian computation are given in the appendix, see Lemma \ref{lem:SDL_filt_BCD_derivatives} and Remark \eqref{rmk:feat_SDL_gradient}. 
	
	\subsection{Comparison of algorithms}
	\label{sec:alg_complexity}
	
	In this subsection, we compare our nonconvex (Alg. \ref{algorithm:SDL}) and convex (Alg. \ref{alg:SDL_filt_LPGD} and \ref{alg:SDL_feat_LPGD}) algorithms in terms of their types, asymptotic convergence guarantee, rate of convergence, and computational complexity. Justifications and a more detailed discussion of the asymptotic convergence and complexity bounds we briefly state in Table \ref{table:comparison_algs} are given in the following section, see Theorems \ref{thm:SDL_BCD} and \ref{thm:SDL_LPGD}.

	We give some remarks on the computational complexity of the algorithms. For algorithm \ref{algorithm:SDL} based on BCD-DR, the per-iteration cost is proportional to the cost of computing gradients of the objective in each block variable (e.g., $\W,\H,\Beta,\H$) and the number of projected gradient descent steps (sub-iterations) used in each iteration. According to the discussion in the previous subsection, the number of sub-iterations at iteration $k$ of Algorithm \ref{algorithm:SDL} is at most $O(\log k)$ for theoretical convergence, but in practice, we use $O(1)$ sub-iterations. Hence we simply report the cost of computing gradients for the per-iteration cost of Algorithm \ref{algorithm:SDL} in Table \ref{table:comparison_algs}, which is $O((pr+q)n)$ for both the filter-based and feature-based cases. However, while they have the same asymptotic order, computing gradients for the filter-based model are multiple constant factors more expensive than that for the feature-based model, which can be seen by comparing the gradient formulas in \eqref{eq:SDL_filt_BCD_derivatives1_main} and \eqref{eq:SDL_feat_BCD_derivatives1_main}. Namely, SDL-filter computes additional $\X_{\textup{data}}\K^{T} \Beta^{T}$ for the gradient of $\W$ and the gradient of $\Beta$ uses more expensive matrix multiplication $\W^{T}\X_{\textup{data}}\K^{T}$ depending on $p$ instead of $\H\K^{\kappa}$ for SDL-feature independent of $p$.

	\begin{table}[htbp]
		\centering
		\begin{tabular}{c||cccccc}
			\hline 
			Algorithm &  Type  & \textup{Constraints}  &  $\begin{matrix} \textup{Asymptotic} \\ \textup{Convergence} \end{matrix}$ & $\begin{matrix} \textup{Per-iteration} \\ \textup{cost} \end{matrix}$ & $\begin{matrix} \textup{Iteration} \\ \textup{Complexity} \end{matrix}$  
			\\
			\hline
			Alg. \ref{alg:SDL_filt_LPGD} and \ref{alg:SDL_feat_LPGD}  & GD+rSVD  &  $\begin{matrix} \textup{product of} \\ \textup{factors} \end{matrix}$  & Global minimizer &  $O( \min(pn^{2}, np^{2}))$  & $O(\log \eps^{-1})$  
			\\[10pt] 
			Alg. \ref{algorithm:SDL} & BCD & $\begin{matrix} \textup{individual} \\ \textup{factors} \end{matrix}$  &Stationary points &  $O((pr+q)n)$  &  $O(\eps^{-1}(\log \eps^{-1})^{2})$  
			\\
			\hline
		\end{tabular}
		\caption{Overview of computational aspects of the various SDL algorithms.  `GD' stands for gradient descent, `rSVD'  stands for rank-$r$ SVD, and `BCD' stands for block coordinate descent. Iteration complexity means the worst-case number of iterations to achieve an $\eps$-accurate first-order optimal solution. Under the hypothesis of Theorems \ref{thm:SDL_LPGD} and \ref{thm:SDL_LPGD_feat}, first-order optimality for Algorithms \ref{alg:SDL_filt_LPGD} and \ref{alg:SDL_feat_LPGD} implies global optimality.  We assume $\kappa=O(1)$ in this table.} 
		\label{table:comparison_algs}
	\end{table}
	
	On the other hand, the per-iteration computational cost for Algorithms \ref{alg:SDL_filt_LPGD} and \ref{alg:SDL_feat_LPGD} are dominated by the cost of computing SVD of $p\times (\kappa+n)$ and $(p+q)\times n$ matrix, respectively. Assuming $q=O(p)$, this is of order $O( \min(pn^{2}, np^{2}))$. However, since we only need rank-$r$ truncated SVD, we may employ a much more efficient truncated SVD using random projection \cite{halko2011finding}, which approximately computes rank-$r$ truncated SVD. Using this heuristic, one can reduce the per-iteration computational cost to $O((p+q)rn)$, which is essentially the same order for the nonconvex method in Algorithm \ref{algorithm:SDL}. However, our convergence guarantee in Theorems \ref{thm:SDL_LPGD} and \ref{thm:SDL_LPGD_feat} does not apply in this case. Also in practice, we find the convex methods in Algorithms \ref{alg:SDL_filt_LPGD} and \ref{alg:SDL_feat_LPGD} depend more sensitively on the choice of hyperparameters (e.g., stepsize) than the nonconvex method in Algorithm \ref{algorithm:SDL}.


	\section{Statement of Results}

	\subsection{First-order optimality measures }
	\label{sec:stationary_points}

	In order to make the notion of approximate solutions, we first recall the definition of stationary points for constrained optimization problems. Namely, consider the problem of minimizing a function $f:\R^{p}\rightarrow \R$ over a convex set $\Param\subset \R^{p}$. Then
	$\param^{*}\in \Param$ is a \textit{stationary point} of $f$  over $\Param$ if $\inf_{\param\in \Param} \, \langle \nabla f(\param^{*}) ,\, \param - \param^{*} \rangle \ge 0$. This is equivalent to saying that $-\nabla f(\param^{*})$ is in the normal cone of $\Param$ at $\param^{*}$. Every local minimum of $f$ over $\Param$ is a stationary point. Relaxing this notion, for each $\eps\ge 0$, we define $\param^{*}\in  \Param$ to be an \textit{$\eps$-stationary point} of $f$  over $\Param$ if 
	\begin{align}\label{eq:stationary_approximate}
		-\inf_{\param\in \Param} \, \left\langle \nabla f(\param^{*}) ,\,  \frac{\param - \param^{*}}{\lVert \param - \param^{*} \rVert_{F}} \right\rangle  \le \sqrt{\eps}.
	\end{align}	
	In order to explain this definition, suppose $\param^{*}$ lies in the interior of $\Param$. In this case, \eqref{eq:stationary_approximate} is equivalent to $\lVert \nabla f(\param^{*}) \rVert_{F}^{2}\le \eps$. When $f$ is differentiable, $\param^{*}$ is a stationary point of $f$ over $\Param$ if and only if $\lVert \nabla f(\param^{*}) \rVert_{F}^{2}=0$. Moreover, it is standard that a rate of convergence to stationary points in the interior of the constraint set $\Param$ (the whole parameter space for unconstrained problems) is measured in terms of gradient norm squared  \cite{sun2018markov, xu2019convergence, ward2019adagrad, khamaru2018convergence}. We also remark that the above notion of $\eps$-approximate solution is  also equivalent to a similar notion in \cite[Def. 1]{nesterov2013gradient}, which is stated for non-smooth objectives using subdifferentials instead of gradients as in \eqref{eq:stationary_approximate}. 	
	
	
	\subsection{Exponential convergence  of Low-rank PGD}
	
	In order to solve the CALE problem \eqref{eq:CALE}, consider the following \textit{Low-rank Projected Gradient Descent} (LPGD) algorithm: (See Algorithm \ref{algorithm:LPGD})
	\begin{align}\label{eq:LRPGD_iterate0}
		\bZ_{t} \leftarrow \Pi_{r}\left( \Pi_{\Param} \left(\bZ_{t-1} - \tau \nabla f(\bZ_{t-1}) \right) \right), 
	\end{align}
	where $\tau$ is a stepsize parameter, $\Pi_{\Param}$ denotes projection onto the convex constraint set $\Param\subseteq \R^{d_{1}\times d_{2}} \times \R^{d_{3}\times d_{4}}$, and $\Pi_{r}$ denotes the projection of the first matrix component onto matrices of rank at most $r$ in $\R^{d_{1}\times d_{2}}$. More precisely, let $\bZ = [\X,\bGamma]$. Then $\Pi_{r}(\bZ):=[ \Pi_{r}(\X), \bGamma ]$. It is well-known that the rank-$r$ projection above can be explicitly computed by the singular value decomposition (SVD). Namely, $\Pi_{r}(\X)= \U\bSigma\V^{T}$, where $\bSigma$ is the $r\times r$ diagonal matrix of the top $r$ singular values of $\X$ and  $\U\in \R^{d_{1}\times r}$, $\V\in \R^{d_{2}\times r}$ are semi-orthonormal matrices (i.e., $\U^{T}\U = \V^{T}\V = \I_{r}$). 
	
	\begin{algorithm}[H]
		\caption{Low-rank Projected Gradient Descent (LPGD)} 
		\label{algorithm:LPGD}
		\begin{algorithmic}[1]
			\State \textbf{Input:}  $f:\R^{d_{1}\times d_{1}}\times \R^{d_{3}\times d_{4}}\rightarrow \R$ (Objective function);\,  $\Param\subseteq \R^{d_{1}\times d_{1}}\times \R^{d_{3}\times d_{4}}\rightarrow \R$ (Convex constraint);\, $r\in \mathbb{N}$ (Rank parameter); 
			\State \textbf{Parameters}: $\tau>0$ (Stepsize parameter);\,\, $N\in \mathbb{N}$ (number of iterations);
			\State \qquad  Initialize $\bZ_{0}\in \Param$
			\State \qquad \textbf{For $t=1,2,\dots,T$ do:}
			\State \qquad \qquad  $\displaystyle
			\bZ_{t} \leftarrow \Pi_{r}\left(  \Pi_{\Param}(\bZ_{t-1} - \tau \nabla f(\bZ_{t-1}) )  \right) 
			$
			\State \qquad\textbf{End for} 
			\State \textbf{Output:} $\bZ_{T}$
		\end{algorithmic}
	\end{algorithm}
	
	Note that Algorithm \ref{algorithm:LPGD} resembles the standard projected gradient descent (PGD) in the optimization literature, as a gradient descent step is followed first by projecting onto the convex constraint set $\Param$ and then by the rank-$r$ projection. 
	 It is also worth noting the similarity of \eqref{eq:LRPGD_iterate0} to the `lift-and-project' algorithm in \cite{chu2003structured} for structured low-rank approximation problem, which proceeds by alternatively applying the projections $\Pi_{\Param}$ and $\Pi_{r}$ to a given matrix until convergence.  
	 
	 In Theorem \ref{thm:CALE_LPGD}, we show that the iterate $\bZ_{t}$ of Algorithm \ref{algorithm:LPGD} converges exponentially to a low-rank approximation of the global minimizer of the objective $f$ over $\Param$, given that the objective $f$ satisfies the following restricted strong convexity (RSC) and restricted smoothness (RSM) properties in Definition \ref{def:RSC}. These properties were first used in  \cite{agarwal2010fast, ravikumar2011high,negahban2011estimation} for a class of matrix estimation problems and have found a number of applications in optimization and machine learning literature \cite{wang2017unified,park2018finding, tong2021accelerating}.

	\begin{definition}(Restricted Strong Convexity and Smoothness)\label{def:RSC}
		A function $f:\R^{d_{1}\times d_{2}} \times \R^{d_{3}\times d_{4}} \rightarrow \R$ is \textit{$r$-restricted strongly convex and smooth} with parameters $\mu,L>0$ if for all $\X,\Y\in \R^{d_{1}\times d_{2}}\times \R^{d_{3}\times d_{4}}$  whose matrix coordinates are of rank $\le r$, 
		\begin{align}\label{eq:def_RSC_RSM}
			\frac{\mu}{2} \lVert \vect(\X) - \vect(\Y) \rVert_{2}^{2} \overset{\textup{(RSC)}}{\le} 	f(\Y) - f(\X) - \langle \nabla f(\X),\, \Y-\X \rangle  \overset{\textup{(RSM)}}{\le} \frac{L}{2} \lVert \vect(\X) - \vect(\Y) \rVert_{2}^{2}.
		\end{align}
	\end{definition}
	
	Recall that the CALE \eqref{eq:CALE} problem considers a constrained optimization problem, where the global minimizer of the objective function $f$ over the constraint set $\Param$ need not be a critical point of $f$, but only a stationary point when it is at the boundary of $\Param$. In order to measure the rate of convergence of an algorithm to a stationary point, we use gradient mapping \cite{nesterov2013gradient, beck2017first} as a measure of the degree at which a point $\bZ^{\star}$ in $\Param$ fails to be a stationary point, which is particularly well-suited for projected gradient descent type algorithms. Namely, for the CALE problem in \eqref{eq:CALE}, we define a map $G:\Param\times (0,\infty)\rightarrow \R$ by 
	\begin{align}\label{eq:def_grad_mapping}
		G(\bZ, \tau) := \frac{1}{\tau}(\bZ - \Pi_{\Param}(\bZ - \tau \nabla f(\bZ))).
	\end{align}
	We call $G$ the \textit{gradient mapping} associated with problem \eqref{eq:CALE}. In the special cases when $\Param$ is the whole space or when $\bZ$ is in the interior of $\Param$,  if $\tau$ is sufficiently small (so that $\bZ-\tau\nabla f(\bZ)\in \Param$), then $\lVert G(\bZ, \tau)\rVert_{F} = \lVert \nabla f(\bZ) \rVert_{F}$, which is the standard measure of first-order optimality of $\bZ$ for minimizing the objective $f$. In general, it holds that $\lVert G(\bZ,\tau)\rVert_{F}\le \lVert \nabla f(\bZ) \rVert_{F}$ (see Lemma \ref{lem:gradient_mapping}). 
	
	In order to better motivate the definition, fix $\bZ\in \Param$ and decompose it as 
	\begin{align}
		\bZ &=\Pi_{\Param}(\bZ - \tau \nabla f(\bZ))   + \left( \bZ- \Pi_{\Param}(\bZ - \tau \nabla f(\bZ))   \right) \\
		&=\Pi_{\Param}(\bZ - \tau \nabla f(\bZ))  +  \tau G(\bZ, \tau).
	\end{align}
	Namely, the first term above is a one-step update of a projected gradient descent at $\bZ$ over $\Param$ with stepsize $\tau$, and the second term above is the error term. If $\bZ$ is a stationary point of $f$ over $\Param$, then $-\nabla f(\bZ)$ lies in the normal cone of $\Param$ at $\bZ$, so $\bZ$ is invariant under the projected gradient descent and the error term above is zero. If $\bZ$ is only approximately stationary, then the error above is nonzero. In fact, $ G(\bZ, \tau)=0$ if and only if $\bZ$ is a stationary point of $f$ over $\Param$ (see \cite[Thm 10.7]{beck2017first}). Therefore, we may use the size of $ G(\bZ,\tau)$ (measured using an appropriate norm) as a measure of first-order optimality of $\bZ$ for the problem \eqref{eq:CALE}.

	Now we state our main result concerning exponential convergence of Algorithm \ref{algorithm:LPGD} for CALE \eqref{eq:CALE}.
	
	\begin{theorem}(Exponential convergence  of LPGD)\label{thm:CALE_LPGD}
		Let $f:\R^{d_{1}\times d_{2}} \times \R^{d_{3}\times d_{4}} \rightarrow \R$ be $r$-restricted strongly convex and smooth with parameters $\mu$ and $L$, respectively, with $L/\mu<3$.  Let $(\bZ_{t})_{t\ge 0}$ be the iterates generated by Algorithm \ref{algorithm:LPGD}. Suppose $\Param\subseteq \R^{d_{1}\times d_{2}} \times \R^{d_{3}\times d_{4}}$ is a convex subset and fix a stepsize $\tau\in ( \frac{1}{2\mu}, \frac{3}{2L})$. Then $\rho:=2 \max( |1-\tau\mu|,\, |1-\tau L| ) \in (0,1)$ and  the followings hold:
		
		\begin{description}
			\item[(i)] Let $\bZ^{\star}=[\X^{\star}, \bGamma^{\star}]\in \Param$ be arbitrary with $\rank(\X^{\star})\le r$. Write $ G(\bZ^{\star}, \tau) =[ \Delta\X ^{\star}, \Delta \bGamma^{\star}]$. Then for $t\ge 1$, 
			\begin{align}\label{eq:CALE_LPGD_thm}
				\lVert \bZ_{t} - \bZ^{\star}  \rVert_{F}  \le  \rho^{t}  \, \lVert  \bZ_{0} - \bZ^{\star}\rVert_{F} + \frac{\tau}{1-\rho}  \left( \sqrt{3r} \lVert \Delta \X^{\star} \rVert_{2} + \lVert \Delta \bGamma^{\star}  \rVert_{F} \right).
			\end{align}

			\item[(ii)] Suppose $\bZ^{\star}=[\X^{\star}, \bGamma^{\star}]$ is any stationary point of $f$ over $\Param$ whose matrix factor $\X^{\star}$ has rank $\le r$. Then $\lVert \bZ_{t} - \bZ^{\star}  \rVert_{F}\le  \rho^{t}  \, \lVert  \bZ_{0} - \bZ^{\star}\rVert_{F}$.  Furthermore, if $\nabla f$ is Lipschitz continuous over $\Param$, then for $t\ge 1$, 
			\begin{align}\label{eq:PSGD_linear_conv2}
				f\left( \bZ_{t}
				\right)  -  f(\bZ^{\star}) \le \left( \lVert \nabla f(\bZ^{\star}) \rVert + L \rho^{t} \right) \rho^{t} \lVert \bZ_{0}-\bZ^{\star}\rVert_{F}.
			\end{align}
		\end{description}
	\end{theorem}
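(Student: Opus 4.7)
The plan is a one-step contraction analysis combined with a geometric-series summation. Introduce the auxiliary point $\hat{\bZ} := \Pi_{\Param}(\bZ^{\star} - \tau \nabla f(\bZ^{\star})) = \bZ^{\star} - \tau G(\bZ^{\star}, \tau)$, and set $\tilde{\bZ}_{t} := \Pi_{\Param}(\bZ_{t-1} - \tau \nabla f(\bZ_{t-1}))$, so that $\bZ_{t} = \Pi_{r}(\tilde{\bZ}_{t})$, where the rank-$r$ projection acts only on the matrix coordinate.

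The first step is to contract the gradient-descent map. Because both $\bZ_{t-1}$ and $\bZ^{\star}$ have rank-$\le r$ matrix components, the $r$-RSC and $r$-RSM hypotheses apply to this pair. The standard one-step identity for strongly convex, $L$-smooth objectives (expanding $\lVert \X - \Y - \tau(\nabla f(\X) - \nabla f(\Y))\rVert_{F}^{2}$ and invoking co-coercivity) yields $\lVert (\bZ_{t-1} - \tau \nabla f(\bZ_{t-1})) - (\bZ^{\star} - \tau \nabla f(\bZ^{\star}))\rVert_{F} \le \sigma \lVert \bZ_{t-1} - \bZ^{\star}\rVert_{F}$ with $\sigma := \max(|1-\tau\mu|, |1-\tau L|)$. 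Non-expansiveness of the Euclidean projection $\Pi_{\Param}$ onto $\Param$ then gives $\lVert \tilde{\bZ}_{t} - \hat{\bZ}\rVert_{F} \le \sigma \lVert \bZ_{t-1} - \bZ^{\star}\rVert_{F}$. The assumption $L/\mu < 3$ together with $\tau \in (1/(2\mu), 3/(2L))$ guarantees $\sigma < 1/2$.

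Next, handle the rank projection. Since $\bZ^{\star}$ lies in the lifted rank-constraint set $\mathcal{R} := \{[\X,\bGamma] : \rank(\X) \le r\}$, and $\bZ_{t} = [\Pi_{r}(\tilde{\X}_{t}), \tilde{\bGamma}_{t}]$ is the best Frobenius-norm approximation of $\tilde{\bZ}_{t}$ in $\mathcal{R}$ (the auxiliary coordinate passing through unchanged), the best-approximation property $\lVert \bZ_{t} - \tilde{\bZ}_{t}\rVert_{F} \le \lVert \bZ^{\star} - \tilde{\bZ}_{t}\rVert_{F}$ plus a triangle inequality gives $\lVert \bZ_{t} - \bZ^{\star}\rVert_{F} \le 2\lVert \tilde{\bZ}_{t} - \bZ^{\star}\rVert_{F}$. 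Decomposing $\tilde{\bZ}_{t} - \bZ^{\star} = (\tilde{\bZ}_{t} - \hat{\bZ}) - \tau G(\bZ^{\star},\tau)$ yields the one-step recursion $\lVert \bZ_{t} - \bZ^{\star}\rVert_{F} \le \rho \lVert \bZ_{t-1} - \bZ^{\star}\rVert_{F} + 2\tau\lVert G(\bZ^{\star},\tau)\rVert_{F}$ with $\rho = 2\sigma \in (0,1)$. Iterating and summing the geometric series proves a coarse form of \eqref{eq:CALE_LPGD_thm}. To sharpen the error term from $2\lVert G(\bZ^{\star},\tau)\rVert_{F}$ to $\sqrt{3r}\lVert \Delta\X^{\star}\rVert_{2} + \lVert \Delta\bGamma^{\star}\rVert_{F}$, I would track the matrix and auxiliary coordinates separately: since $\Pi_{r}(\tilde{\X}_{t}) - \X^{\star}$ has rank at most $2r$, applying $\lVert A\rVert_{F} \le \sqrt{\rank(A)}\lVert A\rVert_{2}$ to the matrix part and invoking Weyl's inequality $\sigma_{r+1}(\tilde{\X}_{t}) \le \lVert \tilde{\X}_{t} - \X^{\star}\rVert_{2}$ replaces the matrix Frobenius error by an operator-norm error in $\Delta\X^{\star}$, while the auxiliary coordinate contributes its Frobenius norm unchanged. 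The precise constant $\sqrt{3r}$ emerges from careful rank bookkeeping when adding the rank-$\le r$ contributions produced by the triangle bound, and this sharpening is the main technical hurdle.

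For part (ii), when $\bZ^{\star}$ is a stationary point, the characterization $G(\bZ^{\star},\tau) = 0$ of stationarity via the gradient mapping makes the error term in (i) vanish, yielding $\lVert \bZ_{t} - \bZ^{\star}\rVert_{F} \le \rho^{t}\lVert \bZ_{0} - \bZ^{\star}\rVert_{F}$. For the function-value bound \eqref{eq:PSGD_linear_conv2}, I would apply the fundamental theorem of calculus $f(\bZ_{t}) - f(\bZ^{\star}) = \int_{0}^{1} \langle \nabla f(\bZ^{\star} + s(\bZ_{t} - \bZ^{\star})), \bZ_{t} - \bZ^{\star}\rangle\, ds$, bound the integrand by $(\lVert \nabla f(\bZ^{\star})\rVert + L\lVert \bZ_{t} - \bZ^{\star}\rVert_{F})\lVert \bZ_{t} - \bZ^{\star}\rVert_{F}$ via Lipschitz continuity of $\nabla f$ on $\Param$, and substitute the exponential decay of $\lVert \bZ_{t} - \bZ^{\star}\rVert_{F}$ from the first assertion of (ii).
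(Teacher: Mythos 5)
Your overall architecture matches the paper's: a one-step contraction through the gradient map using the restricted strong convexity/smoothness parameters, non-expansiveness of $\Pi_{\Param}$, the best-approximation property of $\Pi_{r}$ (with $\bZ^{\star}$ as a competitor) to pay a factor of $2$, and a geometric-series summation. Your part (ii) — both the vanishing of the error term at a stationary point and the fundamental-theorem-of-calculus bound on $f(\bZ_{t}) - f(\bZ^{\star})$ — is essentially identical to the paper's argument and is correct. The coarse form of (i), with $\lVert G(\bZ^{\star},\tau)\rVert_{F}$ (equivalently $\lVert \Delta\X^{\star}\rVert_{F} + \lVert\Delta\bGamma^{\star}\rVert_{F}$) in the error term, is also correctly established by your recursion.

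The genuine gap is the sharpening to $\sqrt{3r}\,\lVert \Delta\X^{\star}\rVert_{2}$. Your proposed route attacks the wrong object: the facts you cite (that $\Pi_{r}(\tilde{\X}_{t}) - \X^{\star}$ has rank at most $2r$, and Weyl's inequality for $\sigma_{r+1}$) control the \emph{iterate differences}, which are indeed low rank; but the term you need to convert from Frobenius norm to operator norm is the gradient-mapping error $\Delta\X^{\star}$, which enters additively through the decomposition $\tilde{\bZ}_{t} - \bZ^{\star} = (\tilde{\bZ}_{t} - \hat{\bZ}) - \tau G(\bZ^{\star},\tau)$ and has no a priori rank control — in the worst case $\lVert \Delta\X^{\star}\rVert_{F}$ can be as large as $\sqrt{\min(d_{1},d_{2})}\,\lVert\Delta\X^{\star}\rVert_{2}$. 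No amount of bookkeeping on the ranks of the iterates fixes this. The paper's resolution is different in kind: it inserts an extra orthogonal projection $\Pi_{\mathcal{A}}$ onto a linear subspace $\mathcal{A}$ of matrices whose column and row spaces lie in the (at most $3r$-dimensional) spans of those of $\X^{\star}$, $\X_{t-1}$, and $\X_{t}$, and proves a lemma stating that $\Pi_{r}\circ\Pi_{\mathcal{A}}$ applied to the pre-projection iterate returns exactly $\X_{t}$ (because $\Pi_{\mathcal{A}}$ kills the components of the residual singular directions that lie outside $\textup{col}(\U_{t})$, so the top-$r$ SVD is unchanged). Since $\Pi_{\mathcal{A}}$ is linear and non-expansive, the error term that survives the recursion is $\Pi_{\mathcal{A}}(\Delta\X^{\star})$, which has rank at most $3r$, whence $\lVert \Pi_{\mathcal{A}}(\Delta\X^{\star})\rVert_{F}\le \sqrt{3r}\,\lVert \Delta\X^{\star}\rVert_{2}$. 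Without this subspace device (or an equivalent one) the constant $\sqrt{3r}$ in \eqref{eq:CALE_LPGD_thm} is not obtainable from your argument. A secondary, more minor caution: your appeal to co-coercivity for the contraction $\max(|1-\tau\mu|,|1-\tau L|)$ is the standard argument for globally strongly convex and smooth $f$; under the \emph{restricted} two-point inequalities of Definition \ref{def:RSC} one does not directly get a bound on $\lVert\nabla f(\X)-\nabla f(\Y)\rVert$, and the paper instead integrates the Hessian along the segment joining the two points. You should either adopt that route or justify co-coercivity in the restricted setting.
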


	In particular, if $\bZ^{\star}$ in the above theorem is a stationary point of $f$ over $\Param$, then $\Delta \X^{\star}=O$ and $\Delta \bGamma^{\star}=\mathbf{0}$, so the above theorem implies that the iterate $\bZ_{t}$ converges to $\bZ^{\star}$ at a geometric rate with contraction constant $\le \rho$. In particular, it implies that there is a unique global minimizer of $f$ over $\Param$ under the hypothesis of Theorem \ref{thm:CALE_LPGD}. In a statistical estimation setting, the true parameter $\bZ^{*}$ to be estimated may only be approximately stationary. In that case, the second term on the right-hand side of {\blue \eqref{eq:CALE_LPGD_thm}} gives a bound on the statistical error, whereas the first term shows the algorithmic error decays geometrically. See Section \ref{subsection:statistical_estimation} for implications of Theorem \ref{thm:CALE_LPGD} in the context of statistical estimation for SDL.

	\subsection{Exponential convergence  of weakly constrained SDL} 
	In Section \ref{subsection:SDL_convex_alg}, we have discussed that weakly constrained SDL problem \eqref{eq:ASDL_1} can be reformulated as a CAFE problem \eqref{eq:CAFE}, which itself is a factored reformulation of the CALE problem \eqref{eq:CALE}.  Note that CAFE problems \eqref{eq:CAFE} in general does not have unique minimizer due to the `rotation invariance'. Namely, let $\bR$ be any $r\times r$ orthonormal (rotation) matrix (i.e., $\bR^{T}\bR=\bR \bR^{T}=\I_{r}$). Then 
	\begin{align}\label{eq:identifiability}
		f((\U\bR)(\V\bR)^{T}, \bGamma) =  f(\U\bR \bR^{T} \V^{T} , \bGamma) =   f(\U\V^{T} , \bGamma).
	\end{align}
	Thus $[\U, \V,\bGamma]$ and $[\U\bR, \V\bR, \param]$ give the same objective value. This implies that the best type of guarantee that we can hope for the CAFE problem \eqref{eq:CAFE} is the recovery of a global minimizer $[\U, \V,\bGamma]$ so that the product $\U\V^{T}$ is uniquely determined but not necessarily for the pair $(\U,\V)$. 
	
	
	In the context of the filter-based SDL problem \eqref{eq:SDL_filt_1}, we may seek to find a global minimizer $[\W,\H,\Beta,\bGamma]$  of the objective such that the products $\W\Beta^{T}$ and $\W\H$ are uniquely determined. Similarly, in the context of the feature-based SDL problem \eqref{eq:SDL_feat_1}, we may seek to find a global minimizer $[\W,\H,\Beta,\bGamma]$  of the objective such that the products $\Beta\H$ and $\W\H$ are uniquely determined. These goals can be achieved by using Algorithms \ref{alg:SDL_filt_LPGD} and \ref{alg:SDL_feat_LPGD}, respectively, as long as the hypothesis of Theorem \ref{thm:CALE_LPGD} is satisfied. In such a case, the convergence is exponential and we can also show that the optimality gap (in objective value) shrinks geometrically, as stated in Theorem \ref{thm:SDL_LPGD}. 
	
	
	We first introduce the following technical assumption (\ref{assumption:A2}-\ref{assumption:A4}) that are needed to quantify the restricted strong convexity and smoothness parameters for the SDL loss function in \eqref{eq:ASDL_1}. Namely, \ref{assumption:A2} limits the norm of the activation $\a$ as an input for the classification model in \eqref{eq:ASDL_1} is bounded. This is standard in the literature (see, e.g., \cite{negahban2011estimation}) in order to uniformly bound the eigenvalues of the Hessian of the (multinomial) logistic regression model; \ref{assumption:A3} introduces uniform bounds on the eigenvalues of the covariance matrix of the input data; \ref{assumption:A4} introduces uniform bounds on the eigenvalues of the $\kappa\times \kappa$ observed information as well as the first derivative of the predictive probability distribution (see \cite{bohning1992multinomial} and Appendix \ref{sec:MLR} for more details). For \ref{assumption:A2}, we remark that there are a number of known works that bound the eigenvalues of the averaged Gram matrix $n^{-1}\bPhi\bPhi^{T}$ with i.i.d. columns (see, e.g., \cite{yaskov2016controlling, lecue2017sparse}).

	\begin{customassumption}{A2}(Bounded activation)\label{assumption:A2}
		The activation $\a=\a(\x_{i},\x'_{i}, \W, \h_{i}, \Beta,\bGamma)\in \R^{\kappa}$ defined in \eqref{eq:ASDL_1} assume bounded norm, i.e., $\lVert \a \rVert\le M$ for some constant $M\in (0,\infty)$.
	\end{customassumption}
	
	\begin{customassumption}{A3}(Bounded eigenvalues of covariance matrix)\label{assumption:A3}
		Denote $\bPhi=[\bphi_{1},\dots,\bphi_{n}]\in \R^{(p+q)\times n}$, where $\bphi_{i} = [\x_{i}^{T}, (\x_{i}')^{T}]^{T}\in \R^{p+q}$. In other words, $\bPhi=[\X_{\textup{data}}^{T}, \X_{\textup{aux}}^{T}]^{T}$. Then there exists constants $c^{-},c^{+}>0$ such that for all $n\ge 1$, 
		\begin{align}
			\delta^{-} \le \lambda_{\min}( n^{-1} \bPhi \bPhi^{T}) \le \lambda_{\max}( n^{-1}\bPhi \bPhi^{T}) \le \delta^{+}.
		\end{align}
	\end{customassumption}
	
	\begin{customassumption}{A4}(Bounded stiffness and eigenvalues of observed information)\label{assumption:A4}
		The score function  $h:\R\rightarrow [0,\infty)$  in \eqref{eq:prediction_model_g} is twice continuously differentiable. Further, for $y\in \{0,1,\dots,\kappa\}$ and $\a=(a_{1},\dots,a_{\kappa})\in \R^{\kappa}$, define 
		the  symmetric matrix $\ddot{\H}(y,\a)\in \R^{\kappa\times \kappa}$ as 
		\begin{align}\label{eq:Hddot_def}
			&\ddot{\H}(y,\a) \in \R^{\kappa\times \kappa}, \quad \ddot{\H}(y,\a)_{ij}  := \begin{matrix} \left(  \frac{ h''(a_{j}) \mathbf{1}(i=j)  }{  1+\sum_{c=1}^{\kappa} h(a_{c})  } - \frac{ h'(a_{i}) h'(a_{j})   }{  \left( 1+\sum_{c=1}^{\kappa} h(a_{c}) \right)^{2} }  \right) - \mathbf{1}(y=i=j) \left( \frac{h''(a_{j})}{h(a_{j})} - \frac{\left( h'(a_{j}) \right)^{2} }{\left( h(a_{j}) \right)^{2}}   \right) \end{matrix}. 
		\end{align}
		Then for the constant $M>0$ in \ref{assumption:A2}, there exists constants $\gamma_{\max}, \alpha^{-},\alpha^{+}>0$ such that 
		\begin{align}
			&\hspace{3cm} \gamma_{\max}:=\sup_{\lVert \a \rVert<M} \max_{1\le s \le  n}\, \lVert \dot{\h}(y_{s},\a_{s}) \rVert_{\infty},\\
			&	\alpha^{-}:=\inf_{\lVert \a \rVert < M} \min_{1\le s \le  n}\, \lambda_{\min}(\ddot{\H}(y_{s},\a)) ,\quad \alpha^{+}:=\sup_{\lVert \a \rVert<M} \max_{1\le s \le  n}\, \lambda_{\max}(\ddot{\H}(y_{s},\a)),
		\end{align}
		where the vector $\dot{\h}(y,\a) \in \R^{\kappa}$ is defined in \eqref{eq:hddot_def}.
	\end{customassumption}
	
	Under \ref{assumption:A2} and the multinomial logistic regression model, one can derive \ref{assumption:A4} with a simple expression for the bounds $\alpha^{\pm}$, as discussed in the following remark. 
	
	\begin{remark}[Multinomial Logistic Classifier]
		\label{rmk:MNL_constants}
		In the special case of multinomial logistic model with the score function $h(\cdot)=\exp(\cdot)$, we have $h=h'=h''$ so the second term in \eqref{eq:Hddot_def} vanishes and we get 
		\begin{align}
			\dot{h}_{j}(y,\a)&= g_{j}(\a) - \mathbf{1}(y=j) \\
			\ddot{H}(y,\a)_{ij} &= g_{i}(\a) \left(\mathbf{1}(i=j)-g_{j}(\a) \right)  = \frac{ \exp( a_{i} )}{  1+\sum_{c=1}^{\kappa} \exp( a_{c})} \left( \mathbf{1}(i=j) - \frac{ \exp( a_{j} )}{  1+\sum_{c=1}^{\kappa} \exp( a_{c})}  \right),
		\end{align}
		where $g_{j}$ is the predictive probability of label $j$ given activation $\a$ (see \eqref{eq:prediction_model_g}). Under \ref{assumption:A2}, according to Lemma \ref{lem:MNL}, we can take 
		\begin{align}
			\gamma_{\max}=1+\frac{ e^{M} }{1+ e^{M} + (\kappa-1) e^{-M}}\le 2 ,\quad 	\alpha^{-} = \frac{e^{-M}}{1+e^{-M}+(\kappa-1) e^{M}},\quad \alpha^{+} = \frac{e^{M} \left(1+2(\kappa-1)e^{M} \right) }{\left( 1+e^{M}+(\kappa-1) e^{-M} \right)^{2}}.
		\end{align}
		For binary classification, $\kappa=1$, it also holds that $\alpha^{+}\le 1/4$. 
	\end{remark}

	We now state the main result in this section.

	\begin{theorem}(Exponential convergence  of LPGD for SDL-filt)\label{thm:SDL_LPGD}
		Let $\bZ_{t}:=[[\A_{t}, \B_{t}], \bGamma_{t}]$ denote the iterates of Algorithm \ref{alg:SDL_filt_LPGD} for the filter-based SDL problem \eqref{eq:SDL_filt_1}. Assume \ref{assumption:A1}-\ref{assumption:A4} hold. Fix any $\mu^{*}\le \delta^{-}\alpha^{-}$ and $L^{*}\ge \delta^{+}\alpha^{+}$. Let $\mu:=\min(2\xi,\,2\nu+n \mu^{*})$ and $L:=\max(2\xi,\, 2\nu+ n L^{*})$ and suppose that
		\begin{align}\label{eq:thm_SDL_LGPD_cond}
			\frac{L}{\mu}= \frac{\max( \xi,\, \nu + \frac{n L^{*}}{2} )}{\min(\xi, \, \nu +\frac{n \mu^{*}}{2})} < 3.
		\end{align}
		Fix any stepsize $\tau\in ( \frac{1}{2\mu}, \frac{3}{2L})$ and let $\rho:=2 \max( |1-\tau\mu|,\, |1-\tau L| ) \in (0,1)$. Let $\bZ^{\star}:=[[\A^{\star}, \B^{\star}], \bGamma^{\star}]\in \Param$ denote the unknown target parameters to be estimated such that $\rank([\A^{\star}, \B^{\star}])\le r$.  Denote $\rho:=2(1-\tau\mu)$. Then the followings hold:
		\begin{description}
			\item[(i)] Denote $[ \Delta\X ^{\star}, \Delta \bGamma^{\star}] :=\frac{1}{\tau}\left( \bZ^{\star} - \Pi_{\Param}(\bZ^{\star} - \tau \nabla f_{\textup{SDL-filt}}(\bZ^{\star})) \right)$ (see \eqref{eq:def_grad_mapping}). Then for $t\ge 1$, 
			\begin{align}\label{eq:LPGD_SDL_filt_thm}
				\lVert \bZ_{t} - \bZ^{\star}  \rVert_{F}  \le  \rho^{t}  \, \lVert  \bZ_{0} - \bZ^{\star}\rVert_{F} + \frac{\tau }{1-\rho}  \left( \sqrt{3r} \lVert \Delta \X^{\star} \rVert_{2} + \lVert \Delta \bGamma^{\star}  \rVert_{F} \right).
			\end{align}
			
			\item[(ii)] Suppose $\bZ^{*}=[\X^{*}, \bGamma^{*}]$ is any stationary point of $f_{\textup{SDL-filt}}$ over $\Param$ such that $\rank(\X^{*})\le r$. Then  for $t\ge 1$, we have 
			\begin{align}\label{eq:PSGD_linear_conv2}
				\lVert \bZ_{t} - \bZ^{*}  \rVert_{F}&\le  \rho^{t}  \, \lVert  \bZ_{0} - \bZ^{*}\rVert_{F}, \\
				f_{\textup{SDL-filt}}\left( [\A_{t},\B_{t}] ,\, \bGamma_{t}
				\right)  -  f_{\textup{SDL-filt}}(\bZ^{*}) 
				&\le \left( \lVert \nabla f_{\textup{SDL-filt}}(\bZ^{*}) \rVert + L \rho^{t} \right) \rho^{t} \lVert \bZ_{0}-\bZ^{*}\rVert_{F}.
			\end{align}
		\end{description}
	\end{theorem}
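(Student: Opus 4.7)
The plan is to realize Theorem \ref{thm:SDL_LPGD} as a direct consequence of the general convergence result Theorem \ref{thm:CALE_LPGD} applied to the CALE reformulation \eqref{eq:SDL_filt_CALE}. The first observation is that Algorithm \ref{alg:SDL_filt_LPGD} coincides (lines 5--8) with Algorithm \ref{algorithm:LPGD} run on $f_{\textup{SDL-filt}}$ over the convex set $\Param \times \R^{q\times \kappa}$ with rank constraint $r$ on the $[\A,\B]$ coordinate. The only step that needs to be done before invoking Theorem \ref{thm:CALE_LPGD} is to verify that $f_{\textup{SDL-filt}}$ is $r$-restricted strongly convex and smooth with parameters $\mu$ and $L$ stated in the theorem, since once this is established, assertions (i) and (ii) are literal transcriptions of the corresponding parts of Theorem \ref{thm:CALE_LPGD}.

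The verification of RSC/RSM reduces to bounding the Hessian of $f_{\textup{SDL-filt}}$ uniformly on the relevant domain. Write $\bphi_{i}=[\x_{i}^{T},(\x'_{i})^{T}]^{T}\in \R^{p+q}$, stack $\mathbf{M}=\begin{bmatrix}\A\\ \bGamma\end{bmatrix}\in \R^{(p+q)\times \kappa}$, and observe that the activation for sample $i$ is $\a_{i}=\mathbf{M}^{T}\bphi_{i}$. A direct computation using the chain rule and the definition \eqref{eq:Hddot_def} of $\ddot{\H}(y,\a)$ gives
\begin{align}
\nabla_{\mathbf{M}}^{2}\,\sum_{i=1}^{n}\ell\bigl(y_{i},\g(\mathbf{M}^{T}\bphi_{i})\bigr) \;=\; \sum_{i=1}^{n}(\bphi_{i}\bphi_{i}^{T})\otimes \ddot{\H}(y_{i},\a_{i}),
\end{align}
while the reconstruction term $\xi\lVert \X_{\textup{data}}-\B\rVert_{F}^{2}$ contributes the block $2\xi\,\I$ on the $\B$-coordinate and is independent of $(\A,\bGamma)$, and the $L_{2}$-regularizer contributes $2\nu\,\I$ on the $\A$- and $\bGamma$-coordinates. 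Combining Assumptions \ref{assumption:A2}--\ref{assumption:A4} with the standard inequality $\lambda_{\min}(P\otimes Q)\ge \lambda_{\min}(P)\lambda_{\min}(Q)$ for PSD matrices gives
\begin{align}
n\delta^{-}\alpha^{-}\,\I \;\preceq\; \sum_{i=1}^{n}(\bphi_{i}\bphi_{i}^{T})\otimes \ddot{\H}(y_{i},\a_{i}) \;\preceq\; n\delta^{+}\alpha^{+}\,\I.
\end{align}
Since the Hessian of $f_{\textup{SDL-filt}}$ is block-diagonal in the split $(\mathbf{M},\B)$ up to the classification Hessian living on the $\mathbf{M}$-block only, its extreme eigenvalues are sandwiched between $\min(2\xi,\,2\nu+n\delta^{-}\alpha^{-})$ and $\max(2\xi,\,2\nu+n\delta^{+}\alpha^{+})$. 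Taking $\mu^{*}\le \delta^{-}\alpha^{-}$ and $L^{*}\ge \delta^{+}\alpha^{+}$ yields the parameters $\mu,L$ in the theorem; since these bounds are uniform, they persist on the low-rank set appearing in Definition \ref{def:RSC}.

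With RSC/RSM secured and the hypothesis \eqref{eq:thm_SDL_LGPD_cond} ensuring $L/\mu<3$, Theorem \ref{thm:CALE_LPGD} applies verbatim: for any admissible stepsize $\tau\in(\tfrac{1}{2\mu},\tfrac{3}{2L})$, part (i) of Theorem \ref{thm:CALE_LPGD} gives the bound \eqref{eq:LPGD_SDL_filt_thm} in terms of the gradient mapping $G(\bZ^{\star},\tau)$ at the prescribed reference point $\bZ^{\star}$, while part (ii) specialized to any stationary point $\bZ^{*}$ (where $\Delta\X^{*}=O$, $\Delta\bGamma^{*}=\mathbf{0}$) gives the two displayed inequalities in (ii). The main obstacle I anticipate is not analytic but bookkeeping: one must keep track of how the rank-$r$ constraint is imposed only on the $[\A,\B]$-factor (not on $\bGamma$), and how the Kronecker product expression for the classification Hessian interacts with the block structure, so as to apply the eigenvalue bounds of \ref{assumption:A3} and \ref{assumption:A4} on the correct subspace. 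A subtle point worth checking is that Assumption \ref{assumption:A2} implicitly restricts the class of $(\A,\bGamma)$ on which the eigenvalue bounds hold; but because these bounds are established pointwise in terms of $\a_{i}$, they suffice to yield the uniform RSC/RSM used in Theorem \ref{thm:CALE_LPGD}.
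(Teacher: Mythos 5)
Your proposal is correct and follows essentially the same route as the paper: the paper's proof likewise verifies the restricted strong convexity and smoothness of $f_{\textup{SDL-filt}}$ by stacking $\A$ and $\bGamma$ into a single combined parameter, computing the Hessian of the classification term as a sum of Kronecker products $\ddot{\H}(y_{s},\a_{s})\otimes \bphi_{s}\bphi_{s}^{T}$ (your reversed Kronecker order is an equivalent vectorization convention), sandwiching its eigenvalues via \ref{assumption:A2}--\ref{assumption:A4} between $n\delta^{-}\alpha^{-}$ and $n\delta^{+}\alpha^{+}$, and then invoking Theorem \ref{thm:CALE_LPGD}. The only step you leave implicit is the mean-value-theorem reduction of the RSC/RSM inequality in Definition \ref{def:RSC} to a uniform Hessian bound, which is exactly how the paper executes it.
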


	Note that we may view the ratio $L/\mu$ that appears in Theorem \ref{thm:SDL_LPGD} as the condition number of the SDL problem in \eqref{eq:ASDL_1}, whereas the ratio $L^{*}/\mu^{*}$ as the condition number for the multinomial classification problem. These two condition numbers are closely related. First, note that for any given $\mu^{*}, L^{*}$ and sample size $n$, we can always make $L/\mu<3$ by choosing sufficiently large $\xi$ and $\nu$ so that Theorem \ref{thm:SDL_LPGD} holds. However, using large $L_{2}$-regularization coefficient $\nu$ may perturb the original SDL problem \ref{eq:ASDL_1} too much   that the converged solution may not be close to the optimal solution. Hence we may want to take $\nu$ as small as possible. For instance, setting $\nu=0$, the condition \eqref{eq:thm_SDL_LGPD_cond} reduces to 
	\begin{align}\label{eq:thm_SDL_LGPD_cond_filt1}
		0 < L^{*} < 3\mu^{*},\qquad \frac{L^{*}}{6}< \frac{\xi}{n} < \frac{3\mu^{*}}{2}.
	\end{align}
	That is, if the multinomial classification problem is  well-conditioned ($L^{*}/\mu^{*}<3$) and the ratio $\xi/n$ is in the above interval, then we can find the global optimum of the SDL problem \eqref{eq:ASDL_1} exactly and exponentially fast by Algorithms \ref{alg:SDL_filt_LPGD} and \ref{alg:SDL_feat_LPGD} depending on the activation type. In general, denoting $\xi=\xi'n$ and $\nu=\nu'n$, \eqref{eq:thm_SDL_LGPD_cond} reduces to 
	{\small
		\begin{align}
			&\frac{L^{*}}{\mu^{*}}<3 \,\, \Rightarrow \,\, \left(\frac{L^{*}}{6} <\xi' <\frac{3\mu^{*}}{2}, \quad 0\le  \nu' <\frac{6\xi'-L^{*}}{2} \right) \cup \left( \xi'>\frac{3\mu^{*}}{2},\quad \frac{2\xi'-3\mu^{*}}{6}< \nu' <\frac{6\xi'-L^{*}}{2} \right) \\
			&\frac{L^{*}}{\mu^{*}}\ge 3 \,\, \Rightarrow \,\, \left(\frac{L^{*}-\mu^{*}}{4} <\xi' <\frac{3(L^{*}-\mu^{*})}{4}, \,\, \frac{L^{*}-3\mu^{*}}{4}< \nu' <\frac{6\xi'-L^{*}}{2} \right) \cup \left( \xi'>\frac{3(L^{*}-\mu^{*})}{2},\,\, \frac{2\xi'-3\mu^{*}}{6}< \nu' <\frac{6\xi'-L^{*}}{2} \right).
		\end{align}
	}
	
	Next, we state an analogous result as in Theorem \ref{thm:SDL_LPGD} for the feature-based SDL problem in \eqref{eq:SDL_feat_1}. 
	
	
	\begin{theorem}(Exponential convergence  of LPGD for SDL-feat)\label{thm:SDL_LPGD_feat}
		Consider the feature-based SDL problem \eqref{eq:SDL_feat_1}. Let $\bZ_{t}:=\left[ \begin{bmatrix}\A_{t}^{T},\,  \B_{t}^{T}\end{bmatrix}^{T} ,\bGamma_{t}\right]$ denote the iterates of Algorithm \ref{alg:SDL_feat_LPGD}. Assume \ref{assumption:A1}-\ref{assumption:A2} and \ref{assumption:A4} hold. Denote $\lambda_{\max}:=\lambda_{\max}( n^{-1}\X_{\textup{aux}}\X_{\textup{aux}}^{T})$, $\mu:=\min(2\xi,\,2\nu+ \alpha^{-} )$, and $L:=\max( 2\xi, 2\nu +\alpha^{+}\lambda_{\max} n , \, \alpha^{+} + 2\nu)$. Suppose that
		\begin{align}\label{eq:thm_SDL_LGPD_cond_feat}
			\frac{L}{\mu}= \frac{\max( 2\xi, 2\nu +\alpha^{+}\lambda_{\max} n , \, \alpha^{+} + 2\nu) }{ \min(2\xi,\,2\nu+ \alpha^{-} )} < 3.
		\end{align} 
		Fix any stepsize $\tau\in ( \frac{1}{2\mu}, \frac{3}{2L})$ and let $\rho:=2 \max( |1-\tau\mu|,\, |1-\tau L| ) \in (0,1)$. Let $\bZ^{\star}:=\left[ [(\A^{\star})^{T}, (\B^{\star})^{T}]^{T}, \bGamma^{\star}\right]\in \Param$ denote the unknown target parameters to be estimated such that $\rank([(\A^{\star})^{T}, (\B^{\star})^{T}]^{T})\le r$.  Denote $\rho:=2(1-\tau\mu)$. Then the followings hold:
		\begin{description}
			\item[(i)] Denote $[\Delta\X ^{\star},\Delta\bGamma^{\star}] :=\frac{1}{\tau}\left( \bZ^{\star} - \Pi_{\Param}(\bZ^{\star} - \tau \nabla f_{\textup{SDL-feat}}(\bZ^{\star})) \right)$ (see \eqref{eq:def_grad_mapping}). Then for $t\ge 1$, 
			\begin{align}\label{eq:LPGD_SDL_feat_thm}
				\lVert \bZ_{t} - \bZ^{\star}  \rVert_{F}  \le  \rho^{t}  \, \lVert  \bZ_{0} - \bZ^{\star}\rVert_{F} + \frac{\tau }{1-\rho}  \left( \sqrt{3r} \lVert \Delta \X^{\star} \rVert_{2}  + \lVert \Delta \bGamma^{\star}\rVert_{F}\right).
			\end{align}
			
			\item[(ii)] Suppose $\bZ^{*}=[\X^{*},\bGamma^{*}]$ is any stationary point of $f_{\textup{SDL-feat}}$ over $\Param$ such that $\rank\left( \X^{*} \right)\le r$. Then  for $t\ge 1$, we have 
			\begin{align}\label{eq:PSGD_linear_conv2}
				\lVert \bZ_{t} - \bZ^{*}  \rVert_{F}&\le  \rho^{t}  \, \lVert  \bZ_{0} - \bZ^{*}\rVert_{F}, \\
				f_{\textup{SDL-feat}}\left(  \begin{bmatrix}\A_{t} \\ \B_{t}\end{bmatrix},\, \bGamma_{t}
				\right)  -  f_{\textup{SDL-feat}}(\bZ^{*})
				&\le \left( \lVert \nabla f_{\textup{SDL-feat}}(\bZ^{*}) \rVert + L \rho^{t} \right) \rho^{t} \lVert \bZ_{0}-\bZ^{*}\rVert_{F}.
			\end{align}
		\end{description}
	\end{theorem}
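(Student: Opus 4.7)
The plan is to mirror the argument for Theorem \ref{thm:SDL_LPGD} by reducing the feature-based SDL problem to an instance of CALE and then invoking Theorem \ref{thm:CALE_LPGD}. Concretely, I would first observe that, under the weak-constraint assumption \ref{assumption:A1}, the minimization of $f_{\textup{SDL-feat}}$ in \eqref{eq:SDL_feat_1} is equivalent to the CALE problem \eqref{eq:SDL_feat_CALE} with decision variables $[\A,\B]\in\R^{(\kappa+p)\times n}$ and $\bGamma\in\R^{q\times \kappa}$, subject to $\rank([\A;\B])\le r$ and $([\A;\B],\bGamma)\in \Param$. Lines 5--7 of Algorithm \ref{alg:SDL_feat_LPGD} are precisely the LPGD update \eqref{eq:LRPGD_iterate0} applied to this lifted objective (the concluding SVD and factorization in Lines 9--10 only reconstruct $(\W,\H,\Beta)$ from $\bZ_N=[\A_N;\B_N]$ and do not affect convergence in the $\bZ$ space). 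Thus, if $f_{\textup{SDL-feat}}$ is $r$-restricted strongly convex and smooth with parameters $(\mu,L)$ given in the theorem, the conclusion follows immediately by applying Theorem \ref{thm:CALE_LPGD}.

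The main work therefore lies in verifying the RSC and RSM bounds. Since $f_{\textup{SDL-feat}}$ is $C^2$ in $(\A,\B,\bGamma)$, I would decompose its Hessian as $H=H_{\textup{cls}}+H_{\textup{rec}}+H_{\textup{reg}}$, with $H_{\textup{rec}}=2\xi \I$ on the $\B$-block, $H_{\textup{reg}}=2\nu\I$ on the $(\A,\bGamma)$-blocks, and $H_{\textup{cls}}$ being the Hessian of the classification loss $\sum_i \ell(y_i,\g(\A_i+\bGamma^T\x_i'))$. Writing each activation as $\a_i=V_i\,\vect(\A,\bGamma)$ for an appropriate sparse structure matrix $V_i$, one obtains $H_{\textup{cls}}=\sum_i V_i^T \ddot{\H}(y_i,\a_i)V_i$. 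Assumptions \ref{assumption:A2}--\ref{assumption:A4} then give the eigenvalue bounds $\alpha^-\I\preceq \ddot{\H}(y_i,\a_i)\preceq \alpha^+\I$. For the upper bound $L$, I would analyze each block separately: the $\B$-block gives $2\xi$, the diagonal $\A$-block gives $\alpha^++2\nu$ (since column $\A_i$ enters only the $i$th summand), and the $\bGamma$-block contributes at most $\alpha^+\lambda_{\max}(\X_{\textup{aux}}\X_{\textup{aux}}^T)+2\nu=\alpha^+\lambda_{\max}\,n+2\nu$; combining these yields the definition of $L$. For the lower bound $\mu$, positive-semidefiniteness of $H_{\textup{cls}}$ combined with $H_{\textup{reg}}$ yields $\lambda_{\min}(H)\ge \min(2\xi,\,2\nu)$ on the full space, and restricting to the $\A$-only subspace improves the bound to $2\nu+\alpha^-$ via the block-diagonal classification contribution, leading to $\mu=\min(2\xi,\,2\nu+\alpha^-)$.

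Once the RSC/RSM constants are established and the hypothesis $L/\mu<3$ in \eqref{eq:thm_SDL_LGPD_cond_feat} holds, part (i) follows by a direct application of Theorem \ref{thm:CALE_LPGD}(i) with $\bZ^\star=[[(\A^\star)^T,(\B^\star)^T]^T,\bGamma^\star]$, reading off the gradient-mapping residual $[\Delta\X^\star,\Delta\bGamma^\star]$ as defined. Part (ii) is the specialization in which $\bZ^*$ is an actual stationary point of $f_{\textup{SDL-feat}}$ over $\Param$, in which case $G(\bZ^*,\tau)=0$ forces the residual terms to vanish, giving pure geometric contraction; the function-value inequality is then the standard consequence of $L$-smoothness of $f_{\textup{SDL-feat}}$ combined with the distance bound, exactly as in Theorem \ref{thm:CALE_LPGD}(ii).

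The main obstacle is the block-eigenvalue analysis of $H_{\textup{cls}}$, because the classification loss couples $\A$ and $\bGamma$ through the shared activation $\a_i=\A_i+\bGamma^T\x_i'$, so the cross-blocks in $H_{\textup{cls}}$ are nonzero. I would circumvent a direct spectral analysis of the coupled matrix by writing $H=H_{\textup{reg}}+H_{\textup{rec}}+H_{\textup{cls}}$ as a sum of PSD pieces and using the PSD-ordering $H\succeq H_{\textup{reg}}+H_{\textup{rec}}+\alpha^- \sum_i V_i^TV_i$; the refinement to $2\nu+\alpha^-$ on the $\A$-block uses that $\sum_i V_i^TV_i$ restricted to the $\A$-block is exactly $\I_{\kappa n}$. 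A secondary subtlety, also present in the filter-based case, is the interpretation of the rank-$r$ truncation after the convex projection: since $\Pi_r$ acts only on the matrix coordinate $[\A;\B]$ and leaves $\bGamma$ invariant, Theorem \ref{thm:CALE_LPGD} applies verbatim with the rank-$r$ projection extended trivially in the $\bGamma$ coordinate.
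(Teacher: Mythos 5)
Your overall strategy --- lift to the CALE formulation \eqref{eq:SDL_feat_CALE}, verify restricted strong convexity and smoothness by bounding the Hessian of $f_{\textup{SDL-feat}}$ on the relevant set, and then invoke Theorem \ref{thm:CALE_LPGD} --- is exactly the paper's, and your treatment of parts (i) and (ii) once the constants $(\mu,L)$ are in hand is correct (including the observation that $\Pi_{r}$ acts only on the matrix coordinate and that the final SVD step does not affect convergence in the $\bZ$-space). The decomposition $H=H_{\textup{cls}}+H_{\textup{rec}}+H_{\textup{reg}}$ with $H_{\textup{cls}}=\sum_i V_i^{T}\ddot{\H}(y_i,\a_i)V_i$ is also a reasonable starting point.

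The gap is in the eigenvalue bounds themselves. Because the activation $\a_i=\A[:,i]+\bGamma^{T}\x_i'$ couples the $\A$- and $\bGamma$-blocks, $H_{\textup{cls}}$ has nonzero off-diagonal blocks, and neither of your two block arguments is valid for such a matrix. For the upper bound, the top eigenvalue of a PSD block matrix is not controlled by the maximum of the diagonal blocks' top eigenvalues: already for $\kappa=q=1$ and $\x_i'\equiv 1$ one computes $\lambda_{\max}\bigl(\sum_i V_i^{T}V_i\bigr)=n+1$, which exceeds $\max(1,\,n\lambda_{\max})$, so the relation $H_{\textup{cls}}\preceq\alpha^{+}\sum_i V_i^{T}V_i$ does not by itself deliver $L-2\nu=\max(\alpha^{+}\lambda_{\max}n,\,\alpha^{+})$. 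For the lower bound, ``restricting to the $\A$-only subspace'' only bounds the quadratic form on that subspace; on the full parameter space your PSD decomposition yields only $\min(2\xi,\,2\nu)$, and the claimed improvement to $\min(2\xi,\,2\nu+\alpha^{-})$ does not follow from anything you have written. The paper instead obtains its constants by an exact spectral computation: it writes out the full block Hessian $\H_{\textup{feat}}$, factors its characteristic polynomial via the block-determinant (Schur complement) identity, and uses the Woodbury-type relation $B-B(B+\lambda\I)^{-1}B=\lambda(B+\lambda\I)^{-1}B\preceq\lambda\I$ to control the Schur-complement factor; this separates the spectrum into the $2\xi$-eigenvalues of the $\B$-block, the eigenvalues of $\ddot{\H}(y_s,\a_s)+2\nu\I_{\kappa}$, and the roots of the $\bGamma$-block Schur factor. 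You would need to carry out that (or an equivalent) coupled-block computation; the purely blockwise reasoning as stated does not close the argument.
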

	
	We give some remark on the parameter regime \eqref{eq:thm_SDL_LGPD_cond_feat} where Theorem \ref{thm:SDL_LPGD_feat} holds. First, suppose no auxiliary covariate is used (e.g., $\X_{\textup{aux}}=O$) so that $\lambda_{\max}=0$. Then \eqref{eq:thm_SDL_LGPD_cond_feat} reduces to 
	\begin{align}\label{eq:thm_SDL_LGPD_cond_feat2}
		\frac{\max( 2\xi,\, 2\nu + \alpha^{+})}{\min(2\xi, \, 2\nu + \alpha^{-} ) } < 3 \qquad \Longleftrightarrow \qquad 	\max\left( \frac{\alpha^{+}}{4} ,\, \frac{\alpha^{+}-6\xi \alpha^{-}}{4}\right)<\nu < \frac{\xi}{3}.
	\end{align}
	In particular, the above condition is satisfied if $\nu$ and $\xi$ grow in $n$ in arbitrary rate and $\nu<\xi/3$. Other special case of interest is when there is no $L_{2}$-regularization, that is $\nu=0$. In this case \eqref{eq:thm_SDL_LGPD_cond_feat} reduces to 
	\begin{align}\label{eq:thm_SDL_LGPD_cond_feat3}
		\frac{\max( 2\xi, \alpha^{+}\lambda_{\max} n  ) }{ \min(2\xi,\,  \alpha^{-} )} < 3.
	\end{align}
	
	Note that for any fixed $\xi>0$, the ratio on the left-hand side above blows up as $n\rightarrow \infty$. This is also the case if we scale $\xi$ and $\nu$ to grow linearly in $n$.  Hence it is necessary to have $L_{2}$-regularization for the feature-based SDL model in order for the low-rank PGD Algorithm \ref{alg:SDL_feat_LPGD} converges exponentially fast to the global optimum. This is in contrast to the filter-based SDL model, which allows to have no $L_{2}$-regularization in some regime (see \eqref{eq:thm_SDL_LGPD_cond_filt1}) and still enjoy exponential convergence of Algorithm \ref{alg:SDL_filt_LPGD}. 
	
	
	\subsection{{Subexponential convergence of BCD-DR for SDL}}
	\label{subsection:SDL_BCD}
	
	In this subsection, we discuss convergence guarantees of Algorithm \ref{algorithm:SDL} for the strongly constrained SDL problem. Recall that the algorithm is a direct application of \textit{block coordinate descent with diminishing radius} (BCD-DR) in \cite{lyu2020convergence}, and the theoretical result we present here can be derived based on the main result in the aforementioned reference. 
	
	The augmented SDL training problem \eqref{eq:ASDL_1} with general convex constraints on each factor $\W,\H,\Beta,\bGamma$ is a constrained nonconvex optimization problem, so in general, it is difficult to guarantee to find a globally optimal solution. Indeed, our strong global convergence guarantee of augmented SDL in Theorem \ref{thm:SDL_LPGD} is not applicable in this case, since the assumption \ref{assumption:A1} is violated. Previous works on a related model of supervised NMF \cite{austin2018fully, leuschner2019supervised} do not provide any theoretical convergence guarantee.  However, we can exploit the multi-convexity of the objective function and apply a block coordinate descent (BCD) type algorithm to seek to find a locally optimal solution. We apply BCD with diminishing radius developed in \cite{lyu2020convergence} and establish convergence to local optima. Furthermore, we show that, in order to achieve an $\eps$-approximate locally optimal solution to \eqref{eq:SDL_1}, our algorithm needs $O(\eps^{-1})$ iterations.

	Before we state our result, we give some background on BCD-type algorithms in the optimization literature. It is known that the global convergence of BCD to stationary points is not guaranteed when there are more than two blocks (see, e.g., \cite{ kolda2009tensor}), and such a guarantee is known only with some additional regularity conditions \cite{grippof1999globally, grippo2000convergence, bertsekas1999nonlinear}. As illustrated in a counterexample of {Powel \cite{powell1973search}}, the standard BCD  with more than two blocks may result in circulating a set of non-stationary points and may fail to converge to any stationary points even in a subsequential sense. To handle the four-block multi-convex minimization problem in our case (see \eqref{eq:ASDL_1}), we use the recently proposed version of BCD in \cite{lyu2020convergence} that uses an additional `diminishing radius' (BCD-DR) condition in order to guarantee global convergence to stationary points of the objective function for three-block optimization as well as to obtain a worst-case rate of convergence to stationary points. An advantage of using this version of BCD is that a rate of convergence result is available, which can be directly applied to our case of SDL training problem. See Theorem \ref{thm:SDL_BCD} for more details.

	As the main theoretical result in this work, we establish that Algorithm \ref{algorithm:SDL} converges to the stationary points of the SDL objective $L$ in \eqref{eq:ASDL_1} under a mild assumption. Furthermore, we show that Algorithm \ref{algorithm:SDL} converges to an `$\eps$-approximate' solution within $O(\eps^{-1})$ iterations.

	
	
	Now we state our theoretical result that provides a convergence guarantee of Algorithm \ref{algorithm:SDL} to first-order optimal (stationary) points as well as a bound on the iteration complexity. 
	
	\begin{theorem}\label{thm:SDL_BCD}
		Assume \ref{assumption:A4} and the constraint sets $\mathcal{C}^{\textup{dict}}$, $\mathcal{C}^{\textup{code}}$,  $\mathcal{C}^{\textup{beta}}$, $\mathcal{C}^{\textup{aux}}$ introduced in Algorithm \ref{algorithm:SDL} are convex and compact. Let $\bZ_{T} = (\W_{T}, \H_{T}, \Beta_{T},\, \bGamma_{T})$ denote the output of Algorithm \ref{algorithm:SDL}, assuming either the filter-based or feature-based model in \eqref{eq:ASDL_1}. Write  $\Param=\mathcal{C}^{\textup{dict}}\times \mathcal{C}^{\textup{code}}\times \mathcal{C}^{\textup{beta}}\times \mathcal{C}^{\textup{aux}}$. Choose the radii $r_{t}$ such that $\sum_{t=1}^{\infty} r_{t}=\infty$ and $\sum_{t=1}^{\infty}r_{t}^{2}<\infty$. Then for every initial estimate $\bZ_{0}$ and choice of parameters $\xi$ and $\blambda$, the followings hold:
		\begin{description}
			\item[(i)] $\bZ_{t}$ converges to the set of stationary points of $L$ over $\Param$. 
			
			\item[(ii)] For each $T\ge 1$, we have 	
			\begin{align}\label{eq:thm_convergence_bd}
				\min_{1\le k \le T}  \,\, 	\left[ -\inf_{\bZ\in \Param}  \left\langle \nabla L(\bZ_{k}),\, \frac{\bZ - \bZ_{k} }{\lVert \bZ - \bZ_{k}\rVert_{F}} \right\rangle \right]  =  O\left( \left( \sum_{k=1}^{T} r_{k} \right)^{-1} \right).
			\end{align}
			
			\item[(iii)] Suppose $r_{k}=1/(\sqrt{k}\log k)$. Then for each $\eps>0$, an $\eps$-stationary point is achieved within iteration $O(\eps^{-1}(\log \eps^{-1})^{2})$.
			
		\end{description}
	\end{theorem}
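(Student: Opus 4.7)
The plan is to verify that Algorithm \ref{algorithm:SDL} fits into the framework of block coordinate descent with diminishing radius (BCD-DR) developed in \cite{lyu2020convergence}, and then invoke the general convergence theorem from that reference to obtain (i) and (ii) directly, after which (iii) becomes an elementary calculus computation. The central conceptual point is that although the joint problem \eqref{eq:ASDL_1} is nonconvex, the four sub-problems obtained by freezing all blocks but one are convex and smooth, so BCD-DR applies. The role of the diminishing radius $r_k$ is both to guarantee sub-sequential convergence to stationary points (which is not automatic for BCD with $\ge 3$ blocks, by Powell's counterexample) and to quantify a worst-case rate.

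First I would verify the hypotheses of the BCD-DR master theorem. Compactness of the feasible set $\Param=\mathcal{C}^{\textup{dict}}\times \mathcal{C}^{\textup{code}}\times \mathcal{C}^{\textup{beta}}\times \mathcal{C}^{\textup{aux}}$ is assumed. Multi-convexity of $L$ in \eqref{eq:ASDL_1} reduces to computing the block Hessians: the Frobenius term $\xi\lVert \X_{\textup{data}}-\W\H\rVert_F^2$ is jointly convex in each of $\W,\H$ separately, and the negative log-likelihood
\begin{equation*}
-\sum_{i=1}^n\sum_{j=0}^\kappa \mathbf{1}(y_i=j)\log g_j(\a_i)
\end{equation*}
is convex in any parameter that enters the activation $\a_i$ linearly with the other parameters fixed -- which is exactly how each of $\W,\H,\Beta,\bGamma$ enters in both the filter-based and feature-based models. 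This is made explicit through \ref{assumption:A4}, which in particular guarantees that $\ddot{\H}(y,\a)\succeq \alpha^{-}\I_\kappa\succeq 0$ on the activation ball $\lVert\a\rVert\le M$; the authors' Lemmas \ref{lem:SDL_filt_BCD_derivatives} and \ref{lem:SDL_BCD_smooth} in the appendix do this bookkeeping. Combined with compactness of $\Param$ (which keeps all factors, hence all activations, uniformly bounded), the same Hessian bounds produce a uniform Lipschitz constant for each block gradient over the feasible set, which is the block-smoothness input required by \cite{lyu2020convergence}.

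Given these ingredients, the main theorem of \cite{lyu2020convergence} yields (i) directly and gives (ii) with the stated $O((\sum_{k=1}^T r_k)^{-1})$ rate on the first-order optimality measure defined in Section \ref{sec:stationary_points}. For (iii), I would specialize to $r_k=1/(\sqrt{k}\log k)$, estimate
\begin{equation*}
\sum_{k=2}^T \frac{1}{\sqrt{k}\log k}=\Theta\!\left(\frac{\sqrt{T}}{\log T}\right)
\end{equation*}
by the integral test, and read off from (ii) that the minimum optimality measure is $O(\log T/\sqrt{T})$. Requiring this to be at most $\sqrt{\eps}$ yields $T\ge C(\log T)^2/\eps$; substituting the ansatz $T=C'\eps^{-1}(\log \eps^{-1})^2$ and noting $\log T=O(\log \eps^{-1})$ verifies it self-consistently, giving the stated complexity.

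The main obstacle I anticipate is not conceptual but bookkeeping: explicitly checking the block-smoothness constant under the compactness assumption, in particular tracking how the activation bound $M$ in \ref{assumption:A2} (implicitly forced by compactness) feeds into the Hessian bound $\alpha^{+}$ in \ref{assumption:A4}, and then combining with $\xi$ and the boundedness of $\X_{\textup{data}},\X_{\textup{aux}}$ to control $\nabla^2_{\W\W}L$, $\nabla^2_{\H\H}L$, etc. Once that is recorded (and it is already done in the appendix lemmas), the rest is a mechanical application of the general BCD-DR theorem and the elementary $T$-vs-$\eps$ inversion above.
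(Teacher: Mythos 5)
Your proposal is correct and follows essentially the same route as the paper: the paper's proof likewise verifies multiconvexity of $L$ via the block Hessian computations (Lemmas \ref{lem:SDL_filt_BCD_derivatives} and \ref{lem:SDL_BCD_smooth}) and Lipschitz continuity of $\nabla L$ from compactness, then cites the main result of \cite{lyu2020convergence} for (i) and (ii). Your explicit inversion $\sum_{k\le T} r_k = \Theta(\sqrt{T}/\log T)$, with the $\eps$-stationarity threshold $\sqrt{\eps}$ from \eqref{eq:stationary_approximate}, correctly yields the $O(\eps^{-1}(\log\eps^{-1})^2)$ complexity in (iii).
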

	
	Note that since \eqref{eq:SDL_1} is for fitting the SDL model on a fixed training data $(\X_{\textup{data}}, \X_{\textup{aux}}, \Y_{\textup{label}})$, restricting the norms of parameters by some large constant does not lose any generality, so the compactness assumption in Theorem \ref{thm:SDL_BCD} can be enforced for training unconstrained SDL. Moreover, this assumption allows one to put additional nonnegativity constraints to entail supervised nonnegative matrix factorization models \cite{austin2018fully, leuschner2019supervised}. It does not, however, entail supervised PCA models \cite{ritchie2020supervised} or low-rank matrix constraints as the Grassmannian constraint is nonconvex.


	\subsection{Statistical estimation guarantees}
	\label{subsection:statistical_estimation}

	In this subsection, we propose generative models for SDL and provide statistical estimation guarantees of the assumed true parameters.

	\subsubsection{Statistical estimation for weakly constrained filter-based SDL}
	\label{subsection:statistical_estimation_filter}
	
	In this section, we assume a generative model for the filter-based SDL \eqref{eq:SDL_filt_1} and state statistical parameter estimation guarantee. Suppose that the data, auxiliary covariate, and label triples $(\x_{i}, \x_{i}', y_{i})$ are drawn i.i.d. according to the following joint distribution:
	\begin{align}\label{eq:SDL_prob_model_conv_filt}
		&\x_{i} = \B^{\star}[:,i] +\beps_{i}, \quad \x_{i}'=\C^{\star}[:,i]+\beps_{i}',\quad y_{i}\,|\, \x_{i}, \x_{i}' \sim \text{Multinomial}\big(1, \g\left( (\A^{\star})^{T} \x_{i} +  (\bGamma^{\star})^{T}\x_{i}' \right) \big), \\
		&\textup{where}\quad  \A^{\star}\in \R^{p\times \kappa},\,  \B^{\star}\in \R^{p\times n},\, \C^{\star}\in \R^{q\times n},\,    \bGamma^{\star}\in \R^{q\times \kappa},\,\text{s.t. $\rank([\A^{\star},\B^{\star}])\le r$ and $[\A^{\star},\B^{\star},\bGamma^{\star}]\in \Param$},
	\end{align}
	where $\Param\subseteq \R^{p\times \kappa}\times \R^{p\times n} \times \R^{q\times \kappa}$ is a convex constraint set. In the above model, each $\beps_{i}$ (resp., $\beps_{i}'$) are  $p\times 1$ (resp., $q\times 1$) vector of i.i.d. mean zero Gaussian entries with variance $\sigma^{2}$ (resp., $(\sigma')^{2}$). We call the above the \textit{generative filter-based SDL model}. In what follows, we will assume that the noise levels $\sigma,\sigma'$ are known and focus  on estimating $\A^{\star}, \B^{\star}, $ and $\bGamma^{\star}$ with the unknown nuisance parameter $\C^{\star}$. 
	
	
	Denote $\X_{\textup{data}}=[\x_{1},\dots,\x_{n}]\in \R^{p\times n}$, $\X_{\textup{aux}}=[\x_{1}',\dots,\x_{n}']\in \R^{p\times n}$, and $\Y_{\textup{label}}=[y_{1},\dots,y_{n}]\in \{0,\dots,\kappa\}^{n}$. For each particular realization of  $(\X_{\textup{data}}, \X_{\textup{aux}}, \Y_{\textup{label}})$,  the ($L_{2}$-regularized) normalized negative log likelihood of observing it from the model  \eqref{eq:SDL_prob_model_conv_filt} with parameter $[\A,\B,\C, \bGamma]$ and noise level $\sigma,\sigma'$ can be computed as 
	\begin{align}\label{eq:SDL_likelihood_conv_filter}
		\L_{n}(\A, \B, \C,\bGamma)&  :=  \left( - \sum_{i=1}^{n} \sum_{j=0}^{\kappa}\mathbf{1}(y_{i}=j)g_{j}(\A^{T} \x_{i} +  \bGamma^{T} \x_{i}' ) ) \right)   +  \frac{1}{2\sigma^{2}} \lVert \X_{\textup{data}} - \B \rVert_{F}^{2} \\
		&\qquad + \nu\left( \lVert \A \rVert_{F}^{2} + \lVert \bGamma\rVert_{F}^{2} \right)  + \frac{pn\log \sigma}{2} +   \frac{qn\log \sigma'}{2} +  \frac{1}{(2(\sigma')^{2})} \lVert \X_{\textup{aux}}-\C \rVert_{F}^{2}. 
	\end{align}
	Note that the added $L_{2}$ regularizer above can be understood by using a Gaussian prior for the parameters and interpreting the right-hand side above as the negative logarithm of the posterior distribution function (up to a constant). Then estimating true parameters by minimizing the above amounts to the maximum a posterior estimate.  
	
	Note that the problem of estimating $\A$ and $\B$ are coupled due to the low-rank model assumption $\rank([\A,\B])\le r$, while the problem of estimating $\C$ is separable and is not our interest. The joint estimation problem for $[\A,\B,\bGamma]$ is equivalent to the filter-based SDL problem \eqref{eq:SDL_filt_CALE} with tuning parameter $\xi=(2\sigma^{2})^{-1}$ without the $L_{2}$ regularization term for $\A$ and $\bGamma$. This motivates us to estimate the true `SDL parameters' $\A^{\star}, \B^{\star}$, and $\bGamma^{\star}$ as follows: 
	\begin{align}\label{eq:SDL_conv_filter_MLE_estimate}
		[\hat{\A}, \hat{\B}, \hat{\bGamma}]  &\leftarrow \textup{Output of Algorithm \ref{alg:SDL_filt_LPGD} with $\xi=\frac{1}{(2\sigma^{2})^{-1}} $ for $T=O(\log n)$ iterations}.  
	\end{align}

	The following result gives a confidence region for the true combined parameters $[\A^{\star}, \B^{\star},\bGamma^{\star}]$ centered at the above estimate in \eqref{eq:SDL_conv_filter_MLE_estimate}. Roughly speaking, it states that the true parameter $[\A^{\star},\B^{\star},\bGamma^{\star}]$ is within $O(\log n/\sqrt{n})$ the estimate given in \eqref{eq:SDL_conv_filter_MLE_estimate} with high probability, provided that the noise is sufficiently small (that is, $2\sigma^{2} < \frac{12}{nL^{*}}$) and the classification problem is well-conditioned (that is, $L^{*}/\mu^{*}<3$, see Theorem \ref{thm:SDL_LPGD_STAT}). The first condition of small noise variance is reasonable since we are tying to estimate a low-rank matrix $\B^{\star}$ of size $p\times n$ from $n$ samples.  {\color{black} On the other hand, when the latter condition is not satisfied, we can use a sufficiently large $L_{2}$-regularization coefficient $\nu \sim \nu'n$ for some constant $\nu'=O(1)$ to guarantee exponential convergence to the global optimum of \eqref{eq:SDL_likelihood_conv_filter} (i.e., regularized MLE). In this case,  the true parameter is guaranteed to be within $O(1/\sqrt{n})$ the estimate given in \eqref{eq:SDL_conv_filter_MLE_estimate} plus a $O(1)$ term of regularization cost $O(\nu'(\lVert \A^{\star} \rVert_{2} + \lVert \bGamma^{\star} \rVert_{F}))$. }
	

	\begin{theorem}(Statistical estimation for weakly constrained SDL-filt)\label{thm:SDL_LPGD_STAT}
		Assume \ref{assumption:A1}-\ref{assumption:A4} hold. Suppose $(\X_{\textup{data}}, \X_{\textup{aux}}, \Y_{\textup{label}}) \in \R^{p\times n}\times \R^{q\times n} \times \{0,\dots,\kappa\}^{n}$ is generated according to the generative model \eqref{eq:SDL_prob_model_conv_filt} with true parameter $[\A^{\star}, \B^{\star}, \bGamma^{\star}, \C^{\star}] $ such that $\bZ^{\star}:=[[\A^{\star}, \B^{\star}], \bGamma^{\star}]\in \Param$  and $\rank([\A^{\star}, \B^{\star}])\le r$. Let $\bZ_{t}:=[[\A_{t}, \B_{t}], \bGamma_{t}]$ denote the iterates of Algorithm \ref{alg:SDL_filt_LPGD} for the filter-based SDL problem \eqref{eq:SDL_filt_CALE} with tuning parameter $\xi= (2\sigma^{2})^{-1}$ and $L_{2}$-regularization parameter $\nu\ge 0$.  Fix any $\mu^{*}\le \delta^{-}\alpha^{-}$ and $L^{*}\ge \delta^{+}\alpha^{+}$. Let $\mu:=\min(2\xi,\,2\nu+\mu^{*}n)$ and $L:=\max(2\xi,\, 2\nu+ L^{*}n)$ and suppose that
		\begin{align}\label{eq:thm_SDL_filt_cond}
			\frac{L}{\mu}= \frac{\max( \xi,\, \nu + \frac{L^{*} n}{2} )}{\min( \xi, \, \nu +\frac{\mu^{*} n}{2})} < 3.
		\end{align}
		Fix any stepsize $\tau\in ( \frac{1}{2\mu}, \frac{3}{2L})$. Denote $\rho:=2(1-\tau\mu)$.  Then the followings hold:
		\begin{description}
			\item[(i)] Suppose $\bZ^{\star}-\tau \nabla_{\bZ} \mathcal{L}_{n} (\bZ^{\star})\in \Param$. Then for $\eps>0$, there exists an explicit constant $c>0$ and an absolute constant $C>0$, such that for all $t\ge 1$ and all sufficiently large $n\ge 1$,
			\begin{align}
				\P\left( \lVert \bZ_{t} - \bZ^{\star} \rVert_{F} \le \rho^{t}  \, \lVert  \bZ_{0} - \bZ^{\star}\rVert_{F} + \eps(p,n)  + \frac{3\nu}{(1-\rho)L^{*}n} \left( \lVert  \A^{\star}\rVert_{2} + \lVert  \bGamma^{\star}\rVert_{F}\right) \right) \ge 1-\frac{1}{n},
			\end{align}
			where for some 
			\begin{align}\label{eq:def_thm_stat_eps}
				\eps(p,n):=  \frac{3}{2(1-\rho)L^{*}}\left[ c \frac{\log n}{\sqrt{n}}  + 3C\sigma\left( \frac{\sqrt{p}}{n} + \frac{2}{\sqrt{n}}\right)   \right].
			\end{align}
			
			\item[(ii)] Suppose $\bZ^{\star}-\tau \nabla_{\bZ} \mathcal{L}_{n} (\bZ^{\star})\notin \Param$. Then for all $t\ge 1$ and all sufficiently large $n\ge 1$,
			\begin{align}
				\P\left( \lVert \bZ_{t} - \bZ^{\star} \rVert_{F} \le \rho^{t}  \, \lVert  \bZ_{0} - \bZ^{\star}\rVert_{F} + \eps(p,n)\sqrt{\min(p,n)}  + \frac{3\nu}{(1-\rho)L^{*}n} \left( \lVert  \A^{\star}\rVert_{2} + \lVert  \bGamma^{\star}\rVert_{F}\right) \right) \ge 1-\frac{1}{n},
			\end{align}
			where $\eps(p,n)$ is defined in \eqref{eq:def_thm_stat_eps}.
		\end{description}
	\end{theorem}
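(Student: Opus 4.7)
The plan is to combine the deterministic exponential convergence bound from Theorem~\ref{thm:SDL_LPGD}(i) with a high-probability bound on the gradient mapping $G(\bZ^{\star},\tau)$ evaluated at the true parameter. The choice $\xi=(2\sigma^{2})^{-1}$ is made precisely so that, modulo additive constants and the separable $\C$-component, the regularized negative log-likelihood $\mathcal{L}_{n}$ in \eqref{eq:SDL_likelihood_conv_filter} agrees with the CALE objective $f_{\textup{SDL-filt}}$ in \eqref{eq:SDL_filt_CALE} on the block $[\A,\B,\bGamma]$. Hence \eqref{eq:thm_SDL_filt_cond} matches \eqref{eq:thm_SDL_LGPD_cond}, so Theorem~\ref{thm:SDL_LPGD}(i) applies and yields $\lVert\bZ_{t}-\bZ^{\star}\rVert_{F}\le\rho^{t}\lVert\bZ_{0}-\bZ^{\star}\rVert_{F}+\frac{\tau}{1-\rho}(\sqrt{3r}\lVert\Delta\X^{\star}\rVert_{2}+\lVert\Delta\bGamma^{\star}\rVert_{F})$ with $[\Delta\X^{\star},\Delta\bGamma^{\star}]=G(\bZ^{\star},\tau)$. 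All remaining work is to bound this gradient mapping stochastically.

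For part~(i), the assumption $\bZ^{\star}-\tau\nabla\mathcal{L}_{n}(\bZ^{\star})\in\Param$ makes the projection in \eqref{eq:def_grad_mapping} trivial, so $G(\bZ^{\star},\tau)=\nabla f_{\textup{SDL-filt}}(\bZ^{\star})$. Using \eqref{eq:SDL_filt_BCD_derivatives1_main} adapted to the CALE parameterization, this gradient decomposes into a reconstruction piece $\nabla_{\B}=2\xi(\B^{\star}-\X_{\textup{data}})$, classification-score pieces $\X_{\textup{data}}\K_{\star}^{T}$ in $\A$ and $\X_{\textup{aux}}\K_{\star}^{T}$ in $\bGamma$ with $\K_{\star}=[\dot{\h}(y_{1},\a_{1}^{\star}),\dots,\dot{\h}(y_{n},\a_{n}^{\star})]$, and the deterministic $L_{2}$-regularization piece $2\nu[\A^{\star},\mathbf{0},\bGamma^{\star}]$. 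The regularization contributes at most $2\nu(\lVert\A^{\star}\rVert_{2}+\lVert\bGamma^{\star}\rVert_{F})$ to $\sqrt{3r}\lVert\Delta\X^{\star}\rVert_{2}+\lVert\Delta\bGamma^{\star}\rVert_{F}$, which after multiplying by $\tau/(1-\rho)\le 3/(2(1-\rho)L^{*}n)$ gives the final summand of the claimed bound.

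The reconstruction term is controlled by the Davidson--Szarek spectral bound $\lVert\X_{\textup{data}}-\B^{\star}\rVert_{2}\le C\sigma(\sqrt{p}+\sqrt{n})$ with probability at least $1-2e^{-n/2}$, which after the $\tau/(1-\rho)$ prefactor produces the $C\sigma(\sqrt{p}/n+2/\sqrt{n})$ summand in \eqref{eq:def_thm_stat_eps}. For the classification gradient, the crucial observation is that under the generative model \eqref{eq:SDL_prob_model_conv_filt}, $y_{i}\mid(\x_{i},\x_{i}')\sim\textup{Multinomial}(1,\g(\a_{i}^{\star}))$, so a direct computation from \eqref{eq:hddot_def} gives $\E[\dot{\h}(y_{i},\a_{i}^{\star})\mid\x_{i},\x_{i}']=\mathbf{0}$. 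Hence $\X_{\textup{data}}\K_{\star}^{T}=\sum_{i=1}^{n}\x_{i}\dot{\h}(y_{i},\a_{i}^{\star})^{T}$ is a sum of conditionally mean-zero matrix increments bounded by $\gamma_{\max}\lVert\x_{i}\rVert$. Combined with the second-moment control from \ref{assumption:A3} and a matrix Bernstein inequality applied on a truncation event $\{\lVert\x_{i}\rVert\le M_{0}\sqrt{\log n}\}$ whose complement contributes a tail of order $1/n$, this yields $\lVert\X_{\textup{data}}\K_{\star}^{T}\rVert_{2}=O(\sqrt{n\log n})$ with probability $\ge 1-1/(2n)$, and identically for $\X_{\textup{aux}}\K_{\star}^{T}$; after the $\tau/(1-\rho)\lesssim 1/(L^{*}n)$ scaling this is the $c\log n/\sqrt{n}$ contribution in \eqref{eq:def_thm_stat_eps}.

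For part~(ii), the projection in $G(\bZ^{\star},\tau)$ is no longer trivial, and Lemma~\ref{lem:gradient_mapping} only gives $\lVert G(\bZ^{\star},\tau)\rVert_{F}\le\lVert\nabla f_{\textup{SDL-filt}}(\bZ^{\star})\rVert_{F}$. I would bound $\lVert\Delta\X^{\star}\rVert_{2}\le\lVert\Delta\X^{\star}\rVert_{F}$ and then use $\lVert\A\rVert_{F}\le\sqrt{\rank(\A)}\lVert\A\rVert_{2}\le\sqrt{\min(p,\kappa+n)}\lVert\A\rVert_{2}$ on the $p\times(\kappa+n)$ gradient component $\nabla_{\X}f_{\textup{SDL-filt}}(\bZ^{\star})$, whose operator norm is controlled exactly as in part~(i). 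This conversion is the source of the $\sqrt{\min(p,n)}$ inflation (up to a negligible $\kappa$ correction). The main technical obstacle is the concentration step for $\X_{\textup{data}}\K_{\star}^{T}$: the rows $\x_{i}$ are only sub-Gaussian (not bounded), and the score vector $\dot{\h}(y_{i},\a_{i}^{\star})$ is coupled to $\x_{i}$ through $\a_{i}^{\star}=(\A^{\star})^{T}\x_{i}+(\bGamma^{\star})^{T}\x_{i}'$, so one must truncate carefully and use the matrix Bernstein inequality for bounded increments on the good event; tracking how $\kappa$, $\gamma_{\max}$, $\delta^{+}$, and the sub-Gaussian norm of $\bphi_{i}$ aggregate into the explicit constant $c$ in \eqref{eq:def_thm_stat_eps} is where most of the bookkeeping lies.
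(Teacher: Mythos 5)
Your architecture is the paper's: invoke Theorem~\ref{thm:SDL_LPGD}(i) with $\xi=(2\sigma^{2})^{-1}$, reduce everything to bounding the gradient mapping $G(\bZ^{\star},\tau)$, split it into a reconstruction-noise piece, conditionally mean-zero score pieces (using $\E[\dot{\h}(y_{i},\a_{i}^{\star})\mid \x_{i},\x_{i}']=\mathbf{0}$, which is the paper's \eqref{eq:dot_h_exp_vanish}), and the deterministic $2\nu[\A^{\star},O,\bGamma^{\star}]$ term; handle the noise matrix with the sub-Gaussian operator-norm bound; and in part~(ii) pay the $\sqrt{\min(p,n)}$ factor by passing from operator to Frobenius norm after using non-expansiveness of $\Pi_{\Param}$. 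All of that matches the paper step for step, including the $\frac{3\nu}{(1-\rho)L^{*}n}$ prefactor on the regularization term.

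The one place you deviate is the concentration of $\X_{\textup{data}}\K_{\star}^{T}=\sum_{i=1}^{n}\x_{i}\dot{\h}(y_{i},\a_{i}^{\star})^{T}$, and there your claimed rate does not follow from the tool you name. Matrix Bernstein for a sum of rank-one increments $\x_{i}\dot{\h}_{i}^{T}$ requires the variance proxy
\begin{align}
\max\left( \Big\lVert \sum_{i}\E\big[\lVert\dot{\h}_{i}\rVert^{2}\,\x_{i}\x_{i}^{T}\big]\Big\rVert_{2},\ \Big\lVert \sum_{i}\E\big[\lVert\x_{i}\rVert^{2}\,\dot{\h}_{i}\dot{\h}_{i}^{T}\big]\Big\rVert_{2}\right),
\end{align}
and the second entry scales like $\gamma_{\max}^{2}\sum_{i}\lVert\x_{i}\rVert^{2}=\gamma_{\max}^{2}\,\textup{tr}(\X_{\textup{data}}\X_{\textup{data}}^{T})\asymp pn$ under \ref{assumption:A3}; assumption \ref{assumption:A3} controls $\lambda_{\max}$, not the trace. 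So the route you describe yields $O(\sqrt{pn\log(p+n)})$, not $O(\sqrt{n\log n})$, and after the $\tau/(1-\rho)\lesssim (L^{*}n)^{-1}$ scaling this produces a term of order $\sqrt{p\log n/n}$ that dominates, rather than reproduces, the $c\log n/\sqrt{n}$ piece of $\eps(p,n)$ in \eqref{eq:def_thm_stat_eps}. The paper avoids the $\sqrt{p}$ by splitting $\x_{i}=\B^{\star}[:,i]+\beps_{i}$ and reducing to the low-dimensional vector sums $\mathtt{Q}_{1}=\sum_{i}\dot{\h}(y_{i},\a_{i}^{\star})\in\R^{\kappa}$ and $\mathtt{Q}_{2}=\sum_{i}\beps_{i}$, to which the sub-Gaussian Hoeffding bound (Lemma~\ref{lem:hoeffding}) applies with $t=c_{5}\sqrt{n}\log n$. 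To close your argument you would either need to adopt that factorization or supply an operator-norm bound on $\sum_{i}\x_{i}\dot{\h}_{i}^{T}$ whose dimension dependence is no worse than the $C\sigma(\sqrt{p}+\sqrt{n})$ already present from the reconstruction term.
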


	Since we consider constrained parameter estimation problem, it is natural to consider two cases depending on whether the true parameter $\bZ^{\star}$ is in the `interior' of the constraint set $\Param$. Indeed, Theorem \ref{thm:SDL_LPGD_STAT} is stated for two cases depending on whether the gradient descent update of the true parameter, $\bZ^{\star} - \tau \nabla \mathcal{L}_{n}(\bZ^{\star})$, still lies inside $\Param$. This is in fact the case in the traditional setting of unconstrained parameter space, i.e., $\Param$ equals the whole space $\R^{p\times \kappa}\times \R^{p\times n} \times \R^{q\times \kappa}$. In this case the corresponding gradient mapping, $G(\bZ^{\star},\tau):=\frac{1}{\tau}\left( \bZ^{\star} - \Pi_{\Param} \left( \bZ^{\star} - \tau \nabla \bar{\mathcal{L}}_{n}(\bZ^{\star}) \right)\right)$, equals the gradient $\nabla \bar{\mathcal{L}}_{n}(\bZ^{\star}) $. Otherwise, the gradient mapping $G(\bZ^{\star}, \tau)$ does not need to equal the gradient $\nabla \bar{\mathcal{L}}_{n}(\bZ^{\star}) $. In this case, we use the crude bound $\lVert G(\bZ^{\star}, \tau)  \rVert_{F}\le \lVert \nabla \bar{\mathcal{L}}_{n}(\bZ^{\star})  \rVert_{F}$ to obtain the desired result with additional $\sqrt{\min(p,n)}$ factor.

	\subsubsection{Statistical estimation for weakly constrained feature-based SDL }
	\label{subsection:statistical_estimation_filter_feat}
	
	Similarly as in the previous section, here we assume a generative model for the feature-based SDL \eqref{eq:SDL_feat_1} and state statistical parameter estimation guarantee. Suppose that the data, auxiliary covariate, and label triples $(\x_{i}, \x_{i}', y_{i})$ are independently drawn according to the following joint distribution:
	\begin{align}\label{eq:SDL_prob_model_conv_feat}
		&\x_{i} = \B^{\star}[:,i] +\beps_{i}, \quad \x_{i}'=\C^{\star}[:,i]+\beps_{i}',\quad y_{i}\,|\, \x_{i}, \x_{i}' \sim \text{Multinomial}\big(1, \g\left( \A^{\star} +  (\bGamma^{\star})^{T}\x_{i}' \right) \big), \\
		&\textup{where}\quad  \A^{\star}\in \R^{\kappa\times n},\,  \B^{\star}\in \R^{p\times n},\,C^{\star}\in \R^{q\times n},\,    \bGamma^{\star}\in \R^{q\times \kappa} \\
		& \text{s.t. $\rank([(\A^{\star})^{T},(\B^{\star})^{T}]^{T})\le r$ and $[\A^{\star},\B^{\star},\bGamma^{\star}]\in \Param$},
	\end{align}
	where $\Param\in \R^{(p+q)\times \kappa}  \times \R^{q\times \kappa}$ is a convex constraint set. As before, each $\beps_{i}$ (resp., $\beps_{i}'$) are  $p\times 1$ (resp., $q\times 1$) vector of i.i.d. mean zero Gaussian entries with variance $\sigma^{2}$ (resp., $(\sigma')^{2}$). We call the above the \textit{generative feature-based SDL model}. In what follows, we will assume that the noise levels $\sigma,\sigma'$ are known and focus  on estimating the SDL parameters $\A^{\star}, \B^{\star}$, and $\bGamma^{\star}$. Note that the samples $(\x_{i},\x_{i},y_{i})$ are assumed to be independent but not necessarily identically distributed, since the means $\B^{\star}[:,i]$ and $\C^{\star}[:,i]$ may depend on the sample index $i$. 
	
	For each particular realization of the observed data of size $n$, the ($L_{2}$-regularized) normalized negative log likelihood of observing it from the model  \eqref{eq:SDL_prob_model_conv_feat} with parameter $[\A,\B, \C,\bGamma]$ and noise level $\sigma,\sigma'$ can be computed as 
	\begin{align}\label{eq:SDL_likelihood_conv_feature}
		\L_{n}(\A, \B, \C,\bGamma)&  :=  \left( - \sum_{i=1}^{n} \sum_{j=0}^{\kappa}\mathbf{1}(y_{i}=j)g_{j}(\A +  \bGamma^{T} \x_{i}' ) ) \right)   +  \frac{1}{2\sigma^{2}} \lVert \X_{\textup{data}} - \B \rVert_{F}^{2} \\
		&\qquad + \nu\left( \lVert \A \rVert_{F}^{2} + \lVert \bGamma\rVert_{F}^{2} \right)  + \frac{pn\log \sigma}{2} +   \frac{qn\log \sigma'}{2} +  \frac{1}{(2(\sigma')^{2})} \lVert \X_{\textup{aux}}-\C \rVert_{F}^{2}. 
	\end{align}
	Similarly as before, we can estimate the true SDL parameters $\A^{\star}, \B^{\star}, \bGamma^{\star}$ as follows: 
	\begin{align}\label{eq:SDL_conv_feat_MLE_estimate}
		[\hat{\A}, \hat{\B}, \hat{\bGamma}]  &\leftarrow \textup{Output of Algorithm \ref{alg:SDL_feat_LPGD} with $\xi=\frac{1}{2\sigma^{2}} $ for $T=O(\log n)$ iterations}. 
	\end{align}

	The following result gives a confidence region for the true combined parameters $[\A^{\star}, \B^{\star},\bGamma^{\star}]$ centered at the above estimate in \eqref{eq:SDL_conv_feat_MLE_estimate}, which is analogous to Theorem \ref{thm:SDL_LPGD_STAT} for the generative filter-based SDL model.

	\begin{theorem}(Statistical estimation for weakly constrained SDL-feat)\label{thm:SDL_LPGD_STAT_feat}
		Assume \ref{assumption:A1}-\ref{assumption:A2} and \ref{assumption:A4} hold.  Suppose $(\X_{\textup{data}}, \X_{\textup{aux}}, \Y_{\textup{label}}) \in \R^{p\times n}\times \R^{q\times n} \times \{0,\dots,\kappa\}^{n}$ is  generated according to  \eqref{eq:SDL_prob_model_conv_feat} with true parameter $[\A^{\star}, \B^{\star}, \bGamma^{\star}, \C^{\star}] $ such that $\bZ^{\star}:=[[(\A^{\star})^{T}, (\B^{\star})^{T}]^{T}, \bGamma^{\star}]\in \Param$  and  $\rank([(\A^{\star})^{T}, (\B^{\star})^{T}]^{T})\le r$. Let $\bZ_{t}:=\left[ \begin{bmatrix}\A_{t}^{T},\,  \B_{t}^{T}\end{bmatrix}^{T} ,\bGamma_{t}\right]$ denote the iterates of Algorithm \ref{alg:SDL_feat_LPGD} with $\xi= (2\sigma^{2})^{-1}$. Denote $\lambda_{\max}:=\lambda_{\max}( n^{-1}\X_{\textup{aux}}\X_{\textup{aux}}^{T})$. Let $\mu:=\min(2\xi,\,2\nu+ \alpha^{-} )$, $L:=\max( 2\xi, 2\nu +\alpha^{+}\lambda_{\max} n , \, \alpha^{+} + 2\nu)$ and suppose that
		\begin{align}\label{eq:thm_SDL_LGPD_cond_feat_STAT}
			\frac{L}{\mu}= \frac{\max( 2\xi, 2\nu +\alpha^{+}\lambda_{\max} n , \, \alpha^{+} + 2\nu) }{ \min(2\xi,\, \alpha^{-}+2\nu )} < 3.
		\end{align} 
		Fix any stepsize $\tau\in ( \frac{1}{2\mu}, \frac{3}{2L})$. Denote $\rho:=2(1-\tau\mu)$.  Then the followings hold:
		\begin{description}
			\item[(i)] Suppose $\bZ^{\star}-\tau \nabla_{\bZ} \mathcal{L}_{n} (\bZ^{\star})\in \Param$. Then for $\eps>0$, there exists an explicit constant $c>0$ and an absolute constant $C>0$, such that for all $t\ge 1$ and all sufficiently large $n\ge 1$,
			\begin{align}
				\P\left( \lVert \bZ_{t} - \bZ^{\star} \rVert_{F} \le \rho^{t}  \, \lVert  \bZ_{0} - \bZ^{\star}\rVert_{F} + \eps(p,n)  + \frac{3\nu}{(1-\rho)L} \left( \lVert  \A^{\star}\rVert_{2} + \lVert  \bGamma^{\star}\rVert_{F}\right) \right) \ge 1-\frac{1}{n},
			\end{align}
			where 
			\begin{align}\label{eq:def_thm_stat_epsfeat}
				\eps(p,n):=  \frac{3}{2(1-\rho)L}\left[ c \log n  + 3C \left( \sigma + \frac{\gamma_{\max}}{\sqrt{\log 2}} \right) \left( \sqrt{p}  + 2\sqrt{n}\right)   \right].
			\end{align}
			
			\item[(ii)] Suppose $\bZ^{\star}-\tau \nabla_{\bZ} \mathcal{L}_{n} (\bZ^{\star})\notin \Param$. Then for all $t\ge 1$ and all sufficiently large $n\ge 1$,
			\begin{align}
				\P\left( \lVert \bZ_{t} - \bZ^{\star} \rVert_{F} \le \rho^{t}  \, \lVert  \bZ_{0} - \bZ^{\star}\rVert_{F} + \eps'(p,n)\sqrt{\min(p,n)}  + \frac{3\nu}{(1-\rho)L^{*}n} \left( \lVert  \A^{\star}\rVert_{2} + \lVert  \bGamma^{\star}\rVert_{F}\right) \right) \ge 1-\frac{1}{n},
			\end{align}
			where for the same constants $c,C>0$ as in \textbf{\textup{(i)}},
			\begin{align}\label{eq:def_thm_stat_eps'}
				\eps'(p,n):=  \frac{3}{2(1-\rho)L}\left[ c \log n  + 3C \left( \sigma + \frac{\gamma_{\max}}{\sqrt{\log 2}} \right) \left( \sqrt{p}  + 2\sqrt{n}\right)\sqrt{\min(p,n)}   \right].
			\end{align}
		\end{description}
	\end{theorem}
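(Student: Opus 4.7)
The strategy is to reduce the statistical guarantee to the deterministic algorithmic convergence in Theorem \ref{thm:SDL_LPGD_feat}. The key observation is that, up to an additive constant independent of $(\A,\B,\bGamma)$, the regularized negative log-likelihood $\L_n$ in \eqref{eq:SDL_likelihood_conv_feature} coincides with the SDL-feat objective $f_{\textup{SDL-feat}}$ in \eqref{eq:SDL_feat_CALE} for the choice $\xi=(2\sigma^{2})^{-1}$ used by the algorithm. Hence Algorithm \ref{alg:SDL_feat_LPGD} is exactly LPGD applied to $f_{\textup{SDL-feat}}\equiv \L_n$, and Theorem \ref{thm:SDL_LPGD_feat}(i) applied with the true parameter $\bZ^{\star}$ in the role of the ``target'' yields
\begin{equation*}
\lVert \bZ_t - \bZ^{\star}\rVert_F \;\le\; \rho^t \lVert \bZ_0 - \bZ^{\star}\rVert_F + \frac{\tau}{1-\rho}\bigl(\sqrt{3r}\,\lVert \Delta \X^{\star}\rVert_2 + \lVert \Delta \bGamma^{\star}\rVert_F\bigr),
\end{equation*}
where $[\Delta\X^{\star},\Delta\bGamma^{\star}]=G(\bZ^{\star},\tau)$. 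Since $\tau\le 3/(2L)$, the prefactor is at most $3/(2(1-\rho)L)$, matching the prefactor in \eqref{eq:def_thm_stat_epsfeat}. It remains to control the gradient-mapping terms with high probability.

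\textbf{Handling the constraint and the deterministic regularization.} In case (i), the hypothesis $\bZ^{\star}-\tau\nabla \L_n(\bZ^{\star})\in\Param$ makes the projection in \eqref{eq:def_grad_mapping} inactive at $\bZ^{\star}$, so $G(\bZ^{\star},\tau)=\nabla \L_n(\bZ^{\star})$ and $\lVert \Delta\X^{\star}\rVert_2$ equals the operator norm of the $\X$-block of the empirical gradient. In case (ii) I invoke the universal bound $\lVert G(\bZ^{\star},\tau)\rVert_F\le \lVert \nabla \L_n(\bZ^{\star})\rVert_F$ from Lemma \ref{lem:gradient_mapping} and then pay the worst-case conversion $\lVert \cdot \rVert_2\le \sqrt{\min(p,n)}\,\lVert\cdot\rVert_F$ for the $\X$-block (which has rank at most $\min(p,n)$), yielding the extra $\sqrt{\min(p,n)}$ factor in \eqref{eq:def_thm_stat_eps'}. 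Differentiating $\L_n$ in \eqref{eq:SDL_likelihood_conv_feature} with respect to the CALE variables, the $\A$- and $\bGamma$-components of $\nabla \L_n(\bZ^{\star})$ each split into a random piece plus the deterministic regularization piece $2\nu\A^{\star}$ or $2\nu\bGamma^{\star}$; the latter immediately produces the additive summand $\frac{3\nu}{(1-\rho)L}(\lVert\A^{\star}\rVert_2+\lVert\bGamma^{\star}\rVert_F)$.

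\textbf{Concentrating the empirical gradient.} At the truth one computes $\nabla_\B \L_n(\bZ^{\star})=-\sigma^{-2}\beps$, with $\beps$ an i.i.d.\ $\mathcal{N}(0,\sigma^2)$ matrix in $\R^{p\times n}$; $\nabla_\A \L_n(\bZ^{\star})-2\nu\A^{\star}=\K^{\star}$ with $\K^{\star}[:,i]=\dot{\h}(y_i,\a_i^{\star})$; and $\nabla_{\bGamma}\L_n(\bZ^{\star})-2\nu\bGamma^{\star}=\X_{\textup{aux}}(\K^{\star})^T$. A direct computation from \eqref{eq:prediction_model_g} and \eqref{eq:hddot_def} under the Multinomial sampling of $y_i$ gives the score identity $\E[\K^{\star}[:,i]]=0$; by \ref{assumption:A4}, each entry is coordinate-wise bounded by $\gamma_{\max}$ and hence sub-Gaussian with proxy $\gamma_{\max}/\sqrt{\log 2}$. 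Since the columns of $\K^{\star}$ are independent across $i$, a non-commutative Bernstein / sub-Gaussian matrix inequality yields $\lVert \K^{\star}\rVert_2\lesssim \gamma_{\max}(\sqrt{\kappa}+\sqrt{n})+c\log n$ with probability at least $1-1/(2n)$. Similarly, the standard Gaussian matrix operator-norm inequality gives $\lVert\beps\rVert_2\le C\sigma(\sqrt{p}+2\sqrt{n})$ with overwhelming probability; after scaling by $\sigma^{-2}$ and combining with the prefactor $3/(2(1-\rho)L)$ (which carries an offsetting $\sigma^2$ because $L\ge 2\xi=\sigma^{-2}$) this produces the $\sigma(\sqrt{p}+\sqrt{n})$ term of \eqref{eq:def_thm_stat_epsfeat}. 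The $\bGamma$-block lives in $\R^{q\times\kappa}$, so $\lVert\X_{\textup{aux}}(\K^{\star})^T\rVert_F$ is controlled by the same sub-Gaussian bound on $\K^{\star}$ together with $\lVert\X_{\textup{aux}}\rVert_2=O(\sqrt{\lambda_{\max}\, n})$. A union bound over the Gaussian and sub-Gaussian tail events gives failure probability at most $1/n$ and concludes both inequalities \eqref{eq:def_thm_stat_epsfeat} and \eqref{eq:def_thm_stat_eps'}.

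\textbf{Main obstacle.} The most delicate ingredient is the matrix concentration of $\K^{\star}$: its columns are independent but \emph{not} identically distributed because $\a_i^{\star}=\A^{\star}[:,i]+(\bGamma^{\star})^T\x_i'$ depends on $i$ through both the low-rank column $\A^{\star}[:,i]$ and the random auxiliary covariate $\x_i'$, so the common sub-Gaussian proxy $\gamma_{\max}/\sqrt{\log 2}$ must be justified uniformly from \ref{assumption:A2}--\ref{assumption:A4}. A secondary subtlety is the careful bookkeeping of prefactors in the stated bound: the $\B$-gradient scales as $\sigma^{-2}$, but $\tau=O(1/L)$ carries a compensating factor of $\sigma^2$ (since $L\ge 2\xi=\sigma^{-2}$ in the small-noise regime), and recovering the \emph{bare} noise scale $\sigma(\sqrt{p}+\sqrt{n})$ of \eqref{eq:def_thm_stat_epsfeat} rather than $\sigma^{-1}(\sqrt{p}+\sqrt{n})$ requires tracking this cancellation through the inequality chain with care.
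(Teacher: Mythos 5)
Your proposal follows essentially the same route as the paper's (admittedly terse) proof: identify the regularized likelihood $\L_n$ with $f_{\textup{SDL-feat}}$ at $\xi=(2\sigma^2)^{-1}$, invoke the deterministic contraction of Theorem \ref{thm:SDL_LPGD_feat} at $\bZ^{\star}$, use population stationarity / the score identity to split the gradient mapping into a mean-zero empirical fluctuation plus the deterministic $2\nu$-regularization term, concentrate the Gaussian and bounded-score blocks, and handle case (ii) via non-expansiveness of $\Pi_{\Param}$ with a rank-based norm conversion — exactly the paper's strategy of repeating the proof of Theorem \ref{thm:SDL_LPGD_STAT} with the feature-based gradients. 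One cosmetic slip: in case (ii) the needed conversion is $\lVert\cdot\rVert_F\le\sqrt{\min(p,n)}\,\lVert\cdot\rVert_2$ for the $p\times n$ block (you wrote the inequality in the reverse direction), but your mechanism and conclusion are the intended ones.
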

	
	The scaling of the statistical error terms $\eps(p,n)$ and $\eps'(p,n)$ for the generative feature-based model in Theorem \ref{thm:SDL_LPGD_STAT_feat} is different from that for the generative filter-based model in Theorem \ref{thm:SDL_LPGD_STAT_feat}. For the filter-based model, one has to have $\xi=(2\sigma^{2})^{-1}$ comparable to the linear term $L^{*}n$, and depending on whether the prediction model is well-conditioned ($L^{*}/\mu^{*}$<3), one can have zero or linear $L_{2}$-regularization parameter $\nu$. On the other hand, for the feature-based model in Theorem \ref{thm:SDL_LPGD_STAT_feat}, the requirement for the noise variance could be weaker when no auxiliary covariate is used ($\lambda_{\max}=0$). In this case, the hypothesis \eqref{eq:thm_SDL_LGPD_cond_feat} becomes \eqref{eq:thm_SDL_LGPD_cond_feat2}, which reads 
	\begin{align}
		\max\left( \frac{\alpha^{+}}{4} ,\, \frac{\alpha^{+}-12(2\sigma^{2})^{-1} \alpha^{-}}{4}\right)<\nu < \frac{(2\sigma^{2})^{-1}}{3}.
	\end{align}
	Hence $\xi=(2\sigma^{2})^{-1}$ need not grow linearly in $n$ as in the filter-based case. However, when auxiliary covariate is used ($\lambda_{\max}>0$), then $L\ge \alpha^{+}\lambda_{\max} n$, so both $\xi=(2\sigma^{2})^{-1}$ and $\nu$ should grow linearly in $n$, which is a similar requirement as for the filter-based case when the classification model is not well-conditioned ($L^{*}/\mu\ge 3$), see Theorem \ref{thm:SDL_LPGD_STAT}.

	\subsubsection{Statistical estimation for strongly constrained filter-based SDL} 
	\label{subsection:statistical_estimation_BCD}
	
	In this subsection, we introduce a generative model closely related to the strongly constrained filter-based SDL model in \eqref{eq:ASDL_1}, where we seek to estimate individual matrix parameters $\W,\H,\Beta,\bGamma$, instead of the combined parameters $\A=\W\Beta$, $\B=\W\H$, and $\bGamma$ as we considered in Subsections \ref{subsection:statistical_estimation_filter}. 
	
	Suppose that the data, auxiliariy covariate, and label triples $(\x_{i}, \x_{i}', y_{i})$ are drawn i.i.d. according to the following generative model:
	\begin{align}\label{eq:SDL_prob_BCD_filt}
		&\x_{i} \sim  N\left( \W^{\star}\h^{\star}, \sigma^{2}\I_{p} \right), \quad \x_{i}'\sim N(\blambda^{\star}, (\sigma')^{2}\I_{q}),\\
		& y_{i}\,|\, \x_{i}, \x_{i}' \sim \text{Multinomial}\big(1, \g\left( (\Beta^{\star})^{T} (\W^{\star})^{T} \x_{i}+ (\bGamma^{\star})^{T}\x_{i}'\right) \big), \\
		&\textup{where}\quad  \W^{\star}\in \R^{p\times r},\,  \h^{\star}\in \R^{r\times 1},\,  \Beta^{\star}\in  \R^{r\times \kappa},\,  \bGamma^{\star}\in \R^{q\times \kappa},\,\blambda^{\star}\in \R^{q\times 1} \,\, \text{s.t. $[\W^{\star},\h^{\star},\Beta^{\star},\bGamma^{\star}]\in \Param$}.
	\end{align}
	Here $\Param= \mathcal{C}^{\textup{dict}}\times \mathcal{C}^{\textup{code}}\times  \mathcal{C}^{\textup{beta}} \times \mathcal{C}^{\textup{aux}}$ is the product of convex constraint sets on individual factors. We assume $(\x_{i},\x_{i}',y_{i})$ for $i=1,\dots,n$ are independent  and also $\x_{i}$ and $\x_{i}'$ are independent for each $1\le i \le n$. We call the above the \textit{generative filter-based SDL model}. Assuming that $\sigma$ and $\sigma'$ are known, our goal is to estimate the true factors $\W^{\star}$, $\h^{\star}$, $\Beta^{\star}, \bGamma^{\star}$, and $\blambda^{\star}$ from an observed sample $(\x_{i},\x'_{i},y_{i})$, $i=1,\dots,n$ of size $n$. Note that unlike the generative model \eqref{eq:SDL_prob_model_conv_feat} for weakly constrained SDL, here we employ a stronger assumption that the samples  $(\x_{i},\x'_{i},y_{i})$ are identically distributed. This is for a technical reason that analyzing statistical properties of the corresponding MLE for the strongly constrained SDL is more challenging than that for the weakly constrained case. 
	
	We consider the maximum likelihood estimation framework with $L_{2}$-regularization of the parameters. Namely, we estimate the true parameter as the minimizer of the following loss function 
	\begin{align}\label{eq:SDL_likelihood_BCD_filter_full}
		L(\W, \h, \Beta, \bGamma,\blambda) &  :=  \mathcal{L}_{n}(\W,\h,\Beta, \bGamma)  + \frac{pn\log \sigma}{2} +   \frac{qn\log \sigma'}{2} +  \frac{1}{2(\sigma')^{2}} \sum_{i=1}^{n} \lVert \x_{i}'-\blambda \rVert^{2}, \\
	\end{align}
	where we define 
	\begin{align}\label{eq:SDL_likelihood_BCD_filter}
		\mathcal{L}_{n}(\W,\h,\Beta, \bGamma)&:=-\sum_{j=0}^{\kappa}\mathbf{1}(y_{i}=j)g_{j}(\Beta^{T}\W^{T}\x_{i} +  \bGamma^{T} \x_{i}' ) )   +  \frac{1}{2\sigma^{2}} \lVert \X_{\textup{data}} - \W[\h,\dots,\h] \rVert_{F}^{2} \\
		&\qquad + n\nu\left( \lVert \W \rVert_{F}^{2} + \lVert \h \rVert_{F}^{2} + \lVert \blambda \rVert_{F}^{2} + \lVert \bGamma\rVert_{F}^{2} \right).
	\end{align}
	
	Similarly as before, we estimate the true parameters $\W^{\star}, \h^{\star}, \Beta^{\star}, \bGamma^{\star}$, and $\blambda^{\star}$ as follows: 
	\begin{align}\label{eq:SDL_conv_feat_MLE_estimate}
		\begin{cases}
			\hat{\bZ}:=[\hat{\W}, \hat{\h}, \hat{\Beta}, \hat{\bGamma}]  &\leftarrow \textup{Output of Algorithm \ref{algorithm:SDL} with the objective  $\mathcal{L}_{n}$ in \eqref{eq:SDL_likelihood_BCD_filter}}  \\
			& \qquad \textup{ for $T=O(\log n^{-1})$ iterations with $r_{k}=1/(\sqrt{k}\log k)$},  \\
			\hat{\blambda} &\leftarrow \bar{\x}':=\frac{1}{n} \sum_{i=1}^{n} \x_{i}'.
		\end{cases}
	\end{align}
	Note that $\hat{\bZ}$ above is obtained by approximately minimizing the regularized negative log likelihood $\mathcal{L}_{n}$ defined in \eqref{eq:SDL_likelihood_BCD_filter}. Our main result in this section considers estimation accuracy of the true parameter $\bZ^{\star}:=[ \W^{\star}, \h^{\star}, \Beta^{\star}, \bGamma^{\star}]$ via the approximate regularized MLE $\hat{\bZ}$.
	
	Let  $\mathcal{L}(\bZ)=\mathcal{L}_{n}(\W,\h,\Beta,\bGamma)$ denote the objective in \eqref{eq:SDL_likelihood_BCD_filter}. Define the expected regularized negative log likelihood function as
	\begin{align}\label{eq:def_L_bar_STAT}
		\bar{\mathcal{L}}(\bZ) := \E_{(\x,\x',y)}\left[ \mathcal{L}(\bZ) \right].
	\end{align}
	In the classical local consistency theory of maximum likelihood estimator (e.g., \cite{fan2001variable}), it is crucial to assume that the expected negative log-likelihood function $\bar{\mathcal{L}}$ without regularization ($\nu=0$) is strongly convex at the true parameter $\bZ^{\star}$. Equivalently, this is to say that the \textit{Fisher information}, which is the Hessian $\nabla^{2} \bar{\mathcal{L}}$ of the expected negative  log likelihood function (still with $\nu=0$) evaluated at $\bZ^{\star}$ is positive definite. However, as we will discuss shortly, this is not the case for the generative filter-based SDL model in \eqref{eq:SDL_prob_BCD_filt}. This can be seen since the equivalent CALE formulation \eqref{eq:SDL_filt_CALE} does not have identifiability in general, see \eqref{eq:identifiability}.
	
	In order to circumvent this issue, we use additional $L_{2}$-regularizer with coefficient $\nu$ in order to make the Fisher information is positive definite after regularization.  To be precise, we explicitly compute the Fisher information as follows. According to Lemma \ref{lem:SDL_filt_BCD_STAT_derivatives} in Section \ref{section:appendix_SDL_BCD_STAT}, it turns out the the (regularized) Fisher information  $\nabla^{2} \bar{\mathcal{L}}(\bZ^{\star})$ has the following $4\times 4$ symmetric block structure
	\begin{align}\label{eq:expected_hessian_block}
		\nabla^{2} \bar{\mathcal{L}}(\bZ^{\star})= 
		\begin{blockarray}{ccccc}
			& \vect(\W)^{T} & \h^{T} & \vect(\Beta)^{T} & \vect(\bGamma)^{T} \\
			\begin{block}{c[cccc]}
				\vect(\W) & A_{11} & A_{12}  & A_{13} & O\\
				\h & A_{21}  &  A_{22} & O & O\\
				\vect(\Beta) & A_{31} & O  & A_{33}& A_{34}\\
				\vect(\bGamma)  & O & O   & A_{43} & A_{44} \\
			\end{block}
		\end{blockarray} 
		+ 2\nu \I,
	\end{align}
	where the explicit formulas for the blocks are given in the appendix. Our key observation here is that, if $\nu$ is large enough so that the `$\nu$-regularized' Fisher information $\nabla^{2} \bar{\mathcal{L}}(\bZ^{\star})$ is `block diagonally dominant', that is,
	\begin{align}\label{eq:block_diag_dominant}
		\lambda_{\min}(A_{ii}) + 2\nu > \sum_{j\ne i} \lVert A_{ij} \rVert_{2} \quad \forall 1\le i \le 4,
	\end{align}
	then it is indeed positive definite, see \cite{feingold1962block}). An explicit sufficient condition that implies the above is given in \eqref{eq:nu_min_BCD_STAT_explicit} in Lemma \ref{lem:SDL_filt_BCD_derivatives} in the appendix. 
	We now give the main result in this section below.

	\begin{theorem}(Algorithmic and Statistical estimation for strongly constrained SDL-filt)\label{thm:SDL_BCD_STAT_filt}
		Assume \ref{assumption:A4} hold and that  the constraint sets $\mathcal{C}^{\textup{dict}}$, $\mathcal{C}^{\textup{code}}$,  $\mathcal{C}^{\textup{beta}}$, $\mathcal{C}^{\textup{aux}}$ in \eqref{eq:SDL_prob_BCD_filt} are convex and compact. Suppose $(\X_{\textup{data}}, \X_{\textup{aux}}, \Y_{\textup{label}}) \in \R^{p\times n}\times \R^{q\times n} \times \{0,\dots,\kappa\}^{n}$ is generated according to \eqref{eq:SDL_prob_BCD_filt} with true parameter  $[\W^{\star},\h^{\star},\Beta^{\star},\bGamma^{\star}]\in \Param$.  Then the following hold:
		\begin{description}
			\item[(i)] (Algorithmic convergence guarantee) Let $\bZ_{t}:=[\W_{t}, \H_{t}, \blambda_{t} ,\bGamma_{t}]$ denote the iterates of Algorithm \ref{algorithm:SDL} (BCD-DR) with the objective  $\mathcal{L}_{n}$ in \eqref{eq:SDL_likelihood_BCD_filter}. Then $\bZ_{t}$ converges almost surely to the set of stationary points of $\mathcal{L}_{n}$ over $\Param$ as $t\rightarrow \infty$. Furthermore, an $\eps$-stationary point is reached within  $O(\eps^{-1}(\log \eps^{-1})^{2})$ iterations for each $\eps>0$. 
			
			\vspace{0.1cm}
			\item[(ii)] (Regularized local consistency) 
			If $\nu$ is larger than some explicit constant, then  $\mathcal{L}_{n}$ admits a local minimizer $\tilde{\bZ}$ over $\Param$ near $\param^{\star}$ with high probability. More explicitly, there exists constants $M,\lambda_{*},c_{1},c_{2}>0$ such that whenever 
			\begin{align}
				\nu \ge \lambda_{*} \quad: \textup{sufficient regularization (see \eqref{eq:nu_min_BCD_STAT_explicit}); and} \\
				\frac{n}{n^{-1/2}+2\nu \lVert \bZ^{\star} \rVert_{F}} \ge \frac{4MC}{3\nu} \quad: \textup{sufficient sample size}, 
			\end{align}
			then we have 
			\begin{align}\label{eq:nonasymptotic_consistency_thm}
				\P\left( \lVert \tilde{\bZ} - \bZ^{\star} \rVert_{F} \le C(n^{-1/2} + 2\nu \lVert  \bZ^{\star} \rVert_{F})  \right) \ge 1 - c_{1} \exp\left(-\frac{(C\lambda_{*}-4)^{2}}{32} \right)  - \frac{c_{2}}{\sqrt{n}}.
			\end{align}
		\end{description}
	\end{theorem}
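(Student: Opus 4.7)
The statement has two distinct parts and I would treat them independently.

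\emph{Part (i) (algorithmic convergence).} This is essentially a direct application of Theorem \ref{thm:SDL_BCD} to the objective $\mathcal{L}_n$ in \eqref{eq:SDL_likelihood_BCD_filter}. The first step is to check that the three structural hypotheses of that theorem are met in this probabilistic setting: (a) $\mathcal{L}_n$ is block multi-convex in $(\W,\h,\Beta,\bGamma)$ — this follows from the block-Hessian computations flagged in Lemmas \ref{lem:SDL_filt_BCD_derivatives} and \ref{lem:SDL_BCD_smooth}, together with Assumption \ref{assumption:A4} on the score $h$; (b) the constraint set $\Param=\mathcal{C}^{\textup{dict}}\times \mathcal{C}^{\textup{code}}\times \mathcal{C}^{\textup{beta}}\times \mathcal{C}^{\textup{aux}}$ is convex and compact by hypothesis; (c) the chosen radii $r_k=1/(\sqrt{k}\log k)$ satisfy $\sum r_k=\infty$ and $\sum r_k^2<\infty$. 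Then \textup{(i)} and \textup{(iii)} of Theorem \ref{thm:SDL_BCD} yield almost-sure subsequential convergence to the stationary set of $\mathcal{L}_n$ over $\Param$ and the $O(\eps^{-1}(\log\eps^{-1})^2)$ iteration complexity; nothing probabilistic is needed beyond the fact that the realized $\mathcal{L}_n$ satisfies the required smoothness almost surely.

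\emph{Part (ii) (regularized local consistency).} Here the plan has three ingredients. First, I would invoke the general non-asymptotic local consistency lemma for constrained and regularized MLE (Theorem \ref{thm:MLE_local_consistency}), advertised in the introduction as the key tool inspired by \cite{fan2001variable}. Roughly, that lemma says: if the expected regularized negative log-likelihood $\bar{\mathcal{L}}$ is strongly convex in a ball around $\bZ^\star$ with modulus $\gtrsim \nu$, and if $\|\nabla \mathcal{L}_n(\bZ^\star)-\nabla\bar{\mathcal{L}}(\bZ^\star)\|_F$ is small with high probability, then a local minimizer $\tilde{\bZ}$ of $\mathcal{L}_n$ over $\Param$ exists inside a ball of radius proportional to $\|\nabla\bar{\mathcal{L}}(\bZ^\star)\|_F/\nu$ plus the concentration error, with high probability.

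Second, I need to establish the required strong convexity of $\bar{\mathcal{L}}$ at $\bZ^\star$. This is where the block-Hessian structure \eqref{eq:expected_hessian_block} enters: the unregularized Fisher information is only positive semidefinite (indeed, $\bar{\mathcal{L}}$ inherits the rotation invariance \eqref{eq:identifiability} that kills identifiability), but after adding $2\nu\mathbf{I}$ it becomes positive definite provided the block-diagonal dominance condition \eqref{eq:block_diag_dominant} holds. The explicit threshold $\lambda_*$ is obtained by bounding $\|A_{ij}\|_2$ for $i\ne j$ using Assumptions \ref{assumption:A2}--\ref{assumption:A4} and the compactness of $\Param$; I would read off this $\lambda_*$ from the explicit expression \eqref{eq:nu_min_BCD_STAT_explicit} referenced in Lemma \ref{lem:SDL_filt_BCD_derivatives} and invoke the Feingold--Varga block diagonal dominance criterion \cite{feingold1962block} to certify $\lambda_{\min}(\nabla^2\bar{\mathcal{L}}(\bZ^\star))\ge c\nu$ for some $c>0$ whenever $\nu\ge \lambda_*$.

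Third, for the probabilistic piece, I would bound the two error sources separately. The bias from regularization gives $\nabla\bar{\mathcal{L}}(\bZ^\star)$ a term of size $O(\nu\|\bZ^\star\|_F)$, while the stochastic fluctuation $\nabla \mathcal{L}_n(\bZ^\star)-\nabla \bar{\mathcal{L}}(\bZ^\star)$ is a sum of i.i.d. mean-zero random matrices whose entries are sub-Gaussian under the compactness of $\Param$, Assumption \ref{assumption:A4} (giving bounded $\dot{\h}$), and the Gaussianity of the noise $\beps_i,\beps_i'$. A standard Bernstein/Hoeffding matrix concentration bound, together with a union bound over the four parameter blocks, will give $\|\nabla \mathcal{L}_n(\bZ^\star)-\nabla \bar{\mathcal{L}}(\bZ^\star)\|_F\le C/\sqrt{n}$ with probability at least $1-c_2/\sqrt{n}$, matching the form in \eqref{eq:nonasymptotic_consistency_thm}. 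The residual exponential term $c_1\exp(-(C\lambda_*-4)^2/32)$ comes from quantifying the probability that the strong-convexity modulus survives the empirical perturbation of the Hessian on the ball of radius $C(n^{-1/2}+2\nu\|\bZ^\star\|_F)$; I would realize this as a single sub-Gaussian deviation bound for the random quadratic form $\langle \v,(\nabla^2\mathcal{L}_n-\nabla^2\bar{\mathcal{L}})\v\rangle$ uniformly over a unit sphere (handled by an $\eps$-net argument).

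\emph{Main obstacle.} The routine issues are the Hessian bounds and the sub-Gaussian concentration. The genuine technical difficulty is the identifiability failure of the generative filter-based model: without the $L_2$ penalty, the population Fisher information is singular in the rotation directions, so I cannot apply classical consistency theory directly. Handling this correctly — balancing the regularization strength $\nu$ large enough to restore strong convexity (via \eqref{eq:block_diag_dominant}) but small enough so that the bias term $\nu\|\bZ^\star\|_F$ in \eqref{eq:nonasymptotic_consistency_thm} does not dominate — and doing so in a non-asymptotic way compatible with the constraint projection in $\Param$, is where the real work lies; this is precisely the content of the independently stated Theorem \ref{thm:MLE_local_consistency}, and invoking it correctly (verifying its hypotheses for our specific $\mathcal{L}_n$) is the crux of the proof.
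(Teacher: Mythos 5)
Your overall architecture matches the paper's proof: part \textbf{(i)} is indeed a direct application of the BCD-DR convergence result after verifying multi-convexity and Lipschitz gradients of $\mathcal{L}_n$ (Lemmas \ref{lem:SDL_filt_BCD_STAT_derivatives} and \ref{lem:SDL_BCD_STAT_smooth}), and part \textbf{(ii)} is obtained by invoking Theorem \ref{thm:MLE_local_consistency} with $\alpha_n = n^{-1/2}+\lVert\nabla R_n(\bZ^\star)\rVert = n^{-1/2}+2\nu\lVert\bZ^\star\rVert_F$ after certifying positive definiteness of the regularized Fisher information via the block diagonal dominance criterion \eqref{eq:block_diag_dominant} and the explicit threshold \eqref{eq:nu_min_BCD_STAT_explicit} (Lemma \ref{lem:SDL_BCD_avg_smooth}\textbf{(iii)}). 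The one place where your account diverges from the paper is the attribution of the two failure-probability terms in \eqref{eq:nonasymptotic_consistency_thm}. You assign the $n$-independent exponential term to the empirical perturbation of the Hessian (via an $\eps$-net sub-Gaussian bound) and the $c_2/\sqrt{n}$ term to a Hoeffding/Bernstein bound on the gradient fluctuation. In the paper's Theorem \ref{thm:MLE_local_consistency}, the Hessian perturbation is handled by a uniform McDiarmid inequality and contributes only a term of order $e^{-cn}$, which is absorbed; both the fixed term $c_1\exp(-(C\lambda_*-4)^2/32)$ and the $c_2/\sqrt{n}$ term come from the \emph{linear} (score) term, treated via the Berry--Esseen theorem: the Gaussian tail $\P(Z\ge C\lambda_*/4-1)$ yields the exponential, and the normal-approximation error yields $c_2/\sqrt{n}$. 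This matters if you try to carry out your own probabilistic accounting rather than citing the theorem: a plain sub-Gaussian bound on the gradient at radius $Cn^{-1/2}$ gives only a constant failure probability (or forces a $\sqrt{\log n}$ inflation of the radius to reach failure probability $1/\sqrt{n}$), so the stated form of the bound genuinely requires the CLT/Berry--Esseen route. Since you do invoke Theorem \ref{thm:MLE_local_consistency} as the engine and identify verifying its hypotheses as the crux, the proof goes through; just be aware that the "third ingredient" as you describe it would not reproduce \eqref{eq:nonasymptotic_consistency_thm} on its own. One small hypothesis you should also verify explicitly when applying Theorem \ref{thm:MLE_local_consistency} is the first-order optimality of $\bZ^\star$ for the expected \emph{unregularized} likelihood over $\Param$ (the vanishing of the score at the truth), and the finiteness of the third moment of the score and of the third-derivative bound $M$, all of which follow from compactness of $\Param$ and \ref{assumption:A4}.
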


	The proof of Theorem \ref{thm:SDL_BCD_STAT_filt} \textbf{(i)} is similar to that of Theorem \ref{thm:SDL_BCD}. Namely, by an extensive and explicit computation of the Hessian of the loss function $\mathcal{L}_{n}$ in \eqref{eq:SDL_prob_BCD_filt} and apply the general convergence result of BCD-DR in \cite{lyu2020convergence}. On the other hand, we recall that the vanilla Fisher information for the generative filter-base SDL model in \eqref{eq:SDL_prob_BCD_filt} is not necessarily positive definite, but it is indeed positive definite after adding a large enough $L_{2}$ regularization term. Hence the classical local consistency theory of MLE does not immediately apply here. Indeed, our proof of Theorem \ref{thm:SDL_BCD_STAT_filt} \textbf{(ii)} relies on a substantial (non-asymptotic) generalization of such theory that we establish in Section \ref{subsection:MLE}.

	We also remark that it is standard in the literature \cite{fan2001variable} to establish the asymptotic normality of the sequence of $\sqrt{n}$-consistent MLE as the sample size tends to infinity. Indeed, by generalizing the standard argument, we can establish such asymptotic normality of the MLE in the constrained setting either by assuming the true parameter is in the interior of the parameter space or restricting onto the coordinates of the true parameter that lies in the interior of the parameter space. However, it does not seem that such a result can be established for the generative SDL models we consider here. Recall that a premise of such asymptotic normality statement is that we can use vanishing regularization (i.e., $\nu=o(1)$) so that the consistency bound \eqref{eq:nonasymptotic_consistency_thm} becomes precise as $n\rightarrow \infty$. However, for the generative SDL model we discuss in this work, we saw that the true parameter $\bZ^{\star}$ does not yield positive definite Fisher information, so we have to use $L_{2}$-regularization coefficient $\nu>0$ that is bounded away from $0$ even if we increase the sample size.

	Lastly in this section, we remark that an analogous generative approach can be taken for the feature-based SDL model. This model without auxiliary covariate was considered in \cite{mairal2008supervised}. Unlike for the filter-based case, we would need to formulate a latent-variable model where the `code matrix' $\H=[\h_{1},\dots,\h_{n}]$ is the latent variable, and our general theory of non-asymptotic local consistency of constrained and regularized MLE applies only approximately. In Section \ref{subsection:MLE}, we give more detailed discussion on this approach and a sketch of proof for a non-asymptotic local consistency result analogous to Theorem \ref{thm:SDL_BCD_STAT_filt}.
	

	\subsection{A non-asymptotic local consistency of constrained and regularized MLE}
	\label{subsection:MLE}

	In this section, we provide a general result on the non-asymptotic local consistency of MLE in a general setting, where the unknown true parameter used for a generative model may lie on the boundary of the parameter space and the Fisher information at the true parameter is not necessarily positive definite. The result we present in this section is general and could be of independent interest. 
	
	Suppose $\pi_{\param}$ is a probability distribution on $\R^{d}$ parameterized by $\param\in \Param\subseteq \R^{p}$. For a given set of $n$ vectors $\x_{1},\dots,\x_{n}\in \R^{p}$, consider the following regularized maximum likelihood estimation problem:  
	\begin{align}\label{eq:def_MLE_problem}
		\hat{\param}_{n}:= \argmin_{\param\in \Param} \left[ \mathcal{L}(\X=[\x_{1},\dots,\x_{n}]\,;\, \param):=\left( -\sum_{i=1}^{n} \log \pi_{\param}(\x_{i}) \right) + n R_{n}(\param)\right],
	\end{align}
	where $R_{n}(\param)$ is a suitable choice of regularizer for parameter $\param$. 
	
	Now suppose there is true and unknown parameter $\param^{\star}\in \Param$ such that we have i.i.d. samples $\x_{1},\dots,\x_{n}$ from $\pi_{\param^{\star}}$. In this case, $\hat{\param}_{n}$ in \eqref{eq:def_MLE_problem} is a minimizer of the random loss function $\mathcal{L}$ over the constraint set $\Param$, which we call called the \textit{constrained and regularized maximum likelihood estimator} (MLE) of $\param^{\star}$. Note that here we consider a general constrained MLE problem, meaning that the constraint set $\Param$ can be a proper convex subset of the whole space $\R^{p}$ and $\param^{\star}$ could be at the boundary of $\Param$. We also consider a general setting where the loss function $\mathcal{L}$ in \eqref{eq:def_MLE_problem} may be nonconvex in $\param$. 
	
	In this general setting, we would like to provide a high-probability guarantee that there exists a local minimizer of \eqref{eq:def_MLE_problem} that is close to the true parameter $\param^{\star}$. When $\Param$ is taken to be the full space $\R^{p}$ or $\param^{\star}$ is assumed to be in the interior of $\Param$, this type of result is provided by the classical local consistency theory of MLE \cite{fan2001variable} in an asymptotic setting where the sample size $n$ tends to infinity. Below in Theorem \ref{thm:MLE_local_consistency}, we generalize this classical result in the non-asymptotic, constrained, and regularized setting. 
	

	\begin{theorem}
		[Non-asymptotic local consistency of constrained and regularized MLE]
		\label{thm:MLE_local_consistency}
		Consider the constrained and regularized MLE problem \eqref{eq:def_MLE_problem} with unknown true parameter $\param^{\star}$ from a convex subset $\Param\subseteq \R^{p}$. Assume the following holds:
		\begin{description}
			\item[(a1)] (Smoothness) For each realization of the data $\X=[\x_{1},\dots,\x_{n}]\in \R^{p\times n}$, the function $\param\mapsto \mathcal{L}(\X\,;\, \param)$ is three-times continuously differentiable and $R_{n}(\param)$ is differentiable. Furthermore, we have $\E_{\x\sim \pi_{\param^{\star}}}[\lVert \nabla \mathcal{L}(\x; \param^{\star}) \rVert^{3}]<\infty$.  
			
			\item[(a2)] (First-order optimality) The true parameter $\param^{\star}$ is a stationary point of the expected likelihood function $\bar{\mathcal{L}}_{0}(\param):=\E_{X\sim \pi_{\param^{\star}}}  \left[ -\log \pi_{\param}(X) \right]$ over $\Param$. That is, 
				\begin{align}
					\langle \nabla \bar{\mathcal{L}}_{0}(\param^{\star}),\, \param -\param^{\star} \rangle \ge 0 \quad \forall \param\in \Param.
				\end{align}

			\item[(a2)] (Approximate second-order optimality) Let $\bar{\mathcal{L}}(\param):=\E_{X\sim \pi_{\param^{\star}}}  \left[ -\log \pi_{\param}(X) + R_{n}(\param) \right]$ denote the  expected regularized negative log likelihood function. Then the regularized Fisher information  $\nabla^{2}\bar{\mathcal{L}}(\param)$ is positive definite at $\param=\param^{\star}$ with minimum eigenvalue $\lambda_{*}>0$. 
		\end{description}
		Fix $n\ge 1$ and define $\alpha_{n}:=n^{-1/2} + \lVert\nabla R_{n}(\param^{\star}) \rVert$ and  $M:= \underset{1\le i,j,k\le p}{\max} \,\, \underset{\param\in \Param,\, \lVert \param-\param^{\star} \rVert=C\alpha_{n} }{\sup} \left|  \frac{ \partial^{3} \mathcal{L}(\X\,;\,\param)}{\partial \param_{i} \partial \param_{j} \partial \param_{k}} \right|$. Suppose $n$ is large enough so that $n/\alpha_{n} \ge \frac{2MC}{3\lambda_{*}}$. Then there are constants $c_{1},c_{2}>0$ such that 
	\begin{align}\label{eq:local_consistency_thm}
				\P\left( \inf_{\substack{\param\in \Param \\ \lVert \param-\param^{\star} \rVert_{F} = C\alpha_{n}} } \, 	\mathcal{L}(\X\,; \,\param)  -  \mathcal{L}(\X\,;\, \param^{\star})  >0  \right) \ge 1 - c_{1} \exp\left(-\frac{(C\lambda_{*}-4)^{2}}{32\sqrt{p}} \right)  - \frac{c_{2}}{\sqrt{n}},
		\end{align} 
		That is, with high probability explicitly depending on $C$, $\lambda_{*}$, and $n$, there exists a local maximizer of $\param\mapsto \mathcal{L}(\X\,;\,\param)$ within distance $C \alpha_{n}$  from $\param^{\star}$. 
	\end{theorem}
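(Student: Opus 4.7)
The plan is to adapt the classical local-consistency argument of Fan and Li to our constrained, regularized, non-asymptotic setting. Fix $\param$ on the sphere $\{\param\in \Param : \lVert \param-\param^{\star}\rVert = C\alpha_{n}\}$ and set $\Delta:=\param-\param^{\star}$. A third-order Taylor expansion about $\param^{\star}$, combined with the smoothness hypothesis and the definition of $M$, gives
\begin{align}
    \mathcal{L}(\X\,;\,\param) - \mathcal{L}(\X\,;\,\param^{\star}) = \langle \nabla \mathcal{L}(\X\,;\,\param^{\star}),\, \Delta\rangle + \tfrac{1}{2}\Delta^{T}\nabla^{2}\mathcal{L}(\X\,;\,\param^{\star})\Delta + R_{3},
\end{align}
with $|R_{3}| \le \tfrac{M}{6}\lVert \Delta\rVert_{1}^{3} \le \tfrac{Mp^{3/2}}{6}(C\alpha_{n})^{3}$. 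The strategy is to show that, with the stated probability, the quadratic term dominates the sum of the linear and cubic ones uniformly over the sphere. Continuity of $\mathcal{L}(\X\,;\,\cdot)$ and compactness of the closed ball then promote this sphere-wise strict positivity to the existence of a local minimizer in the interior of the ball, which is exactly the content of \eqref{eq:local_consistency_thm}.

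For the linear term, decompose $\nabla \mathcal{L}(\X\,;\,\param^{\star}) = S_{n} + n\nabla R_{n}(\param^{\star})$, where $S_{n}:=-\sum_{i=1}^{n}\nabla \log \pi_{\param^{\star}}(\x_{i})$ is the score sum. The standard score identity together with the first-order optimality gives $\E[S_{n}]=0$, and the finite third-moment condition from smoothness combined with Markov's inequality applied to $\lVert S_{n}\rVert^{3}$ yields $\P(\lVert S_{n}\rVert > c_{0}\sqrt{n}) \le c_{2}/\sqrt{n}$ for a suitable $c_{0}$; this is the source of the $c_{2}/\sqrt{n}$ term in \eqref{eq:local_consistency_thm}. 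On the complementary event,
\begin{align}
    \bigl|\langle \nabla \mathcal{L}(\X\,;\,\param^{\star}),\,\Delta\rangle\bigr| \le C\alpha_{n}\bigl(c_{0}\sqrt{n}+n\lVert \nabla R_{n}(\param^{\star})\rVert\bigr) \le (c_{0}+1)Cn\alpha_{n}^{2},
\end{align}
using $\alpha_{n} = n^{-1/2}+\lVert \nabla R_{n}(\param^{\star})\rVert$, so the linear term has size $O(Cn\alpha_{n}^{2})$ rather than the much larger $O(Cn\alpha_{n})$ one would see without the score identity.

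For the quadratic term, the minimum-eigenvalue condition on the regularized Fisher information yields $\nabla^{2}\bar{\mathcal{L}}(\param^{\star}) \succeq \lambda_{*}I$, and a matrix-concentration argument for $n^{-1}\nabla^{2}\mathcal{L}(\X\,;\,\param^{\star})$ (e.g., matrix Bernstein or entrywise Hoeffding followed by an $\eps$-net argument on the sphere, tracked carefully to produce the $\sqrt{p}$ in the exponent) gives
\begin{align}
    \tfrac{1}{2}\Delta^{T}\nabla^{2}\mathcal{L}(\X\,;\,\param^{\star})\Delta \ge \tfrac{n\lambda_{*}}{4}C^{2}\alpha_{n}^{2}
\end{align}
with failure probability at most $c_{1}\exp(-(C\lambda_{*}-4)^{2}/(32\sqrt{p}))$, accounting for the first error term in \eqref{eq:local_consistency_thm}. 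Under the sample-size hypothesis $n/\alpha_{n}\ge 2MC/(3\lambda_{*})$ (with any residual dimension factor absorbed into $M$ or the constants), the cubic remainder is at most $\tfrac{n\lambda_{*}C^{2}\alpha_{n}^{2}}{8}$; taking $C\lambda_{*}$ sufficiently larger than $4$ then makes the quadratic term strictly dominate, and the sphere infimum is positive as required. The main technical obstacle will be obtaining the Hessian concentration with exactly the $\sqrt{p}$ factor in the exponent claimed by the theorem (rather than the more standard $\log p$ produced by matrix Bernstein) while simultaneously accounting for the $p^{3/2}$ appearing in the cubic-remainder bound; these two dimension-tracking steps, absent in the classical asymptotic argument, are what distinguishes the non-asymptotic constrained statement.
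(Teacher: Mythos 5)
Your skeleton --- third-order Taylor expansion, reduction of the linear term to $O(Cn\alpha_n^2)$ via stationarity and the definition of $\alpha_n$, a lower bound on the quadratic form from the regularized Fisher information, and absorption of the cubic remainder using the sample-size hypothesis --- is exactly the paper's argument. But both of your concentration steps fail to deliver the stated probability bound, and you have attributed the two error terms in \eqref{eq:local_consistency_thm} to the wrong sources. First, Markov's inequality on $\lVert S_n\rVert^3$ cannot give $\P(\lVert S_n\rVert > c_0\sqrt{n}) \le c_2/\sqrt{n}$: writing $S_n=\sum_{i=1}^n \xi_i$ for the (centered) score increments, Jensen gives $\E[\lVert S_n\rVert^3] \ge (\E[\lVert S_n\rVert^2])^{3/2} = (n\,\E\lVert \xi_1\rVert^2)^{3/2} \asymp n^{3/2}$, so Markov at level $c_0\sqrt{n}$ yields only the constant $O(c_0^{-3})$, not a quantity decaying in $n$; shrinking it forces $c_0$, and hence the constant multiplying $n\alpha_n^2$ in your linear-term bound, to blow up, which destroys the comparison with the quadratic term. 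The paper instead reduces the linear term by Cauchy--Schwarz to the norm of the normalized score sum, applies a union bound over its $p$ coordinates, and uses the Berry--Esseen theorem on each coordinate; this single step is the source of \emph{both} error terms in the theorem: the Gaussian tail produces $\exp(-(C\lambda_*-4)^2/(32\sqrt{p}))$ (the $\sqrt{p}$ coming from the coordinatewise split) and the Berry--Esseen correction produces $c_2/\sqrt{n}$.

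Second, the Hessian deviation is controlled in the paper by a uniform McDiarmid inequality over the compact sphere (Lemma \ref{lem:uniform_McDirmid}), contributing only a $C'e^{-C''n}$ failure probability that is absorbed into the constants; it is not meant to produce the $\exp(-(\cdot)/(32\sqrt{p}))$ term, and no Bernstein-type bound could, since such bounds decay exponentially in $n$ at a fixed deviation level $\lambda_*/4$, whereas the theorem's first error term does not depend on $n$ at all. The ``main technical obstacle'' you flag therefore dissolves once the error terms are matched to their actual sources. A smaller point: the paper deliberately assumes only the variational inequality $\langle\nabla\bar{\mathcal{L}}_0(\param^{\star}),\,\param-\param^{\star}\rangle\ge 0$, so that $\param^{\star}$ may lie on the boundary of $\Param$, and keeps the nonnegative term $n\langle\nabla\bar{\mathcal{L}}_0(\param^{\star}),\,\param-\param^{\star}\rangle$ in the expansion rather than invoking the unconstrained score identity $\E[S_n]=0$; your argument should be phrased around the centered quantity $S_n-\E[S_n]$ to stay within the stated hypotheses.
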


	\section{Numerical validation of convergence analysis}
	
	In this section, we numerically validate our theoretical convergence results of various convex and nonconvex SDL training algorithms as stated in Theorems \ref{thm:SDL_LPGD}, \ref{thm:SDL_LPGD_feat}, and \ref{thm:SDL_BCD}. For the experiment, we use a semi-synthetic MNIST dataset ($p=28^{2}=784$, $q=0$, $n=500$ $\kappa=1$) and fake job postings dataset ($p=,2840$, $q=72$, $n=17,880$ $\kappa=1$), which will be described in detail in Sections \ref{sec:MNIST_setup} and \ref{sec:fakejob_setup}, respectively.

	We first validate our theoretical exponential convergence results of the convex SDL algorithms using Figures \ref{fig:benchmark_MNIST} and \ref{fig:benchmark_fakejob} left. Note that the convexity and smoothness parameters $\mu$ and $L$ in Theorems \ref{thm:SDL_LPGD} and \ref{thm:SDL_LPGD} are difficult to compute exactly in practice, in which case cross-validation of hyperparameters are usually employed. For $\xi\in \{ 0.1, 1 \}$ in Figures \ref{fig:benchmark_MNIST} and \ref{fig:benchmark_fakejob} left, we indeed observe linear decay of training loss as dictated by our theoretical results both for the filter- and the feature-based convex SDL algorithms. Increasing the tuning parameter $\xi$ further to $5$ and $10$, we still observe overall linear convergence but superlinear decay in shorter time scales.  
	
	\begin{figure}[h!]
		\centering
		\includegraphics[width=1\linewidth]{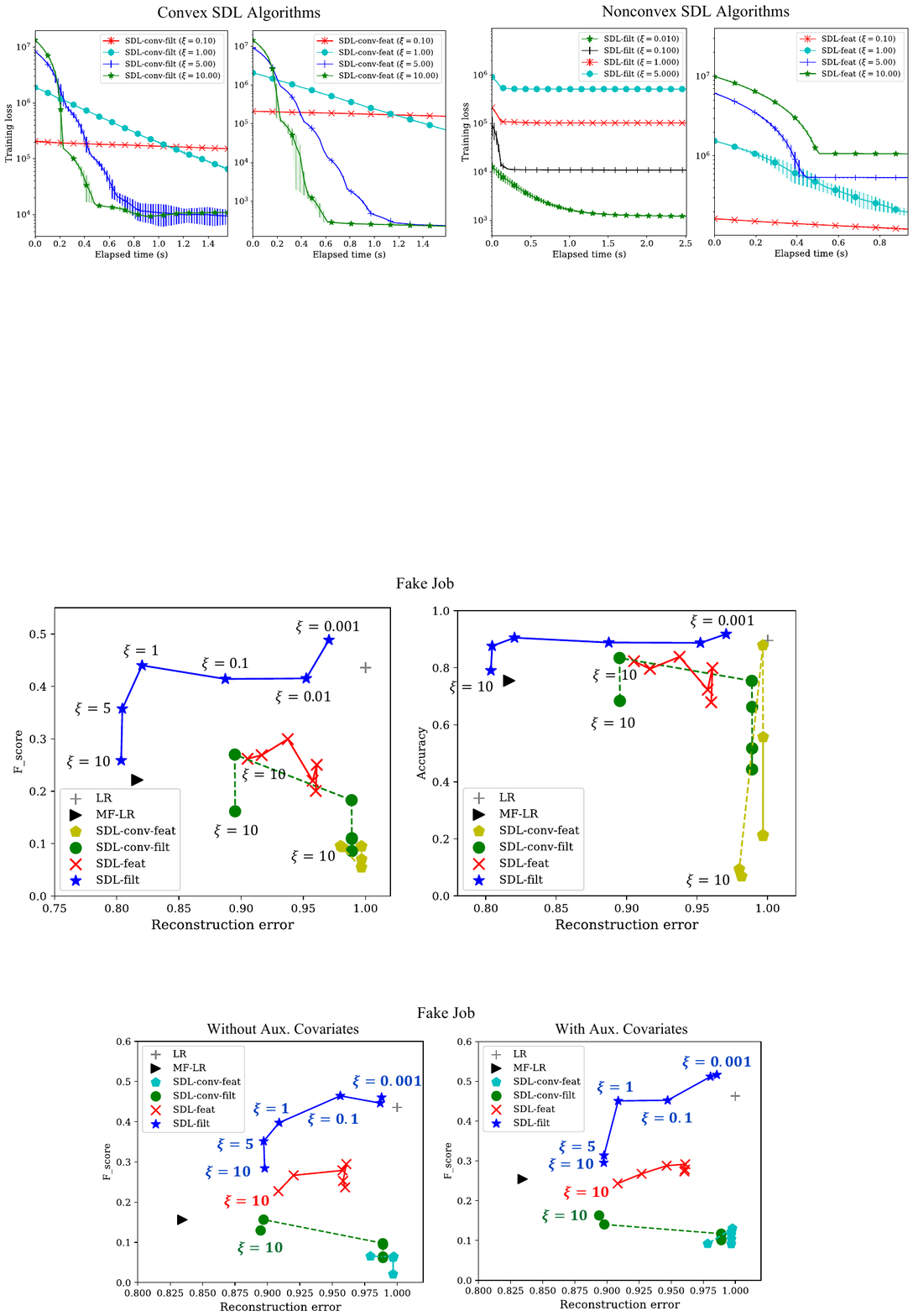} 
		\caption{ Training loss versus elapsed CPU time for Algorithms \ref{alg:SDL_filt_LPGD}, \ref{alg:SDL_feat_LPGD}, and \ref{algorithm:SDL} on the semi-synthetic MNIST dataset  ($p=28^{2}=784$, $q=0$, $n=500$ $\kappa=1$) for various choices of the nuance parameter $\xi$ in log scale. For convex SDL algorithms we used $L_{2}$-regularization coefficient $\nu=2$ and fixed stepsize $\tau=0.01$.	We report the average training loss over five runs and the shades indicate the standard deviation. }
		\label{fig:benchmark_MNIST}
	\end{figure}
	
	Turning our attention to the nonconvex SDL algorithms based on block coordinate descent (see Algorithm \ref{algorithm:SDL}), we are expected to see at least a polynomial rate of decay as stated in Theorem \ref{thm:SDL_BCD}, which should appear as asymptotically decreasing concave curves in the log-plot in Figures \ref{fig:benchmark_MNIST} and \ref{fig:benchmark_fakejob} right. We observe that this is the case for all instances of SDL-filter with no $L_{2}$-regularization. For SDL-feature, we verify similar convergence behavior for both datasets. 
	
	In addition, we report that SDL-feature may converge faster than SDL-filter, especially with large problem dimension $p$ (see the discussion in Section \ref{sec:alg_complexity}). However, it seems the convergence behavior for SDL-filter is more stable than that of SDL-feature. We observe that SDL-feature with tuning parameter $\xi=0.1$ or smaller on the fake job posting dataset may have increasing training loss at the beginning but later it settles down for decreasing training loss, which still confirms asymptotic convergence in Theorem \ref{thm:SDL_BCD}. We observe better convergence of SDL-feature on the real dataset when using nonzero $L_{2}$-regularization.



	\begin{figure}[h!]
		\centering
		\includegraphics[width=1\textwidth]{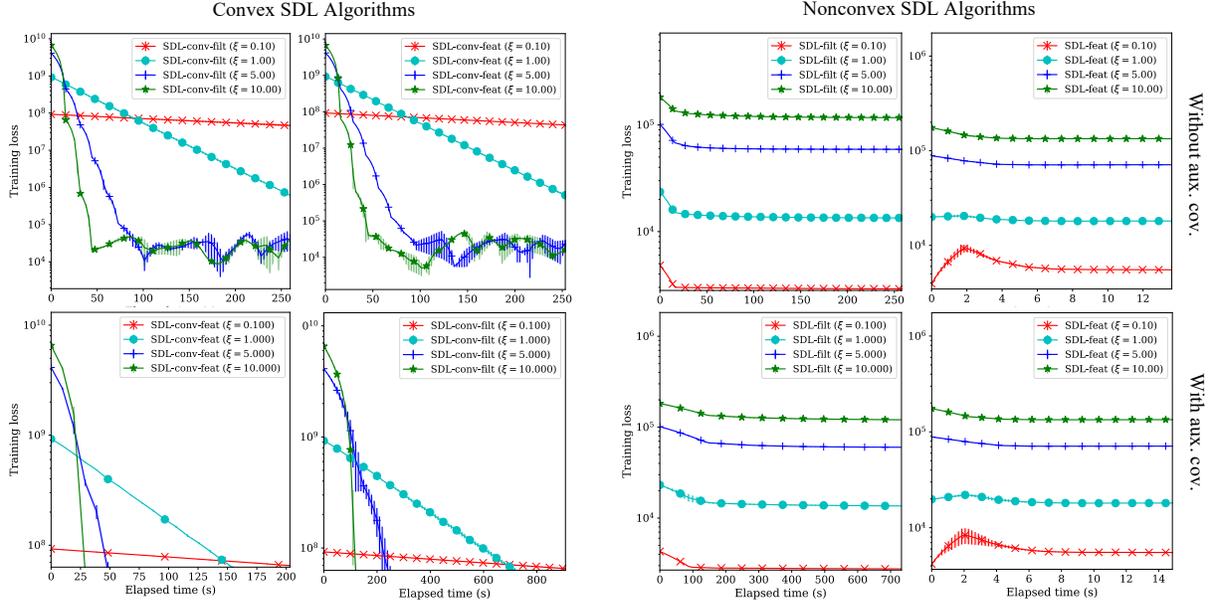}
		\caption{ Training loss versus elapsed CPU time for Algorithms \ref{alg:SDL_filt_LPGD}  and \ref{alg:SDL_feat_LPGD} on the fake job postings dataset  ($p=,2840$, $q=72$, $n=17,880$ $\kappa=1$) for various choices of the nuance parameter $\xi$ in log scale. Top row does not use auxiliary covariates and the bottom row does for training. For convex SDL algorithms as well as SDL-feature, we used $L_{2}$-regularization coefficient $\nu=2$. We used fixed stepsize of $\tau=0.01$ for the convex SDL algorithms.  We report the average training loss over five runs and the shades indicate the standard deviation.  }
		\label{fig:benchmark_fakejob}
	\end{figure}

	\section{Simulation studies}
	\label{sec:MNIST_setup}
	
	In this section, we illustrate our methods on a simulated data set based on the MNIST database of handwritten digits. Recall that the MNIST dataset consists of total 70000 black-and-white images of size $28\times 28=784$ pixels, corresponding to one of the 10 digits from $\{0,1,\dots,9\}$. We will synthesize a dataset of labeled images where the best reconstructive and discriminative dictionaries are known and distinct so that the effect of supervising dictionary learning can be seen vividly. 
	
	\subsection{ Experiment set-up }
	Roughly speaking we synthesize an image by a random linear combination of digits `2' and `5' and assign the label of 1 if it is `more similar' to digit `4' than to digit `7', and otherwise assign the label 0. The similarity of an image to a given image can be quantified by taking the convolution, the sum of the entry-wise product of pixel values. 
	
	\begin{figure}[h!]
		\centering
		\includegraphics[width=0.9\linewidth]{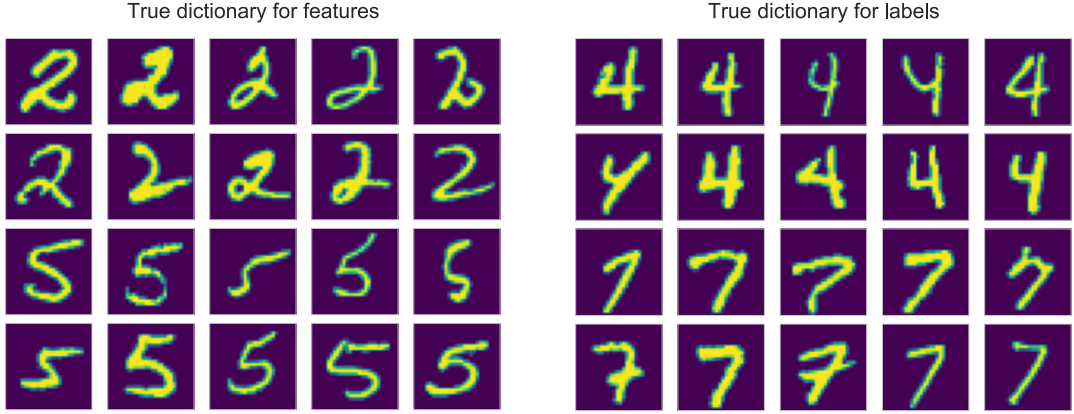} 
		\caption{ (Left) 20 basis images of digits `2' and `5' that comprises the true dictionary matrix of features, $W_{\textup{true}, X} \in \R^{784\times 20}$.  (Right) 20 basis images of digits `4' and `7' that comprises the true dictionary matrix of labels, $W_{\textup{true}, Y} \in \R^{784\times 20}$. }
		\label{fig:Figure2}
	\end{figure}
	
	More precisely, denote $p=28^{2}=784$, $n=500$, $\bar{r}=20$, $r=2$, and $\kappa=1$. First, we randomly select 10 images each from digits '2' and '5'. Vectorizing each image as a column in $p=784$ dimension, we obtain a true dictionary matrix for features $\W_{\textup{true}, X}\in \R^{p\times \bar{r}}$. Similarly, we randomly sample 10 images of each from digits '4' and '7' and obtain the true dictionary matrix of labels $\W_{\textup{true}, \Y}\in \R^{p\times \bar{r}}$. Next, we sample a code matrix $\H_{\text{true}} \in \R^{\bar{r} \times n}$ whose entries are i.i.d. with the uniform distribution $U([0,1])$.  Then the `pre-feature' matrix $\X_0 \in \R^{p\times n}$ of vectorized synthetic images is generated by $\W_{\textup{true}, X} \H_\text{true}$. The feature matrix $\X_{\textup{data}}\in \R^{p\times n}$ is then generated by adding an independent Gaussian noise $\eps_{j}\sim N(\mathbf{0}, \sigma^{2} I_{p})$ to the $j$th column of $\X_{0}$, for $j=1,\dots,n,$ with $\sigma=0.5$. We generate the binary label matrix $\Y=[y_{1},\dots,y_{n}]  \in \{0,1\}^{1\times n }$   (recall $\kappa=1$) as follows: Each entry $y_{i}$ is an indepedent Bernoulli variable with probability $p_{i} = \left( 1+\exp{(-\Beta_{\textup{true},\Y}^{T} \W_{\textup{true},\Y}^{T} \X_{\textup{data}}[:,i]  )}  \right)^{-1}$, where 
	$\Beta_{\textup{true}, \Y} = [1,-1]$.


	Now that we have generated a data set of labeled synthetic images $(\X,\Y)\in \R^{p\times n}\times \R^{1\times n}$, we can evaluate various methods of binary classificaiotn models.  We use the following six models: (1) the standard logistic regression with 784 variables (LR), (2) logistic regression with 2 basis factors obtained by NMF (NMF - LR), (3) Nonconvex filter-bsaed SDL (Algorithm \ref{algorithm:SDL}) (dubbed SDL-filt), (4) Nonconvex feature-based SDL (Algorithm \ref{algorithm:SDL}) (dubbed SDL-feat), (5) Convex filter-based SDL (Algorithm \ref{alg:SDL_filt_LPGD}) (dubbed SDL-conv-filt), and  (6) Convex feature-based SDL (Algorithm \ref{alg:SDL_feat_LPGD}). For all SDL methods, we learn only $r=2$ dictionary atoms whereas there are total 40 true dictionary atoms in $\W_{\textup{true},X}$ and $\W_{\textup{true},Y}$ combined. This is to force the algorithms to compute the dictionary of two atoms of `best comprimise'. For both nonconvex SDL methods, we used nonnegativity constraints on $\W$ and $\H$. For method (2), after learning a dictionary matrix $\hat{\W}\in \R^{p\times r}$ by NMF on $\X_{\textup{data}}$, we use logistic regression with 2-dimensional feature vectors, which are the columns of $\hat{\W}^{T} \X_{\textup{data}} \in \R^{2\times n}$.  Hence the same atoms in $\hat{\W}$ serves as basis for data reconstruction as well as filters for label prediction.

	Using 80\% training set and 20\% test set, we compared the model performance with respect to the accuracy and the F-score. Separate runs were made with 200 iterations and tuning parameters $\xi \in \{ 0.1, 1, 5, 10\}$ for all four SDL methods. The algorithms stopped after 200 iterations and we fitted the models for the same data 5 times to evaluate the performance. For convex algorithms, we used $L_{2}$-regularization coefficient $\nu=2$ and do not use any $L_{2}$-regularization for the nonconvex algorithms.

	\subsection{Performance evaluation}
	
	Next, we evaluate the performance of the six methods from the perspective of multi-objective optimization, following the analysis in \cite{ritchie2020supervised}. That is, we view SDL algorithms as solving an optimization problem with two possibly non-aligned objectives of minimizing data reconstruction error as well as maximizing label classification performance. Thus, each trained model can be associated with a point $(a,b)$ in a two-dimensional `Pareto' plane, where  $a=\lVert \X_{\textup{data}} - \hat{\W} \hat{\H} \rVert_{F}^{2}/  \lVert \X_{\textup{data}}\rVert_{F}^{2}$ denotes the normalized reconstruction error and $b$ denotes the classification performance measured by two metrics of accuracy and F-score. The Pareto plots are shown in Figure \ref{fig:MNIST_pareto}. An ideal method corresponds to a point in the upper left corner, having zero reconstruction loss and perfect classification.

	\begin{figure}[h!]
		\centering
		\includegraphics[width=1\linewidth]{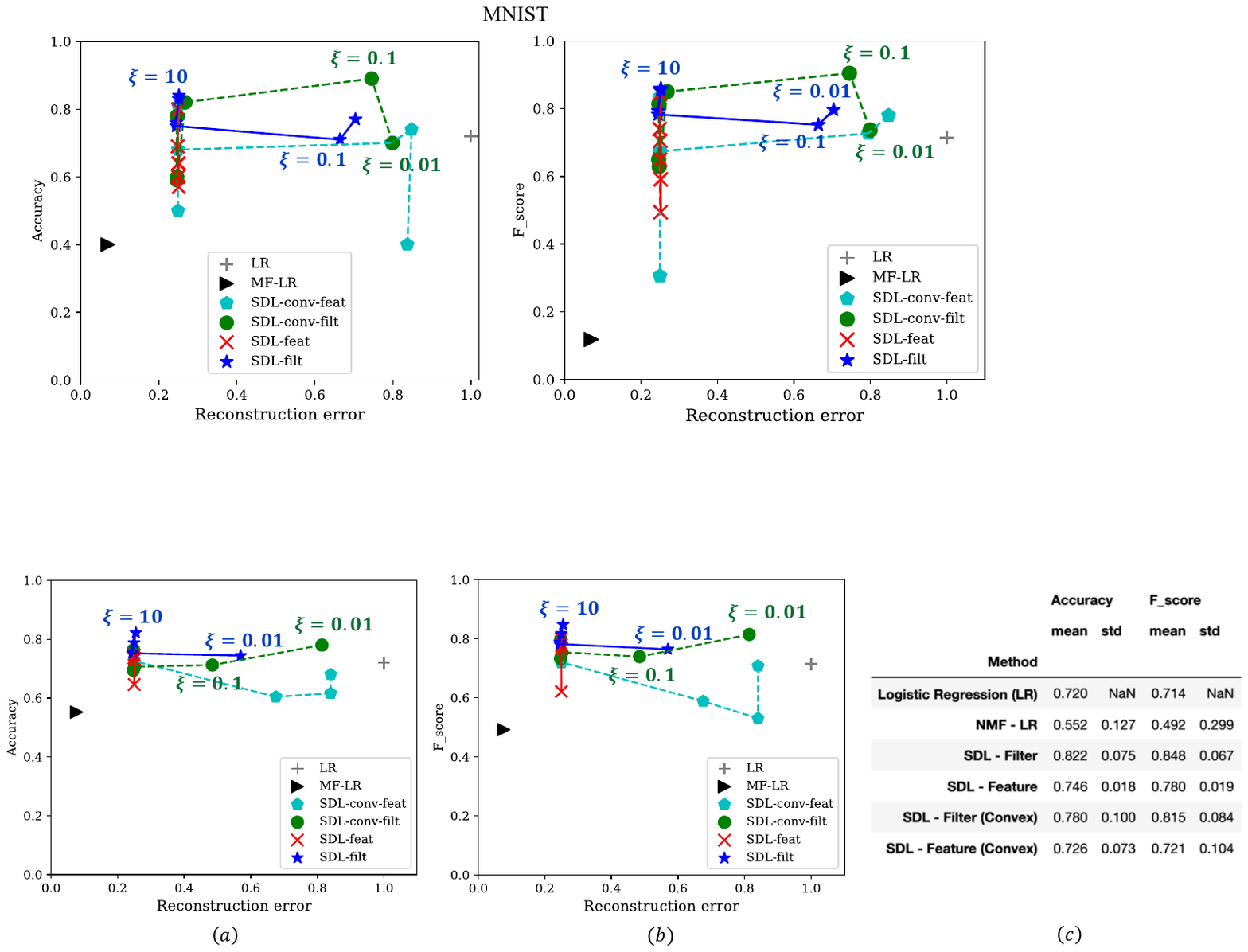} 
		\caption{ Pareto plot of relative reconstruction error vs. classification accuracy (a) and F-score (b) for various models on simulated MNIST dataset.  (c) Best average classification results for five values of tuning parameter $\xi\in \{0.01, 0.1, 1, 5, 10\}$.  See table \ref{table:MNIST_table} for more details.}
		\label{fig:MNIST_pareto}
	\end{figure}
	
	We observe that NMF-LR achieves the smallest reconstruction error among all methods but suffers for the classification task.  This is expected from the construction of the dataset, as the synthetic images are nonnegative linear combinations of images of digits '2' and '5', and the same dictionary atoms are also used as filters for label prediction.  On the other hand, logistic regression on pixels does not compress the data matrix so we assigned a relative reconstruction error of 1. In Figure \ref{fig:MNIST_pareto}, we observe that, except in a few instances, most SDL models lie between the two extremes of (1) LR and (2) NMF-LR in the sense that achieving significantly better classification accuracies (both in terms of accuracy and F-score) with small reconstruction error. For instance, SDL-filt with $\xi=10$ achieves a relative reconstruction error of about 0.22 and classification accuracy above 80$\%$, which is more than double the performance of NMF-LR and also about $11\%$ better than LR accuracy. It is interesting to note that SDL-conv-filt with $\xi=0.1$ achieves the best classification accuracy of about $91\%$ but its reconstruction error is quite large at around $0.7$.

	For more qualitative analysis, we plot various estimated dictionaries $\hat{\W}$  and compare them. Figure \ref{fig:MNIST_filt_dict} shows how the dictionary matrix $\hat{\W}$ estimated by SDL-filter changes depending on the level of tuning parameter $\xi\in \{0.01, 0.1, 1\}$. When $\xi=1$, the combined SDL loss in \eqref{eq:ASDL_1} puts some significant weight on the matrix factorization term, so the learned dictionary $\hat{\W}$ should be reconstructive of the synthesized images in $\X_{\textup{data}}$. Indeed, the learned atoms in Figure  \ref{fig:MNIST_filt_dict} left shows shapes of digits `5' and `2'. Further, the second atom resembling '2' is associated with a negative regression coefficient, indicating that being close to `2' may be partially aligned with being close to `7', which corresponds to being a negative example. On the other hand, decreasing the value of $\xi$ increases the amount of supervision. The learned atoms in Figure \ref{fig:MNIST_filt_dict} middle and right resembles less of the digits '2' and '5', but it seems that some abstract shape with large positive (for $\xi=1$) and large negative (for $\xi=0.01$) are learned and the resulting classification accuracies increase. The other atoms seem to be the shape of `8', which are seemingly learned from superpositions` 2' and `5'. One can regard the learned dictionary atoms as the `best effort' of SDL-filter in balancing the two partially aligned reconstructions and discrimination taken from the space of images of linear combinations of `2' and `5'.


	\begin{figure}[h!]
		\centering
		\includegraphics[width=1\linewidth]{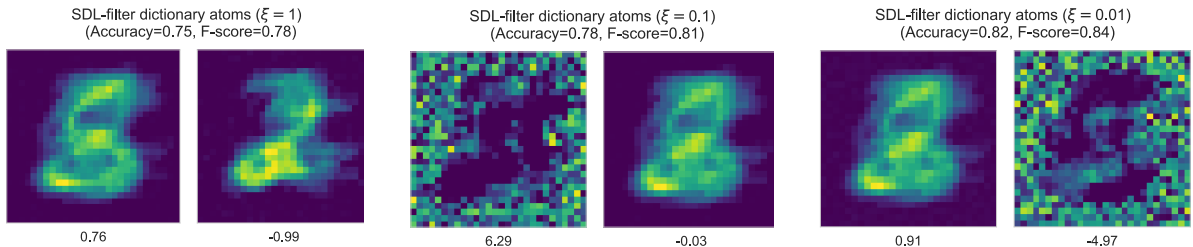}
		\caption{Estimated dictionary matrix $\hat{\W}$ from SDL-filter  depending on the level of tuning parameter $\xi\in \{ 1,0.1,0.01 \}$. The smaller the tuning parameter is the stronger the supervision effect is.}
		\label{fig:MNIST_filt_dict}
	\end{figure}

	\begin{figure}[h!]
		\centering
		\includegraphics[width=1\linewidth]{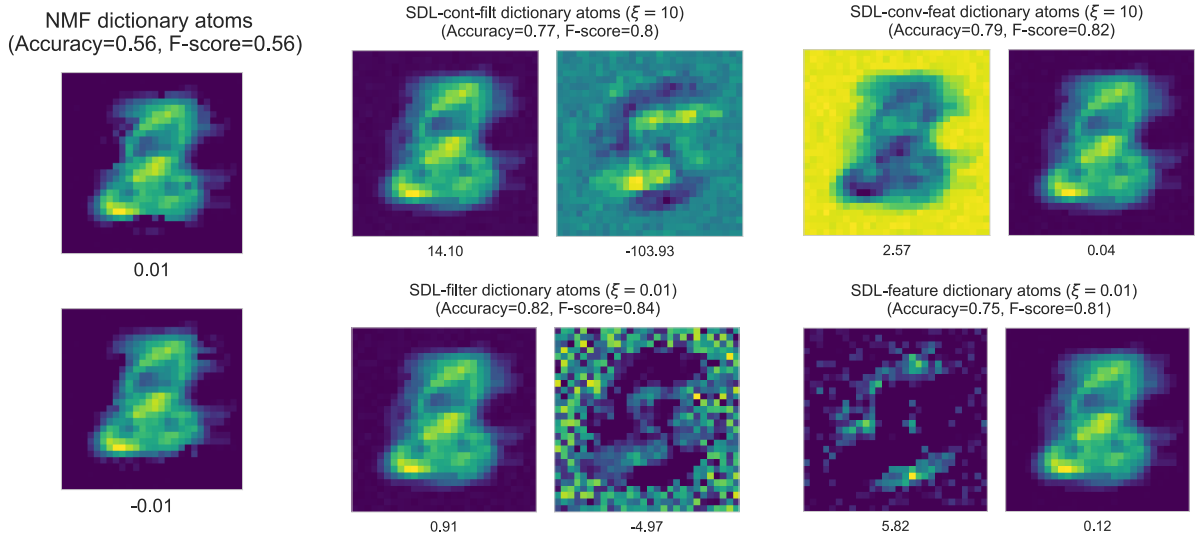} 
		\caption{Estimated basis matrix $\hat{W}_x$ from filter based method depending on the level of tuning parameter $\xi$. $\xi = 0$ (Left), $\xi = 0.5$ (Middle), $\xi = 1$ (Right)}
		\label{fig:MNIST_dict_comparison}
	\end{figure}

	In Figure \ref{fig:MNIST_dict_comparison}, we plot the estimated  dictionaries $\hat{\W}$ from all five methods except LR with chosen level of tuning parameter $\xi$. It is interesting that NMF learned almost identical atoms (with inferior classification performance) that resemble the typical shape of nonnegative linear combinations of digits `2' and `5', instead of learning separate atoms of shapes `2' and `5' individually.  For convex SDL algorithms, we find that typically a larger tuning parameter is required for fast convergence, which we also observed in Figure \ref{fig:benchmark_MNIST}.

	\section{Applications}
	\subsection{Supervised Topic Modeling on fake job postings dataset}

	
	According to Better Business Bureau, a non-profit organization that monitors and evaluates job postings, there were 3,434 fake job postings reported in 2019. These scam postings result in huge financial loss and the average loss per victim is \$3,000 according to the FBI reports \cite{FBI}. In this section, we use our methods to classify fake job postings on a dataset 'real-or-fake-job posting-prediction' in Kaggle \cite{fakejob_data}. In doing so, the new method can also simultaneously learn topics that are most effective in classifying fake job postings.	We compared the performance of classifiers based on multiple indexes such as accuracy, and F-score, and find what factors are highly associated with fake job postings. Identifying the relevant characteristics of scams and building prediction models will help prevent potential financial losses in advance.

	\subsubsection{Dataset description and preliminary analysis}
	\label{sec:fakejob_setup}
	
	There are 17,880 postings and 15 variables in the dataset including binary variables, categorical variables, and textual information of \textit{job description}. Among the 17,880 postings, 17,014 are true job postings (95.1\%) and 866 are fraudulent postings (4.84\%), which shows a high imbalance between the two classes. This imbalance is the main characteristic of the dataset. We coded fake job postings as positive examples and true job postings as negative examples. Due to the high imbalance, the accuracy of classification can be trivially high (e.g., by classifying everything to be negative), and hence achieving a high F-score is of importance. 
	
	In our experiments, we represented each job posting as a $p=2480$ dimensional word frequency vector computed from its \textit{job description} and augmented with $q=72$ auxiliary covariates of binary and categorical variables including indicators of the posting having a company logo or the posted job  in the United States or not. For computing the word frequency vectors, we represent the job description variable as a term/document frequency matrix with Term Frequency-Inverse Document Frequency (TF-IDF) normalization \cite{pedregosa2011scikit}. TF-IDF measures the relative importance of each term in a collection of documents. If a word is common to all documents, then it is less likely to have an important meaning. 
	The top 2480 most frequent words were used for the analysis. 

	\begin{figure}[h!]
		\centering
		\includegraphics[width=1\textwidth]{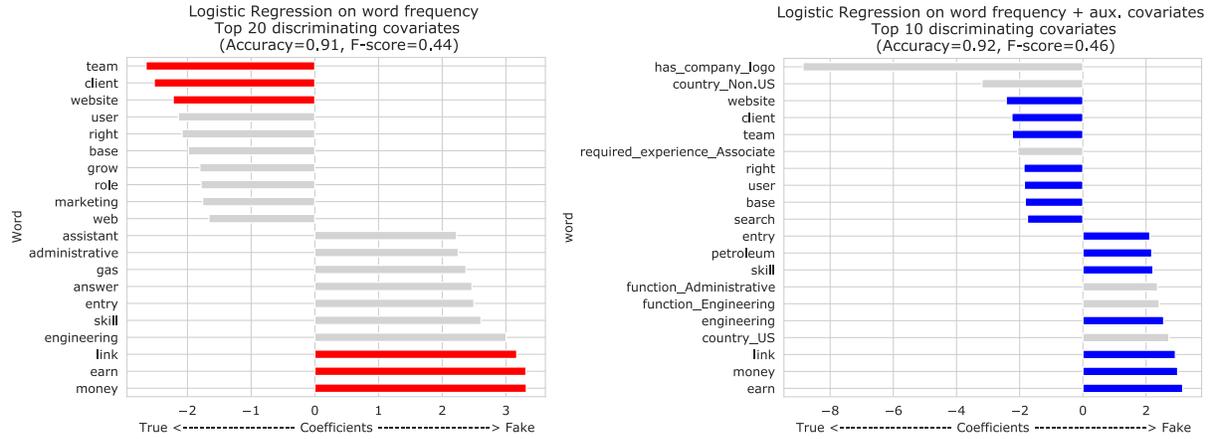}
		\caption{The top 20 variables with largest coefficients from logistic regression on $p=2480$ words in job description (left) and $p+q=2480+72$ words and auxiliary covariates combined (right). In the left panel, red bars indicate the words that appear as dominant keywords in the forthcoming topic modeling analysis. In the right panel, blue bars indicate words and grey bars indicate auxiliary covariates.}
		\label{fig:fakejob_LR}
	\end{figure}
	
	As a preliminary analysis, we first apply logistic regression either on the $p$-dimensional word frequency vectors or on the $(p+q)$-dimensional combined feature vectors (Figure \ref{fig:fakejob_LR} right). For the former experiment, Figure \ref{fig:fakejob_LR} left shows 10 words each with positive and negative regression coefficients with the largest absolute values. The results indicate that having a large frequency of words such as `earn', `money', and `link' is positively correlated with being a fake job, whereas postings with a high frequency of words such as `team', `client', and `website' as well as with company logo and jobs outside of the US are more likely to be true job postings.

	\subsubsection{Supervised Topic Modeling with Auxiliary Covariates}
	
	Topic modeling is a classical technique in text data analysis that seeks to find a small number of `topics', which are groups of words that share semantic context. The grounding assumption is that a given text may be built upon such topics as latent variables. Methods such as nonnegative matrix factorization (NMF) \cite{lee1999learning} and latent Dirichlet allocation (LDA) \cite{steyvers2007probabilistic, blei2003latent, jelodar2019latent} have been successfully used to detect or estimate such latent semantic factors. Also, `supervised' topic modeling techniques have been studied, where one seeks to group words not only by their semantic contexts but also using their `functional contexts' that are provided by additional class labels. See, for example, \cite{mcauliffe2007supervised} for LDA-based approaches and \cite{haddock2020semi} for NMF combined with linear regression model (see \eqref{eq:SDL_regression_H1}). 
	Here we mainly compare two methods, namely, (1) NMF with logistic regression and (2) SDL-filter with nonnegativity constraints on $\W$ and $\H$. However, we do compare the performance of all four SDL models in Figures \ref{fig:benchmark_fakejob} and \ref{fig:pareto_fakejob}. We note that for the purpose of topic modeling, it is crucial to use nonnegativity constraint on the dictionary matrix $\W$ in the SDL model \eqref{eq:ASDL_1} as word frequencies are nonnegative and we would like to decompose a given document's word frequency as additively rather than subtractively in order for better interpretability (see, e.g., \cite{lee1999learning}).

	\begin{figure}[h!]
		\centering
		\includegraphics[width=1\textwidth]{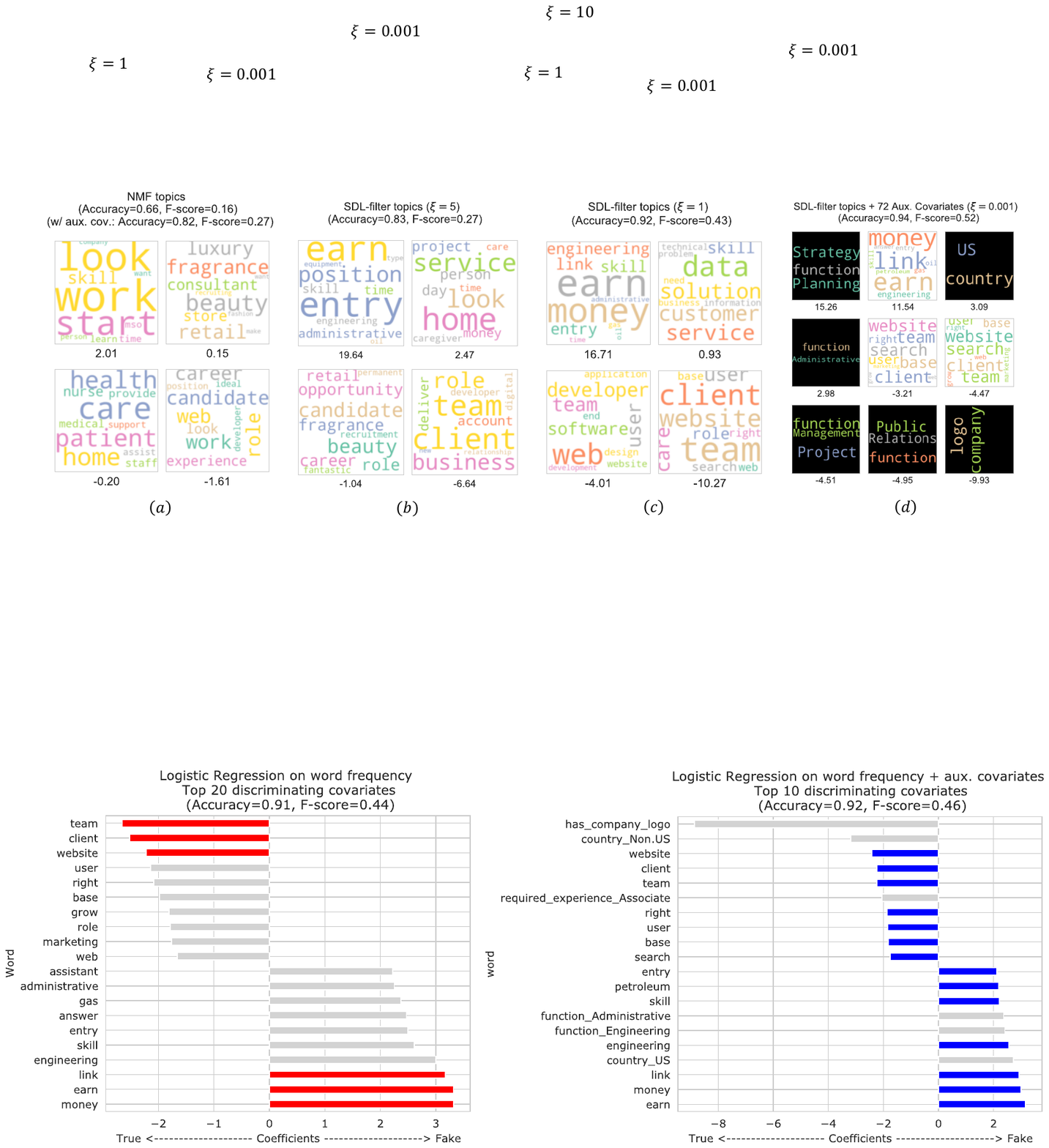}
		\caption{Comparison between (supervised) topics learned by NMF and SDL-filter for the fake job posting data. (a) Four out of 25 topics learned by NMF are shown together with the corresponding logistic regression coefficients. (b) Four out of 20 supervised topics learned by SDL-filter with tuning parameter $\xi=5$ are shown together with the corresponding logistic regression coefficients. (c) Similar as (b) but with tuning parameter $\xi=1$. (d) Nine out of 20 supervised topics (white background)+ 72 auxiliary covariates (dark background) learned from SDL-filt with 72 auxiliary covariates are shown with their corresponding logistic regression coefficients. Corresponding classification accuracy and F-scores are also shown in the subtitles with fake job postings being the positive examples. For topic wordclouds (white background), word sizes are proportional to their frequency. 
		} 
		\label{fig:fakejob_topics}
	\end{figure}
	
	
	First consider  Figure \ref{fig:fakejob_topics} (a), which shows topics (shown as wordclouds) learned by NMF and their associated regression coefficients. Namely, after learning a dictionary matrix $\W\in\R^{p\times 25}$ by NMF from the job description matrix of shape $p\times n$ with $n=17,780$, each of the $r$ columns of $\W$ becomes the topic frequency vector and top 10 words with higheset frequencies are shown as wordcloud. NMF was able to find topics that summarize specific job information. More specifically, the upper right and lower left topics correspond to beauty and healthcare-related jobs. However, as can be seen by the low F-score reported in Figure \ref{fig:fakejob_topics} (a), while the 25 topics learned by NMF give generic job descriptions, they may not be helpful to determine if a job posting is indeed fake. The main reason that we have these topics is that the dataset is highly imbalanced. Since most of the postings are true job postings (95\%), when we first conduct dimension reduction based on NMF, the topics that we learned are mainly determined by dominant true job postings, rather than fake job postings. 
	
	On the other hand, some selected supervised topics (out of $20$ total) learned by SDL-filter with $\xi=5$ and $1$ are shown in Figures \ref{fig:fakejob_topics} (b) and (c). In each case, the upper left and lower right topics are the ones with the largest positive and negative regression coefficients, respectively, and the upper right and lower left ones are manually selected for illustration purposes. For $\xi=1$ in Figure \ref{fig:fakejob_topics} (c), notice that the upper left topic with positive regression coefficient consists of words that appear frequently on fake job postings (e.g., `money', `earn', `link'), while the lower right topic uses the words from true job postings (e.g., `team', `client', `website'), both detected by logistic regression in Figure \ref{fig:fakejob_LR}. Topics with neutral regression coefficients are mainly used to reconstruct data matrix rather than for classification purposes. Note that the corresponding F-score of 0.43 is achieved using only  20 variables (topics) and is on par with the F-scores obtained by logistic regression using $p=2480$ or $p+q=2552$ variables in Figure \ref{fig:fakejob_LR}. 
	
	Increasing the tuning parameter $\xi$ from 1 to $5$ weakens the supervision effect. Accordingly, the two neutral topics in  Figure  \ref{fig:fakejob_topics} (b) becomes generic job descriptions as found by NMF in Figure  \ref{fig:fakejob_topics} (a), but the two extreme ones (upper left and lower right) maintain similar content and large absolute values of their regression coefficients. 
	
	We also conduct a similar analysis using SDL-filter with $\xi=0.001$ and $r=20$ topics along with 72 auxiliary covariates after converting categorical variables to one-hot-encoding. In Figure \ref{fig:fakejob_topics} (d), we show the covariates with the largest absolute regression coefficients, which is a mix of supervised topics (white background) and auxiliary variables (dark background). This setting achieves the best F-score of $0.52$ by using $20+72$ variables, which is still significantly less than using all 2552 variables while enjoying better interpretability. We see that SDL-filter automatically combines words that are positively or negatively associated with fake job postings in an ensemble with auxiliary covariates. In other words, SDL-filter seems to perform simultaneous supervised topic modeling on text data, while incorporating auxiliary covariates for improved performance. 
	
	
	\subsubsection{Evaluation of model performance}
	
	We provide a summary of classification accuracy and F-score of various settings in Table  \ref{table:fakejob_table} (see Tables \ref{table:fakejob_table} and \ref{table:fakejob_table1} for the full results). Note that due to the high imbalance in the dataset (only 5\% of fake job postings), getting a high classification accuracy is trivial (e.g., by classifying all as true job postings), so getting high $F$-score is more important. One can see that SDL-filter is overall the method of best classification performance, both in terms of accuracy and F-score, which improves when using auxiliary covariates. In contrast to comparable performance of convex SDL algorithms in the semi-synthetic MNIST data in Figure \ref{fig:benchmark_MNIST}, for fake job postings dataset they show mediocre  performance. It seems that for larger datasets in high dimension, one needs more extensive hyperparameter tuning for convex SDL methods than the nonconvex ones.

	\begin{table}[h!]
		\centering
		\includegraphics[width=1\linewidth]{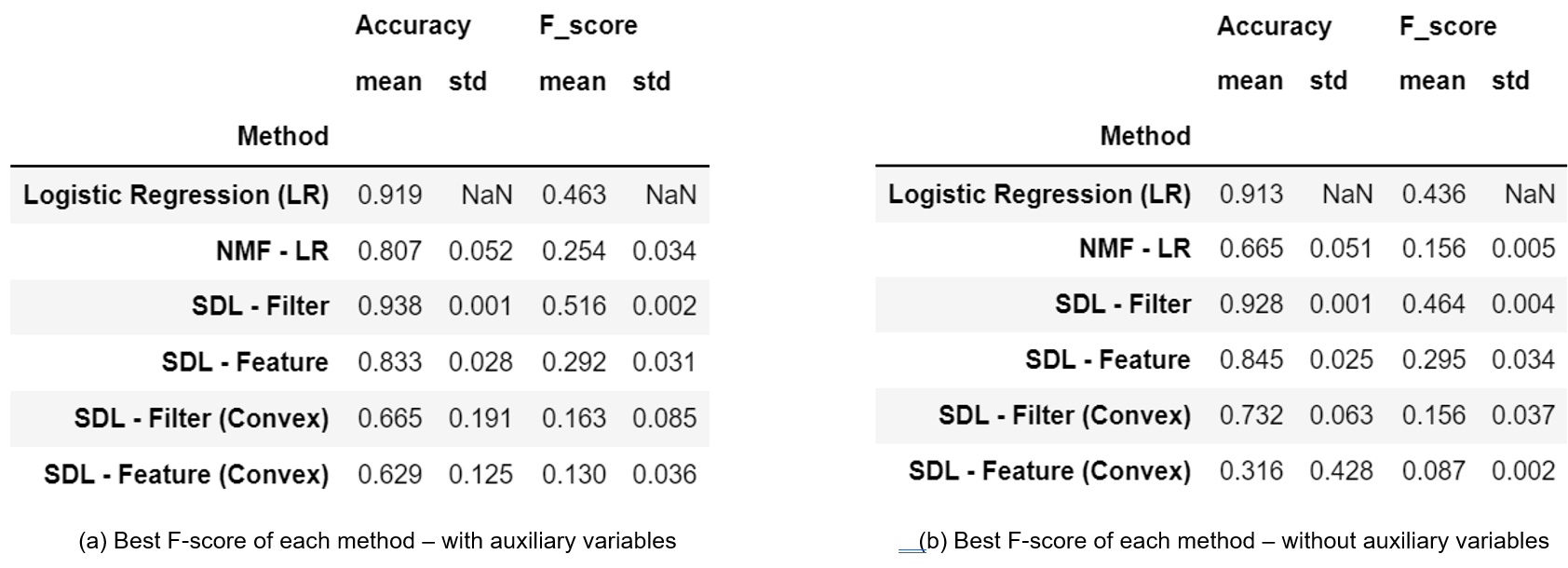} 
		\caption{ 
		Tables of best average F-score over five runs from each of the six methods for the fake job postings data in Section \ref{sec:fakejob_setup} with (left) and without (right) the auxiliary covariates for tuning parameter $\xi\in \{0.01, 0.1, 1, 5, 10\}$. See Tables \ref{table:fakejob_table1}  and \ref{table:fakejob_table2} in Appendix for more details.}
		\label{table:fakejob_table}
	\end{table}

	\begin{figure}[h!]
		\centering
		\includegraphics[width=0.8\textwidth]{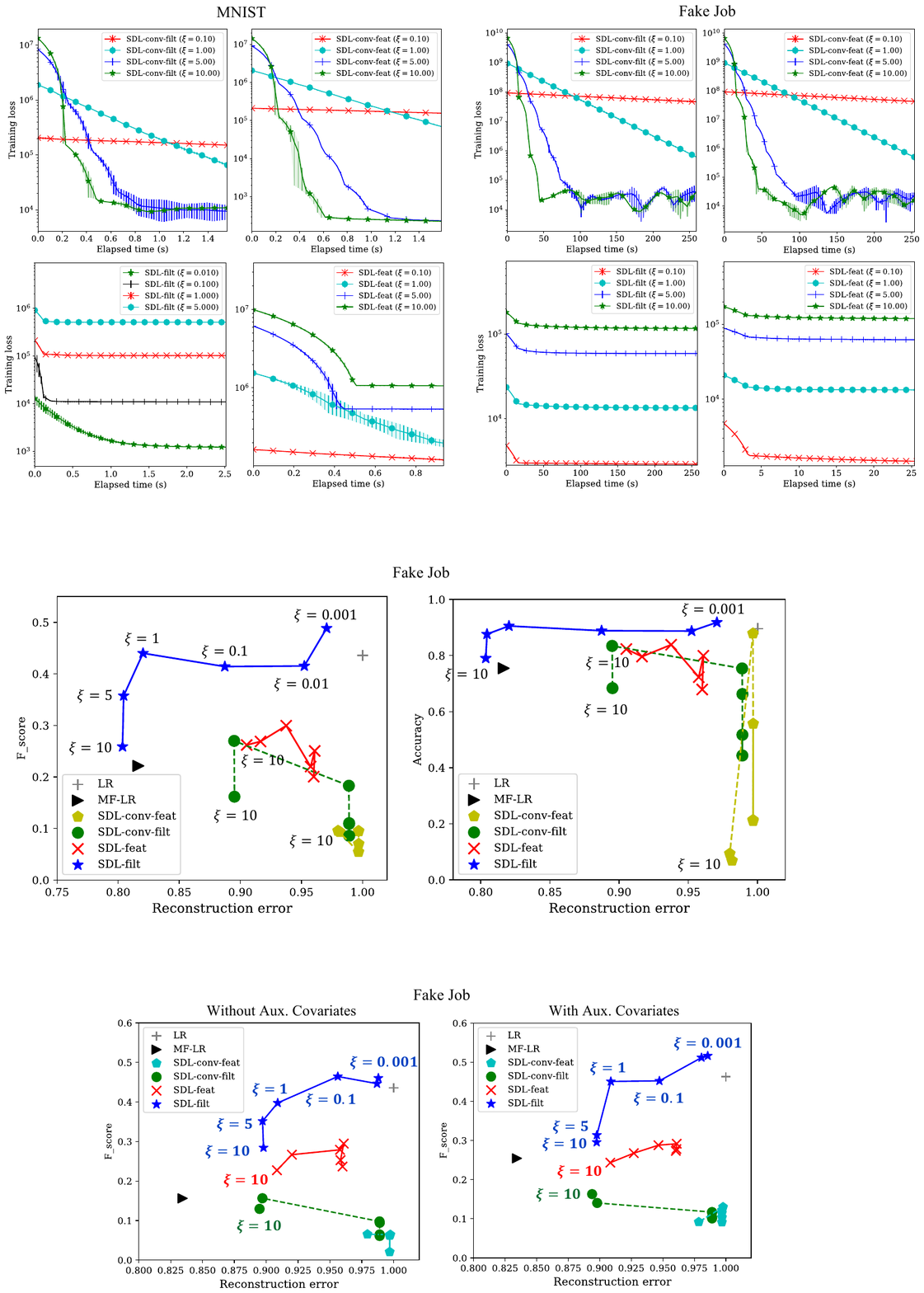}
		\caption{Pareto plot of relative reconstruction error vs. classification accuracy/F-score for various models on fake job postings dataset. 
		}
		\label{fig:pareto_fakejob}
	\end{figure}

	As in Figure \ref{fig:MNIST_pareto} for the semi-synthetic MNIST dataset, we also provide a Pareto plot in Figure \ref{fig:pareto_fakejob} to evaluate the performance of various SDL models on the fake job posting dataset against the benchmark models of logistic regression (LR) and NMF followed by logistic regression (NMF-LR). Recall that the Pareto plot shows how a model simultaneously performs two objectives of reducing the reconstruction error $\lVert \X_{\textup{data}} - \W\H \rVert$ as well as increasing the classification accuracy. As before, increasing the tuning parameter $\xi$ in various SDL models seems to interpolate between two extremes of LR and NMF-LR. We observe that SDL-filter performs best overall, in some cases achieving both goals with better classification performance than LR. The inferior performance of convex SDL models on the real dataset in contrast to their superior performance on the semi-synthetic dataset in Figure \ref{fig:MNIST_pareto} indicates that, in practice, the convex SDL models require more hyperparameter tuning. For instance, we did not try to fine-tune the stepsize, which we fixed at $\tau=0.01$ throughout the experiments.

	We also report that the topics learned from SDL-feature share similar characteristics with respect to the tuning parameter $\xi$ in figure \ref{fig:fakejob_topics}. We also mention that the topics learned from convex SDL algorithms, which cannot be used with a nonnegative constraint on the dictionary matrix $\W$, turns out to be uninformative and not very much interpretable. This is expected since the use of nonnegativity constraints on $\W$ and $\H$ (i.e., strong constraints on the SDL model, see \ref{assumption:A1}) is crucial for matrix-factorization-based topic modeling experiments (see \cite{lee1999learning}). We omit figures for the topics of SDL-feature and the convex SDL models.

	\subsection{Supervised dictionary learning on chest X-ray images for pneumonia detection}
	
	Pneumonia is an acute respiratory infection that affects the lungs. According to WHO reports, it accounts for 15\% of all deaths of children under 5 years old, killing 808,694 children in 2017. Moreover, currently, about 15\% of COVID-19 patients suffer from severe pneumonia \cite{WebMD}.   Chest X-ray is one inexpensive way to diagnose pneumonia, but rapid radiological
	interpretation is not always available. A successful statistical model to classify pneumonia from chest X-ray images will enable rapid pneumonia diagnosis with high accuracy, which will be able to reduce the burden on clinicians and help their decision-making process. In this section, we apply our SDL methods for chest X-ray images for pneumonia detection.

	\begin{figure}[h!]
		\centering
		\includegraphics[width=0.8\textwidth]{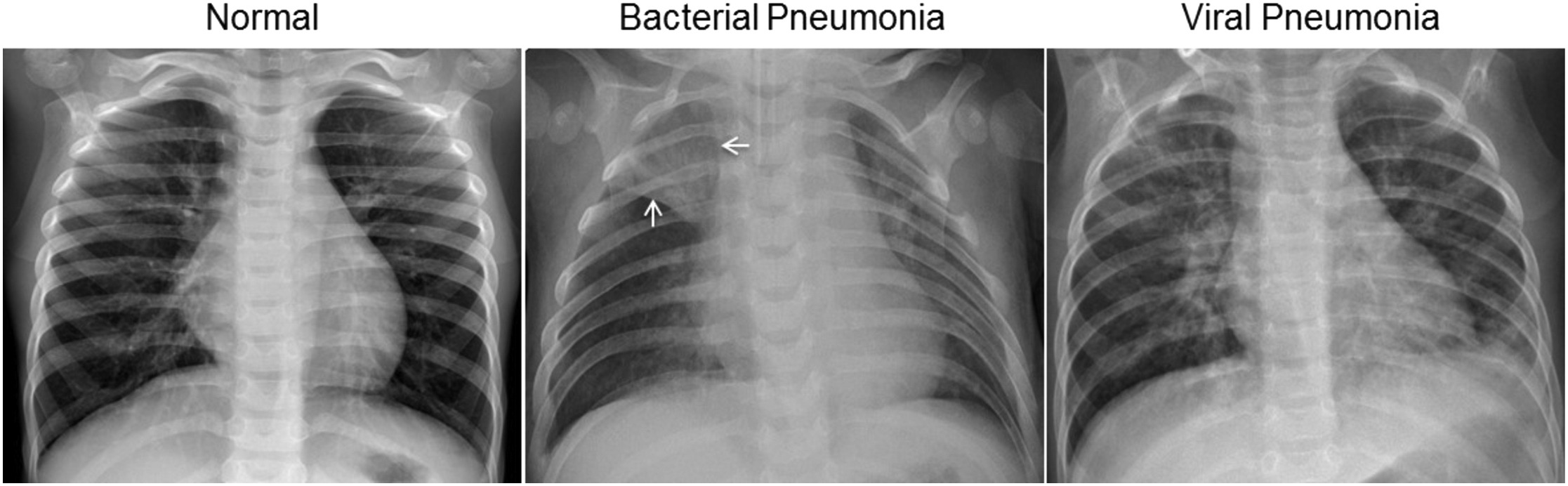}
		\caption{ The normal chest X-ray (left panel) depicts clear lungs without any areas of abnormal opacification in the image. Bacterial pneumonia (middle) typically exhibits a focal lobar consolidation, in this case in the right upper lobe (white arrows), whereas viral pneumonia (right) manifests with a more diffuse “interstitial” pattern in both lungs. The figure and the description are excerpted from \cite{kermany2018identifying}. 
		}
		\label{fig:pneumonia_cell}
	\end{figure}
	
	The pneumonia data set was first introduced in \cite{kermany2018identifying}. There is a total of 5,863 chest X-ray images from children, consisting of 4,273  pneumonia patients and 1,583 healthy subjects. The images were collected from pediatric patients one to five years old from Guangzhou Women and Children’s Medical Center, Guangzhou. For the analysis of chest X-ray images, all images were initially screened for quality control and two expert physicians diagnosed the images. In the reference \cite{kermany2018identifying}, an extremely accurate image classification system has been developed (with a classification accuracy of 92.8\%) using sophisticated deep neural network image classifiers. We intend to demonstrate that our SDL methods yield interesting and promising results for medical image classification tasks while being significantly simpler and easier to train than the deep neural network models.

	In order to apply our SDL methods, we resize each chest X-ray image into an $180 \times 180$ pixel image. Vectorizing each image, we obtain the data matrix $\X_{\textup{data}}\in \R^{32,400\times 5,863}$. We label pneumonia images with 1 and normal images with 0, obtaining the labeled matrix $\Y_{\textup{data}} \in \{0,1\}^{1\times 5,863}$. We used the deterministic test/train split provided by the original references \cite{kermany2018identifying}, where the train and the test sets consist of 5,216 and 624 images, respectively. Standard logistic regression with $p=180^{2}=32,400$ individual pixels as the explanatory variable yields a classification accuracy of 82\%. However, it is not entirely reasonable to assume individual pixels in the image to be correlated with pneumonia. Instead, we may associate certain latent shapes with pneumonia by using dictionary learning methods.

	\begin{figure}[h!]
		\centering
		\includegraphics[width=1\textwidth]{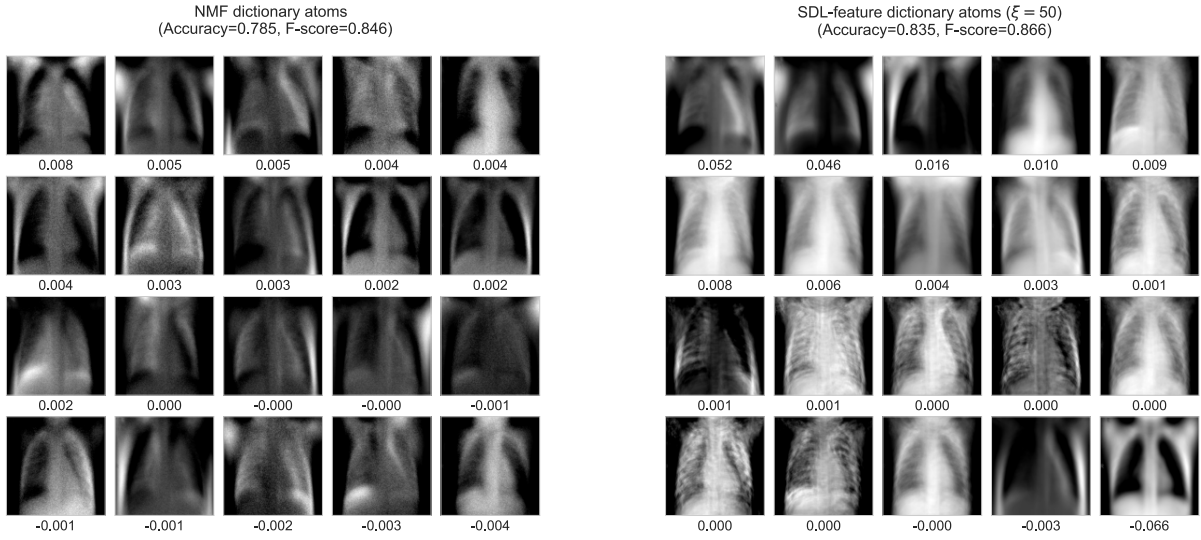}
		\caption{25 dictionary atoms learned from chest X-ray images by NMF and SDL-feature with $\xi=50$. Corresponding logistic regression coefficients, as well as classification performances, are also shown. For SDL-feature, we used $L_{1}$ regularization coefficient of $5$ for the code matrix $\H$ and no $L_{2}$-regularization. Positive regression coefficients indicate a positive correlation with having pneumonia. There is a clear contrast between the two extreme atoms (upper left and lower right) according to their correlation with pneumonia. 
		}
		\label{fig:pneumonia_dict}
	\end{figure}

	Figure \ref{fig:pneumonia_dict} shows 25 dictionary atoms of size $180\times 180$ learned by NMF (left) and SDL-feature (right) together with their corresponding logistic regression coefficient shown. For SDL-feature, we used tuning parameter $\xi=50$ and $L_{1}$-regularization coefficient of $5$ on the code matrix $\H$. Namely, we used Algorithm \ref{algorithm:SDL} for the feature-based SDL model in \eqref{eq:ASDL_1} together with an additional $L_{1}$ regularization term of $\lambda \lVert \H  \rVert_{1}$ added to the loss function in \eqref{eq:ASDL_1} with $\lambda=5$. Using the $L_{1}$-regularization on the code matrix is standard in dictionary learning literature \cite{mairal2007sparse,mairal2008supervised,mairal2010online} in order to learn sparse representation on an over-complete dictionary. For this particular application, we find such regularization helps improving the classification accuracy better than using $L_{2}$-regularization on any of the factors. We found the value of $\xi=50$ and $\lambda=5$ by using a grid search for $\xi=\{0.01, 0.1, 1, 10, 50, 100\}$ and $\lambda=\{0,1,5,10\}$. 
	
	
	Note that the code matrix $\H$ is constrained to be nonnegative during optimization.  Since the probability of the existence of pneumonia is calculated by the logit transformation of $\Beta (\mathbf{1},\H^{T})^{T}$, the positive value of $\Beta$ indicates that the corresponding atom is more related to pneumonia with the magnitude of the regression coefficient indicates the strength of such association, and the negative value of $\Beta$ suggests that the atom is related to normal. Comparing the NMF and SDL-feature dictionaries in Figure \ref{fig:pneumonia_dict}, we find that the regression coefficients for NMF atoms are quite neutral, but there are three atoms (two upper left and one lower right) with an order of magnitude larger absolute regression coefficient learned by SDL-feature. A closer investigation shows that the atom with regression coefficients $-0.066$ has almost no signal (dark) around the lung, whereas the two atoms with regression coefficients 0.052 and 0.046 show the opposite contrast, having most signals (bright) around the lung and weak signals elsewhere. Although one should pay extra caution when interpreting machine learning results as a clinical statement, this observation seems to be well-aligned with some basic characteristics of normal or pneumonia chest X-ray images as shown in Figure \ref{fig:pneumonia_cell} (see also the caption).  
	
	Lastly, we report some additional details of these experiments. First, the training of SDL-feature on the chest X-ray dataset and making predictions is extremely efficient and the entire process of training and testing takes under 2 minutes on an average laptop computer (100 iterations on an Apple M1 chip). Second, we find that SDL-filter achieves higher classification performance with accuracy $84.3\%$ and F-score $87.9\%$ with tuning parameter $\xi=0.01$, $r=20$ atoms, and the same $L_{1}$-regularization coefficient on $\H$ of $5$. However, the learned atoms are too noisy and not quite interpretable as the ones learned by SDL-feature in Figure \ref{fig:pneumonia_dict} right. Training and testing take about ten minutes on the same machine. Third, convex SDL models, in this case, take a long time (about an hour) with the same number of iterations and we omit the results.

	\section{Concluding Remarks}
	
	
	In this paper, we provided a comprehensive treatment of a large class of supervised dictionary learning methods in terms of model construction, optimization algorithms and their convergence properties, and statistical estimation guarantees for the corresponding generative models. SDL models find best balance between two objectives of data modeling by latent factors and label classification while achieving simultaneous dimension reduction and classification, which makes them suitable for coping with high-dimensional data. We demonstrated that our methods show comparable performance with classical models for classifying fake job postings as well as chest X-ray images for pneumonia while learning basis topics or images that are directly associated with fake jobs or pneumonia, respectively. In addition, the new methods achieve such comparable performance with much reduced number of variables and with much more homogeneous and interpretable models. 
	
	Our method can potentially be used for a number of high-dimensional classification problems, especially for areas where interpretability is required such as natural language processing and biomedical image processing. While a number of sophisticated deep-learning-based approaches are gaining popularity due to their extreme success in diverse problems including image classification and voice recognition, an inherent downside is the loss of interpretability due to a severe over-parameterization and sophisticated design of such algorithms. In this work, we showed that combining two classical methods of nonnegative matrix factorization and logistic regression could achieve comparable performance while maintaining the transparency of the method and the interpretability of the results. 
	
	
	One of the main techniques we developed in this work is a `double lifting procedure', which transforms the SDL problem into a CAFE problem \eqref{eq:CAFE} and then to a CALE problem \eqref{eq:CALE}; then we can use globally guaranteed low-rank projected gradient descent (Algorithm \ref{algorithm:LPGD}) to efficiently find the global optimum of the resulting CALE problem, and then we can pull that solution back to the original space. While this approach has proven to be quite powerful in analyzing SDL problems in the present work, it is interesting to note that our double-lifting technique does not immediately apply to 
	the supervised PCA model proposed by Ritchie et al.  \cite{ritchie2020supervised} for finding low-dimensional subspace that is also effective for a regression task: 
	\begin{align}\label{eq:SPCA_regression_W1}
		\min_{\W\in \R^{p\times r},\, \W^{T}\W=\I_{r},\,  \Beta\in \R^{1\times r}  } \left[  f\left( \W \begin{bmatrix} \Beta ,\,  \W^{T}  \end{bmatrix} \right):= \lVert \Y_{\textup{label}} -  \Beta^{T}  \W^{T}\X_{\textup{data}} \rVert_{F}^{2} + \xi  \lVert \X_{\textup{data}} - \W \W^{T} \X_{\textup{data}}   \rVert_{F}^{2}\right].
	\end{align}
	Even though we can realize the objective function in the right hand side of \eqref{eq:SPCA_regression_W1} as a function depending on the product $\W[\Beta,\W^{T}]$, the two matrix factors $\W$ and $[\Beta,\W^{T}]$ are \textit{not} decoupled as before. It is for a future investigation to devise a lifting technique for SPCA and related problems, and obtain a strong global convergence guarantee.

	\section*{Acknowledgements}

    HL is partially supported by NSF  DMS-2206296 and DMS-2010035.

	\small{
		\bibliographystyle{amsalpha}
		\bibliography{mybib}
	}

	\addresseshere

	\newpage

	\appendix

	\section{Proof of main results}

	\subsection{Proof of Theorem \ref{thm:CALE_LPGD}}
	We first establish Theorem \ref{thm:CALE_LPGD}, which shows exponential convergence  of the low-rank projected gradient descent (Algorithm \ref{algorithm:LPGD}) for the CALE problem \ref{eq:CALE}. The proof is similar to the standard argument that shows exponential convergence  projected gradient descent with fixed step size for constrained strongly convex problems (see, e.g., \cite[Thm. 10.29]{beck2017first}). However, when we have a strongly convex minimization problem with low-rank-constrained matrix parameter, then the constraint set of low-rank matrices is not convex, so one cannot use non-expansiveness of convex projection operator. Indeed, the rank-$r$ projection $\Pi_{r}$ by truncated SVD is not guaranteed to be non-expansive. In order to circumvent this issue, we use the idea of approximating rank-$r$ projection by a suitable linear projection on a carefully chosen linear subspace, an approach used in \cite{wang2017unified}. Then one can show that rank-$r$ projection is at most $2$-Lipschitz in some sense, so if the contraction constant in standard analysis of projected gradient descent for strongly convex objectives is small enough ($<1/2$), then overall one still retains exponential convergence.

	\begin{lemma}(Linear approximation of rank-$r$ projection)\label{lem:rank_r_lin_appx}
		Fix $\Y\in \R^{d_{1}\times d_{2}}$, $R\ge r \in \mathbb{N}$, and denote $\X=\Pi_{r}(\Y)$ and $\hat{\X} = \Pi_{\mathcal{A}}(
		\Y)$, where $\mathcal{A}\subseteq \R^{d_{1}\times d_{2}}$ is a linear subspace. Let $\X=\U\bSigma \V^{T}$ denote the SVD of $\X$. Suppose there exists $\overline{\U}\in \R^{d_{1}\times R}$ and $\overline{\V} \in \R^{d_{2}\times R}$ such that 
		\begin{align}
			\mathcal{A} = \left\{ \A \in \R^{d_{1}\times d_{2}}\,\big|\, \textup{col}(\A^{T}) \subseteq \textup{col}(\overline{\V}),\, \textup{col}(\A) \subseteq \textup{col}(\overline{\U})  \right\}, \quad \textup{col}(\U)\subseteq \textup{col}(\overline{\U}), \quad \textup{col}(\V)\subseteq \textup{col}(\overline{\V}).
		\end{align}
		Then $\X=\Pi_{r}(\hat{\X})$. 
	\end{lemma}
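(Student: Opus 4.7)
The plan is to exploit the orthogonal decomposition afforded by the linear subspace $\mathcal{A}$, combined with the Eckart--Young optimality of $\X = \Pi_r(\Y)$, in order to identify $\X$ as a rank-$r$ minimizer of $\lVert \hat{\X} - \cdot \rVert_F$. First I would observe that $\X \in \mathcal{A}$: since $\X = \U\bSigma\V^T$, one has $\textup{col}(\X)\subseteq \textup{col}(\U)\subseteq \textup{col}(\overline{\U})$ and $\textup{col}(\X^T)\subseteq \textup{col}(\V)\subseteq \textup{col}(\overline{\V})$, so $\X$ satisfies the defining constraints of $\mathcal{A}$. A short computation also shows that $\Pi_{\mathcal{A}}$ acts as the two-sided compression $\Pi_{\mathcal{A}}(\cdot) = P_{\overline{\U}}(\cdot)P_{\overline{\V}}$, where $P_{\overline{\U}}, P_{\overline{\V}}$ are the orthogonal projectors onto $\textup{col}(\overline{\U})$ and $\textup{col}(\overline{\V})$ respectively.

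Since $\hat{\X} = \Pi_{\mathcal{A}}(\Y)$ is the orthogonal projection onto a linear subspace, Pythagoras gives, for every $\Z \in \mathcal{A}$,
\begin{align}
\lVert \Y - \Z \rVert_F^2 \;=\; \lVert \Y - \hat{\X} \rVert_F^2 + \lVert \hat{\X} - \Z \rVert_F^2.
\end{align}
Applying this at $\Z = \X$ (valid since $\X\in\mathcal{A}$) and at an arbitrary $\Z \in \mathcal{A}$ with $\rank(\Z)\le r$, and subtracting, yields
\begin{align}
\lVert \hat{\X} - \Z \rVert_F^2 - \lVert \hat{\X} - \X \rVert_F^2 \;=\; \lVert \Y - \Z \rVert_F^2 - \lVert \Y - \X \rVert_F^2 \;\ge\; 0,
\end{align}
where the inequality is the Eckart--Young property of $\X = \Pi_r(\Y)$.

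Finally, to handle a general rank-$r$ competitor $\Z' \in \R^{d_1 \times d_2}$, set $\hat{\Z}' := \Pi_{\mathcal{A}}(\Z') = P_{\overline{\U}}\Z'P_{\overline{\V}}$; then $\rank(\hat{\Z}')\le \rank(\Z')\le r$ since left/right multiplication cannot increase rank, and $\hat{\Z}'\in\mathcal{A}$. Applying Pythagoras once more (to $\hat{\X}\in\mathcal{A}$ and $\Z'$) and then invoking the previous step gives
\begin{align}
\lVert \hat{\X} - \Z' \rVert_F^2 \;=\; \lVert \hat{\X} - \hat{\Z}' \rVert_F^2 + \lVert \hat{\Z}' - \Z' \rVert_F^2 \;\ge\; \lVert \hat{\X} - \hat{\Z}' \rVert_F^2 \;\ge\; \lVert \hat{\X} - \X \rVert_F^2,
\end{align}
so $\X \in \argmin_{\rank(\Z) \le r} \lVert \hat{\X} - \Z \rVert_F$, which is the conclusion $\X = \Pi_r(\hat{\X})$ under the set-valued convention for $\Pi_r$ from Subsection~\ref{subsection:notation}. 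I expect the proof to be essentially routine once this picture is in hand; the only real subtleties are notational, namely verifying the two-sided compression formula for $\Pi_{\mathcal{A}}$ and reconciling the potential non-uniqueness of $\Pi_r(\hat{\X})$ when the $r$-th and $(r{+}1)$-th singular values of $\hat{\X}$ coincide, in which case $\X$ should be read as one valid branch of $\Pi_r(\hat{\X})$.
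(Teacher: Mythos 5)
Your proof is correct, but it takes a genuinely different route from the paper's. The paper argues constructively with explicit SVD bookkeeping: it writes $\Y = \X + (\Y-\X)$, notes that $\textup{col}(\U)\perp \textup{col}(\dot\U)$ where $\Y-\X = \dot\U\dot\bSigma\dot\V^{T}$, applies the linear map $\Pi_{\mathcal{A}}$ to get $\hat\X = \U\bSigma\V^{T} + \Pi_{\mathcal{A}}(\dot\U\dot\bSigma\dot\V^{T})$, and then verifies (via $\U^{T}\overline{\U}\,\overline{\U}^{T}=\U^{T}$ and $\U^{T}\dot\U = O$) that the projected residual has column space orthogonal to $\U$ and spectral norm at most $\sigma_{r+1}(\Y)$, so that the block SVD of $\hat\X$ retains $\bSigma$ as its top $r$ singular values. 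You instead give a purely variational argument: $\X\in\mathcal{A}$, Pythagoras for the orthogonal projection onto the subspace $\mathcal{A}$ converts Eckart--Young optimality of $\X$ for $\Y$ into optimality of $\X$ for $\hat\X$ among rank-$\le r$ members of $\mathcal{A}$, and general rank-$\le r$ competitors $\Z'$ are reduced to this case by compressing them to $\Pi_{\mathcal{A}}(\Z') = P_{\overline{\U}}\Z'P_{\overline{\V}}$, which cannot increase rank and, by a second application of Pythagoras, cannot increase the distance to $\hat\X$. Both arguments are sound and both carry the same caveat about non-uniqueness of $\Pi_{r}$ when $\sigma_{r}(\hat\X)=\sigma_{r+1}(\hat\X)$, which is harmless under the paper's set-valued convention. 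Your route is shorter and avoids the SVD manipulations entirely; the paper's route yields the strictly stronger structural fact that the top-$r$ singular triplets of $\hat\X$ coincide with those of $\Y$, though only the minimizer property is actually used downstream in the proof of Theorem \ref{thm:CALE_LPGD}.
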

	
	\begin{proof}
		Write $\Y-\X=\dot{\U} \dot{\bSigma} \dot{\V}^{T}$  for its SVD. Let $d:=\rank(\Y)$ and let $\sigma_{1}\ge \dots \ge \sigma_{d}>0$ denote the nonzero singular values of $\Y$. Since $\X=\Pi_{r}(\Y) = \U\bSigma\V^{T}$ and $\Y=\U\bSigma\V^{T} + \dot{\U} \dot{\bSigma} \dot{\V}^{T}$, we must have that $\bSigma$ consists of the top $r$ singular values of $\Y$ and the rest of $d-r$ singular values are contained in $\dot{\bSigma}$. Furthermore, $\textup{col}(\U) \perp \textup{col}(\dot{\U})$.
		
		Now, since $\X\in \mathcal{A}$ and $\Pi_{\mathcal{A}}$ is linear, we get 
		\begin{align}\label{eq:3r_r_approx1_lem}
			\hat{\X} = \Pi_{\mathcal{A}}( \X + (\Y -\X) ) = \U \bSigma \V^{T} + \Pi_{\mathcal{A}}(\dot{\U} \dot{\bSigma} \dot{\V}^{T}). 
		\end{align}
		Let $\bZ:= \Pi_{\mathcal{A}}(\dot{\U} \dot{\bSigma} \dot{\V}^{T})$ and write its SVD as $\bZ = \widetilde{\U}\widetilde{\bSigma}\widetilde{\V}^{T}$. Then note that $(\U^{T}\overline{\U} \, \overline{\U}^{T})^{T}=\overline{\U}\,\overline{\U}^{T}\U=\U$ since $\overline{\U}\,\overline{\U}^{T}:\R^{d_{1}}\rightarrow \R^{d_{1}}$ is the orthogonal projection onto $\textup{col}(\overline{\U})\supseteq \textup{col}(\U)$. Hence $\U^{T}\overline{\U}\,\overline{\U}^{T} = \U^{T}$, so we get 
		\begin{align}
			\U^{T} \bZ =  	\left( \U^{T} \overline{\U}\, \overline{\U}^{T}\right) \dot{\U} \dot{\bSigma} \dot{\V}^{T} 	\V^{T} \overline{\V} = \left( \U^{T} \dot{\U} \right) \dot{\bSigma} \dot{\V}^{T} 	\V^{T} \overline{\V} = O.
		\end{align}
		It follows that $\U^{T}\widetilde{\U}=O$, since $	\U^{T}\widetilde{\U} = \U^{T} \bZ \widetilde{\V} (\widetilde{\bSigma})^{-1} = O$. Therefore, rewriting \eqref{eq:3r_r_approx1_lem} gives the SVD of $\hat{\X}$ as 
		\begin{align}
			\hat{\X} = \begin{bmatrix} 
				\U & \widetilde{\U}
			\end{bmatrix}
			\begin{bmatrix} 
				\bSigma & O\\
				O & \widetilde{\bSigma} 
			\end{bmatrix}
			\begin{bmatrix} 
				\V \\ \widetilde{\V}
			\end{bmatrix}
			.
		\end{align}
		Furthermore, $\lVert  \Pi_{\mathcal{A}}(\dot{\U} \dot{\bSigma} \dot{\V}^{T}) \rVert_{2} \le \lVert\dot{\bSigma} \rVert_{2} =\sigma_{r+1}^{t}$, so $\Sigma$ consists of the top $r$ singular values of $\hat{\X}$. It follows that $\X=\U \bSigma \V^{T}$ is the best rank-$r$ approximation of $\hat{\X}$, as desired. 
	\end{proof}
	
	\begin{proof}[\textbf{Proof of Theorem} \ref{thm:CALE_LPGD}]
		Denote $\bZ^{\star}=[\X^{\star}, \bGamma^{\star}]\in \Param\subseteq \R^{d_{1}\times d_{2}}\times \R^{d_{3}\times d_{4}}$.  Let $\mathcal{A}$ denote a linear subspace of $\R^{d_{1}\times d_{2}}$. Denote 
		\begin{align}
			\hat{\bZ}_{t} =\Pi_{\mathcal{A}\times \R^{d_{3}\times d_{4}}}\left(  \Pi_{\Param} \left( \bZ_{t-1} -  \tau \nabla f(\bZ_{t}) \right) \right).
		\end{align}
		We will choose $\mathcal{A}$ in such a way that 
		\begin{align}\label{eq:mathcal_A_subspace_cond}
			\X_{t} = \Pi_{r}(\hat{\X}_{t}) \in \argmin_{\X, \rank(\X)\le r} \lVert \hat{\X}_{t} - \X  \rVert_{F}, \quad  \bZ^{\star}\in \mathcal{A} \times \R^{d_{3}\times d_{4}}.
		\end{align}
		For instance, $\mathcal{A}=\R^{d_{1}\times d_{2}}$ satisfies the above conditions, although this choice does not give optimal control on the variance term (the second term in the right hand side of \eqref{eq:CALE_LPGD_thm}). We will first derive a general bound with such $\mathcal{A}$, and at the end of the proof, we will give a specific construction of such $\mathcal{A}$ to obtain the bound in the assertion. 
		
		Denote $\Delta \bZ^{\star}:=\bZ^{\star} - \Pi_{\Param}\left( \bZ^{\star} - \tau \nabla f(\bZ^{\star}))\right)$. Using $\bZ^{\star}\in \mathcal{A}\times \R^{d_{3}\times d_{4}}$ and linearity of the linear projection $\Pi_{\mathcal{A}\times \R^{d_{3}\times d_{4}}}$, write 
		\begin{align}
			\bZ^{\star} &= \Pi_{\mathcal{A}\times \R^{d_{3}\times d_{4}}}(\bZ^{\star}) \\ 
			&=\Pi_{\mathcal{A}\times \R^{d_{3}\times d_{4}}} \left( \Pi_{\Param}(\bZ^{\star} - \tau \nabla f(\bZ^{\star})) \right)  + \Pi_{\mathcal{A}\times \R^{d_{3}\times d_{4}}} \left( \bZ^{\star}-  \Pi_{\Param}(\bZ^{\star} - \tau \nabla f(\bZ^{\star}))  \right) \\
			&=\Pi_{\mathcal{A}\times \R^{d_{3}\times d_{4}}} \left( \Pi_{\Param}(\bZ^{\star} - \tau \nabla f(\bZ^{\star})) \right) +  \Pi_{\mathcal{A}\times \R^{d_{3}\times d_{4}}} \left(  \Delta \bZ^{\star} \right).
		\end{align}
		Namely, the first term above is a one-step update of a projected gradient descent at $\bZ^{\star}$ over $\Param$ with stepsize $\tau$, and the second term above is the error term. If $\bZ^{\star}$ is a stationary point of $f$ over $\Param$, then $-\nabla f(\bZ^{\star})$ lies in the normal cone of $\Param$ at $\bZ^{\star}$, so $\bZ^{\star}$ is invariant under the projected gradient descent and the error term above (the second term in the last expression) is zero. If $\bZ^{\star}$ is only approximately stationary, then the error above is nonzero. 
		
		Now,  recall that $\hat{Z}_{t}$ is obtained by using the orthogonal projection $\Pi_{\mathcal{A}}$ onto the convex subset $\mathcal{A}$ instead of the rank-$r$ projection $\Pi_{r}$  to obtain the matrix coordinate of $\hat{\bZ}_{t}$. Notice that $\Pi_{\Param}$ and $\Pi_{\mathcal{A}\times \R^{d_{3}\times d_{4}}}$ are non-expansive being projection onto a convex set, while the rank-$r$ projection $\Pi_{r}$ is not in general. Also using the linearity of the  subspace projection $\Pi_{\mathcal{A}\times \R^{d_{3}\times d_{4}}}$, we get 
		\begin{align}
			&\lVert \hat{\bZ}_{t} - \bZ^{\star} \rVert_{F}  \\
			&\qquad = \left\lVert   \Pi_{\mathcal{A}\times \R^{d_{3}\times d_{4}}}\left(  \Pi_{\Param} \left( \bZ_{t-1} -  \tau \nabla f(\bZ_{t-1}) \right) \right) - \Pi_{\mathcal{A}\times \R^{d_{3}\times d_{4}}}\left(  \Pi_{\Param} \left( \bZ^{\star} -  \tau \nabla f(\bZ^{\star}) \right) \right) +\Pi_{\mathcal{A}\times \R^{d_{3}\times d_{4}}}\left( \Delta \bZ^{\star} \right)  \right\rVert_{F}  \\
			&\qquad \le  \left\lVert  \bZ_{t-1} -  \tau \nabla f(\bZ_{t-1})  - \bZ^{\star} +  \tau \nabla f(\bZ^{\star})  \right\rVert_{F}  + \lVert \Pi_{\mathcal{A}\times \R^{d_{3}\times d_{4}}}\left( \Delta \bZ^{\star}  \right)  \rVert_{F} \\
			&\qquad \le \max( |1-\tau L| ,\, |1-\tau \mu|  ) \, \lVert  \bZ_{t-1} - \bZ^{\star}\rVert_{F} + \lVert \Pi_{\mathcal{A}\times \R^{d_{3}\times d_{4}}}\left( \Delta \bZ^{\star} \right)  \rVert_{F}.
		\end{align}
		
		The last inequality follows from the fact that $\bZ_{t}$ and $\bZ^{\star}$ have rank $\le r$ and the restricted strong convexity and smoothness properties (Definition \ref{def:RSC}). Namely, fix $\X,\Y\in \R^{d_{1}\times d_{2}}\times \R^{d_{3}\times d_{4}}$ whose first matrix components have rank $\le r$. Assuming $\nabla f$ is continuous, 
		\begin{align}
			\X -  \tau \nabla f(\X)  - \Y +  \tau \nabla f(\Y) &= (\X-\Y) - \tau ( \nabla f(\X) -  \nabla f(\Y)) \\
			&= \int_{0}^{1} \left(  \I-\tau \nabla^{2}(\X + s(\Y-\X)) \right)(\X-\Y)\,ds.
		\end{align}
		Using the inequality $\lVert \A \B \rVert_{F} \le \lVert \A \rVert_{2} \lVert \B \rVert_{F}$, this gives 
		\begin{align}
			\lVert \X -  \tau \nabla f(\X)  - \Y +  \tau \nabla f(\Y) \rVert_{F} & \le \sup_{\bZ=[\bZ_{1},\bZ_{2}]: \, \rank(\bZ_{1})\le r}   \lVert  \I-\tau \nabla^{2}f( \bZ)\rVert_{2}  \, \lVert \X - \Y \rVert_{F} \\
			&\le \rho \,\lVert \X - \Y \rVert_{F},
		\end{align}
		where $\eta:=\max( |1-\tau L| ,\, |1-\tau \mu| )$. Indeed, for the second inequality above, note that the eigenvalues of $\nabla^{2} f(\bZ)$ are contained in $[\mu,L]$, so the eigenvalues of $\I-\tau \nabla^{2}f(\bZ) $ are between $\min(1-\tau L,\, 1-\tau \mu)$ and $\max(1-\tau L,\, 1-\tau \mu)$. Combining with the previous inequality, it follows that 
		\begin{align}\label{eq:pf_LPGD_pf1}
			\lVert \hat{\bZ}_{t} - \bZ^{\star}  \rVert_{F}  \le  \eta \, \lVert  \bZ_{t-1} - \bZ^{\star}\rVert_{F} + \lVert \Pi_{\mathcal{A}\times \R^{d_{3}\times d_{4}}}\left(\Delta \bZ^{\star}\right)  \rVert_{F}.
		\end{align}
		Next, recall the notation $\bZ_{t}=[\X_{t}, \bGamma_{t}]$ and $\hat{\bZ}_{t}=[\hat{\X}_{t}, \bGamma_{t}]$.  By construction, we have $\X_{t} =\Pi_{r}( \hat{\X}_{t})$, so $\X_{t}$ is the best rank-$r$ approximation of $\hat{\X}_{t}$ in the sense that  $\X_{t}=\argmin_{\X, \rank(\X)\le r} \lVert \hat{\X}_{t} - \X  \rVert_{F}$. Then observe that 
		\begin{align}
			\lVert \bZ_{t} - \bZ^{\star}  \rVert_{F}  &\le 	\lVert \bZ_{t} - \hat{\bZ}_{t}  \rVert_{F} + 	\lVert \hat{\bZ}_{t} - \bZ^{\star}  \rVert_{F}  \\
			&= 	\lVert \X_{t} - \hat{\X}_{t}  \rVert_{F} + 	\lVert \hat{\bZ}_{t} - \bZ^{\star}  \rVert_{F}  \\
			&\le 	\lVert\X^{\star} -  \hat{\X}_{t}  \rVert_{F}  + 	\lVert \hat{\bZ}_{t} - \bZ^{\star}  \rVert_{F}   \le 2 	\lVert \hat{\bZ}_{t} - \bZ^{\star}  \rVert_{F}, 
		\end{align}
		so by combining with \eqref{eq:pf_LPGD_pf1}, we get 
		\begin{align}
			\lVert \bZ_{t} - \bZ^{\star}  \rVert_{F}  \le  2\eta \, \lVert  \bZ_{t-1} - \bZ^{\star}\rVert_{F}+ \lVert \Pi_{\mathcal{A}\times \R^{d_{3}\times d_{4}}}\left(\Delta \bZ^{\star} \right)  \rVert_{F}.
		\end{align}
		Note that $0\le \eta<1/2$ if and only if $\tau\in (\frac{1}{2\mu}, \frac{3}{2L})$, and this interval is non-empty if and only if $L/\mu<3$. Hence for such choice of $\tau$, $0<2\eta<1$,  so  
		by a recursive application of the above inequality, we obtain 
		\begin{align}\label{eq:linear_conv_ineq_pf1}
			\lVert \bZ_{t} - \bZ^{\star}  \rVert_{F}  \le  (2\eta)^{t}\, \lVert  \bZ_{0} - \bZ^{\star}\rVert_{F} +  \frac{1}{1-2\eta} \lVert \Pi_{\mathcal{A}\times \R^{d_{3}\times d_{4}}}\left(\Delta \bZ^{\star} \right)  \rVert_{F}.
		\end{align}

		Finally, we bound the variance term in the last expression by choosing a suitable linear subspace $\mathcal{A}\subseteq \R^{d_{1}\times d_{2}}$ satisfying \eqref{eq:mathcal_A_subspace_cond}. Note that $\Delta \bZ^{\star}=\tau [\Delta \X^{\star}, \Delta \bGamma^{\star}]$, where the latter is defined in the statement of Theorem \ref{thm:CALE_LPGD}. Recall that $\bZ^{\star}=[\X^{\star}, \bGamma^{\star}]$. Let $\X^{\star} = \U^{\star} \bSigma^{\star} (\V^{\star})^{T}$ denote the SVD of $\X^{\star}$.  For each iteration $t$, denote $\bZ_{t} = [\X_{t}, \bGamma_{t}]$ and let $\X_{t} = \U_{t} \bSigma_{t} \V_{t}^{T}$ denote the SVD of $\X_{t}$. Since $\X_{t}$ and $\X^{\star}$ have rank at most $r$, all of both $\U^{\star}$, $\U_{t}$, $\V^{\star}$, and $\V_{t}$ have at most $r$ columns. Define a matrix $\U_{3r}$ so that its columns form a basis for the subspace spanned by the columns of $[\U^{\star}, \U_{t-1}, \U_{t}]$. Then $\U_{3r}$ has at most $3r$ columns. Similarly, let  $\U_{3r}$ be a matrix so that its columns form a basis for the subspace spanned by the columns of $[\V^{\star}, \V_{t-1}, \V_{t}]$. Then $\V_{3r}$ has at most $3r$ columns. Now, define the subspace 
		\begin{align}\label{eq:mathcal_A_construction}
			\mathcal{A} := \left\{ \Delta\in \R^{d_{1}\times d_{2}}\,|\, \textup{span}(\Delta^{T}) \subseteq \textup{span}(\V_{3r}),\, \textup{span}(\Delta) \subseteq \textup{span}(\U_{3r})  \right\}.
		\end{align}
		Note that $\mathcal{A}$ is a convex subset of $\R^{d_{1}\times d_{2}}$.  Also note that, by definition, $\X^{\star}, \X_{t}, \X_{t-1}\in \mathcal{A}$. Let $\Pi_{\mathcal{A}}$ denote the projection operator onto $\mathcal{A}$. More precisely, for each $\X\in \R^{d_{1}\times d_{2}}$, we have 
		\begin{align}
			\Pi_{\mathcal{A}}(\X) = \U_{3r}\U_{3r}^{T} \X \V_{3r}\V_{3r}^{T}. 
		\end{align}
		Then by Lemma \ref{lem:rank_r_lin_appx}, we have $\X_{t} =\Pi_{r}( \hat{\X}_{t})$. Hence $\mathcal{A}$  in \eqref{eq:mathcal_A_construction} satisfies \eqref{eq:mathcal_A_subspace_cond}. Therefore, \eqref{eq:linear_conv_ineq_pf1} holds for the $\mathcal{A}$ chosen as in \eqref{eq:mathcal_A_construction}. 
		
		Now, note that  $\Pi_{\mathcal{A}\times \R^{d_{3}\times d_{4}}}(\Delta \X^{\star}, \Delta \bGamma^{\star}) = [\Pi_{\mathcal{A}}(\Delta \X^{\star}) , \Delta \bGamma^{\star} ]$ and  $\rank(\mathcal{A})\le 3r$. Thus by triangle inequality, 
		\begin{align}\label{eq:LPGD_last_gap_bound}
			\lVert \Pi_{\mathcal{A}\times \R^{d_{3}\times d_{4}}}\left( \Delta \X^{\star}, \Delta \bGamma^{\star}  \right)  \rVert_{F} &\le \lVert \Pi_{\mathcal{A}}( \Delta \X^{\star})  \rVert_{F} + \lVert  \Delta \bGamma^{\star} \rVert_{F} \le \sqrt{ 3r} \lVert \Delta \X^{\star} \rVert_{2} +  \lVert \Delta \bGamma^{\star} \rVert_{F}.
		\end{align}
		This completes the proof of \textbf{(i)}.
		
		Next, we show \textbf{(ii)}. Suppose $\mathbf{Z}^{\star}$ is a stationary point of $f$ over $\Param$. Then $\Delta \bZ^{\star}=O$ so the first part of the assertion follows from \textbf{(i)}. For the second part, suppose that $\nabla f$ is $L'$-Lipschiz over $\Param$ for some $L'>0$. Then by Cauchy-Schwarz inequality, 
		\begin{align}
			\left| f(\bZ_{n}) - f(\bZ^{\star}) \right| & = \left|  \int_{0}^{1} \left\langle \nabla f \left(\bZ_{n} + s(\bZ^{\star} - \bZ_{n}) \right),\, \bZ_{n}-\bZ^{\star} \right\rangle \,ds \right| \\
			&\le   \int_{0}^{1}\left\lVert  \nabla f \left(\bZ_{n} + s(\bZ^{\star} - \bZ_{n}) \right) \right\rVert \lVert \bZ_{n}-\bZ^{\star} \rVert \,ds \\ 
			&\le   \int_{0}^{1} \left( \left\lVert  \nabla f (\bZ^{\star}) \rVert + s L\lVert \bZ_{n}-\bZ^{\star} \right\rVert  \right) \lVert \bZ_{n}-\bZ^{\star} \rVert \,ds \\ 
			&\le   \left( \left\lVert  \nabla f (\bZ^{\star}) \rVert +  L\lVert \bZ_{n}-\bZ^{\star} \right\rVert  \right) \lVert \bZ_{n}-\bZ^{\star} \rVert.
		\end{align}
		Then \eqref{eq:PSGD_linear_conv2} follows by combining the above inequality with \textbf{(i)}.
	\end{proof}
	
	\begin{remark}\label{rmk:pf_thm_LPGD}
		Note that in \eqref{eq:LPGD_last_gap_bound}, we could have used the following crude bound 
		\begin{align}\label{eq:LPGD_last_gap_bound2}
			\left\lVert \Pi_{\mathcal{A}\times \R^{d_{3}\times d_{4}}}\left( \Delta \X^{\star}, \Delta \bGamma^{\star}  \right)  \right\rVert_{F} \le 	\left\lVert \left[  \Delta \X^{\star}, \Delta \bGamma^{\star}  \right]  \right\rVert_{F} &\le 
			\lVert \Delta \X^{\star} \rVert_{F} +  \lVert \Delta \bGamma^{\star} \rVert_{F} \\
			&\le \sqrt{\rank(\Delta \X^{\star})}	\lVert \Delta \X^{\star} \rVert_{2} +  \lVert \Delta \bGamma^{\star} \rVert_{F},
		\end{align}
		which is also the bound we would have obtained if we choosed the trivial linear subspace $\mathcal{A}=\R^{d_{1}\times d_{2}}$ in the proof of Theorem \ref{thm:CALE_LPGD} above. While we know $\rank(\X^{\star})\le r $, we do not have an a priori bound on $ \rank(\Delta \X^{\star})$, which could be much larger then $\sqrt{3r}$. A smarter choice of the subspace $\mathcal{A}$ as we used in the proof of Theorem \ref{thm:CALE_LPGD} ensures that we only need the factor $\sqrt{3r}$ in place of the unknown factor $\sqrt{\rank(\Delta \X^{\star})}$ as in \eqref{eq:LPGD_last_gap_bound}. 
	\end{remark}

	\subsection{Proof of Theorems \ref{thm:SDL_LPGD} and \ref{thm:SDL_LPGD_feat}} 
	
	Next, prove Theorems \ref{thm:SDL_LPGD} and \ref{thm:SDL_LPGD_feat}, which amounts to verify that the hypothesis of Theorem \ref{thm:CALE_LPGD} holds for the SDL problems in \eqref{eq:SDL_filt_1} and \eqref{eq:SDL_feat_1}.
	
	We begin with some preliminary computation. Let $\a_{s}$ denote the activation corresponding to the $s$th sample (see \eqref{eq:ASDL_1}). More precisely,  $\a_{s}=\A^{T}\x_{s}+\bGamma^{T}\x'_{s}$ for the filter-based model with $\A\in \R^{p\times \kappa}$, and $\a_{s}=\A[:,s]+\bGamma^{T}\x'_{s}$ with $\A\in \R^{\kappa\times n}$. In both cases, $\B\in \R^{p\times n}$ and $\bGamma\in \R^{q\times \kappa}$. Then the objective function $f$ in \eqref{eq:SDL_filt_CALE}   can be written as
	\begin{align}\label{eq:f_SDL_pf0}
		f(\A, \B, \bGamma)&:=\left( -\sum_{s=1}^{n} \sum_{j=0}^{\kappa} \mathbf{1}(y_{i}=j)  \log g_{j}(  \a_{s}  ) \right) +   \xi \lVert  \X_{\textup{data}}  -\B\rVert_{F}^{2} + \nu \left( \lVert \A \rVert_{F}^{2}+\lVert \bGamma \rVert_{F}^{2} \right) \\
		&= \sum_{s=1}^{n} \left(  \log \left( 1+\sum_{c=1}^{\kappa} h( \a_{s}[c]  ) \right) -  \sum_{j=1}^{\kappa} \mathbf{1}(y_{i}=j) \log  h( \a_{s}[j]  )  \right) +   \xi \lVert  \X_{\textup{data}}  -\B\rVert_{F}^{2} + \nu \left( \lVert \A \rVert_{F}^{2} + \lVert \bGamma \rVert_{F}^{2}  \right),
	\end{align}
	where $\a_{s}[i]\in \R$ denotes the $i$th component of $\a_{s}\in \R^{\kappa}$. In the proofs we provided below, we compute the Hessian of $f$ above explicitly for the filter- and the feature-based cases and use Theorem \ref{thm:CALE_LPGD} to derive the result. Recall the functions $\dot{\h}$ and $\ddot{\H}$ introduced in \ref{assumption:A4}. For each label $y\in \{0,\dots,\kappa\}$ and activation $\a\in \R^{\kappa}$, the negative log likelihood of observing label $y$ from the probability distribution $\g(\a)$ defined in \eqref{eq:ell_log_likelihood} can be written as 
	\begin{align}
		\ell_{0}(y,\a) := \log\left( \sum_{c=1}^{\kappa} h(\a[c]) \right)  - \sum_{c=1}^{\kappa}\mathbf{1}(y=c) \log h(\a[c]).
	\end{align}
	Then we have the following relations 
	\begin{align}
		\nabla_{\a} \ell_{0}(y,\a) = \dot{\h}(y,\a), \qquad \nabla_{\a}\nabla_{\a^{T}} \ell_{0}(y,\a) = \ddot{\H}(y,\a).
	\end{align}

	\begin{proof}[\textbf{Proof of Theorem} \ref{thm:SDL_LPGD}]
		Let $f=f_{\textup{SDL-filt}}$ denote the loss function for the filter-based SDL model in \eqref{eq:SDL_filt_CALE}. Fix $\bZ_{1},\bZ_{2}\in \Param\subseteq \R^{d_{1}\times d_{2}}\times \R^{d_{3}\times d_{4}} $. By \ref{assumption:A1} $\Param$ is convex, so $t\bZ_{1} + (1-t) \bZ_{2}\in \Param$ for all $t\in [0,1]$. Then by the mean value theorem, there exists $t^{*}\in [0,1]$ such that for $\bZ^{*}=t^{*}\bZ_{1} + (1-t^{*})\bZ_{2}$,
		\begin{align}\label{eq:RSC_thm_pf}
			f(\bZ_{2}) - f(\bZ_{1}) - \langle \nabla f(\bZ_{1}), \, \bZ_{2}-\bZ_{1} \rangle =  \left( \vect(\bZ_{2}) - \vect(\bZ_{1})  \right)^{T} \nabla_{\vect(\bZ)}\nabla_{\vect(\bZ)^{T}} f( \bZ^{*} )  \left( \vect(\bZ_{2}) - \vect(\bZ_{1})  \right).
		\end{align}
		Hence, according to Theorem \ref{thm:CALE_LPGD}, it suffices to verify that 
		for some $\mu,L>0$ such that $L/\mu<3$, 
		\begin{align}
			\frac{\mu}{2} \I \preceq \nabla_{\vect(\bZ)}\nabla_{\vect(\bZ)^{T}} f( \bZ^{*} ) \preceq \frac{L}{2} \I
		\end{align}
		for all $\bZ^{*}=[\X,\bGamma]$ with $\rank(\X^{*})\le r$.

		
		To this end, let $\a_{s}$ denote the activation corresponding to the $s$th sample (see \eqref{eq:ASDL_1}). More precisely,  $\a_{s}=\A^{T}\x_{s}+\bGamma^{T}\x'_{s}$ for the filter-based model we consider here. We discussed that the objective function $f$ in \eqref{eq:SDL_filt_CALE} can be written as \eqref{eq:f_SDL_pf0}. Denote 
		\begin{align}\label{eq:thm_SDL_filt_as_def}
			\a_{s}= \A^{T}\x_{s}+\bGamma^{T}\x_{s}' =: \vast[ \left\langle \underbrace{\begin{bmatrix}
					\A[:,j] \\ \bGamma[:,j] \end{bmatrix}}_{=:\u_{j}} ,\, \underbrace{\begin{bmatrix} 
					\x_{s} \\ \x_{s}' \end{bmatrix}}_{=:\bphi_{s}}  \right\rangle; \,\, j=1,\dots,\kappa \vast]^{T}\in \R^{\kappa},
		\end{align}
		where we have introduced the notations $\u_{j}\in \R^{(p+q)\times 1}$ for $j=1,\dots,\kappa$  and $\bphi_{s}\in \R^{(p+q) \times 1 }$ for $s=1,\dots, n$. Denote $\U:=[\u_{1},\dots,\u_{\kappa}]\in \R^{(p+q)\times \kappa}$, which is a matrix parameter that combines $\A$ and $\bGamma$. Also denote $\bPhi=(\bphi_{1},\dots,\bphi_{n})\in \R^{(p+q)\times n}$ that combined feature matrix of $n$ observations. Then we can compute the gradient and the Hessian of $f$ above as follows:
		\begin{align}\label{eq:SDL_filt_gradients}
			&\nabla_{\vect(\U)}  f(\U,\B) = \left(  \sum_{s=1}^{n} \dot{\h}(y_{s},\U^{T}\bphi_{s}) \otimes \bphi_{s} \right) + 2\nu \vect(\U), \quad \nabla_{\B}  f(\U,\B) = 2\xi (\B-\X_{\textup{data}}) \\ 	
			&\nabla_{\vect(\U)}\nabla_{\vect(\U)^{T}}  f(\U,\B) =  \left( \sum_{s=1}^{n} \ddot{\H}(y_{s},\U^{T}\bphi_{s}) \otimes \bphi_{s}\bphi_{s}^{T}\right) + 2\nu \I_{(p+q)\kappa}, \\
			&\nabla_{\vect(\B)}\nabla_{\vect(\B)^{T}}  f(\U,\B) = 2\xi \I_{pn}, \qquad \nabla_{\vect(\B)}\nabla_{\vect(\U)^{T}}  f(\U,\B) = O,
		\end{align}
		where $\otimes$ above denotes the Kronecker product and the functions $\dot{\h}$ and $\ddot{\H}$ are defined in  \eqref{eq:Hddot_def}.
		
		Recall that the eigenvalues of $\A\otimes \B$, where $\A$ and $\B$ are two square matrices, are given by $\lambda_{i}\mu_{j}$, where $\lambda_{i}$ and $\mu_{j}$ run over all eigenvalues of $\A$ and $\B$, respectively. Hence denoting $\H_{\U}:=\sum_{s=1}^{N} \ddot{\H}(y_{s},\U^{T}\bphi_{s},) \otimes \bphi_{s}\bphi_{s}^{T}$ and using \ref{assumption:A2}-\ref{assumption:A3}, we can deduce 
		\begin{align}\label{eq:MNL_evals_bounds}
			\lambda_{\min}(\H_{\U}) &\ge n \lambda_{\min}\left( n^{-1} \bPhi \bPhi^{T} \right) 	\min_{1\le s \le N,\, \U} \lambda_{\min}\left( \ddot{\H}(y_{s},\bphi_{s}, \U) \right) \ge n  \delta^{-}\alpha^{-} \ge n \mu^{*}>0, \\
			\lambda_{\max}(\H_{\U}) &\le n \lambda_{\max}\left( n^{-1}\bPhi \bPhi^{T} \right) 	\max_{1\le s \le N,\, \U} \lambda_{\min}\left( \ddot{\H}(y_{s},\bphi_{s}, \U) \right) \le n \delta^{+}\alpha^{+}\le n L^{*}.
		\end{align}
		This holds for all $\A,\B,\bGamma$ such that $\rank([\A,\B])\le r$ and under the convex constraint in \eqref{assumption:A1} (also recall that $\U$ is the vertical stack of $\A$ and $\bGamma$). Hence we conclude that the objective function $f_{\textup{SDL-filt}}$ in \eqref{eq:SDL_filt_CALE} verifies RSC and RSM properties (Def. \ref{def:RSC}) with parameters $\mu=\min(2\xi,2\nu+n \mu^{*})$ and $L=\max(2\xi, 2\nu + n L^{*})$. It is straightforward to verify that $L/\mu<3$ if and only if \eqref{eq:thm_SDL_LGPD_cond} holds. This verifies \eqref{eq:RSC_thm_pf} for the chosen parameters $\mu$ and $L$. 	Then the rest follows from Theorem \ref{thm:CALE_LPGD}. 
	\end{proof}
	
	Next, we prove Theorem \ref{thm:SDL_LPGD_feat}, the exponential convergence of Algorithm \ref{alg:SDL_feat_LPGD} for the feature-based SDL in \eqref{eq:SDL_feat_1}.

	\begin{proof}[\textbf{Proof of Theorem} \ref{thm:SDL_LPGD_feat}]
		We will use the same setup as in the Proof of Theorem \ref{thm:SDL_LPGD}. The main part of the argument is the computation of the Hessian of loss function $f:=f_{\textup{SDL-feat}}$ in \eqref{eq:SDL_feat_CALE}, which is straightforward but a bit more involved than the corresponding computation for the filter-based case in the proof of Theorem \ref{thm:SDL_LPGD}. To this end, let $\a_{s}$ denote the activation corresponding to the $s$th sample (see \eqref{eq:ASDL_1}). More precisely,  $\a_{s}=\A+\bGamma^{T}\x'_{s}$ for the feature-based model we consider here. Recall the objective function $f$ in  \eqref{eq:SDL_feat_CALE} re-written in \eqref{eq:f_SDL_pf0}. We will compute the gradient and the Hessian of $f$ below.

		Recall that for the feature-based model we consider here, we have  $\a_{s}=\A[:,s]+\bGamma^{T}\x'_{s}$, where in this case $\A\in \R^{\kappa\times n}$ (see \eqref{eq:SDL_feat_CALE}). Denote
		\begin{align}
			\a_{s}=\I_{\kappa} \A[:,s]+\bGamma^{T}\x_{s}' =: \vast[ \left\langle \underbrace{\begin{bmatrix} 
					\I_{\kappa}[:,j] \\ \bGamma[:,j] \end{bmatrix}}_{=:\v_{j}} ,\, \underbrace{\begin{bmatrix} 
					\A[:,s] \\ \x_{s}' \end{bmatrix}}_{=:\bpsi_{s}}  \right\rangle; \,\, j=1,\dots,\kappa \vast]^{T}\in \R^{\kappa}.
		\end{align}
		Note that for the feature-based model here, $\A[:,s]$ is concatenated with the auxiliary covariate $\x'_{s}$, whereas we concatenated $\A[:,j]$ with $\bGamma[:,j]$ for the filter-based case (see \eqref{eq:thm_SDL_filt_as_def})\footnote{This is because for the feature-based model, the column $\A[:,s]\in \R^{\kappa}$ for $s=1,\dots,n$ represent a feature of the $s$th sample, whereas for the filter-based model, $\A[:,j]$ for $j=1,\dots,\kappa$ represents the $j$th filter that is applied to the feature $\x_{s}$ of the $s$th sample.}.

		A straightforward computation shows the following gradient formulas:
		\begin{align}\label{eq:SDL_feat_gradients}
			&\nabla_{\vect(\bGamma)}  f(\A,\B,\bGamma) = \left(  \sum_{s=1}^{n} \dot{\h}(y_{s},\a_{s}) \otimes \x'_{s} \right) + 2\nu \vect(\bGamma), \\
			&\nabla_{\vect(\A)}  f(\V,\B) =  \begin{bmatrix} 
				\dot{\h}(y_{1},\a_{1}) \\
				\vdots  \\
				\dot{\h}(y_{n},\a_{n})
			\end{bmatrix}  
			+ 2\nu \vect(\A), \quad \nabla_{\B}  f(\A,\B,\bGamma) = 2\xi(\B-\X_{\textup{data}}) \\ 	
			&\nabla_{\vect(\bGamma)}\nabla_{\vect(\bGamma)^{T}}  f(\A,\B,\bGamma) =  \left(  \sum_{s=1}^{n} \ddot{\H}(y_{s},\a_{s}) \otimes \x_{s}'(\x_{s}')^{T}\right) + 2\nu \I_{q\kappa}, \\
			&\nabla_{\vect(\A)}\nabla_{\vect(\A)^{T}}  f(\A,\B,\bGamma) = \diag\left( \ddot{\H}(y_{1},\a_{1}),\dots, \ddot{\H}(y_{n},\a_{n}) \right)  + 2\nu \I_{\kappa n} \\
			&\nabla_{\vect(\bGamma)}\nabla_{\vect(\A)^{T}}  f(\A,\B,\bGamma) =  \left[ \ddot{\H}(y_{1},\a_{1})\otimes \x'_{1}, \dots, \ddot{\H}(y_{1},\a_{n})\otimes \x'_{n} \right] \in \R^{\kappa q \times \kappa n} \\
			&\nabla_{\vect(\B)}\nabla_{\vect(\B)^{T}}  f(\A,\B,\bGamma) = 2\xi \I_{pn}, \qquad \nabla_{\vect(\B)}\nabla_{\vect(\V)^{T}}  f(\A,\B,\bGamma) = O.
		\end{align}
		From this we will compute the eigenvalues of the Hessian $\H_{\textup{feat}}$ of the loss function $f$. In order to illustrate our computation in a simple setting, we first assume $\kappa=1=q$, which corresponds to binary classification $\kappa=1$ with one-dimensional auxiliary covariates $q=1$. In this case, we have 
		\begin{align}
			\H_{\textup{feat}}&:=\nabla_{\vect(\A,\bGamma,\B)} \nabla_{\vect(\A,\bGamma,\B)^{T}} f(\A,\B,\bGamma) \\
			&=
			\begin{bmatrix}
				\ddot{h}(y_{1},\a_{1})+2\nu  & 0  & \dots & 0 &  \ddot{h}(y_{1},\a_{1}) x_{1}'  & O \\
				0 & \ddot{h}(y_{2},\a_{2})+2\nu   &\dots & 0 &  \ddot{h}(y_{2},\a_{2}) x_{2}'  & O \\
				\vdots & \vdots & \ddots & \vdots & \vdots & \vdots \\
				0 & \dots & 0 &  \ddot{h}(y_{n},\a_{n})+2\nu  &  \ddot{h}(y_{n},\a_{n}) x_{n}' & O \\
				\ddot{h}(y_{1},\a_{1}) x_{1}' & \ddot{h}(y_{2},\a_{2}) x_{2}' & \dots & \ddot{h}(y_{n},\a_{n}) x_{n}' & \left( \frac{1}{n}\sum_{s=1}^{n}\ddot{h}(y_{s},\a_{s}) (x_{s}')^{2} \right) + 2\nu   & O \\
				O&O&\dots &O &O & 2\xi \I_{pn} 
			\end{bmatrix},
		\end{align}
		where we denoted $\ddot{h}=\ddot{h}_{11}\in \R$ and $x'_{s}=\x_{s}'\in \R$ for $s=1,\dots,n$. In order to compute the eigenvalues of the above matrix, we will use the following formula for determinant of $3\times 3$ block matrix: ($O$ representing matrices of zero entries with appropriate sizes)
		\begin{align}
			\det\left( \begin{bmatrix} 
				A & B & O \\
				B^{T} & C & O \\ 
				O & O & D
			\end{bmatrix} \right) 
			= \det\left( C - B^{T}A^{-1}B \right) \det(A) \det(D).
		\end{align}
		This yields the following simple formula for the characteristic polynomial of $\H_{\textup{feat}}$:
		\begin{align}
			\det( \H_{\textup{feat}} - \lambda \I) &= \left(  \sum_{s=1}^{n} \ddot{h}(y_{s},\a_{s}) (x_{s}')^{2}  -  \sum_{s=1}^{n} \frac{(\ddot{h}(y_{s},\a_{s}))^{2} (x_{s}')^{2}}{\ddot{h}(y_{s},\a_{s})+2\nu } + 2\nu  - \lambda \right) (2\xi - \lambda) \prod_{s=1}^{n} \left( \ddot{h}(\y_{s},\a_{s}) + 2\nu- \lambda\right)\\
			&=\left( \sum_{s=1}^{n}  \frac{2\nu  \ddot{h}(y_{s},\a_{s}) (x_{s}')^{2}}{ \ddot{h}(y_{s},\a_{s})+2\nu } + 2\nu - \lambda \right) (2\xi - \lambda)^{pn} \prod_{s=1}^{n} \left( \ddot{h}(\y_{s},\a_{s}) + 2\nu- \lambda\right).
		\end{align}
		By \ref{assumption:A4}, we know that $\ddot{h}(y_{s},\a_{s})>0$ for all $s=1,\dots,n$, so the first term in the parenthesis in the above display is lower bounded by $2\nu-\lambda$. It follows that 
		\begin{align}
			\lambda_{\min}(\H_{\textup{feat}}) & \ge \min(2\xi, \alpha^{-}+2\nu), \\
			\lambda_{\max}(\H_{\textup{feat}}) &\le \max\left( 2\nu + \alpha^{+}\sum_{s=1}^{n} (x_{s}')^{2} ,\, 2\xi,\, \alpha^{+} + 2\nu\right).
		\end{align}
		
		Now we generalize the above computation for general $\kappa,q\ge 1$ case. First note the general form of the Hessian as below:
		{\small
			\begin{align}
				&\H_{\textup{feat}}:=\nabla_{\vect(\A,\bGamma,\B)} \nabla_{\vect(\A,\bGamma,\B)^{T}} f(\A,\B,\bGamma) \\
				&\quad =
				\begin{bmatrix}
					\ddot{\H}(y_{1},\a_{1})+2\nu  \I_{\kappa} & 0  & \dots & 0 & (\ddot{\H}(y_{1},\a_{1})\otimes \x_{1}')^{T} & O \\
					0 & \ddot{\H}(y_{2},\a_{2})+2\nu  \I_{\kappa}  &\dots & 0 & (\ddot{\H}(y_{2},\a_{2})\otimes \x_{2}')^{T} & O \\
					\vdots & \vdots & \ddots & \vdots & \vdots & \vdots \\
					0 & \dots & 0 &  \ddot{\H}(y_{n},\a_{n})+2\nu  \I_{\kappa} & (\ddot{\H}(y_{n},\a_{n})\otimes \x_{n}')^{T} & O \\
					\ddot{\H}(y_{1},\a_{1})\otimes \x_{1}' & \ddot{\H}(y_{2},\a_{2})\otimes \x_{2}' & \dots & \ddot{\H}(y_{n},\a_{n})\otimes \x_{n}' & \begin{matrix}  \sum_{s=1}^{n}\ddot{\H}(y_{s},\a_{s}) \otimes \x_{s}'(\x_{s})^{T}  \\  + 2\nu  \I_{q\kappa}   \end{matrix} & O \\
					O&O&\dots &O &O & 2\xi  \I_{pn} 
				\end{bmatrix}.
			\end{align}
		}
		Note that for any square symmetric matrix $B$ and a column vector $\x$ of matching size, 
		\begin{align}
			B\otimes \x \x^{T} - (B\otimes \x)^{T}(B+\lambda \I)^{-1} (B\otimes \x) &= \left( B-B(B+\lambda \I)^{-1} B \right) \otimes (\x\x^{T})  \\
			&= (B+\lambda I)^{-1} B \otimes \x\x^{T} \\
			& \preceq \I  \otimes \x\x^{T},
		\end{align}
		where the last diagonal dominace is due to the Woodbury identity for matrix inverse (e.g., see \cite{horn2012matrix}). Hence by a similar computation as before, we obtain
		\begin{align}
			\det( n\H_{\textup{feat}} - \lambda \I) 
			& = \det\left( \sum_{s=1}^{n} 2\nu \left( \ddot{\H}(y_{s},\a_{s}) + 2\nu  \I_{\kappa} \right)^{-1} \ddot{\H}(y_{s},\a_{s}) \otimes \x_{s}'(\x_{s}')^{T}  + (2\nu -\lambda) \I_{q\kappa}  \right) (2\xi n - \lambda)^{pn} \\
			&\hspace{5cm} \times \prod_{s=1}^{n} \det\left( \ddot{\H}(\y_{s},\a_{s}) + (2\nu - \lambda)\I_{\kappa} \right).
		\end{align}
		It follows that 
		\begin{align}
			\lambda_{\min}(\H_{\textup{feat}}) & \ge \min(2\xi,  \alpha^{-}+2\nu), \\
			\lambda_{\max}(\H_{\textup{feat}}) &\le \max\left(2\nu +\alpha^{+} n \lambda_{\max}\left( n^{-1}\X_{\textup{aux}} \X_{\textup{aux}}^{T}  \right),\, 2\xi,\,  \alpha^{+} + 2\nu\right).
		\end{align}
		Then the rest follows from Theorem \ref{thm:SDL_LPGD_feat}.
	\end{proof}

	\section{Proof of Theorem \ref{thm:SDL_BCD}}
	
	In this section, we prove Theorem \ref{thm:SDL_BCD} only for the case of filter-based SDL in \eqref{eq:ASDL_1}. An almost identical argument will show the assertion for the feature-based case. 
	
	Recall the filter-based SDL loss function $f(\A,\B,\bGamma)$ in \eqref{eq:f_SDL_pf0} in terms of the combined variables $[\A,\B,\bGamma]$, where $\A=\W\Beta$ and $\B=\W\H$. For convenience, recall that $\A\in \R^{\kappa\times p}$, $\W\in \R^{p\times r}$, $\Beta\in \R^{r\times \kappa}$, $\H\in \R^{r\times n}$, and $\bGamma\in \R^{q\times n}$.  In the following computations, we will use \textit{commutation matrix} $\C^{(a\times b)}$, which is a special instance of $ab\times ab$ permutation matrix. Namely, for each integers $a,b\ge 1$, there exists a unique matrix $\C^{(a\times b)}\in \{0,1\}^{ab\times ab}$ such that for all $A\in \R^{a\times b}$, we have $\C^{(a,b)}\vect(A) = \vect(A^{T})$. Note that $(\C^{(a,b)})^{T}=\C^{b\times a}$. Furthermore, $(\C^{(a,b)})^{T}\C^{(a,b)}=\I_{ab}$ since $(\C^{(a,b)})^{T}\C^{(a,b)}\vect(A)=\C^{(b,a)}\vect(A^{T})=\vect(A)$. Hence $\C^{(a,b)}$ is positive semi-definite. Throughout this section, we denote $\bZ=[\W,\H,\Beta,\bGamma]$ for the combined SDL parameters. 
	
	\begin{lemma}[Derivatives of the filter-based SDL objective in separate variables]
		\label{lem:SDL_filt_BCD_derivatives}
		Let $L(\bZ)$ denote the objective of the filter-based SDL in \eqref{eq:ASDL_1}. Suppose \ref{assumption:A4} holds. Recall $\dot{\h}$ and $\ddot{\H}$ defined in \eqref{eq:Hddot_def}. Let $\a_{s}:=\Beta^{T}\W^{T}\x_{s}+\bGamma^{T}\x'_{s}$ for $s=1,\dots,n$ and $\K:=[\dot{\h}(y_{1},\a_{1}),\dots, \dot{\h}(y_{1},\a_{1})]\in \R^{\kappa\times n}$. Then we have
		\begin{align}\label{eq:SDL_filt_BCD_derivatives1}
			\nabla_{\W} \,  L(\bZ) &= \X_{\textup{data}} \K^{T} \Beta^{T} +  2\xi(\W\H-\X_{\textup{data}})\H^{T},\qquad
			\nabla_{\Beta} \, L(\bZ)  = \W^{T} \X_{\textup{data}}  \K^{T} \\
			\nabla_{\bGamma} \, L(\bZ) &=  \X_{\textup{aux}}\K^{T}, \qquad  \nabla_{\H} \, L(\bZ) =  2\xi \W^{T}(\W\H-\X_{\textup{data}}).
		\end{align}
		Furthermore, for diagonal terms in the Hessian, we have 
		\begin{align}
			\nabla_{\vect(\W)}\nabla_{\vect(\W)^{T}} \, L(\bZ) 
			&= (\Beta \otimes \X_{\textup{data}}) \diag\left( \ddot{\H}(y_{1},\a_{1}),\dots,\ddot{\H}(y_{n},\a_{n})\right) \C^{(n,\kappa)} (\Beta \otimes \X_{\textup{data}})^{T} +2\xi (\H\H^{T} \otimes \I_{p} ), \\
			\nabla_{\vect(\H)}\nabla_{\vect(\H)^{T}} \, L(\bZ) & = 2\xi (\I_{n}\otimes \W^{T}\W), \\
			\nabla_{\vect(\Beta)}\nabla_{\vect(\Beta)^{T}} \, L(\bZ) 
			&=(\I_{\kappa}\otimes \W^{T}\X_{\textup{data}}) \diag\left( \ddot{\H}(y_{1},\a_{1}),\dots,\ddot{\H}(y_{n},\a_{n})\right)   (\I_{\kappa}\otimes \W^{T}\X_{\textup{data}})^{T}, \\
			\nabla_{\vect(\bGamma)}\nabla_{\vect(\bGamma)^{T}} \, L(\bZ)
			&= (\I_{r}\otimes \X_{\textup{aux}}) \diag\left( \ddot{\H}(y_{1},\a_{1}),\dots,\ddot{\H}(y_{n},\a_{n})\right)  (\I_{r}\otimes \X_{\textup{aux}})^{T}.
		\end{align}
		Lastly, for the off-diagonal terms in the Hessian, we have 
		\begin{align}
			\nabla_{\vect(\Beta)}\nabla_{\vect(\W)^{T}} \, L(\bZ) 
			&= \C^{(\kappa,  n)} (\I_{\kappa}\otimes \W^{T}\X_{\textup{data}})\diag\left( \ddot{\H}(y_{1},\a_{1}),\dots,\ddot{\H}(y_{n},\a_{n})\right)    (\Beta \otimes \X_{\textup{data}})^{T}  +  \C^{(\kappa, r)} (\I_{r}\otimes \X_{\textup{data}} \K^{T} )^{T}, \\
			\nabla_{\vect(\bGamma)}\nabla_{\vect(\W)^{T}} \, L(\bZ) &= O,\\
			\nabla_{\vect(\H)}\nabla_{\vect(\W)^{T}} \, L(\bZ) &= 2\xi \left[ (\H^{T}\otimes \W^{T}) + (\I_{r}\otimes \H^{T}\W^{T})   -  \C^{(n\times r)} (\I_{r}\otimes \X_{\textup{data}})^{T} \right],\\
			\nabla_{\vect(\Beta)}\nabla_{\vect(\H)^{T}} \, L(\bZ) &= \nabla_{\vect(\bGamma)}\nabla_{\vect(\H)^{T}} \, L(\bZ) =  O,\\
			\nabla_{\vect(\bGamma)}\nabla_{\vect(\Beta)^{T}} \, L(\bZ) &= (\I_{r}\otimes \X_{\textup{aux}}) \diag\left( \ddot{\H}(y_{1},\a_{1}),\dots,\ddot{\H}(y_{n},\a_{n})\right)  \C^{(n,\kappa)} (\I_{\kappa}\otimes \W^{T}\X_{\textup{data}})^{T}.
		\end{align}
	\end{lemma}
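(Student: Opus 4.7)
The plan is to decompose the objective as
\begin{align}
L(\bZ) = \sum_{s=1}^{n} \ell_{0}(y_{s}, \a_{s}) + \xi \lVert \X_{\textup{data}} - \W\H \rVert_{F}^{2},
\end{align}
where $\a_{s} = \Beta^{T}\W^{T}\x_{s} + \bGamma^{T}\x'_{s}$ and $\ell_{0}(y,\a) = \log(1+\sum_{c} h(a_{c})) - \sum_{j}\one(y=j)\log h(a_{j})$. The two key scalar-level ingredients, already recorded in the preamble to Section~4, are $\nabla_{\a}\ell_{0}(y,\a) = \dot{\h}(y,\a)$ and $\nabla_{\a}\nabla_{\a^{T}}\ell_{0}(y,\a) = \ddot{\H}(y,\a)$. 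Because $\a_{s}$ is linear in each of $\W$, $\Beta$, $\bGamma$ separately (with the other two held fixed) and independent of $\H$, every derivative reduces to a chain rule with an easily readable Jacobian.

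For the gradients I would first compute entrywise. Since $\partial(\a_{s})_{k}/\partial \W_{ij} = \Beta_{jk}(\x_{s})_{i}$, $\partial(\a_{s})_{k}/\partial \Beta_{ij} = \one(k=j)(\W^{T}\x_{s})_{i}$, and $\partial(\a_{s})_{k}/\partial \bGamma_{ij} = \one(k=j)(\x'_{s})_{i}$, summing the chain rule $\sum_{s,k}\dot{h}_{k}(y_{s},\a_{s})\,\partial(\a_{s})_{k}/\partial(\cdot)$ and collecting into matrix products with $\K$ immediately yields the four displayed gradient formulas; the contributions from $\xi\lVert \X_{\textup{data}}-\W\H\rVert_{F}^{2}$ are standard matrix calculus.

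For the Hessian I would $\vect$-differentiate each gradient once more, relying on $\vect(AXB) = (B^{T}\otimes A)\vect(X)$, on the commutation identity $\vect(X^{T}) = \C^{(a,b)}\vect(X)$ for $X\in\R^{a\times b}$, and on the fact that the per-sample $\ddot{\H}(y_{s},\a_{s})$ factors assemble into the block-diagonal operator $\diag(\ddot{\H}(y_{1},\a_{1}),\dots,\ddot{\H}(y_{n},\a_{n}))$ once the $s$-indexed Jacobians are stacked. The four diagonal Hessian blocks then follow mechanically, with the $\vect(\W)$-block needing one commutation $\C^{(n,\kappa)}$ because $\a_{s}$ depends on $\W$ through $\vect(\W^{T})$ rather than $\vect(\W)$; the $\vect(\H)$-block comes purely from the matrix-factorization term and equals $2\xi(\I_{n}\otimes \W^{T}\W)$. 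The blocks that vanish ($\vect(\bGamma)$-$\vect(\W)$, $\vect(\Beta)$-$\vect(\H)$, $\vect(\bGamma)$-$\vect(\H)$) do so because the corresponding mixed partials $\partial^{2}(\a_{s})_{k}/\partial\cdot\partial\cdot$ are zero: $\H$ does not enter $\a_{s}$ at all, and $\bGamma$ and $\W$ appear in decoupled additive pieces of $\a_{s}$.

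The main obstacle, and the bookkeeping core of the proof, is the $\vect(\Beta)$-$\vect(\W)$ off-diagonal block. Here $\nabla_{\W}L$ depends on $\Beta$ in two distinct ways: through the explicit factor $\Beta^{T}$ in $\X_{\textup{data}}\K^{T}\Beta^{T}$, which produces a product-rule contribution that assembles into $\C^{(\kappa,r)}(\I_{r}\otimes \X_{\textup{data}}\K^{T})^{T}$ after applying the two vectorization identities above; and through $\K$ itself, where each $\dot{\h}(y_{s},\a_{s})$ depends on $\Beta$ via $\a_{s}=\Beta^{T}\W^{T}\x_{s}+\cdots$, contributing via $\ddot{\H}$ the second, Kronecker-structured term. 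Tracking which commutation matrix is needed (and in which order) to reconcile $\vect(\W^{T})$ with $\vect(\W)$ and $\vect(\Beta^{T})$ with $\vect(\Beta)$ is the delicate part. The $\vect(\H)$-$\vect(\W)$ block, coming entirely from differentiating $2\xi(\W\H-\X_{\textup{data}})\H^{T}$, is obtained by treating the two $\H$-occurrences separately and applying the same vectorization template, producing the three summands in the stated formula. The remaining $\vect(\bGamma)$-$\vect(\Beta)$ block is identical in shape to the diagonal $\Beta$-block with $\W^{T}\X_{\textup{data}}$ on one side replaced by $\X_{\textup{aux}}$, and requires the commutation $\C^{(n,\kappa)}$ since $\bGamma$ and $\Beta$ enter $\a_{s}$ through transposes in different orientations.
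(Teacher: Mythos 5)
Your proposal is correct and follows essentially the same route as the paper's proof: a chain rule through the activation $\a_{s}$ using $\nabla_{\a}\ell_{0}=\dot{\h}$ and $\nabla_{\a}\nabla_{\a^{T}}\ell_{0}=\ddot{\H}$ for the gradients, then $\vect$-differentiation with the identities $\vect(AXB)=(B^{T}\otimes A)\vect(X)$ and $\vect(X^{T})=\C^{(a,b)}\vect(X)$ for the Hessian blocks, including the two-term product rule in the $\vect(\Beta)$--$\vect(\W)$ block and the separate treatment of the two occurrences of $\H$ in the reconstruction term. The only cosmetic difference is that the paper organizes the gradient step by first chain-ruling through the combined variables $\A[:,j]=\W\Beta[:,j]$ from \eqref{eq:SDL_filt_gradients}, whereas you differentiate $\a_{s}$ entrywise directly; these are the same computation.
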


	\begin{proof}
		Setting $\nu=0$ in \eqref{eq:f_SDL_pf0}, we have $L(\bZ)=f(\A,\B,\bGamma)$. Recall the gradients of $f(\A,\B,\bGamma)$ in \eqref{eq:SDL_filt_gradients}. By using the chain rule and noting that $\A[:,j]=\W \Beta[:,j]$, we can compute
		\begin{align}
			\nabla_{\W} f(\A,\B,\bGamma) &= \left( \sum_{s=1}^{n} \sum_{j=1}^{\kappa} \frac{\partial f(\A,\B,\bGamma)}{\partial \A[:,j]} \frac{\partial \A[:,j]}{\partial \W}\right) + 2\xi(\W\H-\X_{\textup{data}})\H^{T}  \\
			&= \sum_{s=1}^{n}\sum_{j=1}^{\kappa} \x_{s} \dot{h}_{j}(y_{s},\a_{s}) \Beta[:,j]^{T} + 2\xi(\W\H-\X_{\textup{data}})\H^{T} \\
			&= \X_{\textup{data}} \K^{T} \Beta^{T} +  2\xi(\W\H-\X_{\textup{data}})\H^{T},
		\end{align}
		By a similar computation, we can also compute, for each $j=1,\dots,\kappa$,
		\begin{align}
			\nabla_{\Beta[:,j]} L(\bZ) &= \sum_{s=1}^{n}  \frac{\partial \A[:,j]}{\partial \Beta[:,j]}\frac{\partial f(\A,\B,\bGamma)}{\partial \A[:,j]}  = \sum_{s=1}^{n} \W^{T}  \dot{h}_{j}(y_{s},\a_{s}) \x_{s}, 
		\end{align}
		so we get $\nabla_{\Beta} f(\A,\B,\bGamma)  = \W^{T} \X_{\textup{data}}  \K^{T}$. A similar computation shows the remaining two gradients.

		Next, recall the relations for vectorizing product of matrices: for $A\in \R^{a\times b}$, $B\in \R^{b\times c}$, and $C\in \R^{c\times d}$, 
		\begin{align}
			\vect(AB) &= (\I_{c}\otimes A) \vect(B) = (B^{T} \otimes \I_{a}) \vect(A), \\
			\vect(ABC) &= (C^{T}\otimes A)\vect(B) = (\I_{d}\otimes AB) \vect(C) = (C^{T}B^{T}\otimes \I_{a}) \vect(A).
		\end{align}
		From this the previous calculation yields 
		\begin{align}
			\nabla_{\vect(\W)} f(\A,\B,\bGamma) 
			&= \vect\left( \X_{\textup{data}} \K^{T} \Beta^{T} \right) +  2\xi \vect(\W\H\H^{T}) - 2\xi \vect(\X_{\textup{data}} \H^{T}) \\
			&= (\Beta \otimes \X_{\textup{data}}) \vect(\K^{T}) + 2\xi (\H\H^{T} \otimes \I_{p} ) \vect(\W)  - 2\xi \vect(\X_{\textup{data}} \H^{T}).
		\end{align}
		Note that $\vect(\K^{T})^{T}=(\C^{(\kappa,n)} \vect(\K))^{T}=\vect(\K)^{T} \C^{(n,\kappa)}$. Hence we get 
		\begin{align}
			\nabla_{\vect(\W)}\nabla_{\vect(\W)^{T}} f(\A,\B,\bGamma) 
			&= \nabla_{\vect(\W)}\left( \vect(\K)^{T} \C^{(n,\kappa)} (\Beta \otimes \X_{\textup{data}})^{T}  + 2\xi  \vect(\W)^{T}  (\H\H^{T} \otimes \I_{p} ) - 2\xi \vect(\X_{\textup{data}} \H^{T})^{T} \right)\\
			&= (\Beta \otimes \X_{\textup{data}}) \diag\left( \ddot{\H}(y_{1},\a_{1}),\dots,\ddot{\H}(y_{n},\a_{n})\right) \C^{(n,\kappa)} (\Beta \otimes \X_{\textup{data}})^{T} +2\xi (\H\H^{T} \otimes \I_{p} ).
		\end{align}
		Similarly, we can compute 
		\begin{align}
			\nabla_{\vect(\Beta)}\nabla_{\vect(\Beta)^{T}} f(\A,\B,\bGamma) &=\nabla_{\vect(\Beta)} \vect(\W^{T} \X_{\textup{data}}  \K^{T})^{T}\\
			&= \nabla_{\vect(\Beta)} \vect(\K)^{T} (\I_{\kappa}\otimes \W^{T}\X_{\textup{data}})^{T} \\
			&=(\I_{\kappa}\otimes \W^{T}\X_{\textup{data}}) \diag\left( \ddot{\H}(y_{1},\a_{1}),\dots,\ddot{\H}(y_{n},\a_{n})\right)   (\I_{\kappa}\otimes \W^{T}\X_{\textup{data}})^{T}.
		\end{align}
		Also note that 
		\begin{align}
			\nabla_{\vect(\bGamma)}\nabla_{\vect(\bGamma)^{T}} f(\A,\B,\bGamma) &= \nabla_{\vect(\bGamma)} \vect(\X_{\textup{aux}} \K^{T})^{T} \\
			&= \nabla_{\vect(\bGamma)} \vect(\K)^{T} (\I_{r}\otimes \X_{\textup{aux}})^{T} \\
			&= (\I_{r}\otimes \X_{\textup{aux}}) \diag\left( \ddot{\H}(y_{1},\a_{1}),\dots,\ddot{\H}(y_{n},\a_{n})\right)  (\I_{r}\otimes \X_{\textup{aux}})^{T}.
		\end{align}
		Similarly, we get 
		\begin{align}
			\nabla_{\vect(\H)}\nabla_{\vect(\H)^{T}} f(\A,\B,\bGamma) &= \nabla_{\vect(\H)} \left( 2\xi \vect(\W^{T}\W\H)^{T} - 2\xi \vect(\W^{T}\X_{\textup{data}})^{T}\right) \\
			&= 2\xi \nabla_{\vect(\H)} \vect(\H)^{T} (\I_{n}\otimes \W^{T}\W)  = (\I_{n}\otimes \W^{T}\W).
		\end{align}
		
		Next, we compute the off-diagonal terms in the Hessian of $f$. First, we compute 
		\begin{align}
			\nabla_{\vect(\Beta)}\nabla_{\vect(\W)^{T}} f(\A,\B,\bGamma) &= \nabla_{\vect(\Beta)} \vect(\X_{\textup{data}} \K^{T} \Beta^{T})^{T} \\
			&= \left( \frac{\partial}{\partial \vect(\Beta) } \vect(\Beta)^{T}\C^{(\kappa, r)} \right) (\I_{r}\otimes \X_{\textup{data}} \K^{T} )^{T} + \left( \frac{\partial}{\partial \vect(\Beta)}  \C^{(\kappa\times n)} \vect(\K^{T}) \right)  (\Beta \otimes \X_{\textup{data}})^{T}   \\
			&= \C^{(\kappa, r)} (\I_{r}\otimes \X_{\textup{data}} \K^{T} )^{T} + \\
			&\qquad + \C^{(\kappa, n)} (\I_{\kappa}\otimes \W^{T}\X_{\textup{data}})\diag\left( \ddot{\H}(y_{1},\a_{1}),\dots,\ddot{\H}(y_{n},\a_{n})\right)  (\Beta \otimes \X_{\textup{data}})^{T}.
		\end{align}
		Second, note that $\nabla_{\vect(\bGamma)}\nabla_{\vect(\W)^{T}} f(\A,\B,\bGamma)=O$. Third, for the forthcoming computation, we claim that 
		\begin{align}
			\nabla_{\vect(\H)} \vect(\H\H^{T})^{T} = (\H^{T}\otimes \I_{r}) + (\I_{r}\otimes \H^{T}).
		\end{align}
		One can directly verify the above when $\H$ consists of a single column, and the general case can be easily obtained from there. Also note that using the commutation matrix, we can write $\vect(\H^{T})^{T}=(\C^{(r,n)} \vect(\H))^{T}=\vect(\H)^{T} \C^{(n,r)}$. Now observe that 
		\begin{align}
			\nabla_{\vect(\H)}\nabla_{\vect(\W)^{T}} f(\A,\B,\bGamma) &= 2\xi \nabla_{\vect(\H)} \left[ \vect(\W\H\H^{T}) - \vect(\X_{\textup{data}} \H^{T}) \right]^{T} \\
			&= 2\xi \nabla_{\vect(\H)} \left[ \vect(\H\H^{T})^{T} (\I_{r}\otimes \W)^{T}  -  \vect(\H^{T})^{T}(\I_{r}\otimes \X_{\textup{data}})^{T}  \right] \\
			&=   2\xi \left( \nabla_{\vect(\H)}  \vect(\H\H^{T})^{T} \right) (\I_{r}\otimes \W)^{T}  -  \left( \nabla_{\vect(\H)}  \vect(\H^{T})^{T} \right) (\I_{r}\otimes \X_{\textup{data}})^{T} \\
			&=  2\xi \left[ \left( (\H^{T}\otimes \I_{r}) + (\I_{r}\otimes \H^{T}) \right) (\I_{r}\otimes \W)^{T}  -  \C^{(n\times r)} (\I_{r}\otimes \X_{\textup{data}})^{T} \right].
		\end{align}
		Then one can use the mixed-product property to further simplify the last expression as in the assertion. Fourth, noting that $\vect(\K^{T})^{T}=(\C^{(\kappa,n)} \vect(\K))^{T}=\vect(\K)^{T} \C^{(n,\kappa)}$, we can compute  
		\begin{align}
			\nabla_{\vect(\bGamma)}\nabla_{\vect(\Beta)^{T}} f(\A,\B,\bGamma) &= \nabla_{\vect(\bGamma)} \vect(\W^{T} \X_{\textup{data}}  \K^{T})^{T} \\
			&= \nabla_{\vect(\bGamma)} \vect(\K^{T})^{T} (\I_{\kappa}\otimes \W^{T}\X_{\textup{data}})^{T}\\
			&= \nabla_{\vect(\bGamma)} \vect(\K)  \C^{(n,\kappa)} (\I_{\kappa}\otimes \W^{T}\X_{\textup{data}})^{T} \\
			&= (\I_{r}\otimes \X_{\textup{aux}}) \diag\left( \ddot{\H}(y_{1},\a_{1}),\dots,\ddot{\H}(y_{n},\a_{n})\right)  \C^{(n,\kappa)} (\I_{\kappa}\otimes \W^{T}\X_{\textup{data}})^{T}.
		\end{align}
		The remaining zero-second derivatives are easy to see. 
	\end{proof}

	\begin{remark}[Derivatives for the feature-based SDL objective in separate variables]
		\label{rmk:feat_SDL_gradient}
		Arguing similarly as in the proof of Lemma \ref{lem:SDL_filt_BCD_derivatives}, we can compute the derivatives of the feature-based SDL objective in separate variables as follows. Let $L(\bZ)$ denote the objective of the feature-based SDL in \eqref{eq:ASDL_1}. Suppose \ref{assumption:A4} holds. Recall $\dot{\h}$ defined in \eqref{eq:Hddot_def}. Let $\a_{s}:=\Beta^{T}\h_{s}+\bGamma^{T}\x'_{s}$ for $s=1,\dots,n$, where $\H=[\h_{1},\dots,\h_{n}]\in \R^{r\times n}$ being the code matrix. Let $\K:=[\dot{\h}(y_{1},\a_{1}),\dots, \dot{\h}(y_{1},\a_{1})]\in \R^{\kappa\times n}$. Then we 
		\begin{align}\label{eq:SDL_feat_BCD_derivatives1}
			\nabla_{\W} \,  L(\bZ) &= 2\xi(\W\H-\X_{\textup{data}})\H^{T},\qquad
			\nabla_{\Beta} \, L(\bZ)  = \H  \K^{T} \\
			\nabla_{\bGamma} \, L(\bZ) &=  \X_{\textup{aux}}\K^{T}, \qquad  \nabla_{\H} \, L(\bZ) =  \Beta \K +   2\xi \W^{T}(\W\H-\X_{\textup{data}}).
		\end{align}
	\end{remark}

	\begin{lemma}\label{lem:SDL_BCD_smooth}
		\label{lem:BCD_hypothesis}
		Assume the hypothesis of Theorem \ref{thm:SDL_BCD} is true. Then the loss function $L$ in \eqref{eq:ASDL_1} is convex in each block coordinates $\W$, $\H$, $\Beta$, and $\bGamma$. Furthermore, its gradient is continuous and $M$-Lipschitz for some $M>0$ on the admissible parameter space. 
	\end{lemma}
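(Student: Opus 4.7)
My approach avoids expanding the block Hessians in Kronecker form and instead exploits the \emph{affine structure} of the activations in each block. Under \ref{assumption:A4}, the scalar classification loss $\a \mapsto \ell_{0}(y,\a):=-\sum_{j}\mathbf{1}(y=j)\log g_{j}(\a)$ has Hessian $\ddot{\H}(y,\a)$ satisfying $\alpha^{-}\I \preceq \ddot{\H}(y,\a) \preceq \alpha^{+}\I$ with $\alpha^{-}\ge 0$, so $\ell_{0}(y,\cdot)$ is convex in $\a$.

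\textbf{Step 1 (block multi-convexity).} I will verify, for each of the filter-based and feature-based models in \eqref{eq:ASDL_1} and each of the four blocks $\{\W,\H,\Beta,\bGamma\}$, that after fixing the other three blocks the activation $\a_{s}$ is either constant or an affine function of the free block: in the filter-based case $\a_{s}=\Beta^{T}\W^{T}\x_{s}+\bGamma^{T}\x'_{s}$ is affine in $\W,\Beta,\bGamma$ and independent of $\H$; in the feature-based case $\a_{s}=\Beta^{T}\h_{s}+\bGamma^{T}\x'_{s}$ is affine in $\H,\Beta,\bGamma$ and independent of $\W$. Since the composition of the convex map $\ell_{0}(y,\cdot)$ with an affine map is convex, the classification contribution to $L$ is convex in each block. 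The reconstruction term $\xi\lVert\X_{\textup{data}}-\W\H\rVert_{F}^{2}$ is convex separately in $\W$ and in $\H$ (and does not depend on $\Beta,\bGamma$). Summing convex maps yields block convexity of $L$.

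\textbf{Step 2 (Lipschitz gradient).} The admissible set $\Param:=\mathcal{C}^{\textup{dict}}\times \mathcal{C}^{\textup{code}}\times \mathcal{C}^{\textup{beta}}\times \mathcal{C}^{\textup{aux}}$ is convex and compact by hypothesis. The gradient formulas \eqref{eq:SDL_filt_BCD_derivatives1} and \eqref{eq:SDL_feat_BCD_derivatives1} depend polynomially on the block variables and, through the activations, on the maps $\a \mapsto \dot{\h}(y,\a)$ and $\a\mapsto \ddot{\H}(y,\a)$; under \ref{assumption:A4} the score function $h$ is twice continuously differentiable, so both $\dot{\h}$ and $\ddot{\H}$ are continuous. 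Hence $\nabla L$ is continuous on $\Param$. Since $\Param$ is compact and $\x_{i},\x'_{i}$ are fixed data, the activations $\a_{s}(\bZ)$ range over a bounded subset of $\R^{\kappa}$ as $\bZ$ varies in $\Param$. Consequently every entry of the Hessian blocks computed in Lemma~\ref{lem:SDL_filt_BCD_derivatives} (and its feature-based analog via Remark~\ref{rmk:feat_SDL_gradient}) is continuous in $\bZ$, so the operator norm $\lVert \nabla^{2} L(\bZ) \rVert_{2}$ attains a finite supremum $M$ on the compact set $\Param$. Applying the mean value inequality along the line segment joining any two points of the convex set $\Param$ yields $\lVert \nabla L(\bZ_{1})-\nabla L(\bZ_{2}) \rVert_{F}\le M \lVert \bZ_{1}-\bZ_{2} \rVert_{F}$ for all $\bZ_{1},\bZ_{2}\in \Param$.

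\textbf{Main obstacle.} No deep obstacle arises once multi-convexity is routed through the affine-composition argument rather than through direct PSD verification of the block Hessians (the presence of commutation matrices and Kronecker factors in the diagonal blocks, e.g., $(\Beta\otimes \X_{\textup{data}})\diag(\ddot{\H}(\cdot))\C^{(n,\kappa)}(\Beta\otimes \X_{\textup{data}})^{T}$, makes direct PSD checks cumbersome). The only care required is ensuring that the off-diagonal Hessian blocks remain bounded uniformly on $\Param$, which is immediate from continuity and compactness, and this is exactly what guarantees the finiteness of $M$ in Step 2.
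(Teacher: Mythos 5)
Your proposal is correct, and for the multiconvexity part it takes a genuinely different route from the paper. The paper argues by inspecting the explicit diagonal blocks of the Hessian computed in Lemma \ref{lem:SDL_filt_BCD_derivatives} and asserting they are positive semidefinite; you instead observe that in each block the activation $\a_{s}$ is affine (or constant), that $\ell_{0}(y,\cdot)$ is convex in $\a$ because $\ddot{\H}(y,\a)\succeq \alpha^{-}\I \succeq 0$ under \ref{assumption:A4}, and that convexity is preserved under affine precomposition and summation with the (block-wise quadratic) reconstruction term. Your route is more elementary and, importantly, more robust: the paper's direct PSD check must contend with expressions of the form $(\Beta\otimes\X_{\textup{data}})\diag(\ddot{\H}(\cdot))\C^{(n,\kappa)}(\Beta\otimes\X_{\textup{data}})^{T}$, where the sandwiched commutation matrix is merely orthogonal (a nontrivial permutation matrix is not positive semidefinite), so PSD-ness of that block is not immediate from the displayed formula; your argument certifies the same conclusion without touching this issue, and it also covers the feature-based case symmetrically rather than by analogy. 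What the paper's approach buys in exchange is quantitative information — explicit eigenvalue bounds on the block Hessians that are reused elsewhere — which your qualitative composition argument does not provide. One small point to make explicit: \ref{assumption:A4} only guarantees $\ddot{\H}(y,\a)\succeq \alpha^{-}\I$ for $\lVert\a\rVert<M$, so convexity of the composition on each block's constraint set requires that the activations induced by the compact sets $\mathcal{C}^{\textup{dict}},\dots,\mathcal{C}^{\textup{aux}}$ stay within that ball (the preimage of which under the affine activation map is convex); this is the same implicit calibration of $M$ the paper relies on, and is automatic for the multinomial logistic score $h=\exp$. Your Step 2 (bounded Hessian on a compact convex set plus the mean value inequality) matches the paper's argument in substance.
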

	
	\begin{proof}
		Notice that the diagonal terms in the Hessian given in  Lemma \ref{lem:SDL_filt_BCD_derivatives} are positive semidefinite. (An explicit lower bound on the eigenvalues can also be computed.) This is enough to conclude that $L$ is convex in each factor $\W$, $\H$, $\Beta$, and $\bGamma$ while all the other three are held fixed (i.e., $L$ is multiconvex).  The second part of the assertion follows easily from the first derivative computations in Lemma \ref{lem:SDL_filt_BCD_derivatives} and the compactness assumption in Theorem \ref{thm:SDL_BCD}. 
	\end{proof}
	
	Now we are ready to derive Theorem \ref{thm:SDL_BCD}.
	
	\begin{proof}[\textbf{Proof of Theorem \ref{thm:SDL_BCD}}]
		The result would immediately follow the main result in \cite{lyu2020convergence}. In order to apply the result, we need to verify that 1) the filter-based SDL loss function $L$ in \eqref{eq:ASDL_1} is multiconvex and 2) the gradient of $L$ is $M$-Lipschitz for some constant $M>0$. Under the assumption, \ref{assumption:A4} and the assumed compactness of the parameter space, both of these hypotheses are verified in Lemma \ref{lem:BCD_hypothesis}. This shows the assertion. 
	\end{proof}

	\section{Proof of Theorem \ref{thm:SDL_BCD_STAT_filt}}
	\label{section:appendix_SDL_BCD_STAT}
	
	Throughout this section, let $\mathcal{L}(\bZ)=\mathcal{L}_{n}(\W,\h,\Beta,\bGamma)$ denote the objective in \eqref{eq:SDL_likelihood_BCD_filter}. Denote 
	\begin{align}\label{eq:def_L_bar_STAT}
		\bar{\mathcal{L}}(\bZ) := \E_{(\x,\x',y)}\left[ \mathcal{L}_{1}(\bZ) \right].
	\end{align}
	
	\begin{lemma}[Derivatives of the filter-based SDL objective in separate variables]
		\label{lem:SDL_filt_BCD_STAT_derivatives}
		Let $\mathcal{L}(\bZ)=\mathcal{L}_{n}(\W,\h,\Beta,\bGamma)$ denote the objective in \eqref{eq:SDL_likelihood_BCD_filter}. Suppose \ref{assumption:A4} holds. Recall $\dot{\h}$ and $\ddot{\H}$ defined in \eqref{eq:Hddot_def}. Let $\a_{s}:=\Beta^{T}\W^{T}\x_{s}+\bGamma^{T}\x'_{s}$ for $s=1,\dots,n$ and $\K:=[\dot{\h}(y_{1},\a_{1}),\dots, \dot{\h}(y_{1},\a_{1})]\in \R^{\kappa\times n}$. Also denote $\H=[\h,\dots,\h]\in \R^{r\times n}$. Then we have
		\begin{align}
			\nabla_{\W} \,  \mathcal{L}(\bZ) &= \X_{\textup{data}} \K^{T} \Beta^{T} +  2\xi(\W\H-\X_{\textup{data}})\H^{T} + 2\nu\W,\qquad
			\nabla_{\Beta} \, \mathcal{L}(\bZ)  = \W^{T} \X_{\textup{data}}  \K^{T} + 2\nu \Beta, \\
			\nabla_{\bGamma} \, \mathcal{L}(\bZ) &=  \X_{\textup{aux}}\K^{T} + 2\nu \bGamma, \qquad  \nabla_{\h} \, \mathcal{L}(\bZ) =  2\xi \W^{T}(n\W\h-\sum_{s=1}^{n}\x_{s}) + 2\nu \h.
		\end{align}
		Furthermore, for diagonal terms in the Hessian, we have 
		\begin{align}
			\nabla_{\vect(\W)}\nabla_{\vect(\W)^{T}} \, \mathcal{L}(\bZ) 
			&= (\Beta \otimes \X_{\textup{data}}) \diag\left( \ddot{\H}(y_{1},\a_{1}),\dots,\ddot{\H}(y_{n},\a_{n})\right) \C^{(n,\kappa)} (\Beta \otimes \X_{\textup{data}})^{T} +2\xi (\H\H^{T} \otimes \I_{p} ) + 2\nu \I_{pr}, \\
			\nabla_{\h}\nabla_{\h^{T}} \, \mathcal{L}(\bZ) & = 2\xi n\W^{T}\W + 2\nu \I_{r}, \\
			\nabla_{\vect(\Beta)}\nabla_{\vect(\Beta)^{T}} \, \mathcal{L}(\bZ) 
			&=(\I_{\kappa}\otimes \W^{T}\X_{\textup{data}}) \diag\left( \ddot{\H}(y_{1},\a_{1}),\dots,\ddot{\H}(y_{n},\a_{n})\right)   (\I_{\kappa}\otimes \W^{T}\X_{\textup{data}})^{T} + 2\nu \I_{r\kappa}, \\
			\nabla_{\vect(\bGamma)}\nabla_{\vect(\bGamma)^{T}} \, \mathcal{L}(\bZ)
			&= (\I_{r}\otimes \X_{\textup{aux}}) \diag\left( \ddot{\H}(y_{1},\a_{1}),\dots,\ddot{\H}(y_{n},\a_{n})\right)  (\I_{r}\otimes \X_{\textup{aux}})^{T} +  2\nu \I_{q\kappa}
		\end{align}
		Lastly, for the off-diagonal terms in the Hessian, we have 
		\begin{align}
			\nabla_{\vect(\Beta)}\nabla_{\vect(\W)^{T}} \, \mathcal{L}(\bZ) 
			&=  (\I_{\kappa}\otimes \W^{T}\X_{\textup{data}})\diag\left( \ddot{\H}(y_{1},\a_{1}),\dots,\ddot{\H}(y_{n},\a_{n})\right)  (\Beta \otimes \X_{\textup{data}})^{T}  +  \C^{(\kappa, r)} (\I_{\r}\otimes \X_{\textup{data}} \K^{T} )^{T}, \\
			\nabla_{\vect(\bGamma)}\nabla_{\vect(\W)^{T}} \, \mathcal{L}(\bZ) &= O,\\
			\nabla_{\h}\nabla_{\vect(\W)^{T}} \, \mathcal{L}(\bZ) &= 2\xi \left[ n(\h^{T}\otimes \W^{T}) + n(\I_{r}\otimes \h^{T}\W^{T})   -  (\mathbf{1}_{1\times n}\otimes \I_{r}) \C^{(n\times r)} (\I_{r}\otimes \X_{\textup{data}})^{T} \right],\\
			\nabla_{\vect(\Beta)}\nabla_{\h^{T}} \, \mathcal{L}(\bZ) &= \nabla_{\vect(\bGamma)}\nabla_{\h^{T}} \, \mathcal{L}(\bZ) =  O,\\
			\nabla_{\vect(\bGamma)}\nabla_{\vect(\Beta)^{T}} \, \mathcal{L}(\bZ) &= (\I_{r}\otimes \X_{\textup{aux}}) \diag\left( \ddot{\H}(y_{1},\a_{1}),\dots,\ddot{\H}(y_{n},\a_{n})\right)  \C^{(n,\kappa)} (\I_{\kappa}\otimes \W^{T}\X_{\textup{data}})^{T}.
		\end{align}
	\end{lemma}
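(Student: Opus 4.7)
The plan is to reduce the computation to Lemma \ref{lem:SDL_filt_BCD_derivatives} by a change of variables followed by a straightforward accounting of the $L_{2}$-regularizer. Observe that if we set $\H=\h \mathbf{1}_{1\times n}\in\R^{r\times n}$ (so that every column of $\H$ equals $\h$), then the log-likelihood term in \eqref{eq:SDL_likelihood_BCD_filter} coincides with the un-regularized filter-based SDL loss $L(\W,\H,\Beta,\bGamma)$ in \eqref{eq:ASDL_1}, because the activation $\a_{s}=\Beta^{T}\W^{T}\x_{s}+\bGamma^{T}\x_{s}'$ does not involve $\h$ at all. Hence we may write
\begin{align}
\mathcal{L}(\bZ)=L\!\left(\W,\,\h\mathbf{1}_{1\times n},\,\Beta,\,\bGamma\right)+n\nu\bigl(\lVert\W\rVert_{F}^{2}+\lVert\h\rVert_{F}^{2}+\lVert\blambda\rVert_{F}^{2}+\lVert\bGamma\rVert_{F}^{2}\bigr),
\end{align}
up to terms independent of $(\W,\h,\Beta,\bGamma)$. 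The derivatives in $\W,\Beta,\bGamma$ are then obtained by substituting $\H=\h\mathbf{1}_{1\times n}$ into the formulas of Lemma \ref{lem:SDL_filt_BCD_derivatives} and adding the regularizer contributions $2\nu\W$, $2\nu\Beta$, $2\nu\bGamma$ to the gradients and $2\nu\I$ of the appropriate size to the diagonal Hessian blocks (I note that the stated $2\nu$ coefficients implicitly absorb the factor $n$ from the normalization of the regularizer, which I will track explicitly during the calculation).

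For the derivatives in the vector variable $\h$, the key input is the chain rule applied to the linear embedding $\Phi:\h\mapsto\H=\h\mathbf{1}_{1\times n}$. Using $\vect(\h\mathbf{1}_{1\times n})=\mathbf{1}_{n}\otimes\h$, the Jacobian is $\partial\vect(\H)/\partial\h=\mathbf{1}_{n}\otimes \I_{r}$, and consequently
\begin{align}
\nabla_{\h}F\bigl(\Phi(\h)\bigr)&=(\mathbf{1}_{n}^{T}\otimes\I_{r})\,\nabla_{\vect(\H)}F,\\
\nabla_{\h}\nabla_{\h^{T}}F\bigl(\Phi(\h)\bigr)&=(\mathbf{1}_{n}^{T}\otimes\I_{r})\,\nabla_{\vect(\H)}\nabla_{\vect(\H)^{T}}F\,(\mathbf{1}_{n}\otimes\I_{r}),\\
\nabla_{\vect(\V)}\nabla_{\h^{T}}F\bigl(\Phi(\h)\bigr)&=\nabla_{\vect(\V)}\nabla_{\vect(\H)^{T}}F\,(\mathbf{1}_{n}\otimes\I_{r}),
\end{align}
for any differentiable $F$ and any other block variable $\V$. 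Plugging in the expressions from Lemma \ref{lem:SDL_filt_BCD_derivatives}: the gradient $\nabla_{\H}L=2\xi\W^{T}(\W\H-\X_{\textup{data}})$ yields $\nabla_{\h}L=2\xi\W^{T}(\W\H-\X_{\textup{data}})\mathbf{1}_{n}=2\xi\W^{T}(n\W\h-\sum_{s}\x_{s})$ upon substituting $\H\mathbf{1}_{n}=n\h$; the pure Hessian in $\H$, $(\I_{n}\otimes\W^{T}\W)$, collapses under left and right multiplication by $(\mathbf{1}_{n}^{T}\otimes\I_{r})$ and $(\mathbf{1}_{n}\otimes\I_{r})$ to $\mathbf{1}_{n}^{T}\mathbf{1}_{n}\cdot\W^{T}\W=n\W^{T}\W$; and the vanishing off-diagonal terms $\nabla_{\vect(\Beta)}\nabla_{\vect(\H)^{T}}L=\nabla_{\vect(\bGamma)}\nabla_{\vect(\H)^{T}}L=O$ from Lemma \ref{lem:SDL_filt_BCD_derivatives} immediately give the claimed zeros for the corresponding blocks in the new lemma.

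The only delicate piece is the cross-derivative $\nabla_{\h}\nabla_{\vect(\W)^{T}}\mathcal{L}$, since the three-term expression for $\nabla_{\vect(\H)}\nabla_{\vect(\W)^{T}}L$ contains both Kronecker products that can be simplified by the mixed-product property and the commutation matrix $\mathbf{C}^{(n\times r)}$. Applying $(\mathbf{1}_{n}^{T}\otimes\I_{r})$ from the left, I would simplify the first two terms using $\mathbf{1}_{n}^{T}\H^{T}=n\h^{T}$ and $\H^{T}\W^{T}=\mathbf{1}_{n}\h^{T}\W^{T}$ (so that the factor of $n$ emerges from $\mathbf{1}_{n}^{T}\mathbf{1}_{n}$), and leave the third term in the form $(\mathbf{1}_{1\times n}\otimes\I_{r})\mathbf{C}^{(n\times r)}(\I_{r}\otimes\X_{\textup{data}})^{T}$ since no further simplification is available because the commutation matrix breaks the Kronecker tensor structure. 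This step is the main bookkeeping obstacle, but it is routine once the identity $\vect(\H)=\mathbf{1}_{n}\otimes\h$ is in hand. Finally, all remaining diagonal and off-diagonal blocks among $(\W,\Beta,\bGamma)$ are identical to those in Lemma \ref{lem:SDL_filt_BCD_derivatives} (with $\H=\h\mathbf{1}_{1\times n}$ substituted) plus the appropriate multiple of the identity coming from the regularizer, completing the proof.
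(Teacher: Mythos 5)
Your proposal takes the same route as the paper's own (very terse) proof: reduce everything to Lemma \ref{lem:SDL_filt_BCD_derivatives} via the linear embedding $\h\mapsto\H=\h\mathbf{1}_{1\times n}$ with Jacobian $\mathbf{1}_{n}\otimes\I_{r}$, and add the regularizer contributions. The paper records exactly the two identities $\nabla_{\h}\vect(\H)^{T}=\mathbf{1}_{1\times n}\otimes\I_{r}$ and $\nabla_{\h}\vect(\H\H^{T})^{T}=n(\h^{T}\otimes\I_{r})+n(\I_{r}\otimes\h^{T})$ and then defers to the earlier computation, so your plan is essentially the intended one. Two cautions. First, the $n\nu$ normalization of the regularizer in \eqref{eq:SDL_likelihood_BCD_filter} versus the bare $2\nu$ in the lemma's formulas is an inconsistency in the paper itself, which you rightly flag. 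Second, and more substantively: for the block $\nabla_{\h}\nabla_{\vect(\W)^{T}}\mathcal{L}$, left-multiplying the \emph{printed} formula for $\nabla_{\vect(\H)}\nabla_{\vect(\W)^{T}}L$ by $(\mathbf{1}_{n}^{T}\otimes\I_{r})$ does not produce $n(\I_{r}\otimes\h^{T}\W^{T})$ for the middle term; a direct check with $r=n=2$, $p=1$ gives a matrix proportional to $\mathbf{1}_{r\times r}$ rather than $n\I_{r}$. The reason is that the parent lemma's formula is missing a commutation matrix: the correct identity is $\nabla_{\vect(\H)}\vect(\H\H^{T})^{T}=(\H^{T}\otimes\I_{r})+\C^{(n,r)}(\I_{r}\otimes\H^{T})$, and only after restoring $\C^{(n,r)}$ does $(\mathbf{1}_{n}^{T}\otimes\I_{r})\C^{(n,r)}=\I_{r}\otimes\mathbf{1}_{n}^{T}$ deliver the factor $n$ you attribute to $\mathbf{1}_{n}^{T}\mathbf{1}_{n}$. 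The safer route, and the one the paper's stated identity for $\nabla_{\h}\vect(\H\H^{T})^{T}$ implicitly takes, is to re-derive that block directly from $\vect(\h\h^{T})=\h\otimes\h$, where no commutation matrix arises. With that one adjustment your plan goes through and reproduces every stated block, including the zero blocks and the $(\W,\Beta,\bGamma)$ blocks that are untouched by the substitution.
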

	
	\begin{proof}
		For $\H=[\h,\dots,\h]$, note that 
		\begin{align}
			\nabla_{\h} \vect(\H)^{T} = \mathbf{1}_{1\times n}\otimes \I_{r} ,\qquad    \nabla_{\h} \vect(\H\H^{T})^{T} = n(\h^{T}\otimes \I_{r}) + n(\I_{r}\otimes \h^{T}).
		\end{align}
		Then the assertion follows from similar computations as in the proof of Lemma \ref{lem:SDL_filt_BCD_derivatives}.
	\end{proof}

	\begin{lemma}\label{lem:SDL_BCD_STAT_smooth}
		\label{lem:BCD_STAT_hypothesis}
		Let $\mathcal{L}_{n}(\bZ)=\mathcal{L}_{n}(\W,\h,\Beta,\bGamma)$ denote the objective in \eqref{eq:SDL_likelihood_BCD_filter}. Assume the hypothesis of Theorem \ref{thm:SDL_BCD_STAT_filt} holds. Then $\mathcal{L}_{n}$ is convex in each block coordinates $\W$, $\h$, $\Beta$, and $\bGamma$ for all $\nu\ge 0$. Furthermore, its gradient is continuous and $M$-Lipschitz for some $M>0$ on the admissible parameter space. 
	\end{lemma}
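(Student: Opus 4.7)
The plan is to read off multiconvexity directly from the block-diagonal Hessian formulas already established in Lemma \ref{lem:SDL_filt_BCD_STAT_derivatives}, and then upgrade the smoothness property on the admissible parameter space to global Lipschitzness of the gradient using the compactness of the constraint set $\Param=\mathcal{C}^{\textup{dict}}\times \mathcal{C}^{\textup{code}}\times  \mathcal{C}^{\textup{beta}} \times \mathcal{C}^{\textup{aux}}$ assumed in Theorem \ref{thm:SDL_BCD_STAT_filt}. This is essentially the analog of Lemma \ref{lem:SDL_BCD_smooth} for the population-style regularized negative log likelihood with repeated code column $\H=[\h,\dots,\h]$, and the same strategy will apply.

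For multiconvexity, I would fix three of the four blocks and inspect the corresponding diagonal Hessian block from Lemma \ref{lem:SDL_filt_BCD_STAT_derivatives}. Each of the four expressions
\[
\nabla^{2}_{\vect(\W)}\mathcal{L},\quad \nabla^{2}_{\h}\mathcal{L},\quad \nabla^{2}_{\vect(\Beta)}\mathcal{L},\quad \nabla^{2}_{\vect(\bGamma)}\mathcal{L}
\]
is a sum of a positive semidefinite quadratic form in the observed data (using that the matrices $\ddot{\H}(y_{s},\a_{s})$ are positive semidefinite by \ref{assumption:A4}, and that the commutation matrix $\C^{(n,\kappa)}$ appearing in the $\W$-block satisfies $A\,\C^{(n,\kappa)} A^{T}\succeq 0$ for the symmetrized form arising here) plus the reconstruction piece $2\xi(\H\H^{T}\otimes \I_{p})$, $2\xi n\W^{T}\W$, etc., which are themselves positive semidefinite, plus the $L_{2}$-regularization $2\nu \I\succeq 0$. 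Thus each block-Hessian is positive semidefinite for every $\nu\ge 0$, which gives multiconvexity.

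For the Lipschitz property of $\nabla \mathcal{L}_{n}$, I would use the explicit first-derivative formulas from Lemma \ref{lem:SDL_filt_BCD_STAT_derivatives}: each gradient component is a polynomial in the block variables $\W,\h,\Beta,\bGamma$ with coefficients depending on $\X_{\textup{data}},\X_{\textup{aux}}$ and on $\K=[\dot{\h}(y_{s},\a_{s})]_{s}$. Since the activations $\a_{s}$ remain uniformly bounded on the compact set $\Param$ (hence \ref{assumption:A2} is met with some $M$ depending on $\Param$ and the data), assumption \ref{assumption:A4} gives uniform bounds $\gamma_{\max}$ and $\alpha^{+}$ on $\dot{\h}$ and $\ddot{\H}$. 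Then the second-derivative formulas from Lemma \ref{lem:SDL_filt_BCD_STAT_derivatives} show that $\nabla^{2}\mathcal{L}_{n}$ has bounded operator norm on $\Param$, yielding a finite constant
\[
M := \sup_{\bZ\in \Param} \, \lVert \nabla^{2}\mathcal{L}_{n}(\bZ) \rVert_{2} < \infty.
\]
A standard mean-value argument on the convex set $\Param$ then gives $\lVert \nabla \mathcal{L}_{n}(\bZ_{1}) - \nabla \mathcal{L}_{n}(\bZ_{2}) \rVert_{F} \le M \lVert \bZ_{1}-\bZ_{2}\rVert_{F}$, completing the proof.

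The only mildly subtle point, and the place I would pay closest attention, is verifying that the $\W$-block Hessian is genuinely PSD despite the presence of the commutation matrix $\C^{(n,\kappa)}$: the factor $\diag(\ddot{\H}(y_{s},\a_{s}))\,\C^{(n,\kappa)}$ is not symmetric by itself, but when sandwiched between $(\Beta\otimes \X_{\textup{data}})$ and its transpose one obtains the symmetrized form, which I would verify is PSD using a direct index calculation (equivalently by rewriting the quadratic form $\v^{T} \nabla^{2}_{\vect(\W)}\mathcal{L}\, \v$ as $\sum_{s} \bphi_{s}^{T}(\cdot)\bphi_{s}$ in terms of the activations induced by a perturbation of $\W$). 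Everything else is routine bookkeeping from Lemma \ref{lem:SDL_filt_BCD_STAT_derivatives} and the compactness of $\Param$.
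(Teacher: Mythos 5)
Your proposal is correct and follows essentially the same route as the paper: multiconvexity is read off from the positive semidefiniteness of the diagonal Hessian blocks computed in Lemma \ref{lem:SDL_filt_BCD_STAT_derivatives} (using \ref{assumption:A4}), and the Lipschitz property of the gradient follows from boundedness of the Hessian on the compact set $\Param$ together with a mean-value argument. Your extra care with the $\W$-block --- verifying positive semidefiniteness of the sandwiched form containing $\C^{(n,\kappa)}$ by rewriting the quadratic form as $\sum_{s}(\cdot)^{T}\ddot{\H}(y_{s},\a_{s})(\cdot)$ --- is warranted and in fact supplies a cleaner justification than the paper's, which asserts that commutation matrices are positive definite; as a standalone claim that is not right (a non-identity permutation matrix is never positive definite), whereas your direct verification correctly establishes that the full quadratic form is PSD.
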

	
	\begin{proof}
		The argument is identical to the proof Lemma \ref{lem:BCD_hypothesis} using Lemma \ref{lem:SDL_filt_BCD_STAT_derivatives} instead of Lemma \ref{lem:SDL_filt_BCD_derivatives}. For the multi-convexity part, recall that $\ddot{\H}(y_{s},\a_{s})$'s are positive definite due to \ref{assumption:A4} and the commutation matrices $\C^{(a,b)}$ are also positive definite. 
	\end{proof}
	
	\begin{lemma}[Derivatives of the expected filter-based SDL objective]
		\label{lem:SDL_filt_BCD_STAT_derivatives}
		Suppose a single data $(\x,\x',y)$ is sampled according to the generative model \eqref{eq:SDL_prob_BCD_filt} and let $\bar{\mathcal{L}}$ be as in \eqref{eq:def_L_bar_STAT}. Assume the hypothesis of Theorem \ref{thm:SDL_BCD_STAT_filt} holds. 
		Recall $\dot{\h}$ and $\ddot{\H}$ defined in \eqref{eq:Hddot_def}. Let $\a:=\Beta^{T}\W^{T}\x+\bGamma^{T}\x'$ and $\xi:=(2\sigma^{2})^{-1}$. Then we have
		\begin{align}
			\nabla_{\W} \,  \bar{\mathcal{L}}(\bZ) &= \E[\x \, \dot{\h}(y,\a)^{T}] \Beta^{T} +  2\xi(\W\h-\W^{\star}\h^{\star})\h^{T} + 2\nu \W,\qquad
			\nabla_{\Beta} \, \bar{\mathcal{L}}(\bZ)  = \W^{T} \E[\x  \,\dot{\h}(y,\a)^{T}] + 2\nu \Beta\\
			\nabla_{\bGamma} \, \bar{\mathcal{L}}(\bZ) &=  \E[\x' \dot{\h}(y,\a)^{T}] + 2\nu \bGamma \qquad  \nabla_{\h} \, \bar{\mathcal{L}}(\bZ) =  2\xi \W^{T}(\W\h-\W^{\star}\h^{\star}) + 2\nu \h.
		\end{align}
		Furthermore, for diagonal terms in the Hessian, we have 
		\begin{align}
			\nabla_{\vect(\W)}\nabla_{\vect(\W)^{T}} \, \bar{\mathcal{L}}(\bZ) 
			&=\E\left[ (\Beta \otimes \x)  \ddot{\H}(y,\a) (\Beta \otimes \x)^{T} \right]+2\xi (\h\h^{T} \otimes \I_{p} ) + 2\nu \I_{pr} \, \\
			\nabla_{\h}\nabla_{\h^{T}} \, \bar{\mathcal{L}}(\bZ) & = 2\xi \W^{T}\W +  2\nu \I_{r},\\
			\nabla_{\vect(\Beta)}\nabla_{\vect(\Beta)^{T}} \, \bar{\mathcal{L}}(\bZ) 
			&=\E\left[(\I_{\kappa}\otimes \W^{T}\x)  \ddot{\H}(y,\a)  (\I_{\kappa}\otimes \W^{T}\x)^{T}\right] + 2\nu \I_{r\kappa}, \\
			\nabla_{\vect(\bGamma)}\nabla_{\vect(\bGamma)^{T}} \, \bar{\mathcal{L}}(\bZ)
			&= \E\left[(\I_{r}\otimes \x') \ddot{\H}(y,\a) (\I_{r}\otimes \x')^{T}\right]+ 2\nu \I_{q\kappa}.
		\end{align}
		Lastly, for the off-diagonal terms in the Hessian, we have 
		\begin{align}
			\nabla_{\vect(\Beta)}\nabla_{\vect(\W)^{T}} \, \bar{\mathcal{L}}(\bZ) 
			&= \E\left[  (\I_{\kappa}\otimes \W^{T}\x)  \ddot{\H}(y,\a) (\Beta \otimes \x)^{T}  +  \C^{(\kappa\times r)} (\I_{r}\otimes \x \, \dot{\h}(y, \a)^{T} )^{T} \right], \\
			\nabla_{\vect(\bGamma)}\nabla_{\vect(\W)^{T}} \, \bar{\mathcal{L}}(\bZ) &= O,\\
			\nabla_{\h}\nabla_{\vect(\W)^{T}} \, \bar{\mathcal{L}}(\bZ) &= 2\xi \left[ (\h^{T}\otimes \W^{T}) + (\I_{r}\otimes (\W\h - \W^{\star}\h^{\star})^{T})  \right],\\
			\nabla_{\vect(\Beta)}\nabla_{\h^{T}} \, \bar{\mathcal{L}}(\bZ) &= \nabla_{\vect(\bGamma)}\nabla_{\h^{T}} \, \bar{\mathcal{L}}(\bZ) =  O,\\
			\nabla_{\vect(\bGamma)}\nabla_{\vect(\Beta)^{T}} \, \bar{\mathcal{L}}(\bZ) &= \E\left[ (\I_{r}\otimes \x')  \ddot{\H}(y,\a) (\I_{\kappa}\otimes \W^{T}\x)^{T} \right].
		\end{align}
	\end{lemma}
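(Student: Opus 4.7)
The plan is to derive the expected derivative formulas by starting from the corresponding finite-sample formulas applied to the single-sample loss $\mathcal{L}_1(\bZ) = \mathcal{L}_n(\bZ)\big|_{n=1}$ and then interchanging differentiation with the expectation over $(\x, \x', y)$. Setting $n = 1$ in the preceding finite-sample lemma for $\nabla \mathcal{L}_n$ and $\nabla^2 \mathcal{L}_n$, one obtains explicit expressions with $\X_{\textup{data}} = \x$, $\H = \h$, $\X_{\textup{aux}} = \x'$, and $\K = \dot{\h}(y, \a)$. I would first justify the interchange $\nabla \bar{\mathcal{L}} = \E[\nabla \mathcal{L}_1]$ (and likewise for $\nabla^2$) via dominated convergence: the integrands involve at most quadratic polynomial factors in $(\x, \x')$ multiplied by the smooth factors $\dot{\h}(y, \a)$ and $\ddot{\H}(y, \a)$; under \ref{assumption:A4} together with compactness of $\mathcal{C}^{\textup{dict}}, \mathcal{C}^{\textup{code}}, \mathcal{C}^{\textup{beta}}, \mathcal{C}^{\textup{aux}}$ and the Gaussian tails of $\x, \x'$, these are uniformly dominated on a neighborhood of any admissible $\bZ$ by an integrable envelope.

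Having exchanged derivative and expectation, the gradient formulas follow by taking expectations termwise. The only nontrivial simplifications occur in the reconstruction contributions: $2\xi\, \E[(\W\h - \x)\h^T]$ collapses to $2\xi(\W\h - \W^\star \h^\star)\h^T$, and $2\xi\, \W^T \E[\W\h - \x]$ collapses to $2\xi\, \W^T(\W\h - \W^\star \h^\star)$, both using $\E[\x] = \W^\star \h^\star$ from the generative model \eqref{eq:SDL_prob_BCD_filt}. All classification-loss terms retain the outer expectation $\E[\cdot]$ because $\dot{\h}$ and $\ddot{\H}$ are nonlinear in $(\x, \x', y)$ and admit no closed form under the assumed Gaussian/multinomial mixture. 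The regularizer contributes $2\nu$ times the identity to each diagonal block and zero off-diagonal, matching the stated formulas.

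For the Hessian I would proceed block by block. The diagonal and most off-diagonal blocks inherit the finite-sample structure essentially verbatim, with the outer sum over samples replaced by a single expectation; the zero off-diagonal blocks $\nabla_{\vect(\bGamma)} \nabla_{\vect(\W)^T}$, $\nabla_{\vect(\Beta)} \nabla_{\h^T}$, and $\nabla_{\vect(\bGamma)} \nabla_{\h^T}$ remain zero because the reconstruction term couples only $(\W,\h)$ and the regularizer is block-separable. The one block requiring a brief manipulation is $\nabla_\h \nabla_{\vect(\W)^T} \bar{\mathcal{L}}$: the finite-sample expression contains a subtracted term of the form $2\xi\, \C^{(n,r)}(\I_r \otimes \X_{\textup{data}})^T$, which for $n=1$ reduces to $2\xi (\I_r \otimes \x)^T$ since $\C^{(1,r)} = \I_r$. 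Taking expectation and combining with the surviving $2\xi(\I_r \otimes \h^T \W^T)$ yields $2\xi(\I_r \otimes (\W\h - \W^\star \h^\star)^T)$, exactly the claimed form. I expect the main nuisance to be the bookkeeping of Kronecker and commutation-matrix identities in the off-diagonal Hessian blocks rather than any probabilistic subtlety; measurability and integrability needed to move $\nabla$ inside $\E$ follow routinely from \ref{assumption:A4} and Gaussian moment bounds.
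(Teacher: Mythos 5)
Your proposal is correct and follows essentially the same route as the paper: specialize the finite-sample derivative formulas to $n=1$, interchange expectation and differentiation (justified by boundedness of $\nabla\mathcal{L}$ and $\nabla^{2}\mathcal{L}$ on the compact parameter space), use $\C^{(1,r)}=\I_{r}$ to collapse the commutation matrices, and apply $\E[\x]=\W^{\star}\h^{\star}$ to simplify the reconstruction terms. Your invocation of dominated convergence with a Gaussian-tail envelope is if anything a more careful justification of the interchange than the paper's.
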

	
	\begin{proof}
		According to Lemmas \ref{lem:SDL_filt_BCD_STAT_derivatives} and \ref{lem:SDL_BCD_STAT_smooth}, $\mathcal{L}$ is twice continuously differentiable and both $\nabla \mathcal{L}$ and $\nabla^{2} \mathcal{L}$ are bounded within the (compact) parameter space. Hence by the monotone convergence theorem, 
		\begin{align}
			\nabla \bar{\mathcal{L}} = \E\left[ \nabla \mathcal{L}\right], \qquad \nabla^{2} \bar{\mathcal{L}} = \E\left[ \nabla^{2} \mathcal{L}\right].
		\end{align}
		Hence we can simply specialize the derivatives of $\mathcal{L}$ we computed in Lemma \ref{lem:SDL_filt_BCD_STAT_derivatives} for the single sample case $n=1$ and then take the expectation. In doing so, we use the fact that $\C^{(1\times a)}=\I_{a}$, where $\C^{(a,b)}$ denotes the commutation matrix defined above the statement of Lemma \ref{lem:SDL_filt_BCD_derivatives}. 
	\end{proof}

	\begin{lemma}\label{lem:SDL_BCD_avg_smooth}
		\label{lem:BCD_STAT_hypothesis}
		Suppose a single data $(\x,\x',y)$ is sampled according to the generative model \eqref{eq:SDL_prob_BCD_filt} with true parameters $\bZ^{\star}=[\W^{\star}, \H^{\star}, \Beta^{\star}, \bGamma^{\star}]$ and $\blambda^{\star}$. Let $\bar{\mathcal{L}}$ be as in \eqref{eq:def_L_bar_STAT} and assume the hypothesis of Theorem \ref{thm:SDL_BCD_STAT_filt} holds. Then the following hold: 
		\begin{description}
			\item[(i)] $\bar{\mathcal{L}}$ is convex in each block coordinates $\W$, $\h$, $\Beta$, and $\bGamma$ for all $\nu\ge 0$.
			
			\item[(ii)] $\nabla \bar{\mathcal{L}}$ continuous and $M$-Lipschitz for some $M>0$ on the admissible parameter space. 
			
			\item[(iii)] $\nabla^{2} \bar{\mathcal{L}}(\bZ^{\star})$ is positive definite if $\nu > \lambda_{+}$, where
			\begin{align}
				\lambda_{+}:= \frac{1}{2}\max\left( \begin{matrix} 2\xi \lVert \h^{\star}  \rVert_{2} \, \lVert  \W^{\star} \rVert_{2} + \alpha^{+} \lVert \Beta^{\star} \rVert_{2}  \lVert \W^{\star} \rVert_{2} \E\left[  \Vert \x\x^{T} \rVert_{2} \right] +  \gamma_{\max}\sigma \sqrt{2p\pi}  \\
					\hspace{3cm} - \alpha^{-} \, \lambda_{\min}(\Beta^{\star} (\Beta^{\star})^{T} ) \E\left[   \lambda_{\min}( \x\x^{T}  ) \right] - 2\xi \lambda_{\min} (\h^{\star}(\h^{\star})^{T} ), \\[10pt] 
					2\xi \lVert \h^{\star}  \rVert_{2} \, \lVert  \W^{\star} \rVert_{2} - 2\xi (\W^{\star})^{T}\W^{\star} - 2\xi \lambda_{\min}( (\W^{\star})^{T}\W^{\star}), \\[10pt] 
					\alpha^{+} \lVert \Beta^{\star} \rVert_{2}  \lVert \W^{\star} \rVert_{2} \E\left[  \Vert \x\x^{T} \rVert_{2} \right] +  \gamma_{\max}\sigma \sqrt{2p\pi}  +\alpha^{+} \E\left[ \lVert \x'(\W^{\star})^{T}\x \rVert_{2}\right] \\
					\hspace{4cm} - \alpha^{-} \, \E\left[  \lambda_{\min}( (\W^{\star})^{T} \x\x^{T} \W^{\star} ) \right], \\[10pt] 
					\alpha^{+} \E\left[ \lVert \x'(\W^{\star})^{T}\x \rVert_{2}\right] - \lambda_{\min}(\x'(\x')^{T})
				\end{matrix}\right) \label{eq:nu_min_BCD_STAT_explicit}.
			\end{align}
		\end{description}
	\end{lemma}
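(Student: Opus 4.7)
For parts (i) and (ii), the plan is to transfer the analogous properties of the sample-based loss $\mathcal{L}_n$ established in Lemma \ref{lem:SDL_BCD_STAT_smooth}. Under the compactness of $\Param$, the explicit formulas from Lemma \ref{lem:SDL_filt_BCD_STAT_derivatives} show that $\nabla \mathcal{L}_1$ and $\nabla^2 \mathcal{L}_1$ are uniformly bounded, so dominated convergence yields the interchange $\nabla \bar{\mathcal{L}} = \E[\nabla \mathcal{L}_1]$ and $\nabla^2 \bar{\mathcal{L}} = \E[\nabla^2 \mathcal{L}_1]$. Per-block convexity of $\mathcal{L}_1$ (each diagonal block of $\nabla^2 \mathcal{L}_1$ is PSD by \ref{assumption:A4}) is preserved under expectation, giving (i), while the Lipschitz bound in (ii) is inherited from the uniform bound on $\|\nabla^2 \mathcal{L}_1\|_2$.

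For part (iii), the strategy is to apply the block diagonal dominance criterion \eqref{eq:block_diag_dominant} of \cite{feingold1962block} to the $4\times 4$ block form of $\nabla^{2}\bar{\mathcal{L}}(\bZ^{\star})$ displayed in \eqref{eq:expected_hessian_block}. Evaluating each block at $\bZ^{\star}$ via Lemma \ref{lem:SDL_filt_BCD_STAT_derivatives}, I will upper-bound each off-diagonal operator norm $\|A_{ij}\|_2$ and lower-bound each diagonal minimum eigenvalue $\lambda_{\min}(A_{ii})$. A crucial simplification at the true parameter is that the residual $\W\h - \W^{\star}\h^{\star}$ vanishes, collapsing the second summand of $A_{12} = \nabla_{\h}\nabla_{\vect(\W)^T}\bar{\mathcal{L}}(\bZ^{\star})$ and leaving only $2\xi((\h^{\star})^T\otimes (\W^{\star})^T)$, whose norm is $2\xi \|\h^{\star}\|_2 \|\W^{\star}\|_2$. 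The remaining off-diagonal norms reduce, via the Kronecker identity $\|A \otimes B\|_2 = \|A\|_2\|B\|_2$ and the eigenvalue control $\alpha^{-}\I \preceq \ddot{\H}(y,\a) \preceq \alpha^{+}\I$ from \ref{assumption:A4}, to products of factor-norm quantities and the expectations $\E[\|\x\x^T\|_2]$ and $\E[\|\x'(\W^{\star})^T\x\|_2]$ appearing in \eqref{eq:nu_min_BCD_STAT_explicit}. Diagonal lower bounds follow analogously, with eigenvalue products such as $\lambda_{\min}(\Beta^{\star}(\Beta^{\star})^T)\E[\lambda_{\min}(\x\x^T)]$ and $2\xi\lambda_{\min}(\h^{\star}(\h^{\star})^T)$ arising from the Kronecker structure of the corresponding blocks.

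For each row $i \in \{1,2,3,4\}$ of the block Hessian, summing the off-diagonal upper bounds and subtracting the diagonal lower bound yields precisely twice the $i$th entry inside the maximum defining $\lambda_{+}$ in \eqref{eq:nu_min_BCD_STAT_explicit}; hence the hypothesis $\nu > \lambda_{+}$ verifies \eqref{eq:block_diag_dominant} simultaneously for all four rows, and block diagonal dominance forces $\nabla^2\bar{\mathcal{L}}(\bZ^\star) \succ 0$.

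The main obstacle I anticipate is bounding the off-diagonal block $A_{13} = \nabla_{\vect(\Beta)}\nabla_{\vect(\W)^T}\bar{\mathcal{L}}(\bZ^{\star})$, specifically the second term $\C^{(\kappa,r)}(\I_r \otimes \E[\x\,\dot{\h}(y,\a)^T])^T$: because $\x$ is an unbounded Gaussian, the $\ddot{\H}$-type bounds do not apply directly, and one must combine Cauchy--Schwarz with a Gaussian moment estimate for $\E\|\x\|$ under the model \eqref{eq:SDL_prob_BCD_filt} to produce the term $\gamma_{\max}\sigma\sqrt{2p\pi}$ appearing in $\lambda_{+}$. The other Gaussian expectations $\E[\|\x\x^T\|_2]$ and $\E[\|\x'(\W^{\star})^T\x\|_2]$ require similar sub-Gaussian moment calculations but are otherwise routine once those constants are absorbed into $\lambda_{+}$.
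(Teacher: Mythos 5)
Your overall route is the same as the paper's: parts \textbf{(i)}--\textbf{(ii)} by interchanging expectation with differentiation (dominated/monotone convergence under the compactness of $\Param$) and inheriting per-block convexity and Lipschitz gradients from the single-sample Hessian formulas, and part \textbf{(iii)} by evaluating the $4\times 4$ block Hessian \eqref{eq:expected_hessian_block} at $\bZ^{\star}$, bounding the diagonal blocks from below and the off-diagonal blocks from above via the Kronecker identity and $\alpha^{-}\I \preceq \ddot{\H} \preceq \alpha^{+}\I$, and invoking the block diagonal dominance criterion \eqref{eq:block_diag_dominant} of Feingold--Varga. The observation that the residual $\W^{\star}\h^{\star}-\W^{\star}\h^{\star}$ kills part of the $\h$--$\W$ cross block is also exactly what the paper uses.

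There is, however, a concrete problem with your plan for the step you yourself flag as the main obstacle, namely bounding $\lVert \E[\x\,\dot{\h}(y,\a^{\star})^{T}] \rVert_{2}$ in the $\Beta$--$\W$ cross block. Cauchy--Schwarz together with a Gaussian moment bound on $\E\lVert \x \rVert$ gives roughly $\gamma_{\max}\left(\lVert \W^{\star}\h^{\star} \rVert + \sigma\sqrt{p}\right)$, which carries an extra term proportional to $\lVert \W^{\star}\h^{\star}\rVert$ and therefore does not reduce to the constant $\gamma_{\max}\sigma\sqrt{2p\pi}$ appearing in \eqref{eq:nu_min_BCD_STAT_explicit}. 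The missing idea is that at the true parameter the score has zero conditional mean, $\E[\dot{\h}(y,\a^{\star})\mid \x,\x']=\mathbf{0}$ (the same fact recorded in \eqref{eq:dot_h_exp_vanish}), so after writing $\x = \W^{\star}\h^{\star}+\beps$ the deterministic-mean contribution $\E[\W^{\star}\h^{\star}\,\dot{\h}(y,\a^{\star})^{T}]$ vanishes identically by conditioning, and only the noise--score covariance $\E[\beps\,\dot{\h}(y,\a^{\star})^{T}]$ survives; this is what produces $\gamma_{\max}\sigma\sqrt{2p\pi}$. With that substitution your argument closes and matches the paper; without it you obtain a strictly larger threshold than the stated $\lambda_{+}$.
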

	
	\begin{proof}
		Parts \textbf{(i)} and \textbf{(ii)} follow easily from Lemma \ref{lem:SDL_filt_BCD_STAT_derivatives} as in the proof of Lemma \ref{lem:BCD_STAT_hypothesis}. Now we argue for \textbf{(iii)}. Denote $\xi=(2\sigma^{2})^{-1}$ and  $\a^{\star}=(\W^{\star}\Beta^{\star})^{T}\x+\bGamma^{\star}\x' $. Recall that according to Lemma \ref{lem:SDL_filt_BCD_STAT_derivatives}, we can write the Hessian of $\bar{\mathcal{L}}$ at the true parameter $\bZ^{\star}$ as the $4\times 4$ block matrix $(A_{ij})_{1\le i,j\le 4} + 2\nu \I$ in \eqref{eq:expected_hessian_block}. Recall \ref{assumption:A4}. The diagonal blocks are given by 
		\begin{align}
			A_{11}
			&=\E\left[ (\Beta^{\star} \otimes \x)  \ddot{\H}(y,\a^{\star}) (\Beta^{\star} \otimes \x)^{T} \right]+2\xi (\h^{\star}(\h^{\star})^{T} \otimes \I_{p} ) \\
			&\qquad \succeq \left( \alpha^{-} \, \lambda_{\min}(\Beta^{\star} (\Beta^{\star})^{T} ) \E\left[   \lambda_{\min}( \x\x^{T}  ) \right] + 2\xi \lambda_{\min} (\h^{\star})(\h^{\star})^{T}  \right) \I_{pr}   \\ 
			A_{22} & = 2\xi (\W^{\star})^{T}\W^{\star} \succeq 2\xi \lambda_{\min}( (\W^{\star})^{T}\W^{\star}) \I_{r},\\
			A_{33}&=\E\left[(\I_{\kappa}\otimes (\W^{\star})^{T}\x)  \ddot{\H}(y,\a^{\star})  (\I_{\kappa}\otimes (\W^{\star})^{T}\x)^{T}\right] \succeq \alpha^{-} \E\left[ \lambda_{\min}( (\W^{\star})^{T}\x \x^{T} \W^{\star} ) \right] \I_{r\kappa}, \\
			A_{44} 
			&= \E\left[(\I_{r}\otimes \x') \ddot{\H}(y,\a^{\star}) (\I_{r}\otimes \x')^{T}\right] \succeq \alpha^{-} \lambda_{\min}(\x'(\x')^{T}) \I_{rq},
		\end{align}
		where the off-diagonal blocks are given by 
		\begin{align}
			A_{21} &=   2\xi  (\h^{\star})^{T}\otimes (\W^{\star})^{T} \\
			A_{31} &=  \E\left[  (\I_{\kappa}\otimes \W^{T}\x)  \ddot{\H}(y,\a^{\star}) (\Beta \otimes \x)^{T}  +  \C^{(\kappa\times r)} (\I_{r}\otimes \x \, \dot{\h}(y,\a^{\star})^{T} )^{T} \right]  \\
			A_{43} &=  \E\left[ (\I_{r}\otimes \x')  \ddot{\H}(y,\a^{\star}) (\I_{\kappa}\otimes \W^{T}\x)^{T} \right].
		\end{align}
		If $\nu$ is large enough so that the following condition is satisfied
		\begin{align}
			\lambda_{\min}(A_{ii}) + 2\nu > \sum_{j\ne i} \lVert A_{ij} \rVert_{2} \quad \forall 1\le i \le 4,
		\end{align}
		then the Hessian $\nabla^{2} \bar{\mathcal{L}}$ is block diagonally dominant and is positive definite (see \cite{feingold1962block}). Thus it suffices to take 
		\begin{align}\label{eq:lambda_lower_bd0}
			\nu &> \frac{1}{2}\max\left( \begin{matrix} \lVert A_{12} \rVert_{2} +\lVert A_{13} \rVert_{2}   - \lambda_{\min}(A_{11}), \\
				\lVert A_{12} \rVert_{2}- \lambda_{\min}(A_{22}), \\
				\lVert A_{13} \rVert_{2} +\lVert A_{34} \rVert_{2} - \lambda_{\min}(A_{33}), \\
				\lVert A_{34} \rVert_{2} - \lambda_{\min}(A_{44})
			\end{matrix}\right).
		\end{align}
		Note that (see \ref{assumption:A4} for the definition of $\gamma_{\max}$ and $\alpha^{\pm}$)
		\begin{align}
			\lVert \E\left[  \x \, \dot{\h}(y,\a^{\star})^{T} \right] \rVert_{2} &\le \lVert \E\left[  \W^{\star} \h^{\star} \dot{\h}(y,\a^{\star})^{T} \right] \rVert_{2} + \lVert \E\left[ \beps \dot{\h}(y,\a^{\star})^{T} \right] \rVert_{2} \\
			&\le  \lVert \W^{\star} \h^{\star} \E\left[  \dot{\h}(y,\a^{\star})^{T} \right] \rVert_{2} + \lVert \E\left[ \beps \dot{\h}(y,\a^{\star})^{T} \right] \rVert_{2} \\ 
			&= \lVert \E\left[ \beps \,\dot{\h}(y, \a^{\star})^{T} \right] \rVert_{2} \\
			&\le \gamma_{\max}\sigma \sqrt{2p\pi}.
		\end{align}
		Using this, we get 
		\begin{align}
			\lVert A_{12} \rVert_{2} &= \lVert A_{21} \rVert_{2} =2\xi \lVert \h^{\star}  \rVert_{2} \, \lVert  \W^{\star} \rVert_{2}, \\
			\lVert A_{13} \rVert_{2} &\le \alpha^{+} \lVert \Beta^{\star} \rVert_{2}  \lVert \W^{\star} \rVert_{2} \E\left[  \Vert \x\x^{T} \rVert_{2} \right] +  \gamma_{\max}\sigma \sqrt{2p\pi}, \\
			\lVert A_{43} \rVert_{2} &\le \alpha^{+} \E\left[ \lVert \x'(\W^{\star})^{T}\x \rVert_{2}\right].
		\end{align}
		Using these upper bounds on the operator norm of the off-diagonal blocks and the lower bounds on the eigenvalues of the diagonal blocks above, the lower bound in \eqref{eq:lambda_lower_bd0} can be lower bounded by $\lambda_{+}$ defined in the assertion. 
	\end{proof}

	\section{Proof of Theorems \ref{thm:SDL_LPGD_STAT} and \ref{thm:SDL_LPGD_STAT_feat}}
	
	We first recall the following standard concentration bounds:

	\begin{lemma}[Generalized Hoeffding's inequality for sub-gaussian variables]
		\label{lem:hoeffding}
		Let $X_{1},\dots,X_{n}$ denote i.i.d. random vectors in $\R^{d}$ such that $\E[X_{k}[i]^{2}/K^{2}]\le 2$ for some constant $K>0$ for all $1\le k \le n$ and $1\le i \le d$. Fix a vector $\a=(a_{1},\dots,a_{n})^{T}\in \R^{n}$. Then for each $t>0$, 
		\begin{align}
			\P\left( \left\lVert \sum_{k=1}^{n} a_{k}X_{k} \right\rVert_{1} > t \right) \le 2d \exp\left( \frac{-t^{2}}{K^{2} d^{2}\lVert \a  \rVert_{2}^{2}} \right)
		\end{align}
	\end{lemma}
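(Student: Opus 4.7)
The plan is to reduce the vector-valued concentration inequality to a scalar one by a coordinatewise union bound, and then invoke the standard Hoeffding inequality for sub-gaussian random variables. Specifically, writing $S_i := \sum_{k=1}^{n} a_k X_k[i]$ for $i=1,\dots,d$, I would first bound
\begin{align}
\P\!\left( \left\lVert \sum_{k=1}^{n} a_k X_k \right\rVert_{1} > t \right)
\le \sum_{i=1}^{d} \P\!\left( |S_i| > t/d \right)
\end{align}
using $\lVert \cdot \rVert_1 \le t$ whenever every coordinate is $\le t/d$, contrapositively. This reduces the problem to controlling each scalar sum $S_i$.

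Second, for each fixed coordinate $i$, the variables $X_1[i], \dots, X_n[i]$ are i.i.d.\ scalar random variables satisfying the stated tail/moment hypothesis with parameter $K$. Interpreting this hypothesis in the standard sub-gaussian (Orlicz-norm) sense, each $X_k[i]$ has sub-gaussian norm $\lVert X_k[i]\rVert_{\psi_2} \lesssim K$. By the sub-gaussian version of Hoeffding's inequality (see e.g.\ Vershynin, \emph{High-Dimensional Probability}, Thm.~2.6.3), the linear combination $S_i = \sum_{k} a_k X_k[i]$ is itself sub-gaussian with proxy variance of order $K^2 \lVert \a \rVert_2^2$, and hence satisfies
\begin{align}
\P\!\left( |S_i| > t/d \right) \le 2 \exp\!\left( -\frac{c\, t^2}{K^2 d^2 \lVert \a \rVert_2^2} \right)
\end{align}
for an absolute constant $c>0$. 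Summing over $i=1,\dots,d$ yields the stated bound, after absorbing $c$ into the definition of $K$.

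The main obstacle is a notational one: the hypothesis as literally stated, $\E[X_k[i]^2/K^2] \le 2$, is only a second-moment bound and by itself would at best yield a Chebyshev-type polynomial tail, not the Gaussian-type tail in the conclusion. I would proceed by interpreting this assumption as the Orlicz sub-gaussian condition $\E[\exp(X_k[i]^2/K^2)] \le 2$ (which is equivalent, up to absolute constants, to the $\psi_2$-norm being bounded by $K$); this is the standard formulation needed for the Hoeffding-type bound. With that reading, no additional work beyond the two steps above is required, since both the union bound and the sub-gaussian Hoeffding inequality are off-the-shelf.
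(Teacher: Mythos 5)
Your proof is correct and follows essentially the same route as the paper, whose entire argument is "apply \cite[Thm 2.6.2]{vershynin2018high} and a union bound over the $d$ coordinates." Your reading of the hypothesis as the Orlicz condition $\E[\exp(X_k[i]^2/K^2)]\le 2$ (the $\exp$ is evidently a typo in the lemma statement, since the paper uses exactly this definition of the sub-gaussian norm elsewhere) and your absorption of the absolute constant into $K$ are both consistent with the paper's intent.
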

	
	\begin{proof}
		Follows from \cite[Thm 2.6.2]{vershynin2018high} and using a union bound over $d$ coordinates. 
	\end{proof}

	\begin{lemma}(2-norm of matrices with independent sub-gaussian entries)
		\label{lem:matrix_norm_bd}
		Let $\A$ be an $m\times n$ random matrix with independent subgaussian entries $\A_{ij}$ of mean zero. Denote $K$ to be the maximum subgaussian norm of $\A_{ij}$, that is, $K>0$ is the smallest number such that $\E[\exp(\A_{ij})^{2}/K^{2}  ]\le 2$. Then for each $t>0$,
		\begin{align}
			\P\left( \lVert \A \rVert_{2} \ge 3K(\sqrt{m}+\sqrt{n}+t) \right) \le 2\exp(-t^{2}).
		\end{align}
	\end{lemma}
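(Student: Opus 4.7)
The plan is to follow the standard $\epsilon$-net argument from non-asymptotic random matrix theory (e.g., as in \cite{vershynin2018high}, Thm.~4.4.5). The variational characterization $\lVert \A \rVert_{2} = \sup_{x \in S^{n-1},\, y \in S^{m-1}} \langle \A x, y \rangle$ reduces the problem to controlling a supremum of sub-gaussian linear forms, which can be discretized via covering arguments and combined with a union bound.

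First, I would fix a small $\epsilon \in (0,1/2)$, say $\epsilon = 1/4$, and choose $\epsilon$-nets $\mathcal{N}_{n} \subseteq S^{n-1}$ and $\mathcal{N}_{m} \subseteq S^{m-1}$ of cardinalities bounded by $(1+2/\epsilon)^{n} \le 9^{n}$ and $9^{m}$, respectively (standard volumetric estimates). The approximation lemma for operator norms then yields
\begin{align}
\lVert \A \rVert_{2} \le \frac{1}{1-2\epsilon} \sup_{x \in \mathcal{N}_{n},\, y \in \mathcal{N}_{m}} \langle \A x, y \rangle = 2 \sup_{x \in \mathcal{N}_{n},\, y \in \mathcal{N}_{m}} \langle \A x, y \rangle,
\end{align}
so it suffices to bound the supremum on the right-hand side with high probability.

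Second, for each fixed pair $(x, y) \in \mathcal{N}_{n} \times \mathcal{N}_{m}$, the quantity $\langle \A x, y \rangle = \sum_{i,j} \A_{ij} x_{j} y_{i}$ is a weighted sum of independent mean-zero sub-gaussian random variables with weights $(x_{j} y_{i})$ satisfying $\sum_{i,j} (x_{j} y_{i})^{2} = \lVert x \rVert_{2}^{2} \lVert y \rVert_{2}^{2} = 1$. By the standard sub-gaussian tail bound for linear combinations (a one-dimensional analogue of Lemma \ref{lem:hoeffding}), there exists an absolute constant $c > 0$ such that for every $u > 0$,
\begin{align}
\P\Bigl( |\langle \A x, y \rangle| \ge u \Bigr) \le 2 \exp\bigl( -c u^{2} / K^{2} \bigr).
\end{align}

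Third, I would take a union bound over the at most $9^{m+n}$ pairs in $\mathcal{N}_{n} \times \mathcal{N}_{m}$. Choosing $u = \tfrac{3K}{2}(\sqrt{m} + \sqrt{n} + t)$ produces a bound of the form $2 \cdot 9^{m+n} \exp\bigl(-c u^{2}/K^{2}\bigr)$; calibrating the absolute constants so that $c u^{2}/K^{2}$ exceeds $(m+n)\log 9 + t^{2}$ (which is where the factor $3$ in the conclusion originates), the event $\sup_{(x,y) \in \mathcal{N}_{n} \times \mathcal{N}_{m}} \langle \A x, y \rangle \ge \tfrac{3K}{2}(\sqrt{m}+\sqrt{n}+t)$ has probability at most $2\exp(-t^{2})$. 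Combining with the approximation bound $\lVert \A \rVert_{2} \le 2 \sup_{\mathcal{N}_{n} \times \mathcal{N}_{m}} \langle \A x, y \rangle$ gives the desired inequality $\P( \lVert \A \rVert_{2} \ge 3K(\sqrt{m}+\sqrt{n}+t) ) \le 2\exp(-t^{2})$.

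The main (though modest) obstacle is bookkeeping the absolute constants: the constant $3$ in the statement is tight only after carefully optimizing the choice of $\epsilon$ in the net, the volumetric cardinality bound on $|\mathcal{N}_{n}|$ and $|\mathcal{N}_{m}|$, and the sub-gaussian tail constant $c$ in the linear-form concentration. Otherwise this is a direct invocation of the template used to prove Thm.~4.4.5 in \cite{vershynin2018high}, and in fact the cleanest proof would simply cite that reference after verifying the entries' sub-gaussian norms are bounded by $K$.
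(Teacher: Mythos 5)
Your proposal is correct and matches the paper's treatment: the paper's entire proof is a citation to Theorem 4.4.5 of Vershynin, and your $\epsilon$-net sketch is precisely the standard argument behind that theorem, with your closing remark (just cite the reference after checking the sub-gaussian norms) being exactly what the paper does.
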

	
	\begin{proof}
		See \cite[Thm. 4.4.5]{vershynin2018high}
	\end{proof}

	\begin{proof}[\textbf{Proof of Theorem \ref{thm:SDL_LPGD_STAT}}]
		Let $\mathcal{L}_{n}$ denote the $L_{2}$-regularized negative joint negative log likelihood function in \eqref{eq:SDL_likelihood_conv_filter} without the last three terms, and define the  expected loss function $\bar{\mathcal{L}}_{n}(\bZ):= \E_{\beps_{i},\beps_{i}',1\le i \le n}\left[ \mathcal{L}_{n}(\bZ) \right]$. We omit the constant terms in these functions. Define the following gradient mappings of $\bZ^{\star}$ with respect to the empirical $f_{n}$ and the expected $\bar{f}_{n}$ loss functions: 
		\begin{align}
			G(\bZ^{\star}, \tau)=\frac{1}{\tau}\left( \bZ^{\star} - \Pi_{\Param} \left( \bZ^{\star} - \tau \nabla \mathcal{L}_{n}(\bZ^{\star}) \right)\right), \qquad  \bar{G}(\bZ^{\star},\tau):=\frac{1}{\tau}\left( \bZ^{\star} - \Pi_{\Param} \left( \bZ^{\star} - \tau \nabla \bar{\mathcal{L}}_{n}(\bZ^{\star}) \right)\right).
		\end{align}
		It is elementary to show that the true parameter $\bZ^{\star}$ is a stationary point of $\bar{\mathcal{L}}- \nu (\lVert \A\rVert_{F}^{2} + \lVert \bGamma \rVert_{F}^{2}) $ over $\Param\subseteq \R^{p\times (\kappa+n)}\times \R^{q\times \kappa}$. Hence we have $\bar{G}(\bZ^{\star},\tau)= 2\nu[\A^{\star},O,\bGamma^{\star}]$, so we may write 
		\begin{align}\label{eq:grad_mapping_compare_stationary}
			G(\bZ^{\star}, \tau) &= G(\bZ^{\star}, \tau) - \bar{G}(\bZ^{\star}, \tau) + 2\nu[\A^{\star},O,\bGamma^{\star}] \\
			&= \frac{1}{\tau}\left[  \Pi_{\Param}\left( \bZ^{\star}-\tau\nabla \mathcal{L}_{n}(\bZ^{\star}) \right) - \Pi_{\Param}\left( \bZ^{\star}-\tau\nabla \bar{\mathcal{L}}_{n}(\bZ^{\star}) \right) \right] +  2\nu[\A^{\star},O,\bGamma^{\star}]
		\end{align}
		
		First, suppose $\bZ^{\star}-\tau \nabla  \mathcal{L}_{n} (\bZ^{\star})\in \Param$ (In particular, this is the case whe $\Param$ equals the whole space). Then we can disregard the projection $\Pi_{\Param}$ in the above display so we get 
		\begin{align}
			G(\bZ^{\star}, \tau) - 2\nu[\A^{\star},O,\bGamma^{\star}] = \nabla \mathcal{L}_{n}(\bZ^{\star}) - \nabla \bar{\mathcal{L}}(\bZ^{\star}) =: [\Delta \X^{\star}, \Delta \bGamma^{\star}].
		\end{align}
		According to Theorem \ref{thm:SDL_LPGD}, it now suffices show that $G(\bZ^{\star}, \tau)$ above is small with high probability. We use the notation $\U=[\A^{T}, \bGamma^{T}]^{T}$, $\U^{\star}=[(\A^{\star})^{T}, (\bGamma^{\star})^{T}]^{T}$, $\bPhi=[\bphi_{1},\dots,\bphi_{n}]=[\X_{\textup{data}}^{T}, \X_{\textup{aux}}^{T}]^{T}$ (see also the proof of Theorem \ref{thm:SDL_LPGD}). Denote $\a_{s}=\U^{T}\bphi_{s}$ and $\a_{s}^{\star}=(\U^{\star})^{T}\bphi_{s}$ for $s=1,\dots,n$ and introduce the following random quantities 
		\begin{align}
			\mathtt{Q}_{1}:= \sum_{s=1}^{n} \dot{\h}(y_{s},\a_{s}^{\star})\in \R^{\kappa} ,\quad \mathtt{Q}_{2}:=\sum_{s=1}^{n}  \beps_{s}\in \R^{p} ,\quad \mathtt{Q}_{3}:=\sum_{s=1}^{n}  \beps_{s}'\in \R^{q}, \quad \mathtt{Q}_{4}:= [ \beps_{1},\dots,\beps_{n}]  \in \R^{p\times n}.
		\end{align}

		Recall that 
		\begin{align}
			&\nabla_{\vect(\U)}  \mathcal{L}_{n}(\U,\B) = \left( \sum_{s=1}^{n} \dot{\h}(y_{s},\a_{s}) \otimes \bphi_{s} \right) + 2\nu \vect(\U), \quad \nabla_{\B}  \mathcal{L}_{n}(\U,\B) = \frac{2}{2\sigma^{2}} (\B-\X_{\textup{data}}), \\ 	
			&\nabla_{\vect(\U)}  \bar{\mathcal{L}}_{n}(\U,\B) = \left(  \sum_{s=1}^{n} \E\left[ \dot{\h}(y_{s},\a_{s}) \otimes \bphi_{s} \right] \right) + 2\nu \vect(\U), \quad \nabla_{\B}  \bar{\mathcal{L}}_{n}(\U,\B) = \frac{2}{2\sigma^{2}} (\B-\B^{\star}),
		\end{align}
		where $\dot{\h}$ is defined in \eqref{eq:Hddot_def}. Note that 
		\begin{align}
			\E\left[ \dot{\h}(y_{s},\a_{s}) \,\bigg|\, \bphi_{s} \right]  		&= \left[ \left(\frac{h'(\a[j])}{1+\sum_{c=1}^{\kappa} h(\a[c])} - g_{j}(\a_{s}^{\star})\frac{h'(\a[j])}{h(\a[j])}  \right)_{\a=\a_{s}} \, ; \, j=1,\dots,\kappa \right]\\
			&= \left[ \left(\frac{h'(\a[j])}{1+\sum_{c=1}^{\kappa} h(\a[c])} - \frac{h(\a_{s}^{\star}[j])}{1+\sum_{c=1}^{\kappa} h(\a_{s}^{\star}[c])} \frac{h'(\a[j])}{h(\a[j])}  \right)_{\a=\a_{s}} \, ; \, j=1,\dots,\kappa \right],
		\end{align}
		so the above vanishes when $\a_{s}=\a_{s}^{\star}$. Hence 
		\begin{align}\label{eq:dot_h_exp_vanish}
			\E\left[ \dot{\h}(y_{s},\a_{s}^{\star}) \otimes \bphi_{s} \right] = \E\left[ \E\left[ \dot{\h}(y_{s},\a_{s}^{\star}) \otimes \bphi_{s}\,\bigg|\, \bphi_{s} \right]  \right] =\mathbf{0},
		\end{align}
		Hence we can compute the following gradients 
		\begin{align}
			\nabla_{\vect(\A)} (\mathcal{L}_{n} - \bar{\mathcal{L}}_{n})(\A,\B,\bGamma)   &=\left(  \sum_{s=1}^{n} \dot{\h}(y_{s},\a_{s}) \otimes \x_{s} \right)  \\
			\nabla_{\vect(\bGamma)} (\mathcal{L}_{n} - \bar{\mathcal{L}}_{n})(\A,\B,\bGamma) &=\left( \sum_{s=1}^{n} \dot{\h}(y_{s},\a_{s}) \otimes \x_{s}' \right) \\
			\nabla_{\B}  (\mathcal{L}_{n} - \bar{\mathcal{L}}_{n})(\A,\B,\bGamma) &=\frac{2}{2\sigma^{2})} (\B^{\star}-\X_{\textup{data}}) = \frac{2}{2\sigma^{2}} [ \beps_{1},\dots,\beps_{n}] \\
			\nabla_{\blambda}  (\mathcal{L}_{n} - \bar{\mathcal{L}}_{n})(\A,\B,\bGamma)  &=\frac{2}{2\sigma^{2}} \sum_{s=1}^{n}\beps_{s}' .
		\end{align}
		It follows that (recall the definition of $\gamma_{\max}$in \ref{assumption:A4}) 
		\begin{align}
			\lVert \nabla_{\A} (\mathcal{L}_{n} - \bar{\mathcal{L}}_{n})(\A^{\star},\B^{\star},\bGamma^{\star})  \rVert_{2}  &= \left\rVert\sum_{s=1}^{n} (\B^{\star}[:,s]+\beps_{s}) \dot{\h}(y_{s},\a_{s}^{\star})^{T} \right\rVert_{2}  \\
			& \le \left\rVert  \sum_{s=1}^{n} \B^{\star}[:,s] \dot{\h}(y_{s},\a_{s}^{\star})^{T} \right\rVert_{2}  +\left\rVert \sum_{s=1}^{n} \beps_{s} \dot{\h}(y_{s},\a_{s}^{\star})^{T} \right\rVert_{2}  \\
			&\le  \lVert \B^{\star}\rVert_{\infty} \left\rVert  \mathtt{Q}_{1} \right\rVert_{2}  +  \gamma_{\max}  \left\rVert \mathtt{Q}_{2}  \right\rVert_{2}. 
		\end{align}
		Similarly, we have 
		\begin{align}
			\lVert \Delta \bGamma^{\star} \rVert_{F}=	\lVert \nabla_{\bGamma} (\mathcal{L}_{n} - \bar{\mathcal{L}}_{n})(\A^{\star},\B^{\star},\bGamma^{\star})  \rVert_{F} &= \lVert \nabla_{\vect(\bGamma)} (\mathcal{L}_{n} - \bar{\mathcal{L}}_{n})(\A^{\star},\B^{\star},\bGamma^{\star})  \rVert_{2} \\
			&\le  q\lVert \blambda^{\star}\rVert_{\infty}  \left\rVert \mathtt{Q}_{1}  \right\rVert_{2}  + q\gamma_{\max}  \left\rVert \mathtt{Q}_{3}  \right\rVert_{2}
		\end{align}
		Using the fact that $\lVert [A,B] \rVert_{2}\le \lVert A \rVert_{2} +  \lVert B \rVert_{2} $ for two matrices $A,B$ with the same number of rows, we have 
		\begin{align}\label{eq:SDL_MLE_pf_bd_Q}
			\left\rVert \Delta \X^{\star}  \right\rVert_{2} &= \left\lVert  \nabla_{\A} (\mathcal{L}_{n} - \bar{\mathcal{L}}_{n})(\A^{\star},\B^{\star},\bGamma^{\star})   \right\rVert_{2}  + \left\lVert  \nabla_{\bGamma} (\mathcal{L}_{n} - \bar{\mathcal{L}}_{n})(\A^{\star},\B^{\star},\bGamma^{\star})   \right\rVert_{2}  \\
			&\le  \lVert \B^{\star}\rVert_{\infty} \left\rVert \mathtt{Q}_{1} \right\rVert_{2}  + n\gamma_{\max}  \left\rVert \mathtt{Q}_{2} \right\rVert_{2} +\frac{2}{2\sigma^{2}} \left\lVert  \mathtt{Q}_{4} \right\rVert_{2}. 
		\end{align}
		Thus, combining the above bounds, we obtain 
		\begin{align}\label{eq:SDL_MLE_pf_bd_Q2}
			S:=  \sqrt{3r} \lVert \Delta \X^{\star} \rVert_{2} + \lVert \Delta \bGamma^{\star} \rVert_{F}  \le  \sum_{i=1}^{4} c_{i} \lVert \mathtt{Q}_{i} \rVert_{2},
		\end{align}
		where the constants $c_{1},\dots,c_{4}>0$ are given by 
		\begin{align}\label{eq:c_constants_Q}
			c_{1}=\left( \sqrt{3r} \lVert \B^{\star}\rVert_{\infty} + q\lVert \blambda^{\star} \rVert_{\infty} \right), \quad c_{2}=\gamma_{\max}\left( q + \sqrt{3r} \right),\quad c_{3}=q\gamma_{\max}, \quad c_{4}=\frac{2\sqrt{3}r}{2\sigma^{2}}.
		\end{align}

		Next, we will use concentration inequalities to argue that the right hand side in \eqref{eq:SDL_MLE_pf_bd_Q2} is small with high probability and obtain the following tail bound on $S$: 
		\begin{align}\label{eq:S_tail_bound}
			\P\left(S>c \sqrt{n} \log n  + 3C\sigma(\sqrt{p} + \sqrt{n}+ c\sqrt{\log n})   \right)  \le \frac{1}{n},
		\end{align}
		where $C>0$ is an absolute constant and $c>0$ can be written explicitly in terms of the constants we use in this proof. Recall that for a random variable $Z$, its sub-Gaussian norm, denoted as $\lVert Z \rVert_{\psi_{2}}$, is the smalleset number $K>0$ such that $\E[\exp(Z^{2}/K^{2})]\le 2$. The constant $C>$ above is the sub-gaussian norm of the standard normal variable, which can be taken as $C\le 36e/\log 2$. Using union bound with Lemmas \ref{lem:hoeffding} and \ref{lem:matrix_norm_bd}, for each $t,t>0$, we get 
		\begin{align}\label{eq:S_bd_pf}
			&\P\left( S > (c_{1}+c_{2}+c_{3}+c_{4})t  + 3C\sigma(\sqrt{p} + \sqrt{n}+ t')   \right)  \\
			&\qquad \le \left( \sum_{i=1}^{3} \P\left( \lVert \mathtt{Q}_{i} \rVert_{2}>t\right)  \right) + \P\left( \lVert n\mathtt{Q}_{4} \rVert_{2}> 3C\sigma(\sqrt{p} + \sqrt{n}+ t')    \right)  \\ 
			&\qquad \le 2\kappa \exp\left( \frac{-t^{2} }{C_{1}^{2} \kappa^{2} n} \right) + 2p \exp\left( \frac{-t^{2}  }{ (C \sigma)^{2} p^{2} n } \right) + 2q \exp\left( \frac{-t^{2}  }{ (C \sigma')^{2}  q^{2} n } \right) + \exp(-(t')^{2}).
		\end{align}
		Indeed, for bounding $\P(\mathtt{Q}_{1}>t)$, we used Lemma \ref{lem:hoeffding} with sub-Gaussian norm $C_{1}=K=\gamma_{\max}/\sqrt{\log 2} $ for the bounded random vector $\dot{\h}(y_{s},\a_{s})$ (see \cite[Ex. 2.5.8]{vershynin2018high}); for $\P(\mathtt{Q}_{2}>t)$ and $\P(\mathtt{Q}_{3}>t)$, we used Lemma \ref{lem:hoeffding} with $K=C \sigma$ and $K=C \sigma'$, respectively; for the last term involving $\mathtt{Q}_{4}$, we used Lemma \ref{lem:matrix_norm_bd} with $K=C/\sigma$. Observe that in order to make the last expression in \eqref{eq:S_bd_pf} small, we will chose $t=c_{5}\sqrt{n}\log n$ and $t'=c_{5}\sqrt{\log n}$, where $c_{5}>0$ is a constant to be determined. This yields
		\begin{align}
			\P\left( S> c \sqrt{n} \log n  + 3C\sigma(\sqrt{p} + \sqrt{n}+ c\sqrt{\log n})   \right) \le  n^{-c_{6}},
		\end{align}
		where $c=c_{5}\sum_{i=1}^{4} c_{i}$ and $c_{6}>0$ is an explicit constant that grows in $c_{5}$. We assume $c_{5}>0$ is such that $c_{6}\ge 1$. This shows \eqref{eq:S_tail_bound}.
		
		To finish, we use Theorem \ref{thm:SDL_LPGD} to deduce that with probability at least $1/n$,
		\begin{align}
			\lVert \bZ_{t} - \bZ^{\star}  \rVert_{F}  \le  \rho^{t}  \, \lVert  \bZ_{0} - \bZ^{\star}\rVert_{F} + \frac{\tau}{1-\rho}\left( c \sqrt{n} \log n  + 3C\sigma(\sqrt{p} + \sqrt{n}+ c\sqrt{\log n})  \right) + \frac{2\nu\tau}{1-\rho}\left( \lVert \A^{\star} \rVert_{2} + \lVert \bGamma^{\star} \rVert_{F} \right)
		\end{align}
		Note that $\tau<\frac{3}{2L}$ with $L = \max(2\xi,\, 2\nu+ n L^{*}) \ge n L^{*}$, so $\tau<\frac{3}{2nL^{*}}$. So this yields the desired result.

		Second, suppose $\bZ^{\star}-\tau \nabla f_{\textup{SDL-filt}}(\bZ^{\star})\notin \Param$. Then we cannot direcly simplify the expression \eqref{eq:grad_mapping_compare_stationary}. In this case, we take the Frobenius norm and use non-expansiveness of the projection operator (onto convex set $\Param$): 
		\begin{align}\label{eq:grad_mapping_compare_stationary2}
			\lVert G(\bZ^{\star}, \tau) \rVert_{F} &=  \frac{1}{\tau} \left\rVert\left[  \Pi_{\Param}\left( \bZ^{\star}-\tau\nabla \mathcal{L}_{n}(\bZ^{\star}) \right) - \Pi_{\Param}\left( \bZ^{\star}-\tau\nabla \bar{\mathcal{L}}_{n}(\bZ^{\star}) \right) \right] \right\rVert_{F}   \\
			&\le \lVert \nabla \mathcal{L}_{n}(\bZ^{\star})-  \nabla \bar{\mathcal{L}}_{n}(\bZ^{\star}) \rVert_{F} \\
			& \le  \lVert \Delta \X^{\star} \rVert_{F} + \lVert \Delta \bGamma^{\star} \rVert_{F}. 
		\end{align}
		According to Remark \ref{rmk:pf_thm_LPGD}, we also have Theorem \ref{thm:CALE_LPGD} (and hence Theorem \ref{thm:SDL_LPGD}) with $\sqrt{3r} \lVert \Delta \X^{\star} \rVert_{2}$ replaced with $\lVert \Delta \X^{\star} \rVert_{F}$. Then an identical argument shows 
		\begin{align}\label{eq:SDL_MLE_pf_bd_Q4}
			S':= \lVert \Delta \X^{\star} \rVert_{F} + \lVert \Delta \bGamma \rVert_{F} \le c_{1} \lVert \mathtt{Q}_{1} \rVert_{2}  + c_{2}\lVert \mathtt{Q}_{2} \rVert_{2} + c_{3}\lVert \mathtt{Q}_{3} \rVert_{2} + c_{4}\lVert \mathtt{Q}_{4} \rVert_{F},
		\end{align}
		where the constants $c_{1},\dots,c_{4}>0$ are the same as in \eqref{eq:c_constants_Q}. So we have 
		\begin{align}
			\lVert \bZ_{t} - \bZ^{\star} \rVert_{F} \le \rho^{t}  \, \lVert  \bZ_{0} - \bZ^{\star}\rVert_{F} + \frac{\tau}{1-\rho}(S' + 2\nu (\lVert \A^{\star} \rVert_{2}+\lVert \A^{\star} \rVert_{F})). 
		\end{align}
		Then an identical argument with the inequality $\lVert \mathtt{Q}_{4}\rVert_{F} \le \sqrt{\min(p,n)} \lVert \mathtt{Q}_{4}\rVert_{2}$ shows 
		\begin{align}\label{eq:S_bd_pf2}
			&\P\left( S' > (c_{1}+c_{2}+c_{3}+c_{4})t  + 3C\sigma(\sqrt{p} + \sqrt{n}+ t')  \sqrt{\min(p,n)}  \right) \\
			&\qquad \le \left( \sum_{i=1}^{3} \P\left( \lVert \mathtt{Q}_{i} \rVert_{2} >t\right)  \right) + \P\left( \lVert \mathtt{Q}_{4} \rVert_{2} > \frac{3C(\sqrt{p} + \sqrt{n}+ t')  }{\sigma}  \right),
		\end{align}
		and the assertion follows similarly as before. 
	\end{proof}
	
	\begin{proof}[\textbf{Proof of Theorem \ref{thm:SDL_LPGD_STAT_feat}}]
		The argument is entirely similar to the proof of Theorem \ref{thm:SDL_LPGD_STAT}. Indeed, denoting $\a_{s}=\A[:,s]+\bGamma^{T}\x_{s}'$ for $s=1,\dots,n$ and keeping the other notations the same as in the proof of Theorem \ref{thm:SDL_LPGD_STAT}, we can compute the following gradients 
		\begin{align}
			\nabla_{\A} (\mathcal{L}_{n} - \bar{\mathcal{L}}_{n})(\A,\B,\bGamma)   &=  \begin{bmatrix} 
				\dot{\h}(y_{1},\a_{1}),\dots,
				\dot{\h}(y_{n},\a_{n})
			\end{bmatrix}  \\
			\nabla_{\vect(\bGamma)} (\mathcal{L}_{n} - \bar{\mathcal{L}}_{n})(\A,\B,\bGamma) &=\left( \sum_{s=1}^{n} \dot{\h}(y_{s},\a_{s}) \otimes \x_{s}' \right) \\
			\nabla_{\B}  (\mathcal{L}_{n} - \bar{\mathcal{L}}_{n})(\A,\B,\bGamma) &=\frac{2}{2\sigma^{2}} (\B^{\star}-\X_{\textup{data}}) = \frac{2}{2\sigma^{2}} [ \beps_{1},\dots,\beps_{n}] \\
			\nabla_{\blambda}  (\mathcal{L}_{n} - \bar{\mathcal{L}}_{n})(\A,\B,\bGamma)  &=\frac{2}{2\sigma^{2}} \sum_{s=1}^{n}\beps_{s}' .
		\end{align}
		Hence repeating the same argument as before, using concentration inequalities for the following random quantities 
		\begin{align}
			\mathtt{Q}_{1}:= \begin{bmatrix} 
				\dot{\h}(y_{1},\a_{1}),\dots,
				\dot{\h}(y_{n},\a_{n})
			\end{bmatrix}  \in \R^{p\times n} ,\quad \mathtt{Q}_{2}:=\sum_{s=1}^{n}  \beps_{s}\in \R^{p} ,\quad \mathtt{Q}_{3}:=\sum_{s=1}^{n}  \beps_{s}'\in \R^{q}, \quad \mathtt{Q}_{4}:= [ \beps_{1},\dots,\beps_{n}]  \in \R^{p\times n},
		\end{align}
		one can bound the size of $G(\bZ^{\star},\tau)$ with high probability. The rest of the details are omitted. 
	\end{proof}

	\section{Proof of Theorem \ref{thm:MLE_local_consistency}}
	
	In this section, we prove the non-asymptotic local consistency of constrained and regularized MLE, stated in Theorem \ref{thm:MLE_local_consistency}. We combine a classical approach in \cite{fan2001variable} with concentration inequalities, namely, a classical Berry-Esseen bound for deviations from standard normal distribution and  a uniform McDirmid bound (Lemma \ref{lem:uniform_McDirmid}). The former is used to control the linear term in the second-order Taylor expansion of the log-likelihood function, and the latter is used to control the second-order term. By using an $\eps$-net argument, the latter  concentration inequality can be extended to a setting where the random variables are parameterized within a compact set.

	\begin{lemma}[A uniform McDirmid's inequality]
		\label{lem:uniform_McDirmid}
		Let $(X_{n})_{n\ge 1}$ be i.i.d. random vectors in $\R^{d}$ from a distribution $\pi$. Fix a compact parameter space $\Param\subseteq \R^{p}$ and $f_{\param}:\R^{d}\rightarrow \R$ is a bounded functional for each $\param\in \Param$ such that 
		\begin{align}\label{eq:McDirmid_cond}
			\lVert f_{\param} - f_{\param'} \rVert_{\infty} \le L\lVert\param-\param' \rVert ,\qquad \forall \param,\param'\in \Param
		\end{align}
		for some constant $L>0$. Further assume that $\E_{X\sim \pi}[f_{\param}(X)]=0$ for all $\param\in \Param$. Then there exists constants $C,M>0$ such that for each $n \ge 0$, and $\eta>0$, 
		\begin{align}
			\P\left(  \sup_{\param\in \Param} \left| \frac{1}{n}\sum_{k=1}^{n} f_{\param}(X_{k}) \right| \ge \eta \right) \le C \exp\left( p\log (1/\eta) + \frac{-\eta^{2}n}{ 2M^{2} }\right),
		\end{align}
		where $C'=4R(\varphi(\Omega))^{2}\tau_{\textup{min}}$.
	\end{lemma}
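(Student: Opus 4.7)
The approach is a standard $\eps$-net (discretization) argument: cover the compact parameter space, apply a pointwise Hoeffding bound at each discretization point, and use the Lipschitz hypothesis \eqref{eq:McDirmid_cond} to control oscillation within each cell. First, observe that the Lipschitz assumption together with boundedness of any single $f_{\param_0}$ upgrades to a uniform bound $\lVert f_\param \rVert_\infty \le M$ for all $\param \in \Param$, with $M := \lVert f_{\param_0}\rVert_\infty + L\,\mathrm{diam}(\Param)$, which is finite since $\Param$ is compact. Moreover, compactness of $\Param \subseteq \R^p$ gives, for any $\eps>0$, an $\eps$-net $\mathcal{N}_\eps \subseteq \Param$ with cardinality $|\mathcal{N}_\eps| \le (C_0/\eps)^p$, where $C_0$ depends only on $\mathrm{diam}(\Param)$.

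Second, for each fixed $\param \in \mathcal{N}_\eps$, the variables $f_\param(X_1),\dots,f_\param(X_n)$ are i.i.d., mean zero, and take values in $[-M,M]$. Hoeffding's inequality yields
\begin{align}
\P\!\left( \left| \frac{1}{n}\sum_{k=1}^{n} f_\param(X_k) \right| \ge \eta/2 \right) \le 2\exp\!\left( \frac{-\eta^2 n}{8M^2}\right),
\end{align}
and a union bound over $\mathcal{N}_\eps$ multiplies the right-hand side by $|\mathcal{N}_\eps| \le (C_0/\eps)^p$.

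Third, to pass from the net to all of $\Param$, for any $\param\in \Param$ pick $\param'\in\mathcal{N}_\eps$ with $\lVert \param-\param'\rVert \le \eps$. The Lipschitz hypothesis \eqref{eq:McDirmid_cond} gives
\begin{align}
\left| \frac{1}{n}\sum_{k=1}^n \bigl(f_\param(X_k)-f_{\param'}(X_k)\bigr) \right| \le L\eps.
\end{align}
Choosing $\eps := \eta/(2L)$ makes this discretization error at most $\eta/2$, so the event that the supremum over $\Param$ exceeds $\eta$ forces the supremum over $\mathcal{N}_\eps$ to exceed $\eta/2$. Combining the two previous displays yields
\begin{align}
\P\!\left( \sup_{\param\in\Param} \left| \frac{1}{n}\sum_{k=1}^n f_\param(X_k) \right| \ge \eta \right) \le 2\bigl(2LC_0/\eta\bigr)^p \exp\!\left( \frac{-\eta^2 n}{8M^2}\right),
\end{align}
which is of the claimed form upon absorbing $(2LC_0)^p$ into the prefactor $C$ (writing $(2LC_0/\eta)^p = \exp(p\log(2LC_0) + p\log(1/\eta))$ and noting that the constant $p\log(2LC_0)$ term can be bundled into $C$).

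There is no substantive obstacle; the argument is routine bookkeeping. The only minor step requiring care is the upgrade from pointwise boundedness of $f_\param$ to a uniform sup-norm bound, which is immediate from the Lipschitz condition on a compact domain. The constants $C$ and $M$ in the statement are explicit but crude and depend only on $L$, $\mathrm{diam}(\Param)$, and the boundedness constant of one fixed $f_{\param_0}$.
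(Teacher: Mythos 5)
Your proof is correct and follows essentially the same route as the paper's: an $\eps$-net of cardinality $O((\textup{diam}(\Param)/\eps)^{p})$, a pointwise concentration bound at each net point with a union bound, and the Lipschitz condition \eqref{eq:McDirmid_cond} with $\eps=\eta/(2L)$ to control the oscillation off the net. The only cosmetic difference is that you invoke Hoeffding's inequality where the paper invokes McDiarmid's bounded-differences inequality, but for a normalized sum of independent bounded mean-zero variables these yield the same bound, so the arguments coincide.
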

	
	\begin{proof}	
		Since $\Param\subseteq \R^{p}$ is compact, it can be covered by a finite number of $L^{2}$-balls of any given radius $\eps>0$. Let $\mathcal{U}_{\eps}$ be such an open cover using the least number of balls of radius $\eps>0$. Let $N(\eps) = |\mathcal{U}_{\eps}|$ denote the least number of such balls to cover $\Param$. Moreover, let $\textup{diam}(\Param)$ denote the diameter of $\Param$, which is finite since $\Param$ is compact. Then $\Param$ is contained in a $p$-dimensional box of side length $\textup{diam}(\Param)$. It follows that there exists a constant $K>0$, depending only on $\textup{diam}(\Param)$ and $rd$, such that 
		\begin{align}\label{eq:bd_N(eps)}
			N(\eps)\le K\left( \frac{\textup{diam}(\Param)}{\eps} \right)^{p}.
		\end{align}
		
		Next, fix $\eta>0$, $\param\in \Param$, and $\eps>0$. Let $\param_{1},\cdots,\param_{N(\eps)}$ be the centers of balls in the open cover $\mathcal{U}_{\eps}$. Then there exists $1\le j \le N(\eps)$ such that $\lVert \param - \param_{j}\rVert <\eps$. By the hypothesis, $f_{\param}$ depends on $\param$ uniformly continuously with respect to the supremum norm. Hence there exists $\delta=\delta(\eps)>0$ such that 
		\begin{align}
			\lVert f_{\param} - f_{\param_{j}} \rVert_{\infty} \le L\eps.
		\end{align} 
		Denote $H_{n}(\param):= n^{-1}\sum_{k=1}^{n} f_{\param}(X_{k})$.  Then this yields, almost surely, 
		\begin{align}
			| H_{n}(\param) - H_{n}(\param_{j}) | \le L\eps.
		\end{align} 
		Furthermore, since $\Param$ is compact and each $f_{\param}$ is bounded, \eqref{eq:McDirmid_cond} implies that $\lVert f_{\param} \rVert_{\infty}$ is uniformly bounded in $\param$ by some constant, say, $M>0$. It follows that for each $\param\in \Param$, $H_{n}(\param)$ changes its value at most by $M$ when one of $X_{1},\dots,X_{n}$ is replaced arbitrarily. Therefore by the standard McDirmid's inequality (see, e.g., \cite[Thm. 2.9.1]{vershynin2018high}) and a union bound, with choosing $ \eps=\eta/(2L)$, 
		\begin{align}
			\P\left(  |H_{n}(\param)| \ge \eta \right) &\le \sum_{j=1}^{N(\eta/2L)} \P\left(  \left| H_{n}(\param_{j}) \right| \ge \eta/2 \right) 	\le K\left( \frac{2L \, \textup{diam}(\Param)}{\eta} \right)^{p}  \exp\left( - \frac{n \eta^{2}}{2M^{2}} \right).
		\end{align}
		The above holds for all $n\ge 1$ and $\eta>0$. This shows the assertion.
	\end{proof}

	Now we prove Theorem \ref{thm:MLE_local_consistency}.

	\begin{proof}[\textbf{Proof of Theorem \ref{thm:MLE_local_consistency}}]
		Let $X_{1},\dots,X_{n}$ denote i.i.d. samples from $\pi_{\param^{\star}}$ and write $\X=[X_{1},\dots,X_{n}]\in \R^{d\times n}$. Recall that 
		\begin{align}
			\mathcal{L}(\X;\param) := \mathcal{L}_{0}(\X;\param) + n R_{n}(\param), \qquad \mathcal{L}_{0}(\X;\param) := -\sum_{i=1}^{n} \log \pi_{\param}(X_{i}),
		\end{align}
		where $R(\cdot)$ is the regularizer used in \eqref{eq:def_MLE_problem}. By the hypothesis, $\mathcal{L}_{0}$ is twice continuously differentiable, so $\E[\nabla \mathcal{L}_{0}]=\nabla \E[\mathcal{L}_{0}]$ and $\E[\nabla^{2} \mathcal{L}_{0}]=\nabla^{2} \E[\mathcal{L}_{0}]$ by the dominated convergence theorem.

		Fix a constant $C>0$ and let $\alpha_{n}:=n^{-1/2}+\lVert \nabla R_{n}(\param^{\star}) \rVert$. We wish to show the probability bound in \eqref{eq:local_consistency_thm}. We first introduce two random variables that we will bound to be small by using some concentration inequalities: 
		\begin{align}\label{eq:def_T_S_pf_consistency}
			T_{n}(\param) &:=\E\left[ \left\langle \nabla_{\param} \mathcal{L}_{0}(X_{i};\param^{\star}),\, \frac{\param-\param^{\star}}{\lVert\param - \param^{\star} \rVert} \right\rangle \right]- \frac{1}{\sqrt{n}}\sum_{i=1}^{n}  \left\langle \nabla_{\param} \mathcal{L}_{0}(X_{i};\param^{\star}),\, \frac{\param-\param^{\star}}{\lVert\param - \param^{\star} \rVert} \right\rangle \\
				&=  \left\langle  \frac{1}{\sqrt{n}}\sum_{i=1}^{n} \left(  \nabla_{\param} \mathcal{L}_{0}(X_{i};\param^{\star}) - \E\left[  \nabla_{\param} \mathcal{L}_{0}(X_{i};\param^{\star}) \right] \right) ,\, \frac{\param-\param^{\star}}{\lVert\param - \param^{\star} \rVert} \right\rangle,  \\
			S_{n}(\param) &:= 
			\frac{(\param-\param_{0})^{T}}{\lVert \param-\param^{\star} \rVert} \left( \frac{1}{n}\nabla_{\param}\nabla_{\param^{T}} 
			\mathcal{L}(\X \,;\,  \param^{\star})  - \nabla_{\param}\nabla_{\param^{T}} 
			\left( \E\left[ \mathcal{L}(X_{1} \,;\,  \param^{\star})\right]  \right) \right) \frac{\param-\param_{0}}{\lVert \param-\param_{0}\rVert}.
		\end{align}
		Fix $\param\in \Param$ such that $\lVert \param-\param^{\star} \rVert=C\alpha_{n}$. Since $\param\mapsto \mathcal{L}(\X\,;\, \param)$  is assumed to be three-times continuously differentiable, the quantity $M\ge 0$ in the assertion is well-defined and is finite. Then using a Taylor expansion, we  may write 
		\begin{align}\label{eq:thm_local_consistency_taylor}
			\mathcal{L}(\X \, ;\,  \param)  -  \mathcal{L}(\X \, ;\,  \param^{\star})   \ge \langle \nabla_{\param} \mathcal{L}(\X;\param^{\star}),\, \param-\param^{\star} \rangle   + \frac{1}{2} (\param-\param_{0})^{T} \nabla_{\param}\nabla_{\param^{T}} \mathcal{L}(\X \,;\,  \param^{\star})   (\param-\param_{0}) -  \frac{M}{6} (C\alpha_{n})^{3}.
		\end{align}
		We will lower bound the first two terms in the right hand side above. Note that 
		\begin{align}
			\langle \nabla_{\param} \mathcal{L}(\X;\param^{\star}),\, \param-\param^{\star} \rangle &= \left[  \langle \nabla_{\param} \mathcal{L}(\X;\param^{\star}),\, \param-\param^{\star} \rangle- \E\left[ \langle \nabla_{\param} \mathcal{L}(\X;\param^{\star}),\, \param-\param^{\star} \rangle \right] \right] \\
			&\hspace{3cm} +n \langle \nabla_{\param} \E[\mathcal{L}_{0}(X_{1};\param^{\star})] ,\, \param-\param^{\star} \rangle  + n\langle \nabla R_{n}(\param^{\star}),\, \param-\param^{\star} \rangle \\
			&\overset{(a)}{\ge}  \langle \nabla_{\param} \mathcal{L}_{0}(\X;\param^{\star}),\, \param-\param^{\star} \rangle- \E\left[ \langle \nabla_{\param} \mathcal{L}_{0}(\X;\param^{\star}),\, \param-\param^{\star} \rangle \right] - n \lVert \nabla R_{n}(\param^{\star}) \rVert \, \lVert \param-\param^{\star} \rVert\\
			&= -\sqrt{n} \lVert \param-\param^{\star} \rVert T_{n}(\param) - n \lVert \nabla R_{n}(\param^{\star}) \rVert \, \lVert \param-\param^{\star} \rVert \\
			&\overset{(b)}{\ge} -C \sqrt{n} \alpha_{n} T_{n}(\param) - Cn\alpha_{n}^{2},
		\end{align}
		where for (a) we use the fact that $\param^{\star}$ is a stationary point of $\E[\mathcal{L}_{0}(X_{1};\param)]$ over $\Param$ and Cauchy-Schwarz inequality; 
		for (b) we used that $\max(n^{-1/2},\lVert \nabla R_{n}(\param^{\star})  \rVert) \le \alpha_{n}$.

		Next, we turn our attention to the second order term in the Taylor expansion \eqref{eq:thm_local_consistency_taylor}. 
		Under assuming $\lVert \param-\param^{\star} \rVert=C\alpha_{n}$, note that 
		\begin{align}
			(\param-\param^{\star})^{T} \nabla_{\param}\nabla_{\param^{T}} \mathcal{L}(\X \,;\,  \param^{\star})   (\param-\param^{\star}) &= C^{2}n \alpha^{2} \frac{(\param-\param^{\star})^{T}}{\lVert \param-\param^{\star} \rVert} \left( \frac{1}{n}\nabla_{\param}\nabla_{\param^{T}} 
			\mathcal{L}(\X \,;\,  \param^{\star})   \right) \frac{\param-\param^{\star}}{\lVert \param-\param^{\star}\rVert} \\
			&\ge C^{2}n\alpha_{n}^{2}\left( S_{n}(\param) + \lambda_{*}\right),
		\end{align}
		where the inequality follows from the hypothesis, which implies
		\begin{align}
			\E\left[ \frac{1}{n}\nabla_{\param}\nabla_{\param^{T}} 
			\mathcal{L}(\X \,;\,  \param^{\star}) \right] = \nabla_{\param}\nabla_{\param^{T}} 
			\left( \E\left[ \mathcal{L}(X_{1} \,;\,  \param^{\star})\right]  \right) \succeq \lambda_{*} \I_{p},
		\end{align}
		where $\lambda_{*}>0$ is a constant. Combining the above inequalities, we obtain  
		\begin{align}\label{eq:local_consistency_pf_1}
			&\hspace{-0.5cm} \inf_{\substack{\param\in \Param \\ \lVert \param-\param^{\star} \rVert =C\alpha_{n}} }  	\frac{1}{C^{2}n\alpha_{n}^{2}}\left( \mathcal{L}(\X \, ;\,  \param)  -  \mathcal{L}(\X \, ;\,  \param^{\star})  \right) \\
			&\qquad \ge \underbrace{\frac{1}{C}\left(  \inf_{\substack{\param\in \Param \\ \lVert \param-\param^{\star} \rVert =C\alpha_{n}} }   \frac{-T_{n}(\param)}{\sqrt{n} \alpha_{n}}  - 1 \right)}_{=:A} + \underbrace{\left(  \inf_{\substack{\param\in \Param \\ \lVert \param-\param^{\star} \rVert =C\alpha_{n}} }  S_{n}(\param) + \lambda_{*} \right)}_{=:B} - \frac{M C\alpha_{n}}{6n}. 
		\end{align}
		
		According to the hypothesis, we have 
		\begin{align}
			(MC/6)(n^{-3/2} + n^{-1} \rVert \nabla R_{n}(\param^{\star}) \rVert)  =\frac{M C\alpha_{n}}{6n} \le \lambda_{*}/4.
		\end{align}
		Then the last expression in \eqref{eq:local_consistency_pf_1} is at least $\lambda_{*}/4$ if $A\ge -\lambda_{*}/4$ and $B\ge 3\lambda_{*}/4$. Moreover, Note that for any two events $E_{1},E_{2}$ defined on the same probability space, $\P(E_{1}\cap E_{2})\ge \P(E_{1})+\P(E_{2})-1$. Notice that $A$ and $B$ above are random variables defined on the same probability space, which are deterministic functions of the common random vectors $X_{1},\dots,X_{n}$. Thus 
		\begin{align}\label{eq:local_consistency_pf_11}
			\P\left( \inf_{\substack{\param\in \Param \\ \lVert \param-\param^{\star} \rVert =C\alpha_{n}} }  	\frac{1}{C^{2}n\alpha_{n}^{2}}\left( \mathcal{L}(\X \, ;\,  \param)  -  \mathcal{L}(\X \, ;\,  \param^{\star}) \right)  \ge \lambda_{*}/4  \right)  \ge \P(A\ge -\lambda_{*}/4) + \P( B\ge 3\lambda_{*}/4) -1.
		\end{align}
		Note that by the uniform McDirmid's inequality in Lemma \ref{lem:uniform_McDirmid}, there exists constants $C',C''>0$ such that 
		\begin{align}\label{eq:local_consistency_pf_111}
			\P( B <  3\lambda_{*}/4) &\le \P\left(   \inf_{\substack{\param\in \Param \\ \lVert \param-\param^{\star} \rVert =C\alpha_{n}} }  S_{n}(\param) < -\lambda_{*}/4  \right) \le  C'\exp(-C'' n).
		\end{align}
		On the other hand, we will show the following inequalities: 
		\begin{align}
			\P(A< -\lambda_{*}/4) \overset{(c)}{\le}  \P\left(  \sup_{\substack{\param\in \Param \\ \lVert \param-\param^{\star} \rVert =C\alpha_{n}} }  T_{n}(\param)    > \frac{C\lambda_{*}}{4}-1\right) &\overset{(d)}{\le} M \left( \P\left(Z \ge  p^{-1/2}\left(\frac{C\lambda_{*}}{4}-1\right) \right) +   \frac{K}{\sqrt{n}}\right) \\
			&\overset{(e)}{\le} M \left( \exp\left(-\left( \frac{C\lambda_{*}}{4}-1 \right)^{2}\bigg/2\sqrt{p} \right)  +   \frac{K}{\sqrt{n}}\right), \label{eq:local_consistency_pf_2}
		\end{align}
		where $Z\sim N(0,1)$ is an independent standard normal random variable and  $M, K$ are constants that does not depend on $n$ and $\eta$. Then the assertion will follow by combining  \eqref{eq:local_consistency_pf_1},  \eqref{eq:local_consistency_pf_11},  \eqref{eq:local_consistency_pf_111}, and  \eqref{eq:local_consistency_pf_2}. Note that (c) in \eqref{eq:local_consistency_pf_2} follows from the definition of $A$ in \eqref{eq:local_consistency_pf_1} and the fact that $\sqrt{n}\alpha_{n} = 1+n^{-1/2} \lVert \nabla R_{n}(\param^{\star}) \rVert \ge 1$. Also note that (e) above is a simple consequence of Hoeffding's bound for the standard normal tail.

		It remains to verify (d) in \eqref{eq:local_consistency_pf_2}. To this end, we first wright $R_{n}:= \frac{1}{\sqrt{n}}\sum_{i=1}^{n} \left(  \nabla_{\param} \mathcal{L}_{0}(X_{i};\param^{\star}) - \E\left[  \nabla_{\param} \mathcal{L}_{0}(X_{i};\param^{\star}) \right] \right)  = [R_{n}^{(1)},\dots, R_{n}^{(p)}]\in \R^{p}$. By Cauchy-Schwarz inequality, 
		\begin{align}
				| T_{n}(\param)  |^{2} \le \lVert R_{n} \rVert^{2}=  \sum_{i=1}^{p}  \left( R_{n}^{(i)} \right)^{2},
		\end{align}
		where the upper bound  $R_{n}$ does not depend on $\param$. Hence by union bound, 
		\begin{align}\label{eq:pf_sup_T_union_bd}
			\P\left(   \sup_{\substack{\param\in \Param \\ \lVert \param-\param^{\star} \rVert =C\alpha_{n}} }  T_{n}(\param)   \ge t  \right) & \le \P\left(  \lVert R_{n} \rVert \ge t  \right) \le \sum_{i=1}^{p} \P\left( |R_{n}^{(i)}|  \ge  \frac{t}{\sqrt{p}} \right). 
		\end{align}
		Note that if $R^{(i)}_{n}=0$ a.s., then the coordsponding tail probability in the last expression above is zero whenever $t\ne 0$. Since $\E[ R^{(i)}_{n} ]=0$, this is the case if $\var( R^{(i)}_{n}  )=0$. So we may assume without loss of generality that $\var( R^{(i)}_{n}  )>0$ for $1\le i \le p$. Then, by the definition of $R_{n}^{(i)}$ and the classical Berry-Esseen theorem (see, e.g., \cite[Thm. 3.4.17]{Durrett}), for all $z\in \R$,
		\begin{align}
			\sup_{z\in \R} \left| \P\left( R_{n}^{(i)} \le  z\right)  - \P\left( Z\le z  \right) \right| \le \frac{3 \, \E[ | R_{n}^{(i)} |^{3} ]  }{ \var (R_{n}^{(i)} )^{3/2} \sqrt{ n}}.
		\end{align} 
		 Note that $\E[ | R_{n}^{(i)} |^{3} ]  <\infty$ by the hypothesis and $\var (R_{n}^{(i)} ) >0$  by the assumption we just made above. Combining with \eqref{eq:pf_sup_T_union_bd}, we obtain 
		\begin{align}\label{eq:pf_sup_T_union_bd}
			\P\left(   \sup_{\substack{\param\in \Param \\ \lVert \param-\param^{\star} \rVert =C\alpha_{n}} }  T_{n}(\param)   \ge t  \right) & \le  2p\left(   \P( Z \ge t/\sqrt{p} ) +  \frac{K}{\sqrt{n}} \right),
		\end{align}
		where $K\in (0,\infty)$ is the maximum of $3 \, \E[ | R_{n}^{(i)} |^{3} ]  \bigg/  \E[ |R_{n}^{(i)}|^{2} ]^{3/2} $ for $i=1,\dots,p$. Thus (d) in  \eqref{eq:local_consistency_pf_2} follows. 
	\end{proof}

	\section{Generative model for strongly constrained feature-based SDL} 
	\label{section:statistical_estimation_BCD_feat}
	
	In Section \ref{subsection:statistical_estimation_BCD}, we have considered a generative model for the filter-based SDL with strong constraints and derived a local consistency result. In this section, we discuss a parallel model for a generative feature-based SDL with strong constraints. Unlike for the filter-based case, we would need to formulate a latent-variable model where the `code matrix' $\H=[h_{1},\dots,\h_{n}]$ is the latent variable, and our general theory of non-asymptotic local consistency of constrained and regularized MLE applies only approximately.

	Suppose that the data, auxiliariy covariate, and label triples $(\x_{i}, \x_{i}', y_{i})$ are drawn i.i.d. according to the following generative model:
	\begin{align}\label{eq:SDL_prob_BCD_feat}
		&\h_{i}\sim N(\h^{\star}, \sigma_{h}^{2}\I_{r}),\quad \x_{i} \sim  N\left( \W^{\star}\h_{i}, \sigma^{2}\I_{p} \right), \quad \x_{i}'\sim N(\blambda^{\star}, (\sigma')^{2}\I_{q}),\\
		& y_{i}\,|\, \x_{i}, \x_{i}' \sim \text{Multinomial}\big(1, \g\left( (\Beta^{\star})^{T} \h_{i}+ (\bGamma^{\star})^{T}\x_{i}'\right) \big), \\
		&\textup{where}\quad  \W^{\star}\in \R^{p\times r},\,  \h^{\star}\in \R^{r\times 1},\,  \Beta^{\star}\in  \R^{r\times \kappa},\,  \bGamma^{\star}\in \R^{q\times \kappa},\,\blambda^{\star}\in \R^{q\times 1} \,\, \text{s.t. $[\W^{\star},\h^{\star},\Beta^{\star},\bGamma^{\star}]\in \Param$}.
	\end{align}
	As before, $\Param= \mathcal{C}^{\textup{dict}}\times \mathcal{C}^{\textup{code}}\times  \mathcal{C}^{\textup{beta}} \times \mathcal{C}^{\textup{aux}}$ is the product of convex constraint sets on individual factors. We assume $(\x_{i},\x_{i}',y_{i})$ for $i=1,\dots,n$ are i.i.d. observed data and $\h_{i}$ for $i=1,\dots,n$ are i.i.d. latent variables that we do not observe. However, we assume that the mean latent variable $\h^{\star}$ is known\footnote{In \cite{mairal2008supervised}, a similar model was considered with $\h^{\star}=\mathbf{0}$.}. We also assume $\x_{i}$ and $\x_{i}'$ are independent for each $1\le i \le n$. We call the above the \textit{generative feature-based SDL model}. Assuming that $\sigma_{n},\sigma,$ and $\sigma'$ are known, our goal is to estimate the true factors $\W^{\star}$, $\Beta^{\star}, \bGamma^{\star}$, and $\blambda^{\star}$ from an observed sample $(\x_{i},\x'_{i},y_{i})$, $i=1,\dots,n$ of size $n$. 
	
	We consider the maximum likelihood estimation framework with $L_{2}$-regularization of the parameters. Namely, define
	\begin{align}\label{eq:SDL_likelihood_BCD_feature_cond}
		\mathcal{\ell}_{n}(\W,\H,\Beta, \bGamma)&:=-\sum_{j=0}^{\kappa}\mathbf{1}(y_{i}=j)g_{j}(\Beta^{T}\h_{i} +  \bGamma^{T} \x_{i}' ) )   +  \frac{1}{2\sigma^{2}} \lVert \X_{\textup{data}} - \W \H \rVert_{F}^{2},
	\end{align}
	which is the negative log likelihood of observing the samples conditional on the hidden variable $\H=[h_{1},\dots,\h_{n}]$ and also on the auxiliary covarite $\x_{1}',\dots,\x_{n}'$. Integrating out the hidden variable $\H$, we obtain the negative log likelihood of observing the data conditional on the auxiliary covariate: 
	\begin{align}\label{eq:feat_SDL_approx_MAP0}
		\mathcal{L}_{n}(\W,\Beta, \bGamma) = -\log \left(  \int_{\R^{r\times n}} \exp\left( -\mathcal{\ell}_{n}(\W,\H,\Beta, \bGamma)\right) \exp\left( -\frac{1}{(2\sigma_{h})^{r}} \left( \sum_{i=1}^{n}\lVert \h_{i}-\h^{\star} \rVert^{2}\right)   \right) \,d \H \right).
	\end{align}
	Note that when the variance of the hidden variable $\sigma_{h}^{2}$ is assumed to be small, the leading contribution to the integral over $\H$ above comes when $\H$ is set to the maximizer of the integrand. Hence in this case, the above can be approximate as
	\begin{align}\label{eq:feat_SDL_approx_MAP1}
		\mathcal{L}_{n}(\W,\Beta, \bGamma) \propto \min_{\H} \,\,  \ell_{n}(\W,\H,\Beta,\bGamma) +\frac{1}{(2\sigma_{h})^{r}} \left( \sum_{i=1}^{n}\lVert \h_{i}-\h^{\star} \rVert^{2}\right) .
	\end{align}
	Consequently, we may estimate the unknown true parameters $\W^{\star}, \Beta^{\star},$ and $\bGamma^{\star}$ as the minimizer of the following loss function 
	\begin{align}\label{eq:SDL_likelihood_BCD_filter_full}
		L(\W, \H, \Beta, \bGamma,\blambda) &  :=  \ell_{n}(\W, \H,\Beta, \bGamma) +\frac{1}{(2\sigma_{h})^{r}} \left( \sum_{i=1}^{n}\lVert \h_{i}-\h^{\star} \rVert^{2}\right) + \frac{pn\log \sigma}{2} +   \frac{qn\log \sigma'}{2} +  \frac{1}{(2(\sigma')^{2})^{q}} \sum_{i=1}^{n} \lVert \x_{i}'-\blambda \rVert^{2},
	\end{align}
	where we also simultaneously estimate the hidden variable $\H$. This is analogous to the `MAP approximation' approach and `generative training' employed in \cite{mairal2008supervised}. 
	
	Note that the first two terms on the right-hand side above are equivalent to the feature-based SDL loss in \eqref{eq:ASDL_1} with an additional quadratic regularizer on $\H$. Hence we can compute its derivate and Hessian using a similar computation as for the filter-based model in Subsection \ref{subsection:statistical_estimation_BCD}. By adding a suitable $L_{2}$-regularizer for $\W,\Beta$, and $\bGamma$ (also possibly for $\H$ with different regularization coefficient), we can obtain a similar non-asymptotic local consistency result as we established in Theorem \ref{thm:SDL_BCD_STAT_filt} for the generative filter-based SDL. However, in order to obtain a precise error bound as in Theorem \ref{thm:SDL_BCD_STAT_filt} for the generative feature-based SDL we consider here, we need to account for the `MAP approximation' of the likelihood function in \eqref{eq:feat_SDL_approx_MAP0}, where we replaced integrating over $\H$ by maximizing over $\H$ assuming the variance of $\h_{i}$'s are small. It should be straightforward to estimate this approximation error and relate the Hessian of $\mathcal{L}_{n}(\W,\Beta,\bGamma)$ in \eqref{eq:feat_SDL_approx_MAP1} with the Hessian of $\ell_{n}(\W,\H,\Beta,\bGamma)$. Then we can apply Theorem \ref{thm:MLE_local_consistency} with an explicit lower bound on the added $L_{2}$-regularization coefficients to make the Fisher information positive definite. We omit the details of this sketch in this work.

	\section{Auxiliary lemmas from optimization}

	\begin{lemma}\label{lem:gradient_mapping}
		Fix a differentiable function $f:\R^{p}\times \R$ and a convex set $\Param\subseteq \R^{p}$. Fix $\tau>0$ and \begin{align}
			G(\bZ, \tau):= \frac{1}{\tau}(\bZ - \Pi_{\Param}(\param - \tau\nabla f(\param))). 
		\end{align} 
		Then for each $\param\in \Param$, $\lVert G(\param,\tau) \rVert\le \lVert \nabla f(\param) \rVert$. 
	\end{lemma}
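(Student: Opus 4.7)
The plan is to exploit the fact that $\param \in \Param$ implies $\Pi_{\Param}(\param) = \param$, together with the non-expansiveness of the projection operator onto a convex set. First, I would rewrite
\begin{align}
\tau \lVert G(\param, \tau) \rVert = \lVert \param - \Pi_{\Param}(\param - \tau \nabla f(\param)) \rVert = \lVert \Pi_{\Param}(\param) - \Pi_{\Param}(\param - \tau \nabla f(\param)) \rVert,
\end{align}
where the second equality uses $\param \in \Param$.

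Next, I would invoke the standard fact that Euclidean projection onto a convex subset of a Hilbert space is $1$-Lipschitz, i.e., $\lVert \Pi_{\Param}(\mathbf{u}) - \Pi_{\Param}(\mathbf{v}) \rVert \le \lVert \mathbf{u} - \mathbf{v} \rVert$ for all $\mathbf{u}, \mathbf{v}$. Applying this with $\mathbf{u} = \param$ and $\mathbf{v} = \param - \tau \nabla f(\param)$ gives
\begin{align}
\tau \lVert G(\param, \tau) \rVert \le \lVert \param - (\param - \tau \nabla f(\param)) \rVert = \tau \lVert \nabla f(\param) \rVert.
\end{align}
Dividing both sides by $\tau > 0$ yields the claimed bound.

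There is no real obstacle here; the argument is a one-line consequence of non-expansiveness of the convex projection, and the only ingredient beyond that is the membership $\param \in \Param$, which is part of the hypothesis of the lemma.
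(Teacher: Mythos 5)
Your proof is correct, but it takes a slightly different route from the paper's. You reduce the claim to the $1$-Lipschitz property of the convex projection, using $\param\in\Param$ to write $\param=\Pi_{\Param}(\param)$ and then comparing the two projected points; the paper instead works directly from the variational characterization of the projection, namely that $\hat{\param}:=\Pi_{\Param}(\param-\tau\nabla f(\param))$ satisfies $\langle \hat{\param}-\param+\tau\nabla f(\param),\,\param'-\hat{\param}\rangle\ge 0$ for all $\param'\in\Param$, plugs in $\param'=\param$, and applies Cauchy--Schwarz. The two arguments are essentially equivalent in content (non-expansiveness of $\Pi_{\Param}$ is itself proved from that same variational inequality), so yours is a legitimate one-line shortcut that packages the key inequality into a standard cited fact. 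The paper's route has the minor advantage of passing through the intermediate bound $\tau\lVert G(\param,\tau)\rVert^{2}\le \langle \nabla f(\param),\,\param-\hat{\param}\rangle$, i.e.\ $\lVert G(\param,\tau)\rVert^{2}\le\langle\nabla f(\param),\,G(\param,\tau)\rangle$, which is slightly stronger than the stated conclusion and is the form typically needed in descent-lemma arguments; your approach skips that refinement but delivers the lemma as stated with less work. Either proof is acceptable.
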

	
	\begin{proof}
		The assertion is clear if $\lVert G(\param,\tau) \rVert=0$, so we may assume $\lVert G(\param,\tau) \rVert>0$. 
		Denote $\hat{\param}:=\Pi_{\Param}(\param- \tau\nabla f(\param)))$. Note that 
		\begin{align}
			\hat{\param}= \argmin_{\param'} \, \lVert \param - \tau\nabla f(\param) - \param' \rVert^{2},
		\end{align}
		so by the first-order optimality condition, 
		\begin{align}
			\langle \hat{\param} - \param + \tau\nabla f(\param) ,\, \param' -\hat{\param}  \rangle \ge 0 \quad \forall \param'\in \Param. 
		\end{align}
		Plugging in $\param'=\param$ and using Cauchy-Schwarz inequality, 
		\begin{align}
			\tau^{2} \lVert G(\param,\tau) \rVert^{2} =  \lVert \param - \hat{\param} \rVert^{2} \le \tau \langle \nabla f(\param),\, \param-\hat{\param} \rangle \le \tau \lVert \nabla f(\param) \rVert \, \tau \lVert G(\param, \tau)  \rVert.
		\end{align}
		Hence the assertion follows by dividing both sides by $\tau^{2}\lVert G(\param,\tau) \rVert>0$. 
	\end{proof}

	\section{Generalized multinomial logistic Regression}\label{sec:MLR}
	In this section, we provide some background on a generalized multinomial logistic regression and record some useful computations. (See \cite{bohning1992multinomial} for backgrounds on multinomial logistic regression.)
	Without loss of generality, we can assume that the $\kappa$ classes are the integers in $\{1,2,\dots, \kappa \}$. Say we have training examples $(\bphi(\x_{1}),y_{1}),\dots,(\bphi(\x_{N}),y_{N})$, where 
	\begin{enumerate}[label={\textbf{$\bullet$}}]
		\item $\x_{1},\dots,\x_{N}$: Input data (e.g., collection of all medical records of each patient) 		
		\item $\bphi_{i}:=\bphi(\x_{1}),\dots,\bphi_{N}:=\bphi(\x_{N})\in \R^{p}:$ Features (e.g., some useful (derived) information for each patient)
		\item $y_{1},\dots,y_{n}\in \{0, 1,\dots, \kappa \}$: $\kappa$ class labels (e.g., digits from 0 to 9). 
	\end{enumerate}
	The basic idea of multinomial logistic regression is to model the output $y$ as a discrete random variable  $Y$ with probability mass function  $\p=[p_{0}, p_{1},\dots,p_{\kappa}]$ that depends on the observed feature $\bphi(\x)$, link function $h:\R \rightarrow \R$, and a parameter $\W=[\w_{1},\dots,\w_{\kappa}]\in \R^{p\times \kappa}$ through the following relation: 
	\begin{align}\label{eq:MLR}
		p_{0}= \frac{1}{1+\sum_{c=1}^{\kappa} h(\langle \bphi(\x), \w_{c} \rangle)},\qquad 	p_{j}=\frac{h(\langle \bphi(\x), \w_{i} \rangle)}{1+\sum_{c=1}^{\kappa} h(\langle \bphi(\x), \w_{c} \rangle)},\qquad \text{for $j=1,\dots, \kappa$}.
	\end{align}
	That is, given the feature vector $\bphi(\x)$, the probability $p_{i}$ of $\x$ having label $i$ is proportional to $h$ evaluated at the `linear activation' $\langle \bphi(\x), \w_{i} \rangle$. Note that using $h(x)=\exp(x)$, the above multiclass classification model reduces to the classical multinomial logistic regression. In this case, the corresponding predictive probability distribution $\p$ is called  the \textit{softmax distribution} with activation $\mathbf{a}=[a_{1},\dots,a_{\kappa}]$ with $a_{i}=\langle \bphi(\x), \w_{i} \rangle$ for $i=1,\dots,\kappa$. Notice that this model has parameter vectors $\w_{1},\dots,\w_{\kappa}\in \R^{p}$, one for each of the $\kappa$ nonzero class labels.

	Next, we derive the maximum log likelihood formulation for finding optimal parameter $\W$ for the given training set $(\bphi_{i},y_{i})_{i=1,\dots,N}$. For each $1\le i \le N$ and $1\le j \le \kappa$, denote $p_{ij} := h(\langle \bphi_{i}, \w_{j} \rangle) \big/ \sum_{c=1}^{\kappa} h(\langle \bphi_{i}, \w_{c} \rangle)$, the predictive probability of the $y_{i}$ given $\bphi_{i}$ being $j$. We introduce the following matrix notations 
	\begin{align}
		\begin{matrix}
			\Y:=
			\begin{bmatrix}
				\mathbf{1}(y_{1}=1) & \cdots & \mathbf{1}(y_{1}=\kappa) \\ \vdots & & \vdots  \\ \mathbf{1}(y_{N}=1)  & \cdots & \mathbf{1}(y_{N}=\kappa)
			\end{bmatrix} 
			\\ \\ 
			\quad \in \{0,1\}^{N\times \kappa}
		\end{matrix}
		,\,\, 
		\begin{matrix}
			\bP:=
			\begin{bmatrix}
				p_{11} & \cdots & p_{1\kappa} \\ \vdots & & \vdots  \\ p_{N 1}  & \cdots & p_{N \kappa}
			\end{bmatrix} 
			\\ \\ 
			\quad \in \{0,1\}^{N\times \kappa}
		\end{matrix}
		,\,\, 
		\begin{matrix}
			\bPhi:= 	
			\begin{bmatrix}
				\uparrow & & \uparrow \\
				\bphi(\x_{1})  & \cdots &  \bphi(\x_{N}) \\
				\downarrow &  &\downarrow
			\end{bmatrix}
			\\ \\
			\quad \in \R^{p\times N}
		\end{matrix}
		,\,\, 
		\begin{matrix}
			\W := 
			\begin{bmatrix}
				\uparrow & & \uparrow \\
				\w_{1}  & \cdots &  \w_{\kappa}\\
				\downarrow &  &\downarrow
			\end{bmatrix}
			.
			\\ \\
			\quad \in \R^{p\times \kappa}
		\end{matrix}
	\end{align}
	Note that the $s$th row of $\Y$ is a one-hot encoding of the lable $y_{s}$ and the corresponding row of $\Q$ is its predictive probability distribution. Then the joint likelihood function of observing labels $(y_{1},\dots,y_{N})$ given input data $(\x_{1},\dots,\x_{N})$ under the above probabilistic model is 
	\begin{align}
		L(y_{1},\dots,y_{N}\,;\, \W) = \P(Y_{1}=y_{1},\dots,Y_{N}=y_{N}\, ;\, \W)=  \prod_{s=1}^{N} \prod_{j=1}^{\kappa}  (p_{sj})^{\mathbf{1}(y_{s}=j)}.
	\end{align}
	We can derive the negative log likelihood function $\ell(\bPhi, \W)
	:= - \sum_{s=1}^{N} \sum_{j=1}^{\kappa} \mathbf{1}(y_{s}=j) \log p_{sj}$ in a matrix form as follows: 
	\begin{align}
		\ell(\bPhi, \W)
		& = \sum_{s=1}^{N}  \log \left( \sum_{c=1}^{\kappa} h(\langle \bphi(\x_{s}), \w_{c} \rangle ) \right) - \sum_{s=1}^{N}  \sum_{j=1}^{\kappa} \mathbf{1}(y_{s}=j) \log h\left(  \langle \bphi(\x_{s}), \w_{j}\rangle \right)  \\
		&= \left( \sum_{s=1}^{N} \log \left( \sum_{q=1}^{\kappa} h(\langle \bphi(\x_{s}), \w_{q} \rangle ) \right) \right)- \tr\left( \Y^{T} h(\bPhi^{T} \W) \right).
	\end{align}
	Then the maximum likelihood estimate $\hat{\W}$ is defined as the minimizer of the above loss function in $\W$ while fixing the feature matrix $\bPhi$. 
	
	Both the maps $\W\mapsto \ell(\bPhi, \W)$ and $\bPhi\mapsto \ell(\bPhi, \W)$ are convex and we can compute their gradients as well as the Hessian explicitly as follows. For each $y\in \{0,1,\dots\kappa\}$, $\bphi\in \R^{p}$, and $\W\in \R^{p\times \kappa}$, define vector and matrix functions 
	\begin{align}\label{eq:MNL_h_def_appendix}
		\dot{\h}(y,\bphi, \W)& :=(\dot{h}_{1},\dots, \dot{h}_{\kappa})^{T}\in \R^{\kappa\times 1}, \qquad \dot{h}_{j} :=\left( \frac{h'(\langle \bphi,\w_{j} \rangle)}{1+\sum_{c=1}^{\kappa} h(\langle \bphi,\w_{c} \rangle)} - \mathbf{1}(y=j)\frac{h'(\langle \bphi,\w_{j} \rangle)}{h(\langle \bphi,\w_{j} \rangle)} \right)\\
		\ddot{\H}(y,\bphi,\W) & := \left( \ddot{\H}_{ij}  \right)_{i,j} \in \R^{\kappa\times \kappa}, \qquad \ddot{\H}_{ij} = \left( \begin{matrix} \hspace{-2cm} \frac{ h''(\langle \bphi,\w_{j} \rangle) \mathbf{1}(i=j)  }{  1+\sum_{c=1}^{\kappa} h(\langle \bphi,\w_{c} \rangle)  } - \frac{ h'(\langle \bphi,\w_{i} \rangle) h'(\langle \bphi,\w_{j} \rangle)   }{  \left( 1+\sum_{c=1}^{\kappa} h(\langle \bphi,\w_{c} \rangle) \right)^{2} }  \\
			\hspace{2cm} - \mathbf{1}(y=i=j) \left( \frac{h''(\langle \bphi,\w_{j} \rangle)}{h(\langle \bphi,\w_{j} \rangle)} - \frac{\left( h'(\langle \bphi,\w_{j} \rangle) \right)^{2} }{\left( h(\langle \bphi,\w_{j} \rangle) \right)^{2}}   \right) \end{matrix}  \right).
	\end{align}
	For each $\W=[\w_{1},\dots,\w_{\kappa}]\in \R^{p\times \kappa}$, let $\W^{\textup{vec}}:=[\w_{1}^{T},\dots, \w_{\kappa}^{T}]^{T}\in \R^{p\kappa}$ denote its vectorization.
	Then a straightforward computation shows 
	\begin{align}
		\nabla_{\vect(\W)}   \ell(\bPhi, \W) = \sum_{s=1}^{N} \dot{\h}(y_{s},\bphi_{i}, \W) \otimes \bphi_{s},\qquad 	\H:= \nabla_{\vect(\W)}  \nabla_{\vect(\W)^{T}}   \ell(\bPhi, \W) = \sum_{s=1}^{N} \ddot{\H}(y_{s},\bphi_{s}, \W) \otimes \bphi_{s}\bphi_{s}^{T},
	\end{align}
	where $\otimes$ above denotes the Kronecker product. 	Recall that the eigenvalues of $\A\times \B$, where $\A$ and $\B$ are two square matrices, are given by $\lambda_{i}\mu_{j}$, where $\lambda_{i}$ and $\mu_{j}$ run over all eigenvalues of $\A$ and $\B$, respectively. Hence we can deduce 
	\begin{align}\label{eq:MNL_evals_bounds}	
		\lambda_{\min}\left( \bPhi \bPhi^{T} \right) 	\min_{1\le s \le N,\, \W} \lambda_{\min}\left( \ddot{\H}(y_{s},\bphi_{s}, \W) \right) 	&\le \lambda_{\min}(\H) \\
		&\le  \lambda_{\max}(\H)\le \lambda_{\max}\left( \bPhi \bPhi^{T} \right) 	\max_{1\le s \le N,\, \W} \lambda_{\min}\left( \ddot{\H}(y_{s},\bphi_{s}, \W) \right).
	\end{align}

	There are some particular cases worth noting. First, suppose binary classification case, $\kappa=1$. Then the Hessian $\H$ above reduces to 
	\begin{align}
		\H = \sum_{s=1}^{N} \ddot{\H}_{11}(y_{s}, \bphi_{s},\W)  \bphi_{s}\bphi_{s}^{T}.
	\end{align}
	Second, let $h(x)=\exp(x)$ and consider the multinomial logistic regression case. Then $h=h'=h''$ so the above yields the following concise matrix expression
	\begin{align}
		&\nabla_{\W} \, \ell(\bPhi, \W) = \bPhi (\bP - \Y) \in \R^{p\times \kappa},\qquad \nabla_{\bPhi} \, \ell(\bPhi, \W) = \W (\bP - \Y)^{T} \in \R^{p\times N}, \\
		& \H= \sum_{s=1}^{N} \begin{bmatrix}
			p_{s1}(1-p_{s1}) & - p_{s1}p_{s2} & \dots & -p_{s1}p_{s\kappa} \\
			-p_{s2}p_{s1} & p_{s2}(1-p_{s2}) & \dots & -p_{s2}p_{s\kappa} \\
			\vdots & \vdots & \ddots & \vdots \\
			-p_{s\kappa}p_{s1} & -p_{s\kappa}p_{s2} & \dots & p_{s\kappa}(1-p_{s\kappa}) 
		\end{bmatrix}
		\otimes \bphi_{s}\bphi_{s}^{T}.
	\end{align}
	It follows that  eigenvalues of $\H$ are bounded above by $1/4$. The lower bound on the eigenvalues depend on the range of linear activation $\langle \bphi_{i}, \w_{j} \rangle$ may take. For instance, if we restrict the norms of the input feature vector $\phi_{i}$ and parameter $\w_{j}$, then we can find a suitable positive uniform lower bound on the eigenvalues of $\H$.

	\begin{lemma}\label{lem:MNL}
		Supose $h(\cdot)=\exp(\cdot)$. Then 
		\begin{align}
			&\lambda_{\min}\left( \ddot{\H}(\bphi_{s},\W) \right)\ge \min_{1\le i \le \kappa} \frac{ \exp(\langle \bphi_{s}, \w_{i} \rangle) }{1+\sum_{c=1}^{\kappa} \exp(\langle \bphi_{s}, \w_{c} \rangle) }, \\
			&\lambda_{\max}\left( \ddot{\H}(\bphi_{s},\W) \right)\le \max_{1\le i \le \kappa} \frac{ \exp(\langle \bphi_{s}, \w_{i} \rangle)  }{\left( 1+\sum_{c=1}^{\kappa} \exp(\langle \bphi_{s}, \w_{c} \rangle) \right)^{2} }\left(1+2\sum_{c=2}^{\kappa}  \exp(\langle \bphi_{s}, \w_{c} \rangle)  \right).
		\end{align}
	\end{lemma}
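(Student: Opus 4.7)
The plan is to exploit the explicit algebraic form of $\ddot{\H}(\bphi_{s},\W)$ available when $h=\exp$. As already observed in Remark \ref{rmk:MNL_constants}, with this choice $h=h'=h''$, so the second term in the definition \eqref{eq:MNL_h_def_appendix} vanishes and
\[
\ddot{\H}(\bphi_{s},\W)_{ij}=g_{i}\bigl(\mathbf{1}(i=j)-g_{j}\bigr),
\]
where $g_{i}:=\exp(\langle\bphi_{s},\w_{i}\rangle)\big/\bigl(1+\sum_{c=1}^{\kappa}\exp(\langle\bphi_{s},\w_{c}\rangle)\bigr)$ for $1\le i\le \kappa$ and $g_{0}:=1-\sum_{c=1}^{\kappa}g_{c}$. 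Setting $\g=(g_{1},\dots,g_{\kappa})^{T}\in\R^{\kappa}$ and $\D=\diag(g_{1},\dots,g_{\kappa})$, this reads $\ddot{\H}(\bphi_{s},\W)=\D-\g\g^{T}$, and both eigenvalue bounds will be read off directly from this factorization.

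For the lower bound, I would test the symmetric matrix on an arbitrary $\v\in\R^{\kappa}$ and compute
\[
\v^{T}(\D-\g\g^{T})\v=\sum_{i=1}^{\kappa}g_{i}v_{i}^{2}-\Bigl(\sum_{i=1}^{\kappa}g_{i}v_{i}\Bigr)^{2}.
\]
The weighted Cauchy--Schwarz inequality gives $(\sum g_{i}v_{i})^{2}\le(\sum g_{i})(\sum g_{i}v_{i}^{2})$, so using $1-\sum_{c}g_{c}=g_{0}$ one obtains $\v^{T}\ddot{\H}\v\ge g_{0}(\min_{i}g_{i})\lVert\v\rVert^{2}$; expressing $g_{i}$ back in exponential form yields the stated lower bound. (An equivalent route is Gershgorin applied to $\D-\g\g^{T}$, noting that row $i$ has center $g_{i}(1-g_{i})$ and absolute off-diagonal sum $\sum_{j\ne i}g_{i}g_{j}=g_{i}(1-g_{i})-g_{i}g_{0}$, so the left endpoint of the $i$th disk is $g_{i}g_{0}$.)

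For the upper bound, I would use the same Gershgorin bracket: each eigenvalue is bounded above by the right endpoint $2g_{i}(1-g_{i})-g_{i}g_{0}$ of one of these disks. Rearranging, this right endpoint equals $g_{i}\bigl(g_{0}+2\sum_{j\ne i}g_{j}\bigr)$, which after substituting the explicit softmax expressions for $g_{i}$, $g_{0}$, and $g_{j}$ becomes $e^{\langle\bphi_{s},\w_{i}\rangle}\bigl(1+2\sum_{j\ne i}e^{\langle\bphi_{s},\w_{j}\rangle}\bigr)/\bigl(1+\sum_{c}e^{\langle\bphi_{s},\w_{c}\rangle}\bigr)^{2}$, matching the form stated in the lemma.

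Once the identification $\ddot{\H}=\D-\g\g^{T}$ is made, neither step requires more than one line of calculation, so there is no genuine obstacle. The only care needed is bookkeeping: tracking the powers of $\bigl(1+\sum_{c}\exp(\langle\bphi_{s},\w_{c}\rangle)\bigr)$ in denominators, and reconciling the index convention $\sum_{c=2}^{\kappa}$ appearing in the statement with the $\sum_{j\ne i}$ that Gershgorin naturally produces (the two agree up to taking the maximum over $i$ and relabeling).
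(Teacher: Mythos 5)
Your identification $\ddot{\H}=\D-\g\g^{T}$ and the Gershgorin computation for the upper bound reproduce the paper's argument: the right endpoint of the $i$th disk is $g_{i}(2-g_{0}-2g_{i})=g_{i}\bigl(g_{0}+2\sum_{j\ne i}g_{j}\bigr)$, exactly as in the paper, and this does match the displayed upper bound once $\sum_{c=2}^{\kappa}$ is read as $\sum_{c\ne i}$. For the lower bound you take a genuinely different and arguably better route: the paper invokes the fact that a symmetric diagonally dominant $M$-matrix has $\lambda_{\min}$ bounded below by its minimum row sum, citing an external reference, whereas your weighted Cauchy--Schwarz estimate on the quadratic form (or, equivalently, your left Gershgorin endpoints) is one line and self-contained. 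Both routes land on the same quantity, $\min_{i}\sum_{j}\ddot{H}_{ij}=g_{0}\min_{i}g_{i}$.

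One caveat, which applies equally to the paper's own proof: the bound you actually establish is $\lambda_{\min}\ge g_{0}\min_{i}g_{i}=\min_{i}\,e^{\langle\bphi_{s},\w_{i}\rangle}\big/\bigl(1+\sum_{c}e^{\langle\bphi_{s},\w_{c}\rangle}\bigr)^{2}$, and your closing assertion that ``expressing $g_{i}$ back in exponential form yields the stated lower bound'' is not correct as written, since the display has the normalizer to the first power. Already for $\kappa=1$ the matrix is the scalar $g_{1}(1-g_{1})<g_{1}$, so the bound with a single power of the normalizer cannot hold. The paper's proof makes the same unjustified identification in its final ``equalities can be verified easily'' step, so this is best read as a missing square in the denominator of the lemma statement (which then propagates to the expression for $\alpha^{-}$ in Remark \ref{rmk:MNL_constants}) rather than as a defect of your derivation; but you should state the bound you actually proved, $g_{0}\min_{i}g_{i}$, instead of claiming it coincides with the display.
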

	
	\begin{proof}
		For the lower bound on the minimum eigenvalue, we note that
		\begin{align}
			\lambda_{\min}\left( \ddot{\H}(\bphi_{s},\W) \right) \ge \min_{1\le i\le \kappa}  \sum_{j=1}^{\kappa}  \ddot{H}_{ij} = \min_{1\le i \le \kappa} p_{si}p_{s0} =\min_{1\le i \le \kappa}  \frac{ \exp(\langle \bphi_{s}, \w_{i} \rangle) }{1+\sum_{c=1}^{\kappa} \exp(\langle \bphi_{s}, \w_{c} \rangle) }
		\end{align}
		where the first inequality was shown in  \cite[Sec. 3]{amani2021ucb} using the fact that $\ddot{\H}(\bphi_{s},\W)$ is a diagonally dominant $M$-matrix (see \cite{tian2010inequalities}). The following equalities can be verified easily. 
		
		For the upper bound on the maximum eigenvalue, we use the Gershgorin circle theorem (see, e.g., \cite{horn2012matrix}) to bound 
		\begin{align}
			\lambda_{\max}\left( \ddot{\H}(\bphi_{s},\W) \right) &\le \max_{1\le i \le \kappa} \left( p_{si}(1-p_{si}) + \sum_{c=2}^{\kappa} p_{si}p_{sc}\right) \le \max_{1\le i \le \kappa}  p_{si} \left(2-p_{s0}-2p_{si}\right).
		\end{align}
		Then simplifying the last expression gives the assertion.
	\end{proof}

	${}$
	\newpage 
	
	\section{Additioanl figures and tables}

	\begin{table}[h!]
		\centering
		\includegraphics[width=1\linewidth]{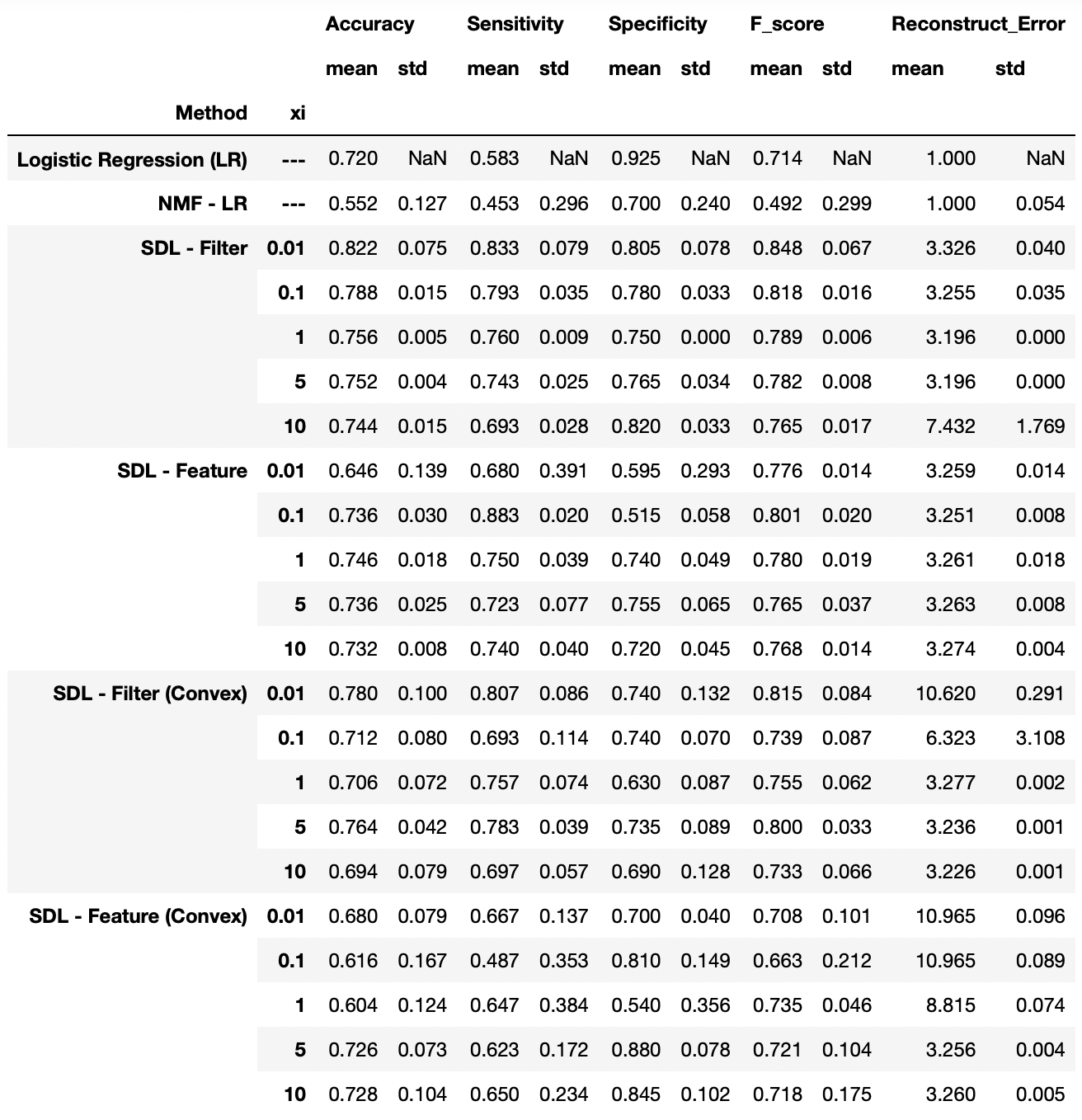} 
		\caption{ Tables of all results from each of the six methods for the semi-sythetic MNIST data in Section \ref{sec:MNIST_setup}. Mean and standard deviations of five runs are reported. We report the reconstruction error of each method after normalizing it by the reconstruction error from NMF. }
		\label{table:MNIST_table}
	\end{table}

	\begin{table}[h!]
		\centering
		\includegraphics[width=1\linewidth]{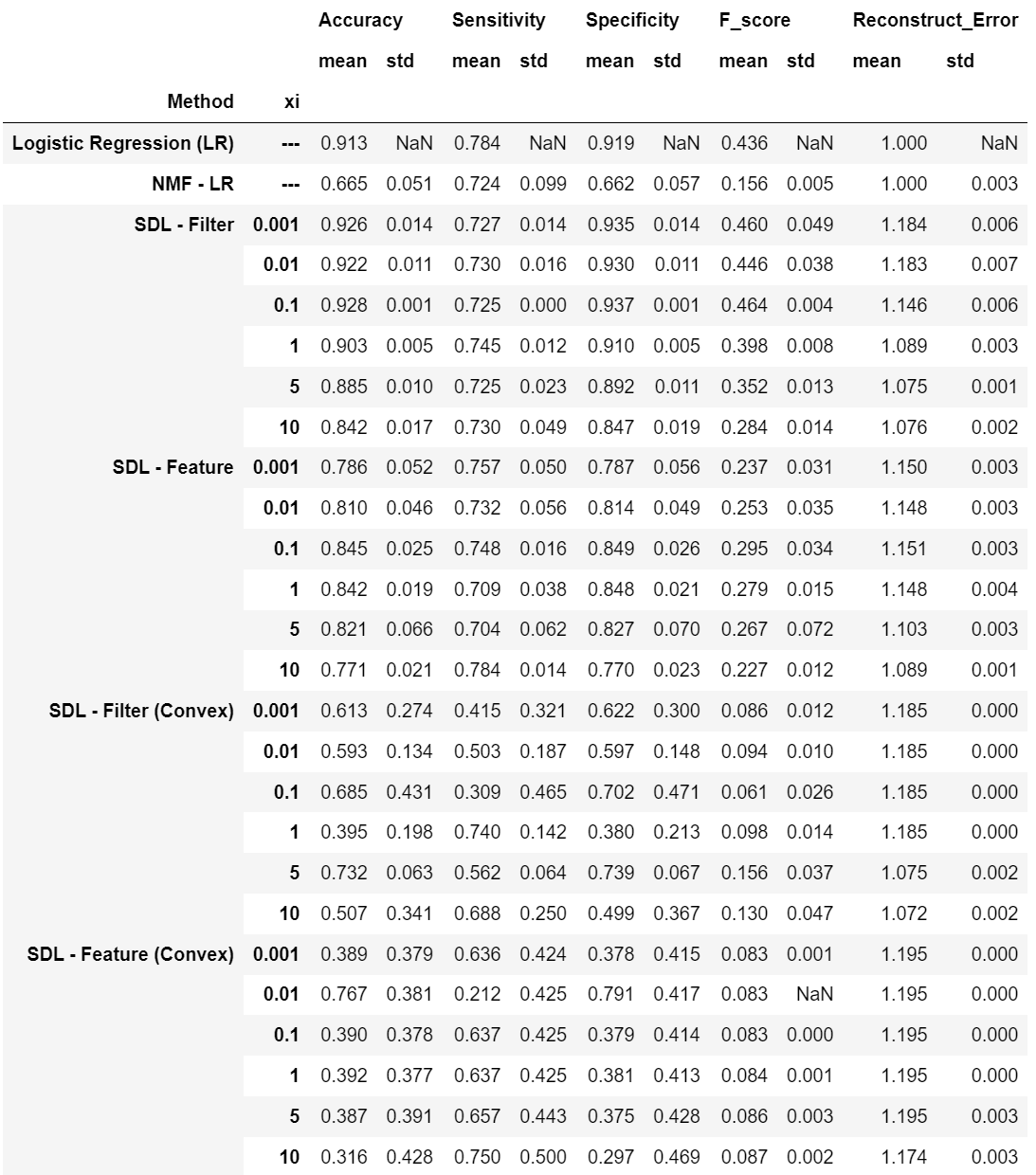} 
		\caption{ Tables of all results from each of the six methods for the fake job postings data in Section \ref{sec:fakejob_setup}. without the auxiliary covariates. Mean and standard deviations of five runs are reported. We report the reconstruction error of each method after normalizing it by the reconstruction error from NMF. }
		\label{table:fakejob_table1}
	\end{table}

	\begin{table}[h!]
		\centering
		\includegraphics[width=1\linewidth]{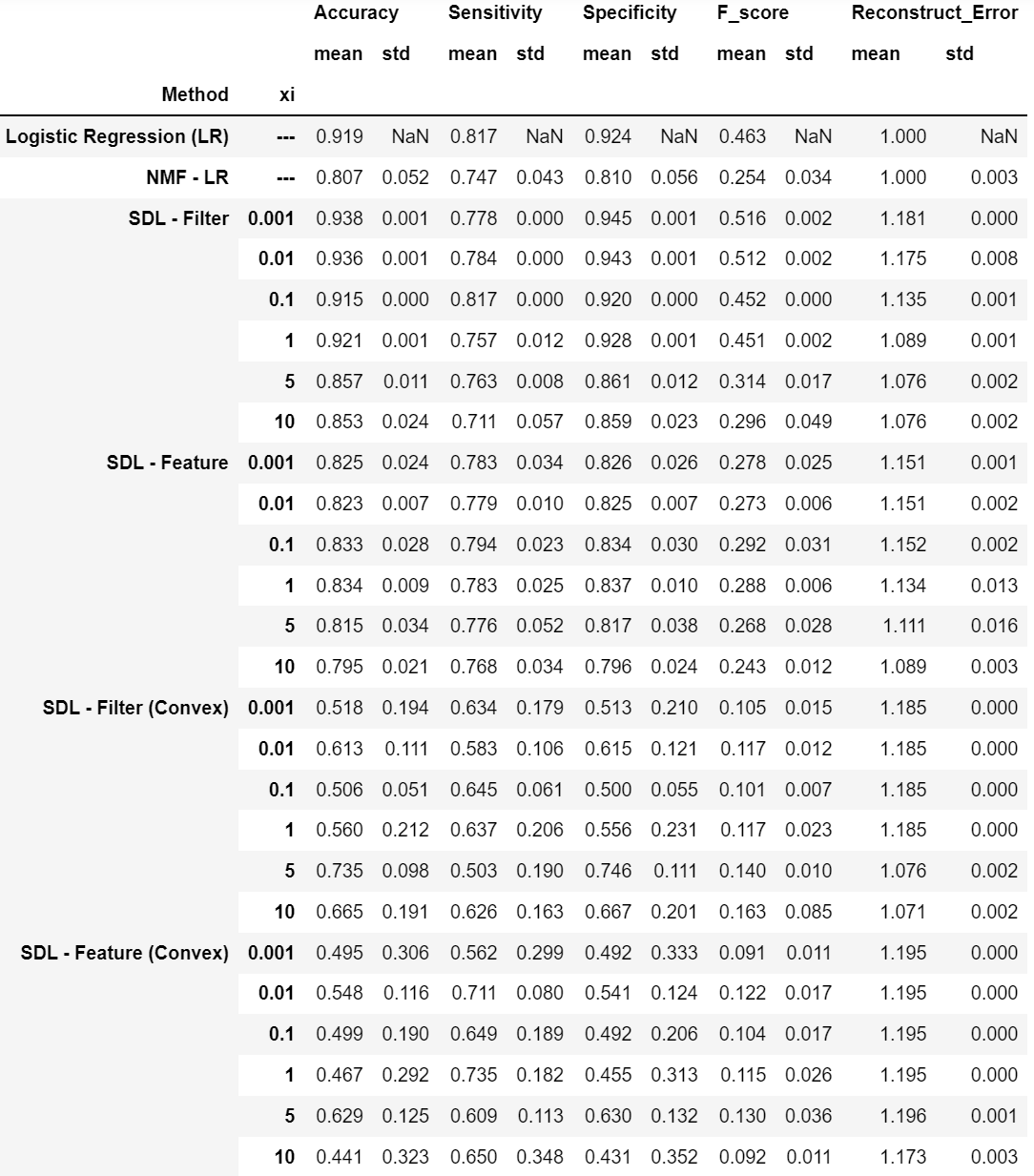} 
		\caption{ Tables of all results from each of the six methods for the fake job postings data in Section \ref{sec:fakejob_setup} with the 72 auxiliary covariates. Mean and standard deviations of five runs are reported. We report the reconstruction error of each method after normalizing it by the reconstruction error from NMF. }
		\label{table:fakejob_table2}
	\end{table}

\end{document}